\documentclass[11pt]{article}
\textwidth 8in
\textheight 9in
\oddsidemargin -.3in
\topmargin -1.5cm
\linespread{1.3}
\usepackage[utf8]{inputenc}
\usepackage{fullpage,amsmath,amssymb,bm,url,mathrsfs}

\usepackage{amssymb}
\usepackage{caption}
\usepackage{graphicx}
\usepackage{amsmath}
\usepackage{float}
\usepackage{fancyhdr}   
\usepackage{listings} 
\usepackage{xcolor} 
\usepackage{amsthm}
\usepackage{algpseudocode}
\usepackage{algorithm}
\usepackage{makecell}
\usepackage{comment,enumitem}
\usepackage[round]{natbib}
\usepackage{dsfont}
\usepackage{tablefootnote}
\usepackage{cleveref}
\usepackage{subcaption}
\usepackage{authblk} 

\newtheorem{theorem}{Theorem}
\newtheorem{lemma}{Lemma}
\newtheorem{definition}{Definition}

\newtheorem{assumption}{Assumption}

\newtheoremstyle{exampstyle}
{\topsep} 
{\topsep} 
{} 
{} 
{\bfseries} 
{.} 
{.5em} 
{} 

\theoremstyle{exampstyle}
\newtheorem{example}{Example}



\newcommand{\blue}[1]{{\color{blue}#1}}

\newcommand\fro[1]{\| #1 \|_{\rm{F}}}
\newcommand\op[1]{\| #1 \|}
\newcommand\nuc[1]{\| #1 \|_{*}}
\newcommand{\mat}[1]{\begin{bmatrix}#1 \\ \end{bmatrix}}
\newcommand{\inp}[2]{\langle #1,#2\rangle}
\newcommand{\linf}[1]{\|#1\|_{\ell_{\infty}}}
\newcommand{\lzero}[1]{\|#1\|_{0}}
\newcommand{\ltwo}[1]{\|#1\|_{\ell_{2}}}
\newcommand{\ltwoinf}[1]{\|#1\|_{2,\infty}}
\newcommand{\oneinf}[1]{\|#1\|_{1,\infty}}
\newcommand\psione[1]{\| #1 \|_{\psi_1}}
\newcommand\psionel[1]{\big\| #1 \big\|_{\psi_1}}
\newcommand\psitwo[1]{\| #1 \|_{\psi_2}}

\newcommand\indicator[1]{\mathds{1}(\calE_{#1})}

\def\spiki{\textsf{Spiki}}
\def\trace{\textsf{Trace}}

\def\incoh{\textsf{Incoh}}
\def\bSigma{{\boldsymbol \Sigma}}
\def\bTheta{{\boldsymbol \Theta}}
\def\bLambda{{\boldsymbol \Lambda}}

\def\up{\U_{\perp}}

\def\hosvd{\textsf{HOSVD}_{\r}}
\def\dof{\textsf{dof}}
\def\hat{\widehat}
\def\dmax{d_{\submax}}
\def\rmax{r_{\submax}}
\def\dmin{d_{\submin}}
\def\rmin{r_{\submin}}

\def\rank{\textsf{rank}}

\def\errinf{\textsf{Err}_{\infty}}
\def\errtwo{\textsf{Err}_2}
\def\errratio{\textsf{r}_{\textsf{2}/\infty}}
\def\trim{\textsf{Trim}}

\def\calE{{\mathcal E}}
\def\calF{{\mathcal F}}

\def\calI{{\mathcal I}}

\def\calM{{\mathcal M}}

\def\calP{{\mathcal P}}

\def\calR{{\mathcal R}}

\def\calY{{\mathcal Y}}

\def\bcalC{{\boldsymbol{\mathcal C}}}
\def\bcalD{{\boldsymbol{\mathcal D}}}
\def\bcalE{{\boldsymbol{\mathcal E}}}

\def\bcalG{{\boldsymbol{\mathcal G}}}

\def\bcalL{{\boldsymbol{\mathcal L}}}
\def\bcalM{{\boldsymbol{\mathcal M}}}

\def\bcalP{{\boldsymbol{\mathcal P}}}

\def\bcalR{{\boldsymbol{\mathcal R}}}
\def\bcalS{{\boldsymbol{\mathcal S}}}
\def\bcalT{{\boldsymbol{\mathcal T}}}

\def\bcalX{{\boldsymbol{\mathcal X}}}

\def\AA{{\mathbb A}}
\def\BB{{\mathbb B}}

\def\EE{{\mathbb E}}
\def\FF{{\mathbb F}}

\def\MM{{\mathbb M}}

\def\OO{{\mathbb O}}
\def\PP{{\mathbb P}}

\def\RR{{\mathbb R}}

\def\TT{{\mathbb T}}

\def\YY{{\mathbb Y}}

\def\a{{\boldsymbol a}}
\def\b{{\boldsymbol b}}
\def\c{{\boldsymbol c}}
\def\d{{\boldsymbol d}}
\def\e{{\boldsymbol e}}
\def\f{{\boldsymbol f}}
\def\g{{\boldsymbol g}}

\def\m{{\boldsymbol m}}

\def\r{{\boldsymbol r}}
\def\s{{\boldsymbol s}}
\def\t{{\boldsymbol t}}
\def\u{{\boldsymbol u}}
\def\v{{\boldsymbol v}}
\def\w{{\boldsymbol w}}
\def\x{{\boldsymbol x}}
\def\y{{\boldsymbol y}}

\def\A{{\boldsymbol A}}
\def\B{{\boldsymbol B}}
\def\C{{\boldsymbol C}}
\def\D{{\boldsymbol D}}

\def\G{{\boldsymbol G}}

\def\I{{\boldsymbol I}}

\def\L{{\boldsymbol L}}
\def\M{{\boldsymbol M}}
\def\N{{\boldsymbol N}}
\def\O{{\boldsymbol O}}
\def\P{{\boldsymbol P}}
\def\Q{{\boldsymbol Q}}
\def\R{{\boldsymbol R}}
\def\S{{\boldsymbol S}}
\def\T{{\boldsymbol T}}
\def\U{{\boldsymbol U}}
\def\V{{\boldsymbol V}}
\def\W{{\boldsymbol W}}
\def\X{{\boldsymbol X}}

\def\Z{{\boldsymbol Z}}

\def\submax{{\textsf{\tiny max}}}
\def\submin{{\textsf{\tiny min}}}
\def\dof{\overline{\textsf{dof}}}

\def\sfA{{\textsf{A}}}
\def\sfB{{\textsf{B}}}
\def\bDel{\boldsymbol{\Delta}}
\def\tr{\textsf{Tr}}
\def\vec{\textsf{Vec}}
\def\svd{\textsf{SVD}}
\def\reshape{\textsf{Reshape}}

\begin{document}

	\title{Online Tensor Learning: Computational and Statistical Trade-offs, Adaptivity and Optimal Regret}
	
%
%
%
	\author[1,2]{Jingyang Li}
	\author[3]{Jian-Feng Cai}
	\author[1]{Yang Chen}
	\author[3]{Dong Xia}
	
	\affil[1]{Department of Statistics, University of Michigan, Ann Arbor}
	\affil[2]{Department of Mathematics, University of Michigan, Ann Arbor}
	\affil[3]{Department of Mathematics, Hong Kong University of Science and Technology}
	\maketitle

\begin{abstract}
Large tensor learning algorithms are typically computationally expensive and require storing a vast amount of data. In this paper, we propose a unified online Riemannian gradient descent (oRGrad) algorithm for tensor learning, which is computationally efficient, consumes much less memory, and can handle sequentially arriving data while making timely predictions. The algorithm is applicable to both linear and generalized linear models. If the time horizon $T$ is known, oRGrad achieves statistical optimality by choosing an appropriate fixed step size. We find that noisy tensor completion particularly benefits from online algorithms by avoiding the trimming procedure and ensuring sharp entry-wise statistical error, which is often technically challenging for offline methods. The regret of oRGrad is analyzed, revealing a fascinating trilemma concerning the computational convergence rate, statistical error, and regret bound. By selecting an appropriate constant step size, oRGrad achieves an $O(T^{1/2})$ regret. We then introduce the adaptive-oRGrad algorithm, which can achieve the optimal $O(\log T)$ regret by adaptively selecting step sizes, regardless of whether the time horizon is known. The adaptive-oRGrad algorithm can attain a statistically optimal error rate without knowing the horizon. Comprehensive numerical simulations corroborate our theoretical findings. We show that oRGrad significantly outperforms its offline counterpart in predicting the solar F10.7 index with tensor predictors that monitor space weather impacts.
\end{abstract}
%
	

\section{Introduction}\label{sec:intro} 
The technological revolution in data collection and processing over the past decades has made large tensor data available in diverse fields, such as international trade flow \citep{cai2022generalized, lyu2021latent}, malaria parasite gene networks \citep{jing2021community, larremore2013network}, the BHL dataset \citep{lyu2022optimal, mai2021doubly}, EEG datasets \citep{hu2020matrix, huang2022robust}, hyperspectral imaging \citep{li2010tensor}, medical image analysis \citep{gandy2011tensor, wang2020learning}, multi-way recommender systems \citep{bi2018multilayer}, spatio-temporal analysis \citep{chen2020semiparametric, liu2022characterizing}, 4D scanning transmission electron microscopy \citep{han2022optimal}, and hypergraph network analysis \citep{ke2019community}.

Low-rank tensor models assume that the observed data are sampled from a statistical model characterized by an {\it unknown} but {\it low-rank} tensor $\bcalT^{\ast}$ of size $d_1\times\cdots\times d_m$. A tensor is considered low-rank if it is the sum of a few {\it rank-one} tensors, which can be understood as a generalization of low-rank matrix (see the formal definition in Section~\ref{sec:method}). The low-rank assumption substantially reduces the model complexity from $d^{\ast}:=d_1\cdots d_m$ to $O(md_{\submax})$, where $d_{\submax}:=\max_{j\in[m]}d_j$. The primary goal in these models is to estimate the latent tensor, a procedure known as {\it low-rank tensor learning}. There is a vast literature studying the computational and statistical aspects of low-rank tensor learning.  Tensor linear regression aims to recover a low-rank tensor from a collection of linear measurements and their respective (noisy) outcomes. Computationally efficient estimators are attainable via importance sketching \citep{zhang2020islet}, projected gradient descent \citep{chen2019non}, Burer-Monteiro type gradient descent \citep{han2022optimal}, Riemannian gradient descent \citep{shen2022computationally}, scaled gradient descent \citep{tong2021scaling}, and nuclear-norm penalized least squares via matricization \citep{mu2014square}. Tensor regression has been further investigated in generalized linear models to handle categorical responses \citep{cai2022generalized,han2022optimal,chen2019non}. Tensor completion refers to the problem of reconstructing a tensor by observing only a small fraction of its (noisy) entries. An incomplete list of representative works include convex programming via minimizing tensor nuclear norm \citep{yuan2016tensor}, vanilla or scaled gradient descent \citep{cai2019nonconvex,tong2021scaling}, alternating minimization \citep{jain2014provable}, Grassmannian gradient descent \citep{xia2019polynomial}, higher-order orthogonal iterations \citep{xia2021statistically}, sum-of-squares hierarchy \citep{barak2016noisy}, and Riemannian gradient descent \citep{kressner2014low, cai2022provable}.  Binary tensor learning or one-bit tensor completion aims to estimate $\bcalT^{\ast}$ from binary entry-wise observations \citep{wang2020learning, cai2022generalized}. 

The aforementioned works focused on {\it offline learning}, where a set of tensorial data is collected and a low-rank tensor model is fitted. This usually involves an iterative algorithm that uses all the data simultaneously, during which the data itself remains static. The offline learning framework has several limitations.  For many applications,  data are revealed sequentially, requiring timely predictions. A typical example is large-scale recommendation system \citep{chang2017streaming,zhang2019deep,davidson2010youtube,linden2003amazon}, where vast amounts of user-related feedback are gathered every minute. User preferences drift over time, affecting prediction accuracy. For example, a tweet that was popular last month may be less popular now. See also the application of predicting the solar index F10.7 in Section~\ref{sec:app-solar}, where we show that the online tensor learning method can significantly outperform its offline counterpart in prediction accuracy. 
Designing an algorithm that updates in real-time with newly-arrived data, referred to as {\it online learning}, is crucial. Another major issue with offline learning is the high computation and storage cost. The size of a tensor increases exponentially with its order. In contrast, an online learning algorithm \citep{langford2009sparse} updates in real-time using only one or a few observations, which are discarded afterwards, making it applicable to large-scale systems. Although this may seem wasteful, online methods can achieve performance comparable to offline counterparts that use all data together. This phenomenon has been observed in streaming principal component analysis (PCA) \citep{shamir2016convergence,jain2016streaming,allen2017first}, online sparse linear regression \citep{langford2009sparse,foster2016online,fan2018statistical}, and online sparse PCA \citep{yang2015streaming}.

An online algorithm based on convex programming was proposed by \cite{meka2008rank} for rank minimization over a polyhedral set. An online Riemannian gradient descent algorithm, equipped with an approximated SVD as the retraction, for estimating a low-rank matrix under a general loss function was studied in \cite{shalit2012online}. However, theoretical guarantees of algorithmic convergence and statistical performance are missing in both \cite{meka2008rank} and \cite{shalit2012online}. Exact matrix completion was studied by \cite{jin2016provable}, showing that a simple online gradient descent algorithm provably recovers the matrix within $\tilde O(r^2d_{\submax})$ iterations, where $r$ denotes the rank. 
A closely related study by \cite{han2022online} examines the online stochastic gradient descent algorithm for addressing the matrix contextual bandit problem. The study establishes the estimation error rate and proposes an inference procedure under the low-rank regression model. In \cite{ge2015escaping}, the authors demonstrated a strict saddle property for noiseless orthogonal tensor decomposition, showing that a randomly initialized stochastic gradient algorithm successfully recovers the latent tensor within a polynomial number of iterations. Their online tensor-based method is further utilized in \cite{huang2015online} to learn latent variable models with applications in community detection and topic modeling. Unfortunately, the statistical performance of their algorithm is still not provided in \cite{huang2015online}. See also \cite{yu2015accelerated,li2018online,mairal2010online,de2015global} and references therein. 

The statistical understanding of online methods is largely unknown for most popular low-rank models. While online gradient descent is effective in noiseless matrix and tensor decomposition \citep{ge2015escaping, jin2016provable}, it is well-recognized that random noise significantly alters the optimization landscape, causing even offline gradient descent algorithms to often become trapped in locally optimal solutions \citep{arous2019landscape}. This suggests that the dynamics of online tensor learning algorithms may be drastically different under random noise. Intuitively, each time a new noisy observation arrives, the online algorithm faces a dilemma: assign more weight for aggressive updates, which causes noise to accumulate quickly, or assign less weight for slower updates, allowing noise to accumulate more slowly. To the best of our knowledge, there is a lack of online algorithms for learning low-rank tensors under generalized linear models. This gap exists even for matrix cases. The primary advantage of online learning algorithms is their ability to reduce computational and storage costs. It remains unclear whether online methods offer additional benefits, particularly regarding statistical performance and algorithm design. Surprisingly, we find that online methods can achieve results considered extremely challenging for offline methods.

Regret measures the prediction performance of an online algorithm. In a seminal paper, \cite{hazan2007logarithmic} showed that the online gradient descent (OGD) algorithm achieves $O(\log T)$ regret for (strongly) convex programming, where $T$ is the time horizon. Regret analysis of online tensor learning is more challenging due to the non-convexity caused by the rank constraint. \cite{shi2023high} and \cite{zhou2020stochastic} derived $O(T^{1/2})$ for low-rank tensor bandit. Remarkably, we demonstrate that an adaptive online tensor learning algorithm can achieve the optimal $O(\log T)$ regret.

\paragraph*{Our contributions}
We investigate online tensor learning within a general framework, covering both linear and generalized linear models. Compared to \cite{jin2016provable} and \cite{ge2015escaping}, our method accommodates noisy and discrete-type observations, making it more suitable for statistical applications. We propose a computationally fast \underline{o}nline algorithm  based on \underline{R}iemannian \underline{grad}ient descent, referred to as the {\it oRGrad} algorithm.  At time $t$, with the current estimate $\bcalT_{t}$,  a new observation $\mathfrak{D}_t$ arrives, incurring  a one-time loss $\ell(\bcalT_{t}, \mathfrak{D}_t)$. Then, oRGrad updates the estimate as follows:
\begin{equation}\label{eq:oRGrad-1}
\bcalT_{t+1} \leftarrow \textsf{Retraction}\big(\bcalT_{t}-\eta\cdot \nabla_{\textsf{R}} \ell(\bcalT_{t}, \mathfrak{D}_t)\big),
\end{equation}
where $\eta$ is the step size, $\nabla_{\textsf{R}}$ represents the Riemannian gradient, and $\textsf{Retraction}(\cdot)$ projects a tensor onto the Riemannian manifold. If $\nabla_{\textsf{R}}$ is replaced with the vanilla gradient, (\ref{eq:oRGrad-1}) coincides with projected gradient descent \citep{chen2015fast, chen2019non}. The Riemannian gradient is low-rank, which significantly speeds up the computation of subsequent retractions \citep{cai2022generalized,kressner2014low,zheng2022riemannian}.  Our oRGrad algorithm uses higher order singular value decomposition (HOSVD, see the formal definition in Section~\ref{sec:method}) for the retraction step. More recently, \cite{luo2022tensor} demonstrated that Riemannian optimization is robust to over-specification of the underlying ranks. 


To summarize, we make the following contributions. 
\begin{enumerate}[label=(\alph*)]

\item We demonstrate that, under suitable conditions, after $t$ iterations, oRGrad, with high probability, achieves an estimate (\textit{informally}) satisfying 
\begin{equation}\label{eq:oRGrad-err-1}
\|\bcalT_t-\bcalT^{\ast}\|_{\rm F}^2\leq 2(1-\eta)^t \cdot \|\bcalT_0-\bcalT^{\ast}\|_{\rm F}^2+O\big(\eta\cdot \overline{\textsf{dof}}\cdot \sigma^2\big),
\end{equation}
where $\eta\in (0,1)$ is a {\it fixed} step size,  $\dof$ describes the degree of freedom for $\bcalT^{\ast}$, $\bcalT_0$ denotes the initial estimate, and $\sigma$ represents the noise level. The contraction rate is independent of the condition number of $\bcalT^{\ast}$, which is an advantage of the Riemannian gradient descent algorithm. This contraction dynamic is established for the various tensor models discussed above. The  {\it computational and statistical trade-off} is observed from (\ref{eq:oRGrad-err-1}). A larger step size $\eta$ leads to faster convergence of oRGrad, but the final output has a larger statistical error. If the time horizon $T$ is known, one can choose $\eta_{\textsf{stat}}:=C_0T^{-1}\log d_{\submax} $ for a large $C_0>0$. Then the final output, with high probability, achieves a statistical error $O_m(T^{-1}\dof \cdot\sigma^2 \log d_{\submax})$, under mild conditions. Amazingly, it matches the best rate, up to a logarithm factor,  in the minimax sense, attainable by offline methods even if all the sequential observations are presented at once. 

\item Noisy tensor (or matrix) completion benefits significantly from oRGrad. A major challenge in designing tensor completion algorithms is maintaining the so-called incoherence (see Section~\ref{sec:oNTC} for a formal definition). This is especially true for Riemannian optimization, which often involves SVD at each iteration. Existing algorithms \citep{cai2022generalized, cai2022provable} require additional trimming procedures after step (\ref{eq:oRGrad-1}).  By exploiting the online nature of oRGrad and a spectral representation tool from \cite{xia2021normal}, we can directly show that $\bcalT_t$ is already incoherent, bypassing the trimming step.
Another surprising advantage of the online method is that we can easily derive a sharp upper bound for the entry-wise error $\|\bcalT_t-\bcalT^{\ast}\|_{\ell_\infty}$, which is highly valuable for statistical inference \citep{chen2019inference, xia2021statistical} in low-rank models. Deriving the entry-wise error is notoriously challenging for noisy tensor (matrix) completion, where prior works \citep{chen2021bridging, wang2021entrywise} often rely on a complicated leave-one-out framework to analyze algorithmic dynamics finely. Our results suggest that, with a warm initialization,  $\tilde O_m(d_{\submax})$ randomly sampled entries suffice to complete the whole tensor. In contrast, most prior works still require a sample size $\tilde O_m\big((d^{\ast })^{1/2}\vee d_{\submax}\big)$ even with a warm initialization. Table~\ref{table:completion} presents the advantages of oRGrad in tensor completion.

\begin{table}
	\centering
	\begin{tabular}{ c c c c c}
		\textbf{Algorithm} &  \begin{tabular}{@{}c@{}} \textbf{Sample complexity} \\ {\small (given a warm initialization)}\end{tabular} & \textbf{Total flops} &  \textbf{Entry-wise Error}\\\hline\hline
		\thead{Nuclear norm minimization\\ \citep{yuan2016tensor}}&$\tilde O_m(d^{m/2})$& N/A&No\\ \hline
		\thead{Gradient descent \\ \citep{xia2019polynomial}} & $\tilde O_m(d^{m/2})$& N/A  & No\\ \hline
		\thead{Scaled GD \\ \citep{tong2021scaling} }& $\tilde O_m(d^{m/2})$ & $\tilde O_m(d^{m/2}\log(1/\epsilon))$ &No\\ \hline
		\thead{Vanilla GD \\ \citep{cai2019nonconvex} }& $\tilde O_m(d^{m/2})$ & $\tilde O_m(d^{m/2}\log(1/\epsilon))$ & Yes\\ \hline
		{\it oRGrad (this paper)} & $\tilde O_m(d)$ & $\tilde O_m(d^2\log(1/\epsilon))$\tablefootnote{Total flops of oRGrad can be further reduced to $O(d\log \epsilon^{-1})$ if an appropriate retraction is chosen. Here, we opt to HOSVD as retraction for technical convenience, but it leads to $O(d^2\log \epsilon^{-1})$ total flops. }  & Yes\\
		\hline
	\end{tabular}
	\caption{Comparison with existing algorithms on tensor completion {\it assuming a warm initialization is provided} . Here the latent $m$-th order tensor is of size $d\times \cdots\times d$ whose ranks are assumed to be constants. The total flops is calculated when an $\epsilon$-accurate estimate is reached, i.e., $\|\bcalT_t-\bcalT^{\ast}\|_{\rm F} \|\bcalT^{\ast}\|_{\rm F}^{-1}\leq \epsilon$. Note that \cite{cai2019nonconvex} deals with a CP-format tensor, whereas others focus on Tucker-format tensors.}
	\label{table:completion}
\end{table}

\item The merits of oRGrad are further demonstrated in binary tensor learning. Assuming a logistic link, we show that oRGrad naturally preserves incoherence, eliminating the need for additional trimming. To the best of our knowledge, this is the first result of its kind. Finally, with a properly chosen step size relative to the time horizon, a minimax optimal error rate (up to logarithmic factors) can be achieved.

\item We show that oRGrad, equipped with a fixed step size $\eta$, achieves a regret $O(\eta^{-1}\fro{\bcalT_0 -\bcalT^*}^2 + \eta T\dof\sigma^2)$, highlighting a trilemma concerning the computational convergence, statistical error, and regret. Compared to the statistically optimal value $\eta_{\textsf{stat}} \asymp T^{-1}$, the regret bound is minimized at $\eta_{\textsf{regret}} \asymp T^{-1/2}$, allowing oRGrad to achieve the $O(T^{1/2})$ regret. We then propose the adaptive-oRGrad algorithm, which selects step sizes adaptively, and show that adaptive-oRGrad attains the {\it optimal} $O(\log T)$ regret. To the best of our knowledge, our result represents the first logarithmic regret in the high-dimensional online learning literature. Moreover, we show that the adaptive-oRGrad algorithm can achieve statistical optimality or the optimal $O(\log T)$ regret if the step sizes are adaptively selected, even when the time horizon is unknown.

\item We apply the oRGrad algorithm to predict the solar index F10.7 using ionospheric total electron content data. Compared to its offline counterpart, our numerical experiments demonstrate that oRGrad achieves significantly higher prediction accuracy.

\end{enumerate}

\section{Methodology}\label{sec:method}

\subsection{Background and notations}\label{sec:notation}

We use calligraphic-font bold-face letters (e.g. $\bcalT,\bcalX$) to denote tensors, bold-face capital letters (e.g. $\T,\X$) for matrices, bold-face lower-case letters (e.g. $\t,\x$) for vectors and blackboard bold-faced letters (e.g. $\RR,\MM$) for sets. We use square brackets with subscripts (e,g. $[\bcalT]_{i_1,i_2,i_3}$) to represent corresponding entries of tensors. 
Denote $\fro{\cdot}$ the Frobenius norm of tensors, and $\|\cdot\|_{\ell_p}$ the $\ell_p$-norm of tensors for $0< p\leq +\infty$. Specifically, $\linf{\bcalX}$ represents the largest magnitude of the entries of $\bcalX$. We denote $C,C_1,C_2,c,c_1,c_2,\ldots$ some absolute constants whose actual values might vary at different appearances. For nonnegative $A$ and $B$, the notation $A\lesssim B$ (equivalently, $B\gtrsim A$) means that there exists an absolute constant $C>0$ such that $A\leq CB$; $A\asymp B$ is equivalent to $A\lesssim B$ and $A\gtrsim B$, simultaneously. For any $1\leq \alpha\leq 2$, the Orlicz norm of a random variable is defined by  $\|X\|_{\psi_{\alpha}}:=\min\{C>0: \EE\exp\{|X/C|^{\alpha}\}\leq 2\}$.

An $m$-th order tensor $\bcalT\in\RR^{d_1\times\cdots\times d_m}$ means that its $j$-th dimension has size $d_j$.  
The $j$-th {\it matricization} $\calM_j(\cdot):\RR^{d_1\times\cdots\times d_m}\rightarrow \RR^{d_j\times d_j^-}$ with $d_j^- := d^*/d_j$. We also denote $(\cdot)_{(j)} := \calM_j(\cdot)$ for simplicity. If $m=3$, we have $[\calM_1(\bcalT)\big]_{i_1, (i_2-1)d_3+i_3}=[\bcalT]_{i_1,i_2,i_3}$ for $\forall i_j\in[d_j]$. The collection 
$\rank(\bcalT):=\big(\rank(\calM_1(\bcalT)),\cdots, \rank(\calM_m(\bcalT))\big)^{\top}$
is called the {\it Tucker ranks} of $\bcalT$. 
Given a matrix $\W_j\in \RR^{p_j\times d_j}$ for any $j\in[m]$, the mode-$j$ marginal product, denoted by $\times_j$,  between $\bcalT$ and $\W_j$ is defined by 
$
[\bcalT\times_j \W_j]_{i_1,\cdots,i_m}:=\sum\nolimits_{k=1}^{d_j}[\bcalT]_{i_1,\cdots,i_{j-1},k,i_{j+1},\cdots,i_m}\cdot [\W_j]_{i_j,k}, \ \forall i_{j'}\in [d_{j'}] \textrm{ for } j'\neq j; \forall i_j\in[p_j]. 
$
If $\bcalT$ has Tucker ranks $\r=(r_1,\cdots, r_m)^{\top}$, there exist $\bcalC\in\RR^{r_1\times\cdots\times r_m}$ and $\U_j\in\RR^{d_j\times r_j}$ satisfying $\U_j^{\top}\U_j=\I_{r_j}$ for all $j\in[m]$ such that $
\bcalT=\bcalC\cdot(\U_1,\cdots,\U_m):=\C\times_1\U_1\times_2\cdots\times_m \U_m
$, known as Tucker decomposition. More details of tensor algebra can be found in \cite{kolda2009tensor}.

Define $r^* = r_1\cdots r_m$, $d_{\submax} := \max_{i\in [m]}d_i$, and $d_{\submin} := \min_{i\in [m]} d_i$. Let $\d:=(d_1,\cdots, d_m)$. Denote $\MM_{\r}$ the collection of all $m$-way tensors of size $d_1\times \cdots\times d_m$ whose Tucker ranks are at most $\r$. 
Let $\dof := r^* + \sum_{i=1}^m d_ir_i$ be the degree of freedom of $\bcalT\in\MM_{\r}$.

\subsection{Generalized low-rank tensor learning}\label{sec:generalproblem}
A collection of tensorial data $\mathfrak{D}:=\{\mathfrak{D}_t: t=0, 1,\cdots,T-1\}$ is sequentially observed,  where we assume for now that the time horizon $T$ is known.  The scenario where $T$ is unknown requires a more involved treatment and will be specifically investigated in Section~\ref{sec:adaptive}. The sequence $\{\mathfrak{D}_t\}_{t\leq T-1}$ is i.i.d. sampled from a distribution characterized by an {\it unknown} tensor $\bcalT^{\ast}\in\RR^{d_1\times\cdots\times d_m}$ with  ranks $\r=(r_1,\cdots,r_m)$.  Here,  $r_j\ll d_j$. Without loss of generality,  let $\mathfrak{D}_t=(\bcalX_t, Y_t)$,  where $\bcalX_t$ is a covariate tensor of size $d_1\times\cdots\times d_m$ and $Y_t\in\RR$ denotes its corresponding response. 

Let $\ell(\cdot, \cdot): \RR^{d_1\times\cdots\times d_m}\times (\RR^{d_1\times\cdots\times d_m}, \RR)\rightarrow \RR$ be a loss function such that the incurred loss for the $t$-th observation is $\ell(\bcalT, \mathfrak{D}_t)$ at the estimate $\bcalT$.  Offline tensor learning is formulated as the following non-convex optimization program:
\begin{equation}\label{eq:offline}
\min_{\bcalT\in\MM_{\r}}\ \mathfrak{L}(\bcalT, \mathfrak{D}):=\sum_{t=0}^{T-1} \ell(\bcalT, \mathfrak{D}_t).
\end{equation}
Some particularly interesting examples are as follows. 

\begin{example}[linear regression]\label{ex:regression}
The observation satisfies $Y_t=\langle \bcalX_t, \bcalT^{\ast}\rangle+\epsilon_t$, where $\langle\cdot,\cdot \rangle$ denotes the Euclidean inner product, and the noise $\epsilon_t$ is {\it centered} sub-Gaussian with proxy variance $\sigma^2$. The square loss gives $\ell(\bcalT, \mathfrak{D}_t)=(Y_t-\langle \bcalX_t, \bcalT\rangle)^2/2$.  It finds applications in quantum state tomography \citep{gross2010quantum,xia2016estimation}, spatio-temporal forecasting \citep{arroyo2021inference}, multi-task learning \citep{chen2011integrating}, and 3D imaging processing \citep{guo2011tensor}, among others. 
\end{example}

\begin{example}[logistic regression]\label{ex:logisticregression}
Conditioned on $\bcalX_t$, the response follows a Bernoulli distribution,  $Y_t\sim {\rm Ber}(p_t)$ with  $p_t=f(\langle \bcalX_t, \bcalT^{\ast}; \sigma\rangle)$, where $f(u;\sigma)=(1+e^{-u/\sigma})^{-1}$ is the {\it logistic link}. The negative log-likelihood as a loss function is given by
$
\ell (\bcalT, \mathfrak{D}_t) = -Y_t\log \big(f(\inp{\bcalX_t}{\bcalT})\big) - (1-Y_t)\log \big(1 - f(\inp{\bcalX_t}{\bcalT})\big).
$
\end{example}

\begin{example}[Poisson regression]\label{ex:poissonregression}
Conditioned on $\bcalX_t$,  the response $Y_t$ follows the Poisson distribution $Y_t\sim \text{Pois}(I\exp(\inp{\bcalX_t}{\bcalT^*}))$, where $I>0$ stands for the intensity parameter. 
The negative log-likelihood is
$
\ell(\bcalT,\mathfrak{D}_t) = -I^{-1}Y_t\inp{\bcalX_t}{\bcalT} + \exp(\inp{\bcalX_t}{\bcalT}).
$
It finds applications in hyper-spectral imaging \citep{zhang2021low}, positron-emission tomography \citep{ollinger1997positron}, astronomical images \citep{molina1994hierarchical}, photon-limited imaging \citep{salmon2014poisson}, and online click-through data analysis \citep{shan2016predicting}.  
\end{example}

\begin{example}[noisy tensor completion]\label{ex:completion}
The pair $(\bcalX_t, Y_t)$ represents a random noisy entry of $\bcalT^{\ast}$. It is often assumed that $\bcalX_t$ is uniformly sampled from $\big\{(d^{\ast})^{1/2}\bcalE_{\omega}: \omega\in[d_1]\times\cdots\times [d_m] \big\}$, where the $\omega$-th entry of $\bcalE_{\omega}$ is one and all other entries are zero. The observation satisfies $Y_t=\langle \bcalX_t, \bcalT^{\ast}\rangle+\epsilon_t$, where the noise $\epsilon_t$ is {\it centered} sub-Gaussian with proxy variance $\sigma^2$. The square loss is given by $\ell(\bcalT, \mathfrak{D}_t)=(Y_t-\langle \bcalX_t, \bcalT\rangle)^2/2$. Besides low-rankness, the tensor $\bcalT^{\ast}$ possess an additional structure known as {\it incoherence} (see Section~\ref{sec:oNTC} for more details). Tensor completion has been studied by \cite{barak2016noisy, yuan2016tensor, bi2018multilayer,xia2019polynomial} and references therein. 
\end{example}

\begin{example}[binary tensor learning]\label{ex:binary}
The covariate $\bcalX_t$ is sampled as in noisy tensor completion, while $Y_t$ follows a Bernoulli distribution $Y_t\sim {\rm Ber}(p_t)$ with $p_t=f(\langle \bcalX_t, \bcalT^{\ast}\rangle; \sigma)$ being the logistic link. The loss function is typically the negative log-likelihood. Binary tensor learning has been studied by \cite{cai2013max, davenport20141, wang2020learning, han2022optimal, cai2022generalized} and references therein.
\end{example}

The objective function in (\ref{eq:offline}) can be minimized, {\it at least locally}, by several gradient-type algorithms \citep{xia2019polynomial, cai2019nonconvex,han2022optimal, cai2022generalized,kressner2014low}. The convergence performance and statistical behavior of these algorithms are well-understood in the offline setting. 
At the $t$-th iteration, with the current estimate $\bcalT_{t}$, these algorithms compute the gradient $\nabla\mathfrak{L}(\bcalT_{t},\mathfrak{D})$ defined over full dataset. Their theoretical investigations crucially rely on a certain concentration property of $\nabla\mathfrak{L}(\bcalT_{t}, \mathfrak{D})$, for which summing over a large dataset is usually necessary. This property, under suitable conditions, typically guarantees a contraction of error with high probability: 
\begin{align}\label{eq:offline-error}
\|\bcalT_{t+1}-\bcalT^{\ast}\|_{\rm F}\leq (1-\gamma)\cdot \|\bcalT_{t}-\bcalT^{\ast}\|_{\rm F}+\textsf{Statistical Error},
\end{align}
where $\gamma\in(0,1)$ is a constant independent of dimensions $\d$ and horizon $T$. 

\subsection{Online Riemannian gradient descent}\label{sec:algorithm}
We propose an online tensor learning algorithm called oRGrad, based on Riemannian gradient descent. Unlike the conventional method \citep{kressner2014low,cai2022generalized},  it computes the gradient using a single observation. When a new observation $\mathfrak{D}_t$ arrives, with the current estimate $\bcalT_t$, the Riemannian gradient $\calP_{\TT_{t}}\nabla \ell(\bcalT_{t}, \mathfrak{D}_t)$ is calculated. Here, $\calP_{\TT_{t}}$ denotes the projection on $\TT_{t}$,  the tangent space of the manifold $\MM_{\r}$ at $\bcalT_{t}$. 
The Riemannian gradient has a rank of at most $2\r$ dimension-wise, facilitating the subsequence computation of low-rank approximation. Closed-form expressions for the Riemannian gradient are well-known; see the Appendix for details. Notably, in some applications, the naive gradient is inherently low-rank. For example, in online tensor completion, it is rank-one, making the Riemannian gradient unnecessary. In such cases, the naive gradient suffices. In this section, we focus on a constant step size and a known time horizon, which facilitates a clear presentation of the dynamics of oRGrad. This serves as the foundation for the adaptive oRGrad algorithm discussed in Section~\ref{sec:adaptive}, where step sizes are adaptively chosen and the time horizon is unknown. 

ORGrad then updates the estimate to $\bcalT_{t}^+=\bcalT_{t}-\eta\cdot \calP_{\TT_{t}}\nabla \ell (\bcalT_{t}, \mathfrak{D}_t)$,  which is typically not an element of $\MM_{\r}$.  To address this,  we apply the higher order singular value decomposition (HOSVD), denoted by $\textsf{HOSVD}_{\r}$,  to retract $\bcalT_t^+$ back into $\MM_{\r}$.  Let $\U_{t,j}^+$ denote the top-$r_j$ left singular vectors of $\calM_j(\bcalT_{t}^{+})$.  Then,  
$$
\textsf{HOSVD}_{\r}(\bcalT_{t}^+):=\bcalT_{t}^+\cdot\big(\U_{t,1}^{+}\U_{t,1}^{+\top}, \cdots, \U_{t,m}^{+}\U_{t,m}^{+\top}\big).  
$$
Finally,  the estimate is updated to $\bcalT_{t+1}\leftarrow \textsf{HOSVD}_{\r}(\bcalT_{t}^+)$.  The detailed steps of oRGrad are enumerated in Algorithm~\ref{alg:oRGrad}. 
Refer to the Appendix for further details on the computational cost.

\begin{algorithm}[H]
	\caption{Online Riemannian gradient descent (oRGrad) -- {\it if time horizon is known}}
	\begin{algorithmic}
		\State{\textbf{Input:} initialization $\bcalT_0$, time horizon $T$ and step size $\eta>0$;}
		\For{$t=0, 1,2,\ldots,T-1$}
		\State{$\bcalG_{t} \leftarrow \nabla \ell(\bcalT_{t},\mathfrak{D}_t)$;}
		\State{$\bcalT_{t}^+ \leftarrow \bcalT_{t} -\eta\cdot \calP_{\TT_{t}}(\bcalG_{t})$;}
		\State{$\bcalT_{t+1} \leftarrow \textsf{HOSVD}_{\r}(\bcalT_{t}^+)$;}
		\EndFor
		\State{\textbf{Output:} $\bcalT_{T}$.}
	\end{algorithmic}
	\label{alg:oRGrad}
\end{algorithm}

\paragraph*{Online initialization} Note that oRGrad requires an initial estimator $\bcalT_0$. Our theorems require $\bcalT_0$ to be sufficiently close to the true value, as is typically necessary in the literature \citep{han2022optimal,cai2022generalized}. The design of the online initialization algorithms varies depending on specific applications, which will be provided in later sections.

Compared to its offline counterpart,  the convergence dynamics of oRGrad exhibit more local volatility.  The rationale is simple: at each iteration, the gradient $\nabla \ell(\bcalT_{t}, \mathfrak{D}_t)$ is computed on a single datum,  which can have remarkably high variance.  Consequently, contraction behavior like (\ref{eq:offline-error}) does not hold true for oRGrad.  Instead,  we can show,  {\it informally},  that
\begin{align}\label{eq:online-error}
\EE_{\leq t} \|\bcalT_{t+1}-\bcalT^{\ast}\|_{\rm F}\leq (1-\gamma)\cdot \EE_{\leq t-1} \|\bcalT_{t}-\bcalT^{\ast}\|_{\rm F}+\textsf{Statistical Error}.
\end{align}
Essentially,  contraction only holds in expectation,  and the event $\{\|\bcalT_{t+1}-\bcalT^{\ast}\|_{\rm F}> \|\bcalT_{t}-\bcalT^{\ast}\|_{\rm F}\}$ occurs with a non-negligible probability.  The expectation $\EE_{\leq t}$ on LHS of (\ref{eq:online-error}) is taken w.r.t. the randomness of the $\sigma$-field $\FF_t:=\sigma\big((\bcalX_s, Y_s): 0\leq s\leq t\big)$. We also denote $\EE_t$ as the expectation w.r.t. the randomness of $(\bcalX_t, Y_t)$, conditioning on the $\sigma$-filed $\FF_{t-1}$.

The subsequent sections are devoted to rigorously studying the computational and statistical performances of oRGrad for several popular low-rank models.   Some notations will frequently appear in these sections. The {\it signal strength} of $\bcalT^{\ast}$ is defined by $\lambda_{\submin}:=\min_{k\in[m]}\sigma_{r_k}\big(\calM_k(\bcalT^{\ast})\big)$, where $\sigma_r(\cdot)$ denotes the $r$-th largest singular value of a matrix. The {\it condition number} of $\bcalT^{\ast}$ is defined by $\kappa_0:=\lambda_{\submin}^{-1}\lambda_{\submax}$, with $\lambda_{\submax} = \max_{k\in [m]} \|\calM_k(\bcalT^{\ast})\|$.

\section{Online Generalized Tensor Regression}\label{sec:regression}
Throughout this section, we focus on the {\it sub-Gaussian} design under Assumption~\ref{assump:X-design}.

\begin{assumption}\label{assump:X-design}
There exist absolute constants $c_0, C_0>0$ such that 
$
\EE \langle \bcalX_t, \bcalM\rangle=0, \ c_0\leq \EE \langle \bcalX_t, \bcalM\rangle^2/\|\bcalM\|_{\rm F}^2\leq C_0,\ {\rm and}\ \big\|\langle\bcalX_t, \bcalM \rangle \big\|_{\psi_2}\leq C_0 \|\bcalM\|_{\rm F} 
$
for all $t\leq T$ and any $\bcalM$,  where $\|\cdot\|_{\psi_2}$ denotes the Orlicz norm. 
\end{assumption}

Suppose that the loss function is given by 
$
\ell(\bcalT, \mathfrak{D}_t)=h(\langle \bcalT, \bcalX_t\rangle, Y_t),
$
where the function $h(\cdot, \cdot): \RR\times \YY \mapsto \RR$ is locally smooth and strongly convex,  satisfying Assumption~\ref{assump:GLM-loss}.  Here,  $\YY$ denotes the sample space of $Y_t$. 

\begin{assumption}\label{assump:GLM-loss} For any $\alpha>0$, there exist $\gamma_{\alpha}, \mu_{\alpha}>0$ such that 
$$
\gamma_{\alpha}(\theta_1-\theta_2)^2\leq (\theta_1-\theta_2)\big(h_{\theta}(\theta_1, y)-h_{\theta}(\theta_2, y)\big)\leq \mu_{\alpha} (\theta_1-\theta_2)^2
$$
for all $|\theta_1|, |\theta_2|\leq \alpha$ and $y\in \YY$. Here,  $h_{\theta}(\theta,y) = \frac{\partial h}{\partial \theta}(\theta, y)$ is the partial derivative.
\end{assumption}

The range of $\alpha$ depends on specific applications.  For example,  $\alpha=\infty$ and $\gamma_{\infty}=\mu_{\infty}=1$ if $h(\theta, y):=(\theta-y)^2/2$.  
The following assumption restricts $\bcalT^*$ to a bounded space. 

\begin{assumption}\label{assump:GLM-trueT}
There exists a large constant $C>0$ such that $C\|\bcalT^{\ast}\|_{\rm F}\log^{1/2}\dmax\leq \alpha$. Moreover, 
$\EE_{Y_t} h_{\theta}(\langle \bcalT^{\ast}, \bcalX_t\rangle, Y_t) = 0.$
\end{assumption}


Define $g_t(\inp{\bcalX_t}{\bcalT^*}): = \EE_{Y_t}h_{\theta}^2(\inp{\bcalX_t}{\bcalT^*},Y_t)$.  
The statistical accuracy of oRGrad is characterized by 
$
\errtwo^2 := \max_{t=0}^T \max_{|\theta|\leq\alpha}g_t(\theta).
$
For example,  $\errtwo^2 = O(\sigma^2)$ in linear regression.  Another important quantity is 
$
	\errinf := \max_{t=0}^T |h_{\theta}(\inp{\bcalX_t}{\bcalT^*},Y_t)|,  
$
which provides a uniform bound for the noise.  For notational simplicity,  we denote $\errratio:=\errtwo/\errinf$. 

\begin{theorem}\label{thm:gen}
Suppose Assumptions~\ref{assump:X-design}-\ref{assump:GLM-trueT} hold,  the initialization $\bcalT_0\in\MM_{\r}$ satisfies $\fro{\bcalT_0 - \bcalT^*}\leq c_m\mu_{\alpha}^{-1}\gamma_{\alpha}\lambda_{\submin}$ for some sufficient small constant $c_m>0$.  Also assume $C_0\errratio^{-2}\log\dmax\leq\dof$,  the step size $\eta$ satisfies 
\begin{itemize}
		\item [(1)] $C_1\eta\gamma_{\alpha}^{-1}\max\bigg\{\errratio^2\dof\log\dmax, \errratio^2\mu_{\alpha}^2\dof\log\dmax,\mu_{\alpha}\log\dmax,\log^3\dmax\bigg\}\leq 1$,
		\item[(2)] $C_2\eta\gamma_{\alpha}\errratio^{-4}\log^3\dmax\leq 1$ and $C_3\eta\mu_{\alpha}\dof\log^{5/2}\dmax\leq 1$,
\end{itemize}
and the signal strength satisfies
$$
	\lambda_{\submin}^2/ \errinf^2\geq C_4\eta\errratio^{-2}\gamma_{\alpha}^{-1}\dof\log^2\dmax
\quad
{\rm and}
\quad
	\lambda_{\submin}^2/\errtwo^2\geq C_m
	\eta\gamma_{\alpha}^{-3}\mu_{\alpha}^2\dof,
$$
where $C_0,\ldots,C_4>0$ are absolute constants,  and $c_m,  C_m>0$ depends only on $m$.  
Then, there exists an absolute constant $C>0$ such that, with probability exceeding $1-14T\dmax^{-10}$,  for all $t\leq T$,  Algorithm~\ref{alg:oRGrad} guarantees 
$$
\fro{\bcalT_t-\bcalT^*}^2 \leq 2\Big(1-\frac{\eta\gamma_{\alpha}}{8}\Big)^t\fro{\bcalT_0 - \bcalT^*}^2 + C\eta\gamma_{\alpha}^{-1}\dof\errtwo^2.
$$
\end{theorem}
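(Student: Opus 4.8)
The plan is to establish the error recursion \eqref{eq:online-error} rigorously and then unroll it. First I would decompose one step of oRGrad: writing $\bcalT_t^+ = \bcalT_t - \eta\,\calP_{\TT_t}\nabla\ell(\bcalT_t,\mathfrak{D}_t)$, I split the Riemannian gradient into its deterministic (population) part and a martingale-difference noise part, namely $\nabla\ell(\bcalT_t,\mathfrak{D}_t) = \EE_t\nabla\ell(\bcalT_t,\mathfrak{D}_t) + \bcalZ_t$ with $\EE_t\bcalZ_t = 0$. For the population part, Assumption~\ref{assump:GLM-loss} (local strong convexity/smoothness of $h$) together with Assumption~\ref{assump:X-design} (sub-Gaussian design, so a restricted-isometry-type bound holds \emph{in expectation}) gives the contraction $\EE_t\langle\calP_{\TT_t}(\text{population gradient}), \bcalT_t-\bcalT^*\rangle \gtrsim \gamma_\alpha\fro{\bcalT_t-\bcalT^*}^2$ modulo a curvature error from the tangent-space projection $\calP_{\TT_t}$ that is quadratically small in $\fro{\bcalT_t-\bcalT^*}/\lambda_{\submin}$ — this is where the initialization radius $\fro{\bcalT_0-\bcalT^*}\le c_m\mu_\alpha^{-1}\gamma_\alpha\lambda_{\submin}$ and the SNR conditions enter. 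The HOSVD retraction step is handled by the standard quasi-projection property: $\fro{\textsf{HOSVD}_{\r}(\bcalT_t^+)-\bcalT^*}\le (1+\delta)\fro{\bcalT_t^+-\bcalT^*}$ for small $\delta$ since $\bcalT^*\in\MM_{\r}$, so it only inflates constants.

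Next I would expand $\fro{\bcalT_{t+1}-\bcalT^*}^2$: the cross term with the population gradient contributes $-2\eta\gamma_\alpha\fro{\bcalT_t-\bcalT^*}^2$ (up to the curvature error and the $\mu_\alpha$-dependent upper bound), the squared gradient term contributes $O(\eta^2\mu_\alpha^2\fro{\bcalT_t-\bcalT^*}^2)$ which is absorbed into the contraction once $\eta\mu_\alpha$ is small (condition (3) and the $\log\dmax$ factors in (1)), the squared noise term contributes $O(\eta^2\fro{\calP_{\TT_t}\bcalZ_t}^2)$, and the cross term $-2\eta\langle\calP_{\TT_t}\bcalZ_t, \bcalT_t-\bcalT^*\rangle$ is a martingale difference. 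Taking $\EE_t$, the martingale cross term vanishes, yielding $\EE_t\fro{\bcalT_{t+1}-\bcalT^*}^2 \le (1-\eta\gamma_\alpha/4)\fro{\bcalT_t-\bcalT^*}^2 + C\eta^2\dof\,\errtwo^2$ (the noise variance bound $\EE_t\fro{\calP_{\TT_t}\bcalZ_t}^2 \lesssim \dof\,\errtwo^2$ following from $\errtwo^2 = \max g_t$ and the fact that $\calP_{\TT_t}$ has rank $O(\dof)$, together with the sub-Gaussian design). Iterating this conditional-expectation recursion from $0$ to $t$ gives $\EE\fro{\bcalT_t-\bcalT^*}^2 \le (1-\eta\gamma_\alpha/4)^t\fro{\bcalT_0-\bcalT^*}^2 + C\eta\gamma_\alpha^{-1}\dof\,\errtwo^2$ after summing the geometric series — which is essentially the claimed bound but only in expectation.

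The hard part will be upgrading the in-expectation contraction to a \emph{high-probability, uniform-over-$t\le T$} statement, i.e. controlling $\sup_{t\le T}\fro{\bcalT_t-\bcalT^*}^2$ — because the dynamics are genuinely volatile (as the paper emphasizes around \eqref{eq:online-error}, the event $\fro{\bcalT_{t+1}-\bcalT^*}>\fro{\bcalT_t-\bcalT^*}$ is not rare). My approach here is a supermartingale / stopping-time argument: define the "good" event that $\fro{\bcalT_s-\bcalT^*}\le R$ (the initialization radius) for all $s\le t$, show that on this event the per-step increments of $\fro{\bcalT_t-\bcalT^*}^2$ minus its drift form a bounded martingale-difference sequence, and apply a Freedman-type Bernstein inequality for martingales to show the process stays below $2(1-\eta\gamma_\alpha/8)^t\fro{\bcalT_0-\bcalT^*}^2 + C\eta\gamma_\alpha^{-1}\dof\,\errtwo^2$ with probability $1-O(T\dmax^{-10})$. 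The $\errinf$-based SNR condition and $\errratio = \errtwo/\errinf$ are exactly what is needed to bound the martingale increments almost surely (the $\errinf$ term bounds the worst-case gradient magnitude, the $\errtwo$ term its variance), and the various $\log\dmax$ and $\log^3\dmax$ factors in conditions (1)–(3) are the price of the union bound over $t\le T$ and of the tail bounds for the sub-Gaussian/sub-exponential quantities $\langle\bcalX_t,\bcalM\rangle$ and $h_\theta(\langle\bcalX_t,\bcalT^*\rangle,Y_t)$. A secondary technical nuisance is that the strong-convexity constant $\gamma_\alpha$ is only valid for $|\langle\bcalT,\bcalX_t\rangle|\le\alpha$, so one must maintain the invariant $\fro{\bcalT_t-\bcalT^*}\log^{1/2}\dmax \lesssim \alpha$ throughout (using Assumption~\ref{assump:GLM-trueT} and a sub-Gaussian tail bound on $\langle\bcalX_t,\bcalT_t-\bcalT^*\rangle$), which folds back into the same stopping-time event.
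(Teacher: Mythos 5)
Your overall architecture is the same as the paper's: a one-step decomposition into a strongly-convex drift (giving the $-\eta\gamma_\alpha\fro{\bcalT_t-\bcalT^*}^2$ contraction via Assumption~\ref{assump:GLM-loss}, with the tangent-space leakage $\calP_{\TT_t}^\perp$ controlled to third order in $\fro{\bcalT_t-\bcalT^*}/\lambda_{\submin}$), a variance term of order $\eta^2\dof\,\errtwo^2$, martingale fluctuations handled by an Azuma/Freedman bound restricted to a good event, and an induction plus union bound over $t\le T$ to make the statement uniform and high-probability. That part of the proposal is sound and matches the paper's Steps 1--3.

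The genuine gap is your treatment of the retraction. The ``standard quasi-projection property'' of $\hosvd$ only gives $\fro{\hosvd(\bcalT_t^+)-\bcalT_t^+}\le\sqrt{m}\,\min_{\bcalB\in\MM_{\r}}\fro{\bcalT_t^+-\bcalB}$, hence $\fro{\hosvd(\bcalT_t^+)-\bcalT^*}\le(1+\sqrt{m})\fro{\bcalT_t^+-\bcalT^*}$; the inflation factor $\delta$ is an absolute constant, not something you can take to be $o(1)$. In the offline setting this is harmless because the per-step contraction factor is bounded away from $1$, but here the contraction is only $1-\Theta(\eta\gamma_\alpha)$ with $\eta\gamma_\alpha\ll1$, so a constant multiplicative inflation per step destroys the geometric decay outright. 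If instead you try to salvage it by splitting $\fro{\bcalT_{t+1}-\bcalT^*}^2\le(1+\delta^{-1})\fro{\hosvd(\bcalT_t^+)-\bcalT_t^+}^2+(1+\delta)\fro{\bcalT_t^+-\bcalT^*}^2$ with $\delta\asymp\eta\gamma_\alpha$, the quasi-optimality bound $\fro{\hosvd(\bcalT_t^+)-\bcalT_t^+}\le\sqrt{m}\,\eta\fro{\calP_{\TT_t}\bcalG_t}$ is only \emph{first} order in $\eta$, so the penalty term is of order $\eta\gamma_\alpha^{-1}\fro{\calP_{\TT_t}\bcalG_t}^2\gtrsim\eta\gamma_\alpha^{-1}\dof\,\mu_\alpha^2\fro{\bcalT_t-\bcalT^*}^2$, which cannot be absorbed into the $\eta\gamma_\alpha\fro{\bcalT_t-\bcalT^*}^2$ drift under any reasonable condition on $\eta$ (it would force $\dof\lesssim\gamma_\alpha^2/\mu_\alpha^2$). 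What is actually needed, and what the paper supplies as Lemma~\ref{lemma:perturbation}, is a \emph{second-order} retraction bound $\fro{\hosvd(\bcalT_t^+)-\bcalT_t^+}\lesssim m\,\eta^2\fro{\calP_{\TT_t}\bcalG_t}^2/\lambda_{\submin}$, which crucially exploits that the perturbation $-\eta\calP_{\TT_t}\bcalG_t$ lies in the tangent space at $\bcalT_t$ and is proved via the spectral-projector representation formula. With that lemma the retraction contributes only $O(\eta^3\gamma_\alpha^{-1}\fro{\calP_{\TT_t}\bcalG_t}^4/\lambda_{\submin}^2)$ after the Cauchy--Schwarz split, which is where the SNR conditions and condition (3) on $\eta$ enter. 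Without this ingredient your recursion does not close.
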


Theorem~\ref{thm:gen} demonstrates that the oRGrad algorithm converges linearly. The contraction rate is even independent of the condition number of $\bcalT^{\ast}$. 
The step size balances computational convergence and statistical error: a larger step size speeds up convergence but increases statistical error and demands a higher SNR. In the following subsections, we explore various applications. Due to space constraint, the application of tensor logistic regression is provided in the Appendix.

\subsection{Linear regression}
Recall that the loss function $h(\theta,y) = \frac{1}{2}(\theta - y)^2$,  and the noise is sub-Gaussian with proxy variance $\sigma^2$.   Assumptions~\ref{assump:GLM-loss} and \ref{assump:GLM-trueT} hold  with $\alpha = \infty$ and $ \gamma_{\alpha} = \mu_{\alpha} = 1$.  Furthermore, we have $\errtwo^2= O(\sigma^2)$ and $\PP\big(\errinf\leq  C\sigma\log^{1/2}\dmax\big)\geq 1 - 2T\dmax^{-10}$  for some absolute constant $C>0$.  This immediately implies the following theorem.  

\begin{theorem}\label{thm:regression}
Suppose Assumption~\ref{assump:X-design} holds,  and the initialization $\bcalT_0\in\MM_{\r}$ satisfies $\fro{\bcalT_0 - \bcalT^*}\leq c_m\lambda_{\submin}$ for some sufficiently small constant $c_m>0$.  Assume also $C_0\log^5\dmax\leq\dof$,  $\eta$ satisfies $C_1\eta\dof\leq 1$, and the signal-to-noise ratio (SNR) satisfies 
$
			\lambda_{\submin}^2/\sigma^2\geq C_m\eta\dof\log^4\dmax,
$
where $C_0, C_1>0$ are absolute constants, and $c_m, C_m>0$ depend only on $m$. 	Then, there exists an absolute constant $C>0$ such that,  with probability exceeding $1-16T\dmax^{-10}$,  for all $t\leq T$,  Algorithm~\ref{alg:oRGrad} guarantees
\begin{equation}\label{eq:thm:regression}
	\fro{\bcalT_t-\bcalT^*}^2 \leq 2\Big(1-\frac{\eta}{8}\Big)^t\cdot\fro{\bcalT_0 - \bcalT^*}^2 + C\eta\dof\cdot\sigma^2.
\end{equation}
\end{theorem}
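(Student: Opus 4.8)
The plan is to derive Theorem~\ref{thm:regression} as a direct corollary of Theorem~\ref{thm:gen} by verifying all the hypotheses of the latter in the specialized linear-regression setting. First I would record the loss-function facts: with $h(\theta,y)=\frac12(\theta-y)^2$ we have $h_\theta(\theta,y)=\theta-y$, so $(\theta_1-\theta_2)(h_\theta(\theta_1,y)-h_\theta(\theta_2,y))=(\theta_1-\theta_2)^2$, giving $\gamma_\alpha=\mu_\alpha=1$ for every $\alpha$, in particular $\alpha=\infty$; Assumption~\ref{assump:GLM-loss} holds. For Assumption~\ref{assump:GLM-trueT}, since $\alpha=\infty$ the condition $C\fro{\bcalT^*}\log^{1/2}\dmax\le\alpha$ is vacuous, and $\EE_{Y_t}h_\theta(\inp{\bcalX_t}{\bcalT^*},Y_t)=\EE_{Y_t}(\inp{\bcalX_t}{\bcalT^*}-Y_t)=-\EE\epsilon_t=0$ by the centered-noise assumption, so the model-fit condition holds.

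Next I would control the two noise functionals. Since $h_\theta(\inp{\bcalX_t}{\bcalT^*},Y_t)=-\epsilon_t$, we get $g_t(\theta)=\EE_{Y_t}h_\theta^2=\EE\epsilon_t^2$, which under the sub-Gaussian tail $\EE e^{\epsilon_t^2/\sigma^2}\le 2$ is $O(\sigma^2)$; hence $\errtwo^2=\max_t\max_{|\theta|\le\alpha}g_t(\theta)=O(\sigma^2)$. For the uniform bound, $\errinf=\max_{t\le T}|\epsilon_t|$, and a union bound over $T$ indices against the sub-Gaussian tail of each $\epsilon_t$ yields $\PP(\errinf\le C\sigma\log^{1/2}\dmax)\ge 1-2T\dmax^{-10}$ (choosing the constant $C$ large enough to absorb the $10\log\dmax$ from the union bound, using $\log T\lesssim\log\dmax$ implicitly, or more carefully $T\dmax^{-10}$ as stated). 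Consequently $\errratio=\errtwo/\errinf\asymp(\log\dmax)^{-1/2}$ on this event, so $\errratio^{-2}\asymp\log\dmax$ and $\errratio^{2}\asymp(\log\dmax)^{-1}$.

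I would then substitute $\gamma_\alpha=\mu_\alpha=1$ and $\errratio^2\asymp(\log\dmax)^{-1}$, $\errratio^{-2}\asymp\log\dmax$ into each numbered hypothesis of Theorem~\ref{thm:gen} and check it is implied by the cleaner hypotheses $C_0\log^5\dmax\le\dof$, $C_1\eta\dof\le 1$, and $\lambda_{\submin}^2/\sigma^2\ge C_m\eta\dof\log^4\dmax$. Concretely: condition (1) becomes $C_1\eta\max\{\dof,\;\dof,\;\log\dmax,\;\log^3\dmax\}\le 1$, i.e. $C_1\eta\dof\le 1$ after noting $\dof\gtrsim\log^5\dmax\ge\log^3\dmax$; condition (2) becomes $C_2\eta(\log\dmax)^{-2}\log^3\dmax=C_2\eta\log\dmax\le 1$, again implied by $\eta\dof\le 1$ and $\dof\gtrsim\log\dmax$; condition (3) becomes $C_3\eta\dof\log^{5/2}\dmax\le 1$ — this is the one that genuinely needs $\dof$ not too small relative to the requirement, but it is subsumed by tightening $C_1$ in $C_1\eta\dof\le 1$ to also cover the $\log^{5/2}\dmax$ factor, or, as the theorem is stated, by reading $C_1\eta\dof\le1$ with $C_1$ large enough; the SNR conditions become $\lambda_{\submin}^2/\errinf^2\gtrsim\eta\log\dmax\cdot\dof\log^2\dmax=\eta\dof\log^3\dmax$ and $\lambda_{\submin}^2/\sigma^2\gtrsim\eta\dof$, both implied by $\lambda_{\submin}^2/\sigma^2\ge C_m\eta\dof\log^4\dmax$ together with $\errinf^2\lesssim\sigma^2\log\dmax$. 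The initialization requirement $\fro{\bcalT_0-\bcalT^*}\le c_m\mu_\alpha^{-1}\gamma_\alpha\lambda_{\submin}=c_m\lambda_{\submin}$ matches the stated one, and $C_0\errratio^{-2}\log\dmax\le\dof$ becomes $C_0\log^2\dmax\le\dof$, implied by $C_0\log^5\dmax\le\dof$. Finally I would invoke Theorem~\ref{thm:gen}: on the intersection of its success event (probability $\ge 1-14T\dmax^{-10}$) with the event $\{\errinf\le C\sigma\log^{1/2}\dmax\}$ (probability $\ge 1-2T\dmax^{-10}$), a union bound gives total failure probability at most $16T\dmax^{-10}$, and on this event the bound $\fro{\bcalT_t-\bcalT^*}^2\le 2(1-\eta\gamma_\alpha/8)^t\fro{\bcalT_0-\bcalT^*}^2+C\eta\gamma_\alpha^{-1}\dof\errtwo^2$ reduces, via $\gamma_\alpha=1$ and $\errtwo^2=O(\sigma^2)$, to \eqref{eq:thm:regression}. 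The only mildly delicate point is bookkeeping the absolute constants so that the various $\log$-powers in Theorem~\ref{thm:gen}'s hypotheses are all absorbed into the single conditions $\dof\gtrsim\log^5\dmax$, $\eta\dof\lesssim 1$, and the stated SNR bound; the probabilistic part (the union bound for $\errinf$) and the model-fit verification are routine, so I do not expect any real obstacle here — the theorem is essentially an instantiation.
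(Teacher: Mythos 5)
Your proposal follows exactly the paper's route: Theorem~\ref{thm:regression} is obtained by instantiating Theorem~\ref{thm:gen} with $h(\theta,y)=(\theta-y)^2/2$, $\gamma_\alpha=\mu_\alpha=1$, $\errtwo^2=O(\sigma^2)$ and $\errinf\lesssim\sigma\log^{1/2}\dmax$ with probability at least $1-2T\dmax^{-10}$, which is precisely the verification the paper itself records before the theorem statement. Two small bookkeeping points: since $\errratio\asymp(\log\dmax)^{-1/2}$ you have $\errratio^{-4}\asymp\log^{2}\dmax$ (not $(\log\dmax)^{-2}$), so condition (2) of Theorem~\ref{thm:gen} reads $\eta\log^{5}\dmax\lesssim 1$ rather than $\eta\log\dmax\lesssim 1$ --- still implied by $\eta\dof\lesssim 1$ together with $\dof\gtrsim\log^{5}\dmax$, which is evidently why that hypothesis is present; and the $\log^{5/2}\dmax$ factor in condition (3) cannot literally be absorbed into an absolute constant $C_1$ in $C_1\eta\dof\le 1$, though this looseness is inherited from the paper's own statement rather than introduced by you.
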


For ease of interpretation,  assume $m,\rmax\asymp O(1)$ and $d_j \asymp  d$ for all $j\in[m]$.  The oRGrad algorithm converges the fastest by setting $\eta=\eta_{\textsf{comp}}:=c_md^{-1}\cdot \min\big\{\lambda_{\submin}^2/(\sigma^2\log^4d),\ 1\}$.  However,  this aggressive step size results in an estimator that is not even consistent.  If the time horizon $T$ is known, we can set $\eta=\eta_{\textsf{stat}}:=C_m T^{-1}\log d$ for a sufficiently large constant $C_m>0$ depending only on $m$.  When $T\leq d^{C_m}, \lambda_{\submin}/\sigma\leq d^{C_m}$ for a large constant $C_m>0$, this leads to 
$$
\fro{\bcalT_T - \bcalT^*}^2 =O_m\bigg( \sigma^2\cdot \frac{\dof\log d}{T}\bigg),
$$
holding with high probability, which is minimax optimal up to the logarithmic factor. See, e.g., \cite{chen2019non, zhang2020islet,han2022optimal}.  A similar rate is achieved in \cite{han2022online} for online inference under  matrix linear bandit,  requiring stronger SNR and initialization conditions.  However,  the bandit problem is usually more challenging because finding a tradeoff between exploration and exploitation is crucial.  

\subsubsection{Online initialization}  
Our online initialization algorithm consists of two main stages: subspace estimation and core tensor estimation.  See Algorithm~\ref{alg:init:linear-regression} for the detailed steps,  where $T_1$ and $T_2$ represent the sample sizes of the first and second stage,  respectively.  
Each time step in the first stage requires $O(d^* + \sum_{j=1}^m d_j^2)$ storage.  In the second stage, the storage requirement is reduced to  $O\big((r^*)^2\big)$.    The first stage of Algorithm~\ref{alg:init:linear-regression} is an online implementation of the second order moment method \citep{xia2019polynomial} for spectral initialization.  
 The conventional offline method requires $O\big(T_0(d^* + 1)\big)$ storage if $T_0$ observations are used for initialization.  

\begin{algorithm}
	\caption{Online initialization for tensor linear regression}
	\begin{algorithmic}
		\State{Set $\bcalM_0 = {\bf 0},  \N_{1,0}=\cdots=\N_{m,0}={\bf 0}$; }
		\For{$t = 1,\cdots, T_1$}  \Comment{second order moment}
		\State{\hspace{0.1cm}$\N_{j,t} = \N_{j,t-1} + Y_t\calM_j(\bcalM_{t-1})\calM_j(\bcalX_t)^\top + Y_t\calM_j(\bcalX_t)\calM_j(\bcalM_{t-1})^\top,\quad \forall j\in[m]$;}
		\State{\hspace{0.1cm}$\bcalM_t = \bcalM_{t-1} + Y_t\bcalX_t;$}
		\EndFor 
		\State{$\hat\U_j = \svd_{r_j}(\hat\N_j),\ {\rm where}\ \hat\N_j = \frac{1}{T_1(T_1-1)}\N_{j,T_1},\ \forall j\in[m]$;} \Comment {spectral method}
		\For{$t = T_1+1,\cdots, T_1+T_2$} \Comment{Gram matrix}
		\State{Compute $\bcalC_t = \bcalX_t\times_1\hat\U_1^{\top}\cdots \times_m\hat\U_m^{\top}$, }
		\State{\hspace{1.2cm}$\X_{t-T_1} = \X_{t - T_1-1} + \vec(\bcalC_t)\vec(\bcalC_t)^\top \quad {\rm and}\quad \R_{t-T_1} = \R_{t-T_1 -1} + Y_t\bcalC_t$;}
		\EndFor
		\State{$\hat\bcalC = \reshape(\hat\c,\r),\ {\rm where}\ \hat\c = \X_{T_2}^{-1}\R_{T_2}$;} \Comment{linear regression}
		\State{\textbf{Output: }$\hat\bcalT = \hat\bcalC\times_1\hat\U_1\cdots \times_m\hat\U_m.$}
	\end{algorithmic}
	\label{alg:init:linear-regression}
\end{algorithm}

\begin{theorem}\label{thm:init:linear-regression}
	Suppose the sample sizes satisfy $T_1 \geq C_1 r_{\submin}\kappa_0^3\big((d^*)^{1/2} + d_{\submax}\kappa_0^3\big)\log^2\dmax$ and $T_2 \geq C_3(r^* + \log \dmax)$,  and the SNR $\lambda_{\submin}^2/\sigma^2\geq C_2 T_1^{-1}\big((d^*)^{1/2} + d_{\submax}\kappa_0^3\big)\kappa_0\log^2\dmax$ and $\lambda_{\submin}^2/\sigma^2\geq C_4T_2^{-1}(r^* + \log \dmax)$ hold 	for some constant $C_1,C_2, C_3,C_4>0$ depending only on $c_m$. 
	Then,  with probability exceeding $1-8\dmax^{-10}$,  Algorithm \ref{alg:init:linear-regression} outputs $\hat\bcalT$ satisfying 
	$
		\fro{\hat\bcalT - \bcalT^*}\leq c_m\lambda_{\submin},  
	$
	where $c_m\in(0,1)$ is a constant depending only on $m$.  
\end{theorem}

Theorem~\ref{thm:init:linear-regression} requires a total sample size on the order of $((d^*)^{1/2} + \dmax)\log \dmax$,  which matches the best-known existing results.  See,  e.g.,  \cite{shen2022computationally},  \cite{han2022optimal} and references therein.

\subsection{Poisson regression}
Recall from Example \ref{ex:poissonregression} that $Y_t|\bcalX_t \sim\text{Pois}\big(I\exp(\inp{\bcalX_t}{\bcalT^*})\big)$ with $I$ being the {\it intensity} parameter.  Assumption \ref{assump:GLM-loss} holds for the loss $h(\theta, y) = 	-I^{-1}y\theta+ e^{\theta}$ with $\gamma_{\alpha} = e^{-\alpha}$ and $\mu_{\alpha} = e^{\alpha}$ for any $\alpha>0$.   Without loss of generality,   we assume $I\geq 1$.

\begin{theorem}\label{thm:poissonregression}
Suppose that Assumptions~\ref{assump:X-design} and \ref{assump:GLM-trueT} hold,  the initialization satisfies $\fro{\bcalT_0 - \bcalT^*}\leq c_me^{-2\alpha}\lambda_{\submin}$ for some sufficiently small $c_m>0$, 
	$\dof\geq \tilde C_1 I^{-1}\log^3\dmax$ with $\tilde C_1 = C_1e^{-\alpha}\log^{-2}(1+2I^{-1}e^{-\alpha})$, and 
	$ \tilde C_2\eta\dof\max\{I\log^{-1}\dmax, \log^{5/2}\dmax\}\leq 1 $ with $ \tilde C_2 = C_2 e^{\alpha}\log^2(Ie^{-\alpha}\log 2 + 1)$,
where $C_1, C_2>0$ are absolute constants and $c_m$ depends only on $m$. Then,  there exists an absolute constant $C_3>0$ such that,  with probability at least $1-16T\dmax^{-10}$,  for all $t\leq T$,  Algorithm~\ref{alg:oRGrad} guarantees
$$
\fro{\bcalT_t-\bcalT^*}^2 \leq 2\Big(1-\frac{\eta e^{-\alpha}}{8}\Big)^t\fro{\bcalT_0 - \bcalT^*}^2 + C_3e^{2\alpha}\eta I^{-1}\dof.
$$
\end{theorem}

If $m,\rmax,\alpha \asymp O(1)$ and $d_j \asymp d$,  Theorem~\ref{thm:poissonregression} requires $\eta\leq \eta_{\textsf{comp}}=\tilde O(\frac{1}{dI})$,  where $\tilde O$ hides logarithmic factors.   By setting $\eta=\eta_{\textsf{stat}}:=C_mT^{-1}e^{\alpha}\log d$ with a large enough constant $C_m>0$, we get
$$
\fro{\bcalT_T-\bcalT^*}^2 =O_m\bigg( e^{3\alpha} I^{-1}\cdot \frac{\dof\log d}{T}\bigg).
$$
Note that $I^{-1/2}$ is often interpreted as noise level \citep{han2022optimal,cai2022generalized}.

\subsubsection{Online initialization}\label{sec:init-poisson}
We implement online spectral initialization by unfolding a tensor into a matrix of balanced sizes. Define $\calI: = \arg\min_{\calI\in [m]}\max\{d_{\calI}, d_{\calI}^{-}\}$, where $d_{\calI}: = \prod_{i\in\calI}d_i$ and $d_{\calI}^{-}:=d^{\ast}/d_{\calI}$. Let $\calM_{\calI}:\RR^{d_1\times\cdots\times d_m}\rightarrow \RR^{D_1\times D_2}$ denote the linear operator which unfolds a tensor into a matrix of size $d_{\calI}\times d_{\calI}^{-}$. Define $r_{\calI}$ and $r_{\calI}^{-}$ similarly. 

\begin{algorithm}
	\caption{Online initialization for Poisson regression}
	\begin{algorithmic}
		\State{Set $\M_0 = {\bf 0}$,  $\calI\subset[m]$, $R=\max\{r_{\calI}, r_{\calI}^{-}\}$, and $g(y):=\log\big((y+1/2)/I\big)$; }
		\For{$t = 1,\cdots, T$}
		\State{$\M_t = \M_{t-1} + g(Y_t)\calM_{\calI}\big(\bcalX_t\big)$;}
		\EndFor
		\State{$\widetilde\bcalT = \calM_{\calI}^{-1}(\hat\M),$ where $\widehat \M=\svd_R(\M_{T})$;}
		\State{$(\widehat\bcalC,\widehat\U_1,\cdots,\widehat\U_m) = \hosvd(\widetilde\bcalT)$;}
		\State{\textbf{Output: }$\widehat\bcalT = \widehat\bcalC\times_1\widehat\U_1\cdots \times_m\widehat\U_m$.}
	\end{algorithmic}
	\label{alg:init:poisson-regression:spectral}
\end{algorithm}

\begin{theorem}\label{thm:init:poisson}
Let $D_{\submax}:=\max\{d_{\calI}, d_{\calI}^{-}\}$ and $R=\max\{r_{\calI}, r_{\calI}^{-}\}$. Suppose that $I \geq C_{1,m}R\lambda_{\submin}^{-2}e^{5\alpha}\log \dmax$ and $T \geq C_{2,m}D_{\submax}\log(D_{\submax})e^{4\alpha}(1+\alpha)^2\lambda_{\submin}^{-2}$	for some constants $C_{1,m},C_{2,m}>0$ depending only on $m$. Then, with probability exceeding $1-5T\dmax^{-100}$, the output of Algorithm~\ref{alg:init:poisson-regression:spectral} satisfies
$
\fro{\hat\bcalT - \bcalT^*}\leq c_{0,m}e^{-2\alpha}\lambda_{\submin} 
$
for some small constant $c_{0,m}>0$. 
\end{theorem}

If $d_j\asymp d$ and $\lambda_{\submin}, m=O(1)$,  Theorem~\ref{thm:init:poisson} requires  a total sample size on the order $d^{\lceil\frac{m}{2}\rceil}\log d$ for a warm initialization. In light of the online nature, Algorithm~\ref{alg:init:poisson-regression:spectral} consumes only $O(d^{\ast})$ storage.

\section{Online Noisy Tensor Completion}\label{sec:oNTC}
Online noisy tensor completion (Example~\ref{ex:completion}) aims to reconstruct a tensor by sequentially observing its entries with noise. We assume that $\bcalX_t$ is uniformly sampled from the scaled orthonormal basis $\big\{(d^{\ast})^{1/2}\bcalE_{\omega}: \omega\in[d_1]\times\cdots\times [d_m] \big\}$ and the noise $\epsilon_t$ is centered sub-Gaussian with proxy variance $\sigma^2$. We equip oRGrad with the square loss $\ell(\bcalT, \mathfrak{D}_t):=(\langle \bcalX_t, \bcalT\rangle-Y_t)^2/2$.

For an orthonormal matrix $\U\in\RR^{d\times r}$ satisfying $\U^{\top}\U=\I_r$, the incoherence of $\U$ is defined as 
$
\incoh(\U) := (d/r)\max_{i\in[d]}\ltwo{\U^\top\e_i}^2.
$
The smaller value of $\incoh(\U)$ indicates that the ``information'' carried by $\U$ is more evenly distributed across its rows. Tensor completion becomes an ill-posed problem if some entries are significantly larger than others. The following incoherence assumption rules out these ill-posed scenarios. 

\begin{assumption}\label{assump:incoherence}
	Let $\bcalT^* = \bcalC^*\times_{j=1}^m\U_j^*\in\MM_{\r}$ with $\{\U_{j}^*\}_{j=1}^m$ being orthonormal matrices. There exists a $\mu>0$ such that $\incoh(\bcalT^{\ast}):=\max_{j=1}^m\incoh(\U_j^*)\leq\mu$.
\end{assumption}

\begin{theorem}\label{thm:localrefinement:completion}
 Suppose that Assumption \ref{assump:incoherence} holds,  the initialization satisfies $\bcalT_0\in\MM_{\r}$, $\fro{\bcalT_0 - \bcalT^*}\leq c\lambda_{\submin}$ for some $c\in(0, 0.5)$, $\incoh(\bcalT_0)\leq10\mu \kappa_0^2$,  the step size satisfies $C_{0,m}\eta\kappa_0^{4m+2}\mu^{2m-1}\dmax(r^*)^2\rmin\leq 1$, 
and the signal-to-noise ratio satisfies
$
(\lambda_{\submin}/\sigma)^2\geq C_{0,m}\eta\mu^{2m-2}\kappa_0^{4m-8}(\dmax^2/\dof)(r^{\ast}/r_{\submin})^2\log^2\dmax,
$
where $C_{0,m}>0$ and $c>0$ are absolute constants depending on $m$ only. Then, there exist absolute constants $C_1>0$  and $C_{2,m}, C_{3,m}>0$ depending only on $m$ such that,  with probability exceeding $1-3T\dmax^{-10}$, for all $t\leq T$, Algorithm~\ref{alg:oRGrad} guarantees
	\begin{align*}
		\fro{\bcalT_t - \bcalT^*}^2&\leq 2\Big(1-\frac{\eta}{4}\Big)^t\fro{\bcalT_0-\bcalT^*}^2+C_1\eta \dof\sigma^2\\
		\linf{\bcalT_t-\bcalT^*} &\leq C_{2,m}\kappa_0^{m+2}\mu^{m/2}\Big(1-\frac{\eta}{4}\Big)^{\frac{t}{2}}\lambda_{\submax}\sqrt{\frac{r^*}{d^*}} + C_{3,m}\kappa_0^{m+3}\mu^{m/2}\sigma\sqrt{\frac{\eta r^*\dof}{d^*}}. 
	\end{align*}
\end{theorem}

Let us discuss the implications of Theorem~\ref{thm:localrefinement:completion}.  For ease of interpretation, we assume $m,\rmax,\mu,\kappa_0\asymp O(1)$ and $d_j \asymp d$. 

\paragraph*{Linear convergence and reduced sample size} Theorem~\ref{thm:localrefinement:completion} demonstrates that oRGrad converges linearly in both the Frobenius norm and sup-norm.  This implies that oRGrad delivers an $\epsilon$-accurate estimate in noiseless tensor completion after $T=\Omega(\eta^{-1}\log\epsilon^{-1})$ iterations. By selecting $\eta=\eta_{\textsf{comp}}\asymp d^{-1}$, the required number of iterations becomes $T=\Omega(d\log\epsilon^{-1})$. This represents a significant improvement over existing results.  For instance, \cite{yuan2016tensor,xia2019polynomial,xia2021statistically,tong2021scaling,cai2019nonconvex} all require a sample size condition of $\tilde{\Omega}(d^{m/2})$  in offline tensor completion, even when a warm initialization is provided. Nevertheless, we remark that a sample size of $\tilde{\Omega}(d^{m/2})$ is still necessary to obtain a desirable initialization.  


\paragraph*{Theoretical and technical benefits} If the time horizon $T$ is known, one can set  $\eta=\eta_{\textsf{stat}}\asymp C_mT^{-1}\log d_{\submax}$ for some large constant $C_m>0$ so that oRGrad outputs an estimator with a Frobenius-norm error rate 
$$
\|\bcalT_T-\bcalT^{\ast}\|_{\rm F}^2=O_m\bigg(\sigma^2\cdot \frac{\dof\log d_{\submax}}{T}\bigg), 
$$
 which is minimax optimal up to a logarithmic factor \citep{xia2021statistically}. A surprising theoretical benefit of online algorithm for noisy tensor completion is that one can easily derive the {\it entry-wise} error rate, which is usually much more challenging yet practically useful. The existing literature on offline noisy matrix/tensor completion \citep{cai2019nonconvex, chen2021bridging} often resorts to the rather complicated leave-one-out analysis framework to establish the entry-wise error rate. Benefited from the online nature of oRGrad, we can apply martingale techniques and derive a sharp upper bound for the entry-wise error. Indeed, oRGrad outputs an estimator with a sup-norm error rate
$$
\|\bcalT_T-\bcalT^{\ast}\|_{\ell_\infty}^2=O_m\bigg(\frac{\sigma^2}{d^{\ast}}\cdot \frac{\dof \log d_{\submax}}{T}\bigg), 
$$
suggesting that the entry-wise error is approximately $1/d^{\ast}$ of the Frobenius-norm error.  A technical benefit of online algorithm is that it does not require trimming. Maintaining the incoherence condition is crucial in analyzing the convergence of tensor completion algorithms. Most existing literature (except \cite{cai2019nonconvex, chen2021bridging}) applies an additional trimming procedure to ensure the incoherence property. Interestingly, we can take advantage of martingale techniques and  prove the incoherence property during the update of the oRGrad algorithm.  

\subsection{Initialization}
We apply the second-order moment method, originally proposed by \cite{xia2019polynomial}, to obtain a warm initialization for tensor completion (see also \cite{xia2021statistically}). This approach is motivated by the fact that $\U_j^*$ constitutes the top $r_j$ eigenvectors of the following matrix:
\begin{align*}
	\EE \frac{1}{T_1(T_1-1)}\sum_{1\leq i<i'\leq T_1}Y_iY_{i'}(\calM_j(\bcalX_{i})\calM_j(\bcalX_{i'})^\top+\calM_j(\bcalX_{i'})\calM_j(\bcalX_{i})^\top).
\end{align*}
The detailed implementation is provided in Algorithm~\ref{alg:init:tensor-completion}. Given $\tilde\U\in\RR^{d\times r}$,  we define $\trim(\tilde\U, \mu)=\bar \U(\bar \U^{\top}\bar \U)^{-1/2}$, where $[\bar \U]_{i:}=[\tilde \U]_{i:}\cdot \min\big\{1, (\mu r/d)^{1/2}/\|[\tilde \U]_{i:}\|_{\ell_2}\big\}$. 

\begin{algorithm}
	\caption{Initialization for tensor completion}
	\begin{algorithmic}
		\State{Set $\X_0={\bf 0}$ and collect data $\{\bcalX_t, Y_t\}_{t=1}^{T_1}$;} \Comment{spectral initialization}
		\State{Compute $\tilde\N_j: = \frac{1}{T_1(T_1-1)}\sum_{1\leq i<i'\leq T_1} Y_iY_{i'}(\calM_j(\bcalX_{i})\calM_j(\bcalX_{i'})^\top+\calM_j(\bcalX_{i'})\calM_j(\bcalX_{i})^\top)$;}
		\State{Trimming and SVD: $\hat\U_j = \trim(\tilde\U_j,\mu)$ where $\tilde\U_j = \svd_{r_j}(\tilde\N_j)$;}
		\For{$t = T_1+1,\cdots, T_1+T_2$} \Comment{core tensor initialization}
		\State{Compute $\bcalC_t = \bcalX_t\times_1\hat\U_1^{\top}\cdots \times_m\hat\U_m^{\top}$}
		\State{\hspace{1.2cm}$\X_{t-T_1} = \X_{t - T_1-1} + \vec(\bcalC_t)\vec(\bcalC_t)^\top$ and $\R_{t-T_1} = \R_{t-T_1 -1} + Y_t\bcalC_t$;}
		\EndFor
		\State{$\hat\bcalC = \reshape(\hat\c,\r)$ where $\hat\c = \X_{T_2}^{-1}\R_{T_2}$;}
		\State{\textbf{Output: }$\hat\bcalT = \hat\bcalC\times_1\hat\U_1\cdots \times_m\hat\U_m$.}
	\end{algorithmic}
	\label{alg:init:tensor-completion}
\end{algorithm}

\begin{theorem}\label{thm:init:completion}
 Suppose that Assumption \ref{assump:incoherence} holds, $T_1 \geq C_1 \big((d^*)^{1/2} + d_{\submax}r^*\rmin^{1/2}\mu^m\kappa_0^3\big)\allowbreak r^*\rmin^{1/2}\mu^m\kappa_0^3\log\dmax$, $T_2 \geq C_3\mu^m r^*\log \dmax$, and $\lambda_{\submin}^2/\sigma^2\geq C_4T_2^{-1}\mu^mr^*\log\dmax+C_2 T_1^{-1}\allowbreak  \big((d^*)^{1/2} + d_{\submax}\rmin^{1/2}\mu^m\kappa_0^3r^*\big)\kappa_0\rmin^{1/2}\log\dmax$ for some constant $C_1,C_2, C_3,C_4>0$ depending only on $m$. Then,  with probability exceeding $1-4T_1\dmax^{-100}$, the output of Algorithm \ref{alg:init:tensor-completion} satisfies $\fro{\hat\bcalT - \bcalT^*}\leq c_{0,m}\lambda_{\submin}$ for some small constant $c_{0,m}\in(0,1/2)$. 
\end{theorem}

Theorem~\ref{thm:init:completion} shows that $\tilde\Omega\big(((d^*)^{1/2} + \dmax)\log\dmax\big)$ randomly sampled entries suffice to provide a warm initialization.

\section{Online Binary Tensor Learning}\label{sec:oBinary}
The covariate $\bcalX_t$ is sampled in the same manner as in tensor completion (see Section~\ref{sec:oNTC}), but the response $Y_t$ is binary, following a Bernoulli distribution $Y_t\sim {\rm Ber}(p_t)$ with $p_t = f(\inp{\bcalX_t}{\bcalT^*}; \sigma)$. Here, $f(\theta;\sigma)=f(\theta):= (1+e^{-\theta/\sigma})^{-1}$ is the logistic link function. The loss function $\ell(\bcalT, \mathfrak{D}_t)=h(\langle \bcalT, \bcalX_t\rangle, Y_t)$ is the negative log-likelihood $h(\theta, y) = -y\log(f(\theta)) - (1-y)\log (1-f(\theta))$. 
The following assumption imposes an upper bound on the entrywise magnitude of $\bcalT^{\ast}$. 

\begin{assumption}\label{assump:binary}
There exist $\mu,  \alpha>0$ such that $\incoh(\bcalT^*)\leq\mu$ and $2C_0(r^*/\rmax)^{1/2}\fro{\bcalT^*}\leq \alpha$, where $C_0 = 20^{m/2}\kappa_0^{m+1}\mu^{m/2}$. 
\end{assumption}

Define $L_{\alpha}:=\sup_{|\theta|\leq \alpha} f'(\theta)\big(f(\theta)(1-f(\theta))\big)^{-1}$ and 
\begin{align*}
	\gamma_{\alpha} &:= \min\bigg\{\inf_{|\theta|\leq\alpha}\frac{(f'(\theta))^2 - f^{''}(\theta)f(\theta)}{f^2(\theta)},\quad\inf_{|\theta|\leq\alpha}\frac{f^{''}(\theta)(1-f(\theta)) + (f'(\theta))^2}{(1-f(\theta))^2}\bigg\},\\
	\mu_{\alpha}&:= \max\bigg\{\sup_{|\theta|\leq\alpha}\frac{(f'(\theta))^2 - f^{''}(\theta)f(\theta)}{f^2(\theta)},\quad\sup_{|\theta|\leq\alpha}\frac{f^{''}(\theta)(1-f(\theta)) + (f'(\theta))^2}{(1-f(\theta))^2}\bigg\}.
\end{align*}

Without loss of generality, we focus on the regime where $\dmax\asymp \dmin, \rmax\asymp \rmin$ and fix $\sigma=1$.  In this case,  we have $L_{\alpha}=1, \gamma_{\alpha}=e^{\alpha}(1+e^{\alpha})^{-2}$ and $\mu_{\alpha}=1/4$.

\begin{theorem}\label{thm:binary}
Suppose Assumption \ref{assump:binary} holds and the initialization $\bcalT_0\in\MM_{\r}$ satisfies $\fro{\bcalT_0 - \bcalT^*}\leq c_m\gamma_{\alpha}\lambda_{\submin}$	for some small constant $c_m>0$ depending on $m$ only.   If  $C_{1}\eta\dmax\log\dmax \leq 1$,  $\lambda_{\submin}^2\geq C_2\eta \max\{\gamma_{\alpha}^{-3},\log\dmax\} \dmax $,  and  $\lambda_{\submax}\leq (C_3\dmax)^{-1}$,  where $
 	C_1 = \max\big\{(\kappa_0^2\mu)^{m-1}r^*/\rmin,1\big\}$,  $C_2 = \max\big\{\kappa_0^{4m-4}\mu^{2m-2},\rmin^{-1}\kappa_0^{2m-4}\mu^{m-2}\big\}\cdot r^*\rmin^{-1}$,  and $ C_3 = 20^m\kappa_0^{3m+1}\mu^{\frac{3m}{2}-1}(r^*)^{3/2}/\rmax$,  then there exists an absolute constant $C>0$ such that,  with probability exceeding $1-5Td^{-10}$, for all $t\leq T$,  Algorithm~\ref{alg:oRGrad} guarantees 
$$
\fro{\bcalT_t - \bcalT^*}^2\leq 2\Big(1-\frac{1}{4}\eta\gamma_{\alpha}\Big)^t\fro{\bcalT_0-\bcalT^*}^2+C\eta\gamma_{\alpha}^{-1} \dof.
$$
\end{theorem}

There is an implicit step size requirement in Theorem~\ref{thm:binary} arising from the conditions of $\lambda_{\submin}$ and $\lambda_{\submax}$.  Specifically, we require $\eta\leq c_1\min\big\{\gamma_{\alpha}^3, d^{-3}\log^{-1}d \big\} $,  assuming $d_j\asymp d$ and that $m, \mu, \kappa_0, r_j=O(1)$ for simplicity.  This constraint is necessary to control the incoherence during the iterations of oRGrad.  Additionally, the condition on $\lambda_{\submax}$ is required for controlling the higher-order derivatives of the link function.

By choosing a step size such that $\eta\gamma_{\alpha}\asymp C_m T^{-1}\log \dmax$ with a sufficiently large constant $C_m>0$,  Algorithm~\ref{alg:oRGrad} produces an estimator $\bcalT_T$ satisfying 
$$
\|\bcalT_T-\bcalT^{\ast}\|_{\rm F}^2=O_m\bigg(\gamma_{\alpha}^{-1}\cdot \frac{\dof \log\dmax}{T}\bigg),
$$
which matches existing ones \cite{wang2020learning,han2022optimal,cai2022generalized} in offline binary tensor learning up to logarithmic factors.  Theorem~\ref{thm:binary} requires a sample size of $T=\Omega(d^3)$,  regardless of the dimensionality $m$.  This requirement is due to the implicit constraint on the step size $\eta$.

While our result provides the first theoretical guarantee for online binary tensor learning and achieves an optimal error rate under additional conditions, many open problems remain, such as determining the minimal sample size requirement.

\subsection{Initialization}
The initialization for binary tensor learning is more challenging. Here, we combine a convex optimization method \cite{davenport20141} with HOSVD for initialization. Following the notations in Section \ref{sec:init-poisson}, we unfold the tensor $\bcalT^{\ast}$ into a matrix of balanced dimensions $d_{\calI}\times d_{\calI}^{-}$.

\begin{algorithm}
	\caption{Initialization for binary tensor learning}
	\begin{algorithmic}
		\State{Collect data $\{\bcalX_t, Y_t\}_{t=1}^{T_1}$ and solve}
		\begin{align*}
		\hat\M: = \arg\max_{\M}\ 	L(\M): = &\sum_{t=1}^{T_1}\Big(Y_t\log f(\inp{\calM_{\calI}(\bcalX_t)}{\M}) + (1-Y_t)\log \big(1-f(\inp{\calM_{\calI}(\bcalX_t)}{\M})\big)\Big),\\
		\text{s.t.~}& \nuc{\M}\leq \alpha\sqrt{R}\quad {\rm and}\quad \linf{\M}\leq \alpha/\sqrt{d^*}.
	    \end{align*}
		\State{$\tilde\bcalT = \calM_{\calI}^{-1}(\hat\M)$ and find $(\hat\bcalC,\hat\U_1,\cdots,\hat\U_m) = \hosvd(\tilde\bcalT)$}
		\State{\textbf{Output: }$\hat\bcalT = \hat\bcalC\times_1\hat\U_1\cdots \times_m\hat\U_m$}
	\end{algorithmic}
	\label{alg:init:binary}
\end{algorithm}

\begin{theorem}\label{thm:init-binary}
Suppose Assumption \ref{assump:binary} holds,  let $c_m\in(0,1)$ be a small constant,  and assume that 
	\begin{align*}
		\lambda_{\submin} \geq (C\alpha L_{\alpha}\gamma^{-3})^{1/2}\max\bigg\{\left(\frac{D_{\submax}^2R\log(D_{\submax})}{T_1^2}\right)^{1/4}, \left(\frac{RD_{\submax}}{T_1}\right)^{1/4}\bigg\},
	\end{align*}
where $D_{\submax}:=\max\{d_{\calI}, d_{\calI}^{-}\}$, $R=\max\{r_{\calI}, r_{\calI}^{-}\}$, and $C>0$ depends only on $c_m$.  Then,  with probability at least $1-D_{\submax}^{-1}$,  the output of Algorithm~\ref{alg:init:binary} satisfies  $\fro{\hat\bcalT - \bcalT^*}\leq c_m \gamma_{\alpha}\lambda_{\submin}. $
\end{theorem}

\section{Sub-Optimal Regret of oRGrad using Constant Step Size}\label{sec:regretbound}
Online learning is particularly useful when an immediate prediction is required upon observing the covariate $\bcalX_t$. Regret performance is crucial for evaluating the prediction accuracy of online algorithms. See \cite{hazan2007logarithmic}, \cite{zhang2004solving} and references therein. Fundamentally, regret measures the difference in prediction performance between an online learner and a static player who has the advantage of hindsight and can make predictions as if all observations were available simultaneously. In the current and upcoming sections, we focus on the regret analysis of the oRGrad algorithm. For simplicity, our analysis is confined to linear cases, including tensor linear regression and noisy tensor completion,  i.e.,  the response satisfies $\EE Y_t = \langle \bcalX_t, \bcalT^{\ast} \rangle$.

At time $t$,  given the current estimate $\bcalT_t$, the online learner receives a new observation containing only the covariate information $\bcalX_t$.  The learner is then obliged to make a prediction before the true response $Y_t$ is revealed.  In the context of the linear model, a reasonable prediction is $\hat Y_t=\langle\bcalX_t, \bcalT_t \rangle$.  Prediction accuracy is measured by the difference between $\hat Y_t$ and the expected response $\EE Y_t$. The regret of oRGrad is thus defined by 
$$
{\rm Regret}_T({\rm oRGrad})=R_T:=\EE\bigg[\sum_{t=0}^{T-1}\frac{1}{2}\big(\hat Y_t - \EE Y_t\big)^2\bigg].
$$
Oftentimes, regret can also be defined as the difference between the cumulative loss incurred by the online learner and that of the best offline player. Specifically, let the loss function be $\ell_t(\bcalT):=\ell(\bcalT, \mathfrak{D}_t)=(Y_t-\langle \bcalX_t, \bcalT\rangle)^2/2$ and then the regret of oRGrad can also be defined as
 $R_T:=\inf_{\bcalT\in\MM_{\r}} \sum_{t=0}^{T-1}\EE \big[\ell_t(\bcalT_t)-\ell_t(\bcalT)\big]$.  In both linear regression and tensor completion, the infimum is attained at  $\bcalT^{\ast}$.  Using either definition, and due to the independence between $\bcalX_t$ and $\bcalT_t$,  one can show that 
$$
R_T = \frac{1}{2} \sum_{t = 0}^{T-1}\EE_{\leq t-1} \fro{\bcalT_{t} - \bcalT^*}^2,
$$
where $\EE_{-1}\fro{\bcalT_0 -\bcalT^*}^2 = \fro{\bcalT_0 -\bcalT^*}^2$.
Note that a trivial bound on $R_T$ is $O(T)$.  Online algorithms are deemed effective when the achieved regret is  $o(T)$.  
The regret performance of oRGrad for both linear regression and tensor completion is summarized in the following theorem.
 
\begin{theorem}\label{thm:regret}
Suppose that the conditions of Theorem \ref{thm:regression} (respectively, Theorem \ref{thm:localrefinement:completion}) hold and assume that the time horizon satisfies $Cm^{1/2}\leq T\leq \dmax^{100m}$ for some absolute constant $C>0$. There exists an absolute constant $C_1>0$ such that the sequence output by oRGrad for online tensor linear regression (online tensor completion, respectively) achieves the  regret
\begin{equation}\label{eq:thm:regret}
R_T \leq C_1\big( \eta^{-1}\fro{\bcalT_0 -\bcalT^*}^2 + \eta\cdot \sigma^2 T\dof \big).
\end{equation}
\end{theorem}

The upper bound on the time horizon $T$ is due to technical reasons and is relatively weak,  as the constant $100$ can be replaced by any absolute constant.  If the  horizon $T$ is known, one can set an appropriate step size to minimize the right-hand side of eq. (\ref{eq:thm:regret}).  By setting $\eta\asymp \|\bcalT_0-\bcalT^{\ast}\|_{\rm F}\sigma( T\cdot \dof)^{-1/2}$, oRGrad achieves an $O(T^{1/2})$ regret or,  more precisely,   
\begin{align*}
		R_T =O\Big( T^{1/2}\cdot \dof^{1/2} \cdot \sigma\|\bcalT_0-\bcalT^{\ast}\|_{\rm F} \Big) .
\end{align*}
The regret is proportional to the initialization error $\|\bcalT_0-\bcalT^{\ast}\|_{\rm F}$,  implying that oRGrad predicts more accurately if a better initialization is available.  Additionally,  the regret is proportional to the degree of freedom,  reflecting the role played by model complexity.

A square-root regret bound is typical in online learning (see, e.g., \cite{zhang2004solving}). However, it is well-known in the online learning literature \cite{hazan2007logarithmic} that an optimal regret of  $O(\log T)$ is attainable when the associated optimization program is convex and the loss function is strongly convex. This logarithmic regret can be achieved by the online gradient descent algorithm with decaying step sizes. In our case, we are restricted to a fixed step size, and it is unclear whether our derived regret of $O(T^{1/2})$ is optimal.  Interestingly, it matches the recently established regret bound for low-rank bandit learning \citep{shi2023high}, although the definition of regret in bandit problem differs from ours. We emphasize that logarithmic regret is attainable even when the time horizon is unknown, provided that adaptive choices of step sizes are allowed—a topic we shall explore in the next section.

\paragraph*{Trilemma among computational convergence, statistical error, and regret}
Theorems~\ref{thm:regression} (or Theorem~\ref{thm:localrefinement:completion}, respectively) and~\ref{thm:regret} suggest a trilemma in choosing a \emph{constant} step size for online linear regression (or online noisy tensor completion, respectively). If computational convergence is the primary concern, we can choose a large step size  $\eta=\eta_{\textsf{comp}}\asymp d^{-1}$,  allowing oRGrad to converge within $T=\tilde{O}(d)$ iterations.  On the other hand,  if the goal is to eventually obtain a statistically optimal estimator,  we should set a small step size $\eta=\eta_{\textsf{stat}}\asymp T^{-1}$.  Lastly, as discussed above, the regret bound in equation~(\ref{eq:thm:regret}) is minimized when the step size is fixed at $\eta=\eta_{\textsf{regret}}\asymp  (Td)^{-1/2}$.   It appears that statistical optimality and sharp regret performance cannot be simultaneously achieved with only a fixed step size. Fortunately, we demonstrate in the next section that these two goals can be achieved concurrently by adaptively selecting step sizes.

\section{Optimal Adaptive Online Learning when Horizon is Unknown}\label{sec:adaptive}
While online gradient descent with a fixed step size is convenient for algorithm implementation, it has two limitations. First, oRGrad cannot achieve both statistical optimality and sharp regret simultaneously. Second, determining the optimal step sizes,  $\eta_{\textsf{stat}}$ and $\eta_{\textsf{regret}}$,  requires knowledge of the time horizon $T$,  which is unrealistic in most applications. To tackle these challenges, we introduce the Adaptive-oRGrad algorithm, which adaptively selects the step sizes. Notably, we demonstrate that Adaptive-oRGrad can achieve both statistical optimality and an optimal $O(\log T)$ regret simultaneously, regardless of whether the true time horizon is known. Without loss of generality,  we now assume $T$ is unknown, but the algorithm and theoretical results are also applicable when the horizon $T$ is known.  Similarly to Section~\ref{sec:regretbound}, we focus on the linear cases, including tensor linear regression and noisy tensor completion.

In the initial phase of the adaptive-oRGrad algorithm, we select a step size $\eta_0$ and a phase length $t_0$.  As the algorithm progresses, the step size decreases while the phase length doubles.  More precisely,  during the $k$-th phase, oRGrad is executed with a step size $\eta=2^{-k}\eta_0$ for a duration of $2^{k-1}t_0$ steps. The implementation details of Adaptive-oRGrad algorithm can be found in Algorithm~\ref{alg:adaptive}. 

\begin{algorithm}
	\caption{Adpative-oRGrad -- {\it if time horizon is unknown}}
	\begin{algorithmic}
		\State{\textbf{Input:} Initial step size $\eta_0>0$ and initial phase length $t_0>0$.}
		\For{$k = 1,2,3,\ldots$}
		\State{Set $\eta_k = 2^{-k}\eta_0$}
		\For{$t\in\{(2^{k-1}-1)t_0,\ldots,(2^{k}-1)t_0-1\}$}
		\State{Run oRGrad with step size $\eta=  \eta_k$}
		\EndFor
		\EndFor
	\end{algorithmic}
	\label{alg:adaptive}
\end{algorithm}

The adaptive step size schedule of the Adaptive-oRGrad algorithm is motived by the fact that,  for any unknown time horizon $T$,  approximately half of the iterations are executed with a step size of order $T^{-1}$. This enables adaptive-oRGrad to deliver a statistically optimal estimator as if the time horizon was known  in advance.  Notably,  an optimal $O(\log T)$ regret is also attainable if the initial step size and phase length are carefully chosen. For simplicity, we assume $m, \rmax, \kappa_0, \mu_0\asymp O(1)$ and $d_j\asymp d$ for all $j\in[m]$. The following theorem characterizes the statistical and regret performance of Adaptive-oRGrad. 

\begin{theorem}\label{thm:adaptive}
Suppose the conditions of Theorem \ref{thm:regression} (Theorem \ref{thm:localrefinement:completion}, respectively) hold with the initial step size $\eta_0$ satisfying the conditions of $\eta$ therein. Adaptive-oRGrad algorithm guarantees the following statistical and regret performance for online tensor linear regression (online tensor completion, respectively):
	\newline
\textbf{For statsitical optimality.} If $\eta_0 t_0\geq C_1$ for some absolute constant $C_1 >0$,
		then with probability exceeding $1-16T\dmax^{-10}$,
		\begin{align}\label{eq:thm:adaptive-1}
			\fro{\bcalT_T-\bcalT^*}^2  \leq 2\exp\Big(-c_1t_0\eta_0\log(T/t_0)\Big)\cdot \fro{\bcalT_0-\bcalT^*}^2 + C_{2}t_0\eta_0\sigma^2\cdot \frac{\dof}{T},
		\end{align}
		where $c_1, C_{2}>0$ are some absolute constants.
\newline
\textbf{For optimal regret.} If the initial step size $\eta_0 = O\big((\lambda_{\submin}/\sigma) \cdot (t_0\dof)^{-1/2}\big)$ and the initial phase length $t_0 \geq C_m\dof\cdot\max\{(\sigma/\lambda_{\submin})^2\log^8 d, (\lambda_{\submin}/\sigma)^2\}$ for some constant $C_m>0$ depending only on $m$, then 
\begin{align}
	R_T = O\Big((t_0\dof)^{1/2}\sigma\lambda_{\submin}\cdot \log(T/t_0)\Big). \label{eq:thm:adaptive-2}
\end{align}
\end{theorem}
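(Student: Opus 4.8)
The plan is to reduce both claims to the fixed-step-size guarantees already established --- Theorem~\ref{thm:regression} (resp.\ Theorem~\ref{thm:localrefinement:completion}) for convergence and Theorem~\ref{thm:regret} for regret --- by analyzing Adaptive-oRGrad one phase at a time. Write $\tau_k:=2^{k-1}t_0$ for the length of phase $k$, $\eta_k:=2^{-k}\eta_0$ for its step size, let $\bcalT^{(k)}$ denote the iterate at the end of phase $k$ (so $\bcalT^{(0)}=\bcalT_0$), and set $e_k:=\fro{\bcalT^{(k)}-\bcalT^{\ast}}^2$. Two bookkeeping identities drive the whole argument: $\eta_k\tau_k=\tfrac12\eta_0 t_0$, so each phase contracts the computational error by one and the same factor $\rho:=2(1-\eta_k/8)^{\tau_k}\le 2e^{-\eta_0 t_0/16}$ (the bound being $k$-independent); and $\eta_k^{-1}=2^k\eta_0^{-1}$. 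Because the step-size and SNR hypotheses of Theorem~\ref{thm:regression} are monotone decreasing in $\eta$ and $\eta_k\le\eta_0$, every $\eta_k$ inherits all of them from $\eta_0$; and because the statistical error produced within a single phase is, by the SNR assumption, far below $c_m^2\lambda_{\submin}^2$, the iterate entering each new phase stays in the basin of attraction, so Theorem~\ref{thm:regression} can legitimately be re-invoked with the same absolute constants at the top of every phase. A union bound over the $K\asymp\log_2(T/t_0)$ phases --- phase $k$ failing with probability at most $16\tau_k\dmax^{-10}$ --- yields overall failure probability at most $16T\dmax^{-10}$.

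For the statistical-optimality claim, chaining the per-phase bound of Theorem~\ref{thm:regression} gives the scalar recursion $e_k\le\rho\,e_{k-1}+C\eta_k\dof\sigma^2$. Taking $C_1$ large enough that $\eta_0 t_0\ge C_1$ forces $\rho\le e^{-c_1\eta_0 t_0}$ (in particular $\rho<\tfrac12$); unrolling across the $K$ phases, the homogeneous part contracts to $\rho^K e_0=\exp\big(-c_1\eta_0 t_0\log_2(T/t_0)\big)\cdot\fro{\bcalT_0-\bcalT^{\ast}}^2$, while the inhomogeneous part is dominated geometrically by its final term, $\sum_{k\le K}\rho^{K-k}C\eta_k\dof\sigma^2\lesssim\eta_K\dof\sigma^2\asymp 2^{-K}\eta_0\dof\sigma^2\asymp\eta_0 t_0\sigma^2\dof/T$, using $2^{-K}\asymp t_0/T$. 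Up to renaming constants this is \eqref{eq:thm:adaptive-1}.

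For the regret claim, I apply Theorem~\ref{thm:regret} to each phase conditionally on the $\sigma$-field at its start and then take expectations; the regret accrued during phase $k$ obeys $R^{(k)}\le C_1\big(\eta_k^{-1}\EE e_{k-1}+\eta_k\sigma^2\tau_k\dof\big)=C_1\big(2^k\eta_0^{-1}\EE e_{k-1}+\tfrac12\eta_0 t_0\sigma^2\dof\big)$. Summing over $k\le K$, the second terms contribute $O\big(K\eta_0 t_0\sigma^2\dof\big)$; into the first terms I substitute $\EE e_{k-1}\lesssim\rho^{k-1}\fro{\bcalT_0-\bcalT^{\ast}}^2+2^{-(k-1)}\eta_0\dof\sigma^2$ from the previous paragraph, and since $\eta_0 t_0\ge C_1$ is large the series $\sum_k 2^k\rho^{k-1}$ is summable with $O(1)$ value, so $\sum_k 2^k\eta_0^{-1}\EE e_{k-1}\lesssim\eta_0^{-1}\fro{\bcalT_0-\bcalT^{\ast}}^2+K\dof\sigma^2$. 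Combining, $R_T\lesssim\eta_0^{-1}\fro{\bcalT_0-\bcalT^{\ast}}^2+K\eta_0 t_0\sigma^2\dof$. The choice $\eta_0\asymp(\lambda_{\submin}/\sigma)(t_0\dof)^{-1/2}$ balances the two terms after the bound $\fro{\bcalT_0-\bcalT^{\ast}}\le c_m\lambda_{\submin}$, and the stated lower bound on $t_0$ is exactly what renders this $\eta_0$ admissible in Theorem~\ref{thm:regression}: the requirement $C_1\eta_0\dof\le1$ forces $t_0\gtrsim\dof(\lambda_{\submin}/\sigma)^2$ while the SNR requirement forces $t_0\gtrsim\dof(\sigma/\lambda_{\submin})^2\log^8 d$. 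This yields $R_T=O\big((t_0\dof)^{1/2}\sigma\lambda_{\submin}\log(T/t_0)\big)$, which is \eqref{eq:thm:adaptive-2}.

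I expect the main obstacle to lie in making the per-phase re-invocation genuinely rigorous: one must verify that over the $\Theta(\log T)$ phases neither the Frobenius error nor (for the completion and binary models) the incoherence parameter drifts out of the regime where Theorem~\ref{thm:regression} / Theorem~\ref{thm:localrefinement:completion} applies with unchanged absolute constants, and --- for the regret half --- one must pass from the high-probability convergence bounds to the expectation bounds on $\EE e_{k-1}$ that the regret decomposition requires, which is done by pairing each high-probability bound with a crude deterministic bound on its exceptional event (of probability $O(T\dmax^{-10})$, negligible under $T\le\dmax^{10m}$), exactly as in the proof of Theorem~\ref{thm:regret}. A secondary subtlety is pinning down the universal constant $C_1$ in $\eta_0 t_0\ge C_1$ so that the geometric series $\sum_k 2^k\rho^{k-1}$ with $\rho\asymp e^{-\eta_0 t_0/16}$ stays bounded; this forces $C_1$ to exceed a fixed numerical threshold.
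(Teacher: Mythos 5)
Your proposal is correct and follows essentially the same route as the paper: invoke the fixed-step guarantee phase by phase, exploit the identity $\eta_k\tau_k=\eta_0 t_0/2$ so that every phase contracts by the same factor $\rho\le 2e^{-\eta_0t_0/16}$, telescope geometrically for the statistical bound, and apply the per-phase regret bound of Theorem~\ref{thm:regret} together with a geometrically decaying bound on the expected error at the start of each phase. If anything, your treatment of the inhomogeneous sum (dominating it by its final term to extract the $2^{-K}\asymp t_0/T$ factor) is stated more carefully than the paper's intermediate display, and your flagged obstacles (basin-of-attraction and incoherence maintenance across phases, and converting high-probability bounds to expectations via the exceptional-event argument) are exactly the points the paper handles implicitly.
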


By Theorem~\ref{thm:adaptive} and by defining $t_0\eta_0/T=:\tilde{\eta}\geq C_1T^{-1}$, the error bound becomes 
\begin{align*}
	\fro{\bcalT_T-\bcalT^*}^2  \leq 2\exp\Big(-C_2\tilde\eta T\log(T/t_0)\Big)\cdot \fro{\bcalT_0-\bcalT^*}^2 + C_{3}\tilde\eta\cdot\dof\sigma^2.
\end{align*}
This should be compared with Theorem \ref{thm:regression} (using the inequality $1+x\leq e^x$), which states that 
\begin{align*}
	\fro{\bcalT_T-\bcalT^*}^2  
	\leq 2\exp(-c_1\eta T)\cdot\fro{\bcalT_0-\bcalT^*}^2 + C\eta\cdot \dof\sigma^2.
\end{align*}
This comparison suggests that,  by introducing an additional term $\log(T/t_0)$ in the exponent,  Adaptive-oRGrad can achieve a faster convergence rate.  This is reasonable because, under the adaptive schedule of step sizes, Adaptive-oRGrad chooses more aggressive step sizes except in the final phase, which enforces faster computational convergence. Nevertheless, both algorithms can deliver the statistically optimal rate $\tilde{O}_p(\sigma^2\cdot \dof /T)$. 

Surprisingly, Theorem~\ref{thm:adaptive} suggests that the Adaptive-oRGrad algorithm can achieve optimal regret. Suppose that the conditions before eq. (\ref{eq:thm:adaptive-2}) hold and that the initial phase length is chosen as $t_0\asymp \dof\cdot\max\{(\sigma/\lambda_{\submin})^2\log^8 d, (\lambda_{\submin}/\sigma)^2\}$.  Then,  eq. (\ref{eq:thm:adaptive-2}) yields the following regret bound:
$$
R_{T}=O\Big(\max\{\sigma^2\log^4d, \lambda_{\submin}^2\} \dof\cdot \log T\Big),
$$
which is exactly (up to the $\log d$ factors) the best regret performance achievable when $\lambda_{\submin}/\sigma\asymp 1$.  This optimality holds even if all available information is exploited at each iteration.  Indeed, at each time $t$, suppose that $\bcalT_t$ is estimated using all the accumulated data before time $t$, i.e., an offline estimate with a sample size of $t$.  The minimax optimal error rate  in squared Frobenius norm that $\bcalT_t$ can achieve is $O_p(\sigma^2\cdot \dof/t)$ \citep{cai2022generalized, han2022optimal, xia2021statistically} .  Consequently, the optimistic accumulated error rate is $\EE\big[\sum_{t\leq T} \|\bcalT_t-\bcalT^{\ast}\|_{\rm F}^2\big]\asymp \sigma^2\dof\cdot \log T$, dictating a lower bound of $O(\sigma^2\dof\cdot \log T)$ for the regret performance. 
For general SNR, the regret bound is tight with respect to $T$.  
From a theoretical perspective,  we can replace $\lambda_{\submin}$ in eq. \eqref{eq:thm:adaptive-2} with $\fro{\bcalT_0-\bcalT^*}$.  However, this replacement will affect the choice of the step size $\eta_0$ since $\fro{\bcalT_0-\bcalT^*}$ is unknown. 

Finally, it is evident that there are scenarios where the conditions prior to equations~(\ref{eq:thm:adaptive-1}) and~(\ref{eq:thm:adaptive-2}) are simultaneously satisfied. In such cases, the Adaptive-oRGrad algorithm can achieve both statistical optimality and optimal regret.

\section{Numerical Experiments and Real Data Analysis}\label{sec:nu}
This section presents both numerical simulation results and real data examples.  We specifically focus on online tensor linear regression and online tensor completion. Our analysis verifies that the chosen step size effectively balances the trade-off between convergence rate and final error rate. Additionally, we adjust the noise level to evaluate its impact on estimation accuracy.  
Throughout this section,  we frequently use the relative error,  defined by
$
\textsf{relative error} = \fro{\hat\bcalT - \bcalT^*}/\fro{\bcalT^*}, 
$
where $\hat\bcalT$ is the output of the oRGrad algorithm and $\bcalT^*$ is the unknown tensor. Due to space constraint, one real data example is provided in the Appendix.  

\subsection{Online tensor linear regression}\label{sec:nu:lr}
In this section, we conduct numerical experiments on online tensor linear regression, focusing on the low-rank tensor $\bcalT^*\in\RR^{d\times d\times d}$ with $d = 30$ and Tucker rank $\r =(2,2,2)$. We generate $\bcalT^*$ in the factorization form $\bcalT^* =\bcalC^*\times_1\U_1^*\times_2\U_2^*\times_3\U_3^*$,  where $\bcalC^*$ has independent and identically distributed (i.i.d.) entries drawn from  $N(0,1)$, and $\U_1^*,\U_2^*,\U_3^*$ have i.i.d. entries drawn from a uniform distribution $\text{Unif}(0,1)$. The corresponding value of $\lambda_{\submin}$ is approximately $2.3$.
We use Algorithm \ref{alg:init:linear-regression} for initialization. 

\paragraph*{Rank Selection}
In order to determine the rank $r_j$ in practice,  we may plot the singular values of $\hat\N_j$ defined in Algorithm \ref{alg:init:linear-regression},  i.e.,  the scree plot as shown in  Figure \ref{fig:rank}.  Notably, there is a significant drop after the second-largest singular values across each dimension.  More precisely,  for each dimension $j\in[m]$,  we set $r_j = \min\big\{k: \sum_{j'=1}^k(\sigma_{j'})^2/\sum_{j'=1}^{d_j}(\sigma_{j'})^2>0.9\big\}$, where $\sigma_1,\cdots, \sigma_{d_j}$ are the non-increasing eigenvalues of $\hat\N_j$.  
\begin{figure}[htbp]
	\centering
	\begin{subfigure}[b]{0.3\textwidth}
		\centering
		\includegraphics[width=\textwidth]{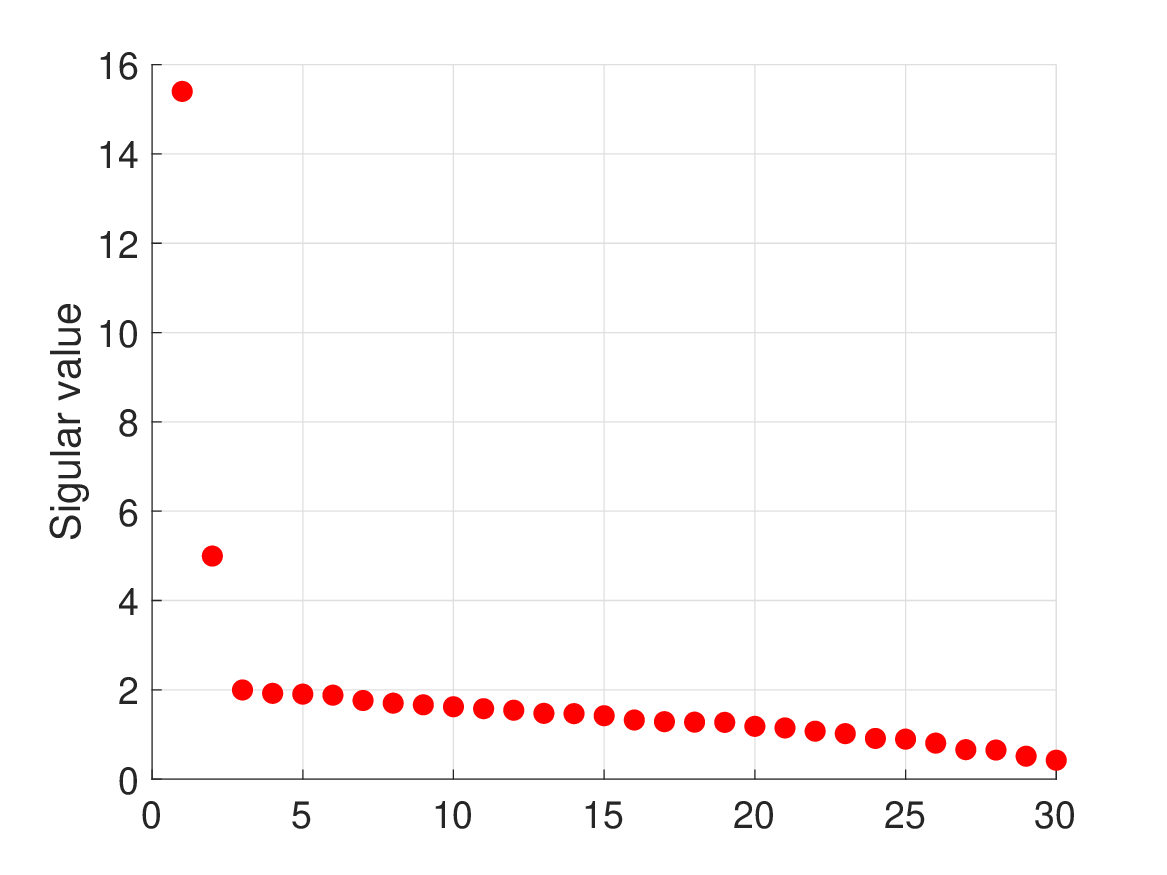}
		\caption{1st dimension}
		\label{fig:fig1}
	\end{subfigure}
	\hfill
	\begin{subfigure}[b]{0.3\textwidth}
		\centering
		\includegraphics[width=\textwidth]{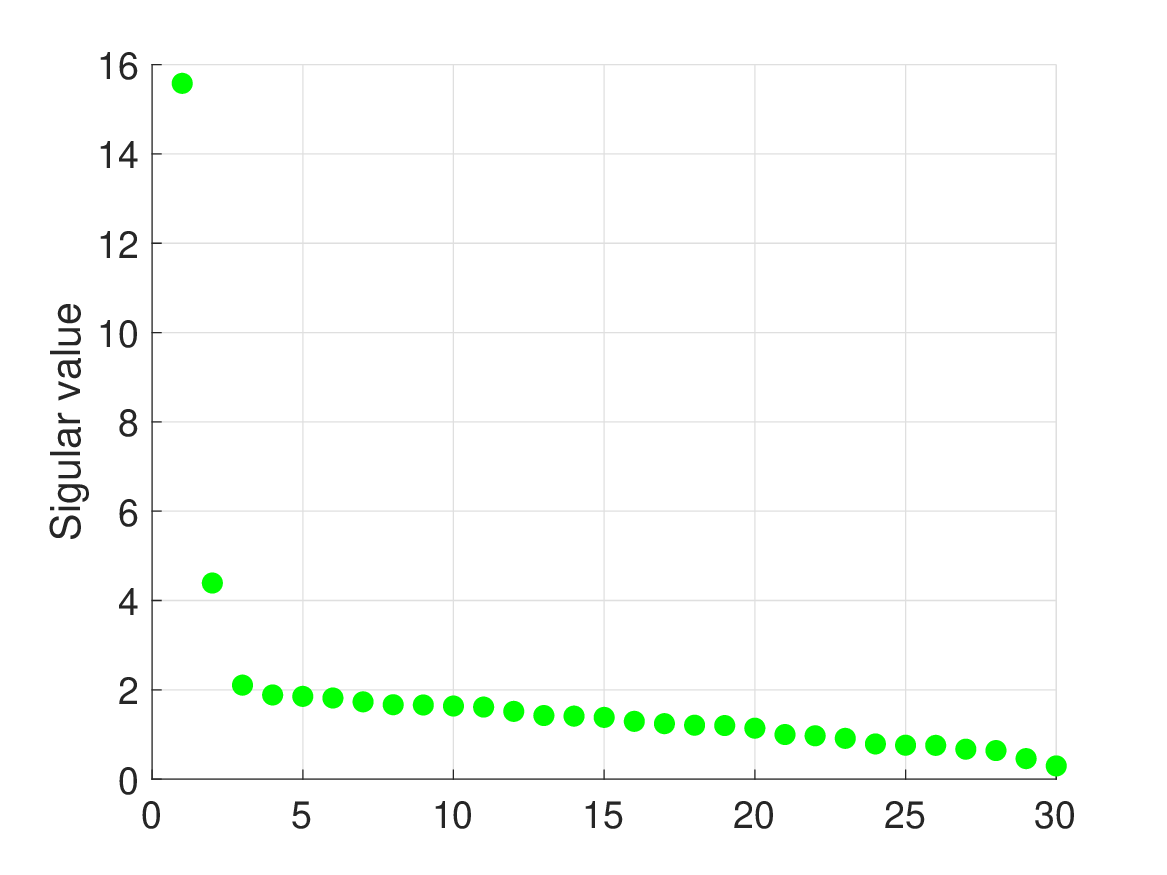}
		\caption{2nd dimension}
		\label{fig:fig2}
	\end{subfigure}
	\hfill
	\begin{subfigure}[b]{0.3\textwidth}
		\centering
		\includegraphics[width=\textwidth]{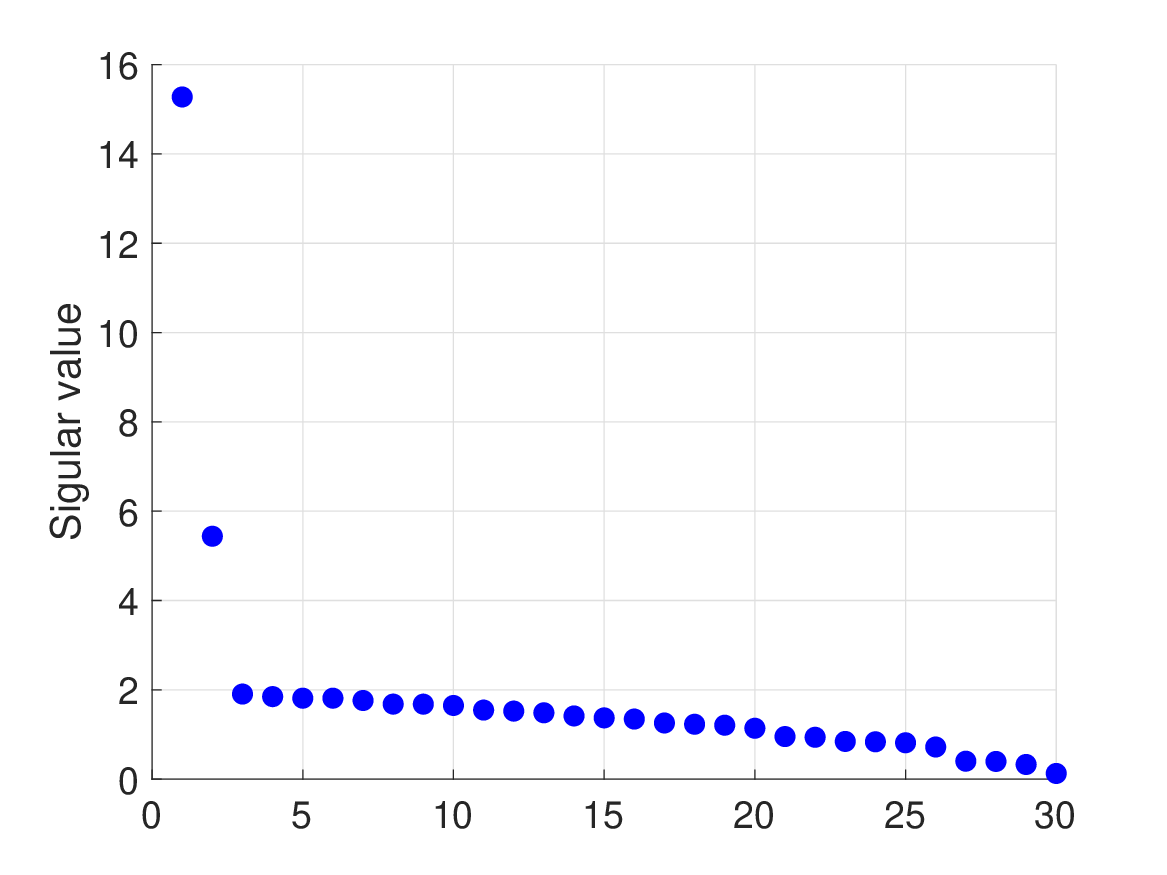}
		\caption{3rd dimension}
		\label{fig:fig3}
	\end{subfigure}
	\caption{Scree plot for rank selection}
	\label{fig:rank}
\end{figure}


In the first experiment, we vary the step size $\eta$ among the values $\{5 \times 10^{-4}, 1 \times 10^{-3}, 5 \times 10^{-3}\}$ while keeping the noise level fixed at $\sigma = 1$. The horizon $T$ is set to $15,000$. The experimental results are presented in the left panel of Figure \ref{fig:step size}. The figure clearly shows that a larger step size (represented by the yellow curve) leads to faster convergence but results in a higher error rate. Conversely, a smaller step size (represented by the blue curve) yields a more accurate estimator, albeit requiring a longer convergence time to reach a stable state. These findings validate our theoretical claims.
\begin{figure}
	\centering
	\begin{subfigure}[b]{.98\linewidth}
		\includegraphics[width=0.5\textwidth]{./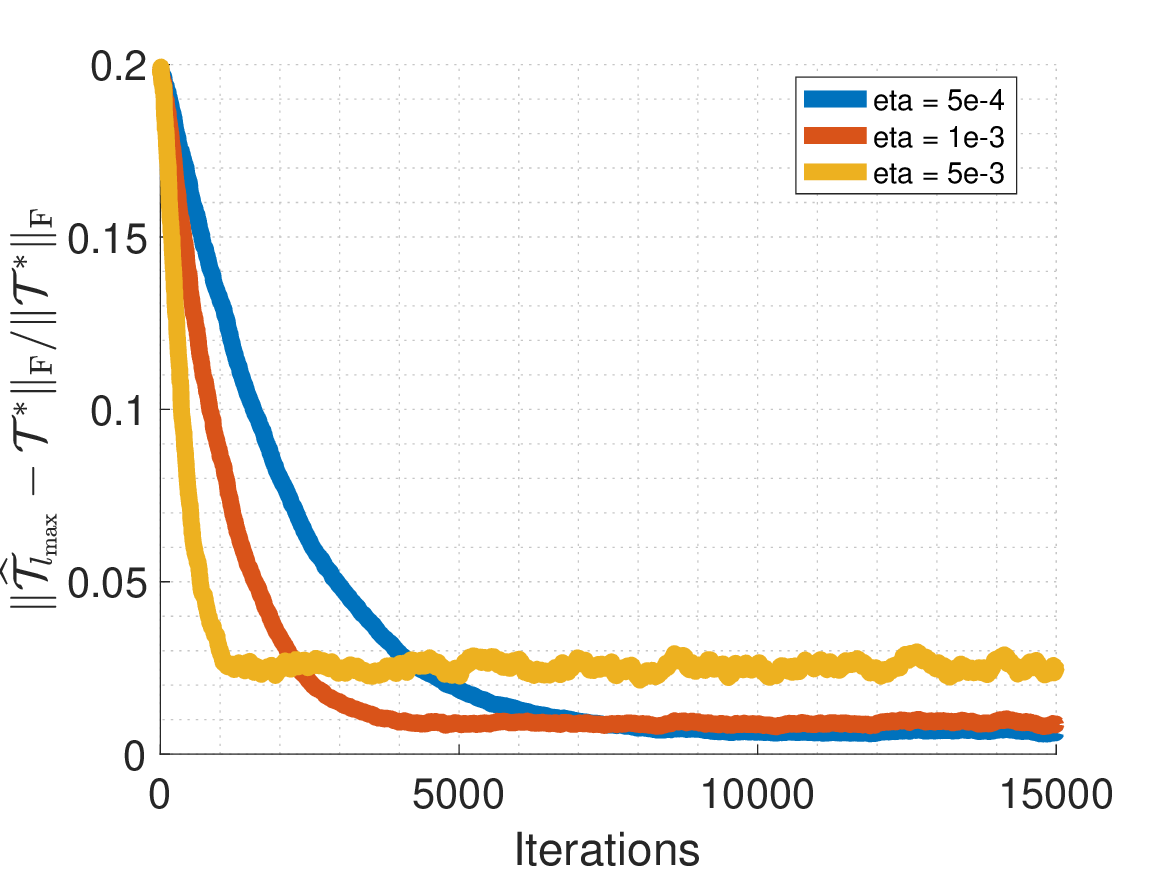}
		\includegraphics[width=0.5\textwidth]{./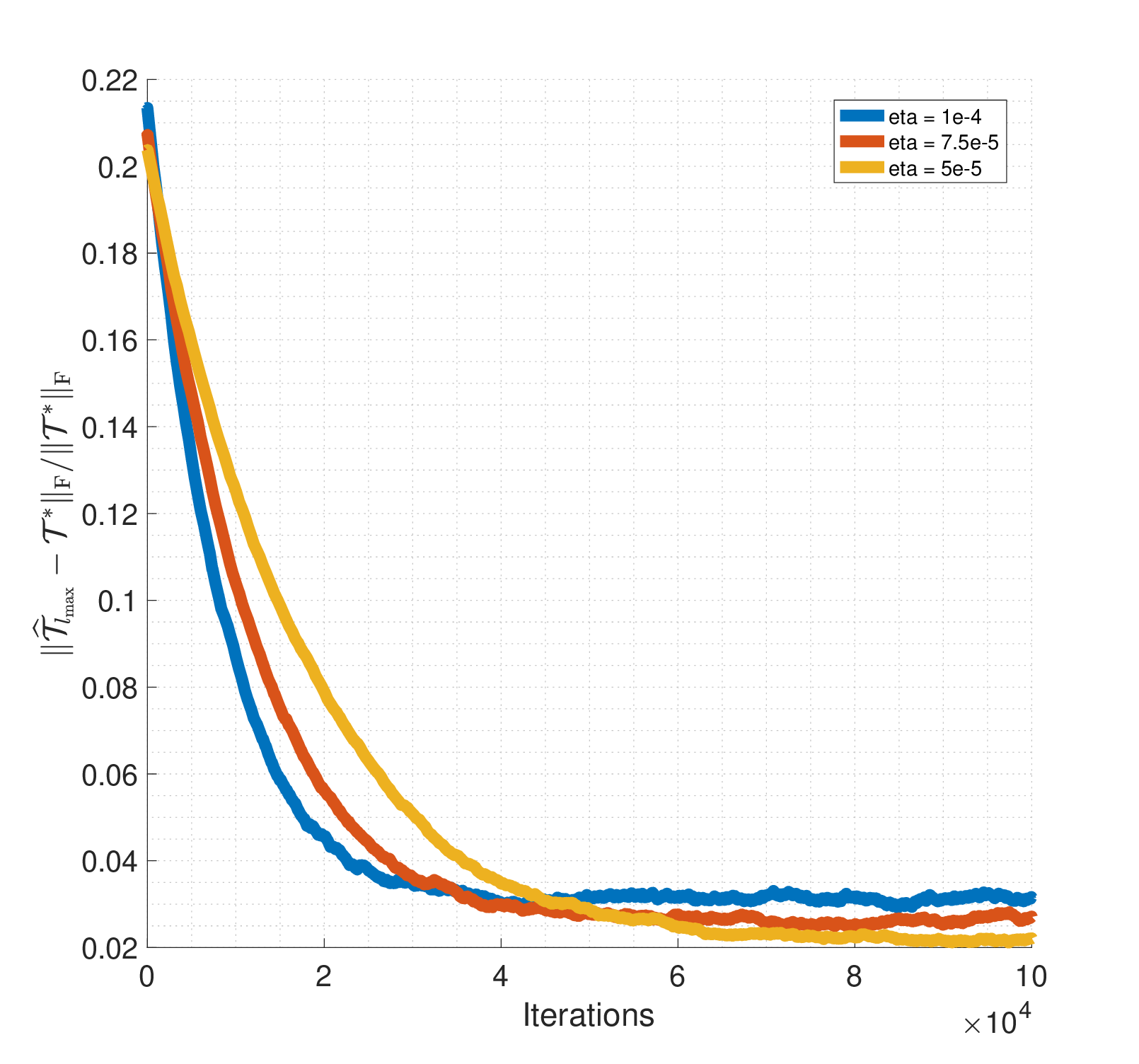}
	\caption{Convergence dynamics of oRGrad for online tensor linear regression (left) and online tensor completion (right) with different step sizes.}
	\label{fig:step size}
	\end{subfigure}
	\begin{subfigure}[b]{.98\linewidth}
	\includegraphics[width=0.5\textwidth]{./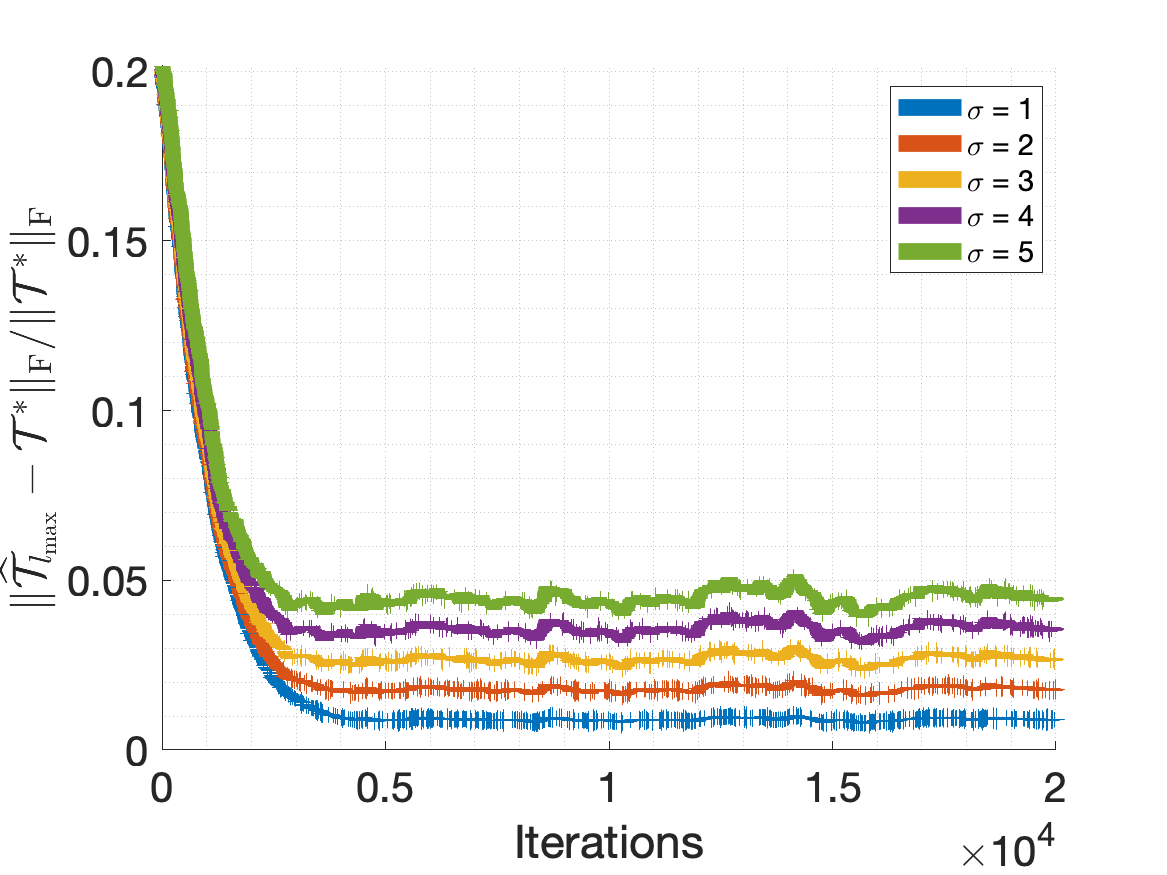}
	\includegraphics[width=0.5\textwidth]{./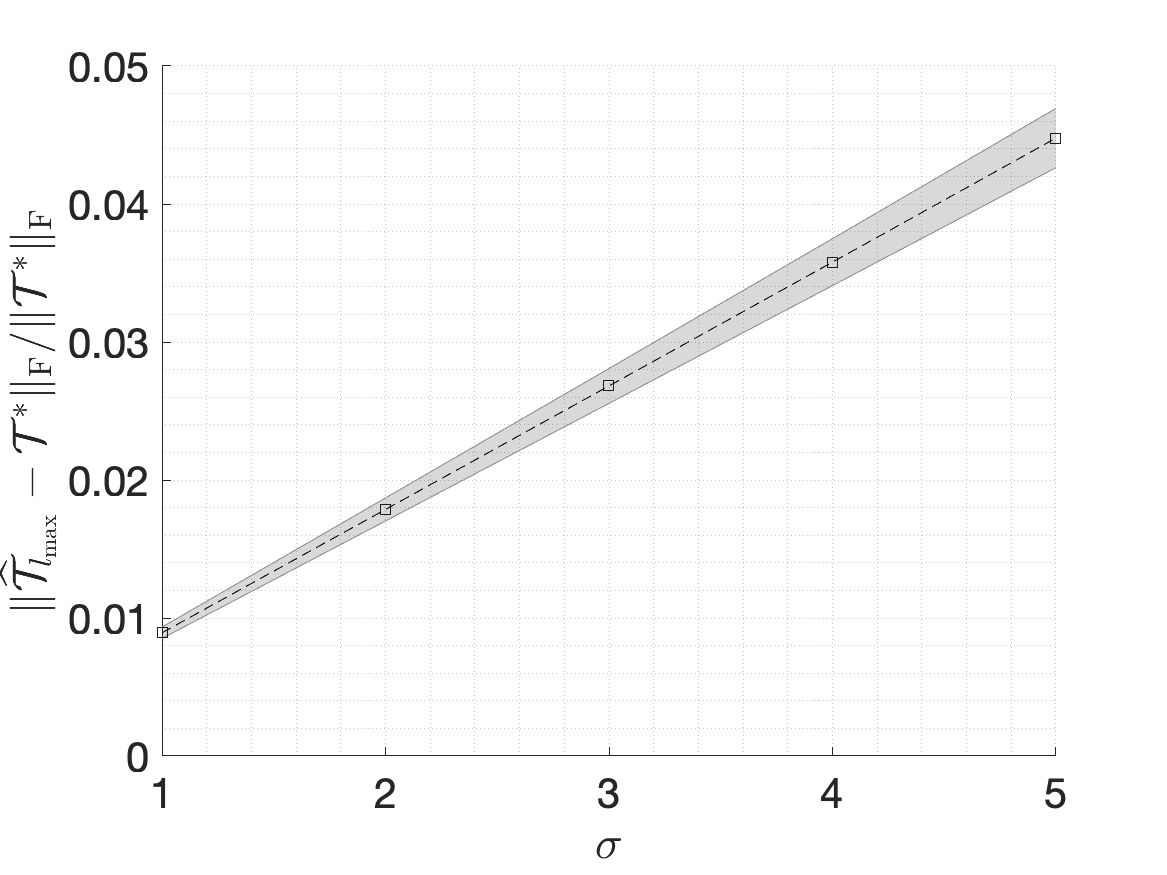}
	\caption{Left: Convergence dynamics of oRGrad for online tensor linear regression with different noise levels; Right: error bar plot based on 10 independent trials for different noise levels. }
	\label{fig:tr_noise}
\end{subfigure}
	\begin{subfigure}[b]{.98\linewidth}
	\includegraphics[width=0.5\textwidth]{./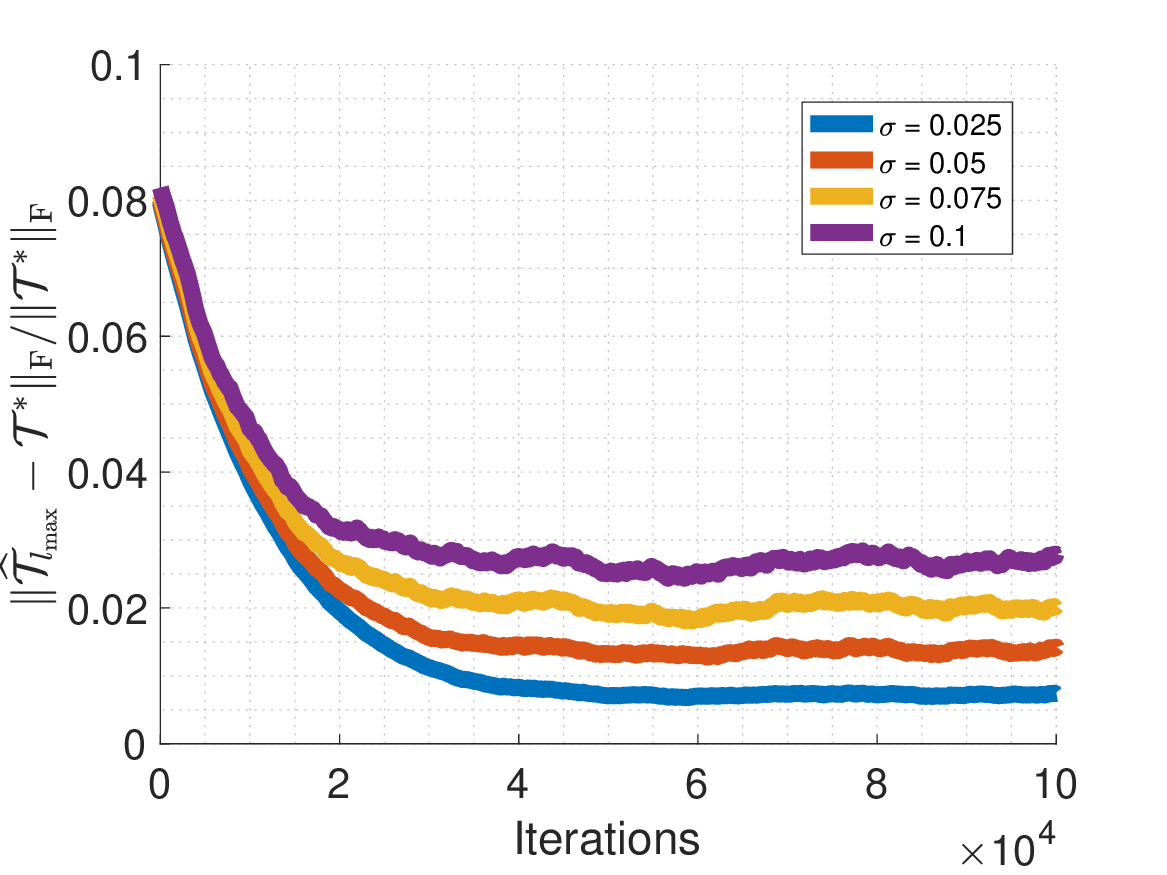}
	\includegraphics[width=0.5\textwidth]{./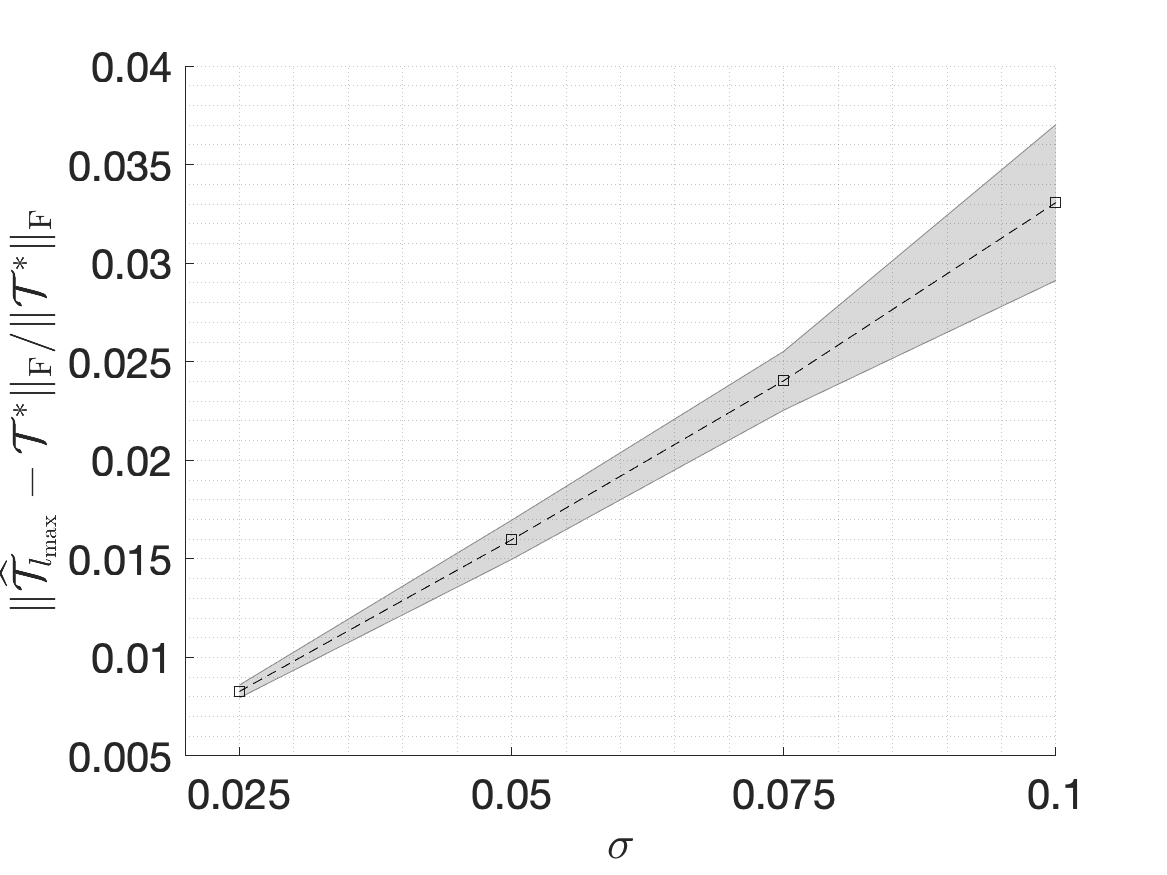}
	\caption{Left: Convergence dynamics of oRGrad for online tensor completion with different noise levels; Right: error bar plot based on 10 independent trials for different noise levels. }
	\label{fig:tc_noise}
\end{subfigure}
\caption{Convergence dynamics of oRGrad for online tensor linear regression and completion.}
\end{figure}

In the second experiment, we investigate the effect of varying noise levels while keeping the step size fixed at $\eta = 1 \times 10^{-3}$. To ensure convergence to a stable estimator, we set the horizon $T$ to $20,000$ steps. The noise level $\sigma$ takes on values $\{1, 2, 3, 4, 5\}$. For each $\sigma$, we conduct a single convergence dynamics experiment and perform 10 independent trials for the error bar plot. The results are presented in Figure \ref{fig:tr_noise}.
From the left panel, we observe that the convergence rates do not significantly vary across different noise levels. However, the right panel reveals a proportional relationship between the final error rate and the noise level, thereby confirming our theoretical findings.

We also present the average per-step runtime as a function of the tensor dimension. We fix the rank $\mathbf{r} = (2, 2, 2)'$ while varying the tensor dimensions $d \times d \times d$ for $d \in \{20, 25, \ldots, 70\}$. For each dimension, we set the horizon to $T = 10,000$. The results are displayed in Figure \ref{fig:runtime}. As shown, the average per-step runtime scales linearly with $d^3$, which aligns with the computational cost detailed in Appendix \ref{sec:RGrad}.

\begin{figure}[htbp]
	\centering
	\includegraphics[width=0.6\textwidth]{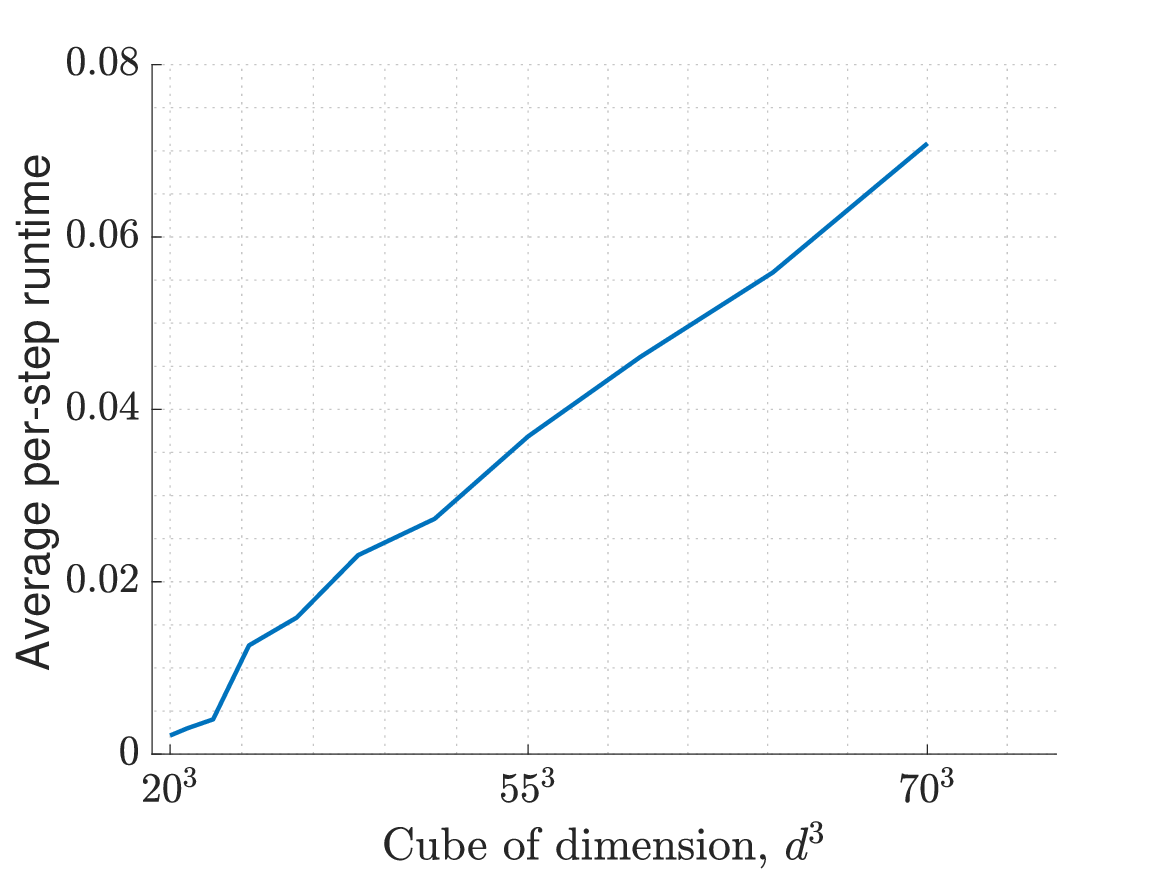}
	\caption{Average per-step runtime versus cubic of dimension}
	\label{fig:runtime}
\end{figure}

\subsection{Online tensor completion}
In this section, we conduct numerical experiments on online tensor completion, focusing on a low-rank tensor $\bcalT^* \in \mathbb{R}^{d \times d \times d}$ with dimension $d = 75$ and Tucker rank $\mathbf{r} = (2, 2, 2)'$. We generate $\bcalT^*$ by applying the higher-order singular value decomposition ($\textsf{HOSVD}_{\r}$) to a random tensor with entries independently and uniformly drawn from the interval $[0,1]$. The corresponding value of $\lambda_{\submin}$ is approximately $0.6$, and the incoherence measure $\mu$ is approximately $3.2$. To ensure accurate initialization, we collect data at the beginning of the procedure and employ the second-order moment method proposed in \cite{xia2021statistically}.

In the first experiment, we vary the step size $\eta$ among the values $\{5\times 10^{-5},7.5\times 10^{-5},1\times 10^{-4}\}$ while keeping the noise level fixed at $\sigma = 0.1$. The total number of iterations is set to $T = 100,000$. The experimental results are depicted in the right panel of Figure \ref{fig:step size}. From the figure, we observe that a larger step size (indicated by the yellow curve) leads to faster convergence but also incurs a higher error rate. Conversely, a smaller step size (represented by the blue curve) yields a more accurate estimator at the cost of a longer convergence time to reach a stable state. 

In the second experiment, we investigate the impact of varying noise levels while maintaining a fixed step size of $\eta = 7.5 \times 10^{-5}$. To ensure convergence to a stable estimator, we execute a sufficient number of steps with $T = 100,000$. The noise level $\sigma$ takes on the values $\{0.025, 0.05, 0.075, 0.1\}$. Convergence dynamics are recorded for each noise level in separate trials, and the error bar plot is based on 10 independent trials. The outcomes are presented in Figure \ref{fig:tc_noise}. 
From the left panel, it is evident that the convergence rates don't change across different noise levels. However, the right panel demonstrates a proportional relationship between the final error rate and the noise level, which aligns with our theoretical findings.

\subsection{Regret analysis}
In this section, we analyze the regret of oRGrad under both constant step size and adaptive settings. We consider a tensor $\bcalT^* \in \mathbb{R}^{d \times d \times d}$ with dimension $d = 30$ and Tucker rank $\mathbf{r} = (2, 2, 2)$, generated similarly to those in previous sections. The corresponding value of $\lambda_{\submin}$ is approximately 0.5, and the noise level is fixed at $\sigma = 0.1$ throughout the experiment.

We vary the horizon $T$ from $5,000$ to $200,000$. Following the suggestion in the remark after Theorem \ref{thm:regret}, we set the step size to $\eta = \frac{0.01}{\sqrt{T}}$ for the constant step size regime. We plot regret versus the square root of the horizon ($\sqrt{T}$) in the left panel of Figure \ref{fig:regret}. Additionally, we adopt an adaptive choice of step sizes as suggested by Algorithm \ref{alg:adaptive}, and the resulting regret versus $\log(T)$ is plotted in the right panel of Figure \ref{fig:regret}. From the figures, we observe that the regret scales linearly with respect to $\sqrt{T}$, aligning with the predictions of Theorem \ref{thm:regret} under a constant step size. In the adaptive setting, the regret scales linearly with respect to $\log T$, which significantly improves upon the constant step size regime.

\begin{figure}[h]
	\begin{subfigure}[b]{.98\linewidth}
	\includegraphics[width=0.5\textwidth]{./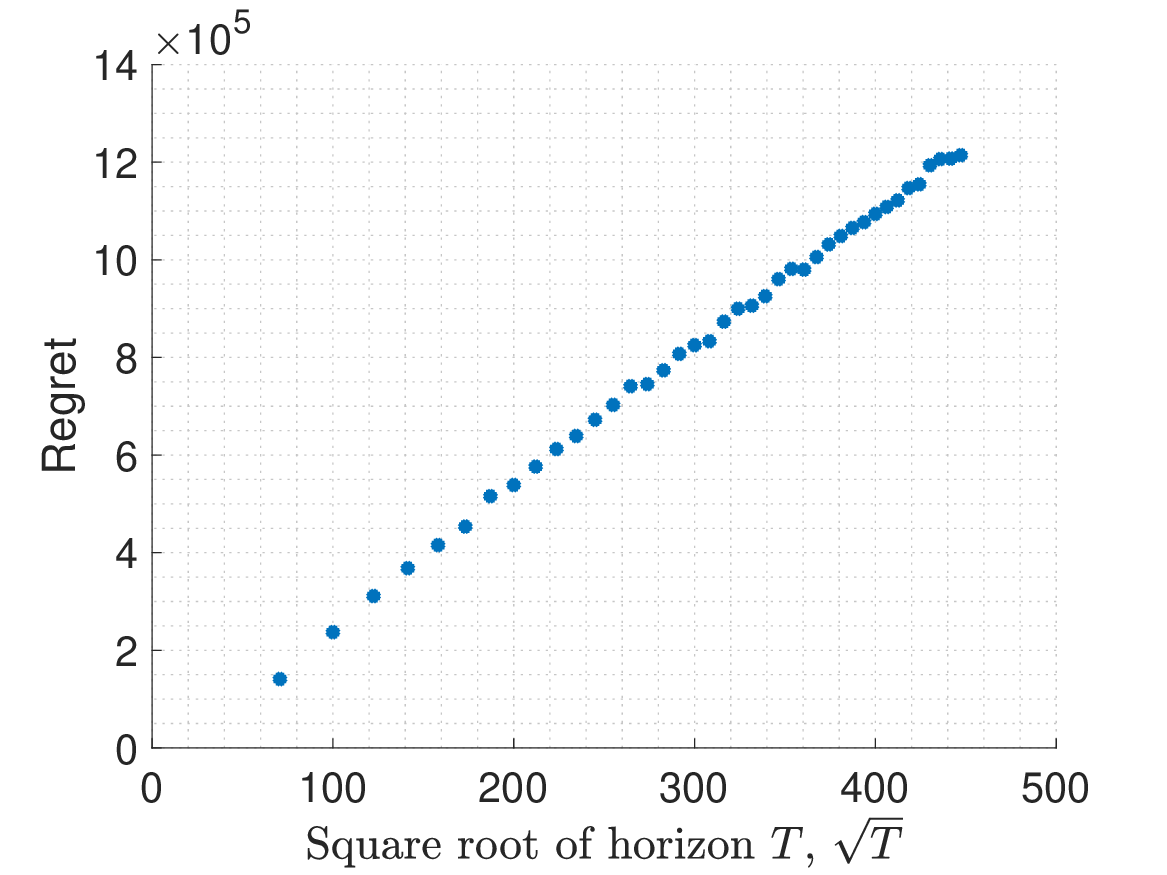}
	\includegraphics[width=0.5\textwidth]{./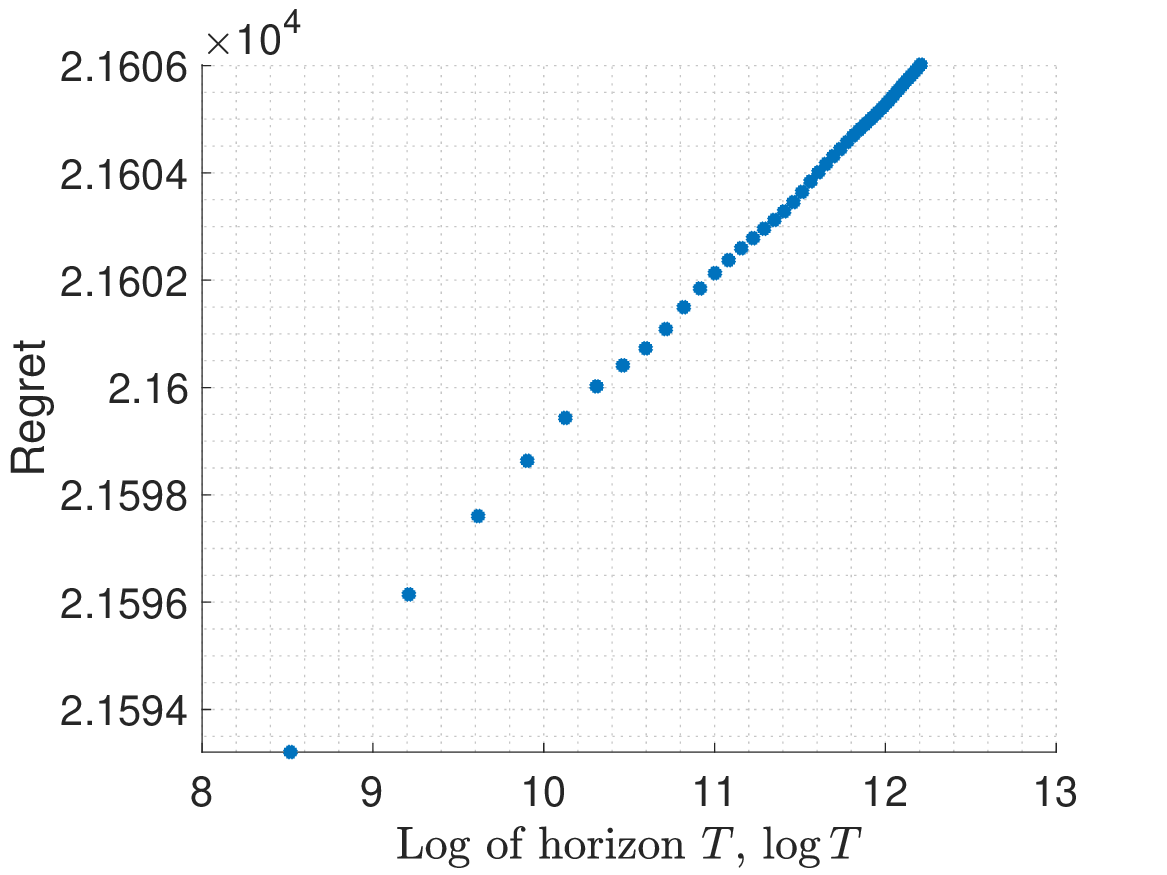}
\end{subfigure}
	\caption{Regret performance. Left: a constant step size; Right: adaptive step sizes. }
\label{fig:regret}
\end{figure}

\subsection{Real data: Prediction of solar index F10.7}\label{sec:app-solar}
In this section, we target at real-time prediction of solar index F10.7 from ionospheric total electron content (TEC) data. The TEC values derived from multi-frequency Global Navigation Satellite System (GNSS) signals and the relevant products have become one of the most utilized parameters in the space weather and ionospheric research community. The F10.7 index measures solar activity based on the radio flux at 10.7 cm (2800 MHz) from the sun's outer layers. It has been consistently recorded in Canada since 1947 and varies from below 50 to above 300 solar flux units over a solar cycle. Researchers have been investigating how the solar activities as measured by F10.7 and other indices can impact the changes of global TEC maps, see e.g., \cite{wang2023forecast}. However, the relationship between the F10.7 and the TEC maps is still not very clear based on data-driven approaches as shown in \cite{wang2023forecast}, though physics implies that these two quantities are highly related to each other. At a particular timestamp, the TEC is measured with a spatial resolution of 1 latitude by 1 longitude, and the temporal cadence is 5 minutes.
The original TEC data from the Madrigal TEC database \citep{rideout2006automated} has more than 80\% of the data missing on average. We use a completed version of the TEC data~\citep{sun2023complete} using the VISTA algorithm proposed in \citep{sun2022matrix} \footnote{This dataset is made publicly available in \url{https://deepblue.lib.umich.edu/data/concern/data_sets/nc580n00z?locale=en}.}(see Figure \ref{fig:TEC} for an example of one completed measurement).
The response solar index F10.7 is measured hourly \footnote{The response can be downloaded from \url{https://omniweb.gsfc.nasa.gov/form/dx1.html}} (see Figure \ref{fig:F107} for the solar index in the year 2020). 
We use the TEC data and solar flare index in the year 2020. We predict the F10.7 index using the data measured within an hour-long window one day (24 hours) ahead. For example, we predict the F10.7 index at 2 pm on July 20th using the TEC data measured between 1 p.m.- 2 p.m. (12 measurements in total) on July 19th. We down-sample the data in the spatial domain by applying a $6\times 6$ maximum kernel to the original dataset. This approach helps reduce redundancy in the original image, emphasizing the active regions with higher values that carry more significant information. Additionally, it enables us to use the offline setting as a benchmark, as it otherwise would require excessive storage.
After pre-processing the data, we obtain 8760 samples, where each covariate sample is of size $16\times 31\times 12$. We adopt the rank selection method mentioned in Section \ref{sec:nu:lr}, which indicates the rank $\r = (4,4,4)$. We use the first 6000 samples for the initialization; and for the rest of the samples, we perform online prediction. 
\begin{figure}[htbp]
	\centering
	\begin{subfigure}[b]{0.45\textwidth}
		\centering
		\includegraphics[width=\textwidth]{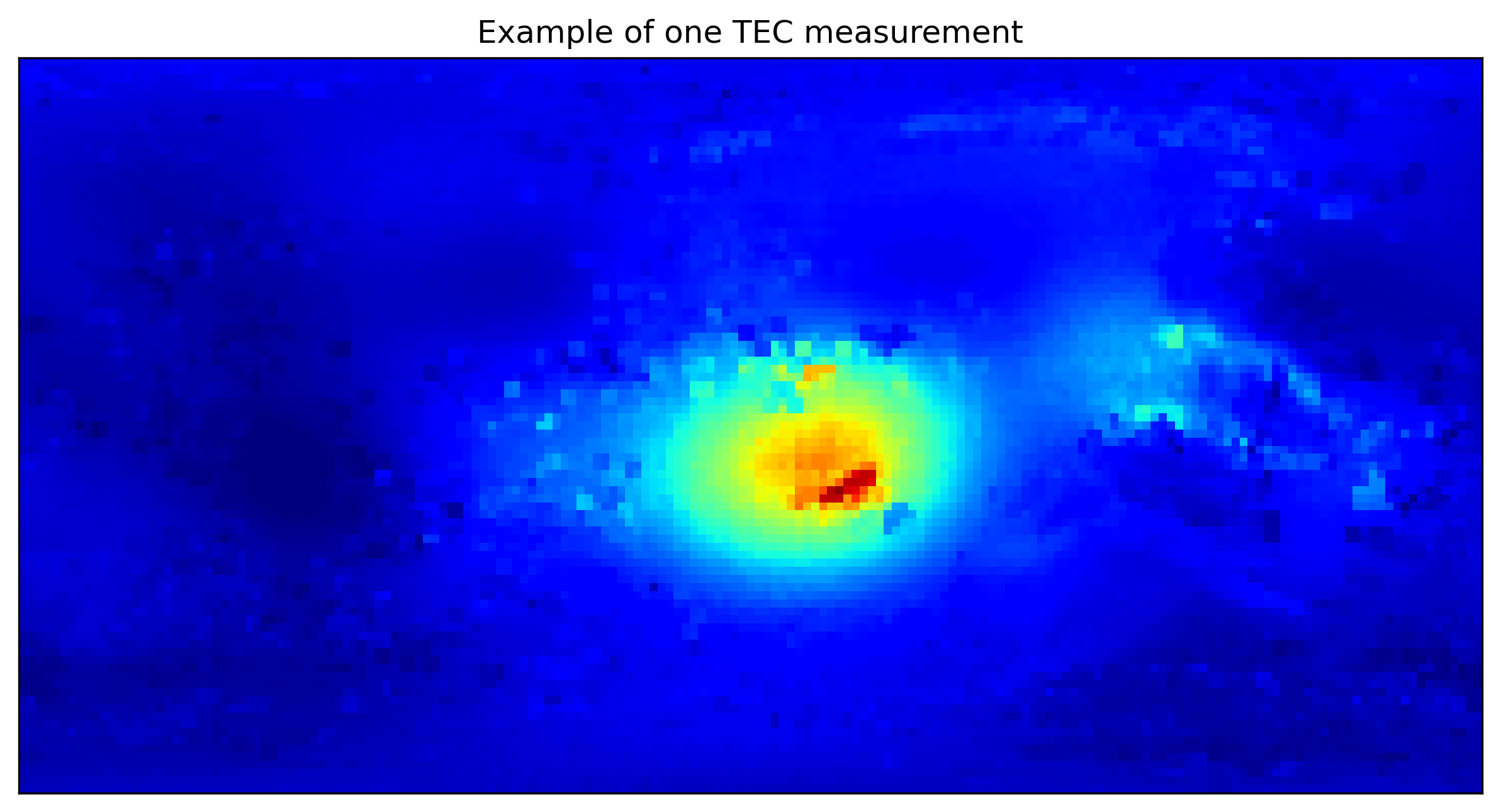}
		\caption{Example of one TEC measurement}
		\label{fig:TEC}
	\end{subfigure}
	\hfill
	\begin{subfigure}[b]{0.45\textwidth}
		\centering
		\includegraphics[width=\textwidth]{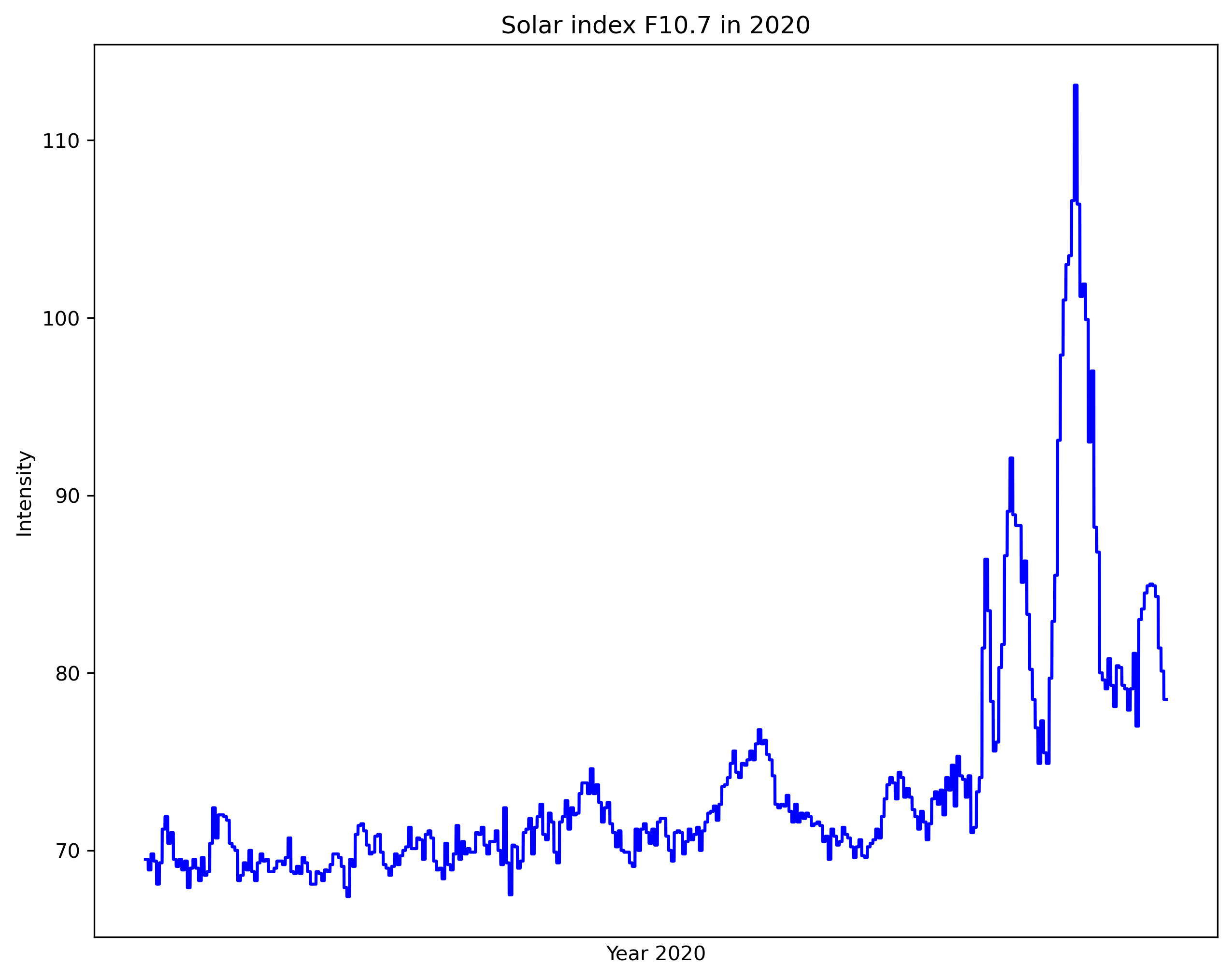}
		\caption{Solar index F10.7 in the year 2020}
		\label{fig:F107}
	\end{subfigure}
	\caption{Example of one slice in the tensor covariate and the response}
	\label{fig:TEC+f107}
\end{figure}
\begin{figure}[htbp]
	\centering
	\includegraphics[width=0.8\textwidth]{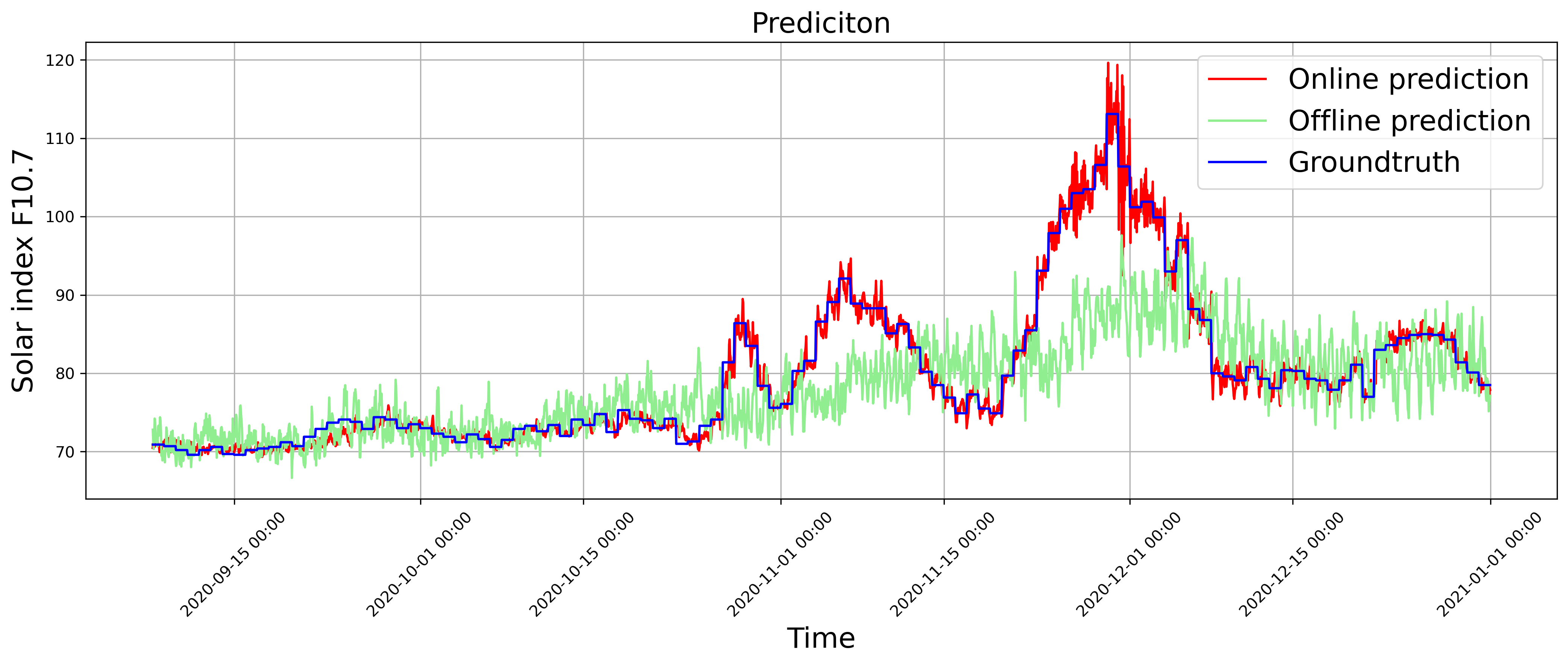}
	\caption{Predicting solar index F10.7 by online/offline RGrad and comparison with the ground truth (true recording).}
	\label{fig:f107pred}
\end{figure}
As a comparison, we also use the offline-RGrad for our task. We use 3000 samples for initialization and another 3000 samples for refinement, and test the prediction accuracy using the rest of the samples. The results of the prediction are displayed in Figure \ref{fig:f107pred}. And the relative error in prediction and correlations are displayed in Table \ref{table:TEC}.

\begin{table}[h]
\centering
	\begin{tabular}{c||c|c}
		& Relative prediction error& Correlation\\ 
		\hline
		Online RGrad& \textbf{0.018}&\textbf{0.988} \\
		\hline
		Offline RGrad&0.085 & 0.724 
	\end{tabular}
	\caption{Comparison with offline RGrad in predicting solar index F10.7.}
		\label{table:TEC}
\end{table}
It is clear from both Figure \ref{fig:f107pred} and Table \ref{table:TEC} that online RGrad has a better performance and is able to make accurate real-time prediction. From 2020-09-07 to 2020-10-25, it is non-stormy period and both online RGrad and offline RGrad have similar performance. However, there are three stormy periods from 2020-10-26 to the end of 2020. Since the offline RGrad is trained using the data from non-stormy periods, it can not capture the change in covariate and performs poorly in predicting the stormy period. While online RGrad updates the parameter from time to time and is thus able to make accurate predictions even in the stormy periods. This adaptivity of stormy periods is of great importance in space weather monitoring.

\section*{Acknowledgements}
Jian-Feng Cai's research was partially supported by Hong Kong RGC Grant GRF 16306821, 16309219 and 16310620. Dong Xia's research was partially supported by Hong Kong RGC Grant GRF 16300121 and 16301622. Yang Chen's research was partially supported by NSF DMS 2113397, NSF PHY 2027555, NSF AGS 2419187, NASA 22-SWXC22\_2-0005, and NASA 22-SWXC22\_2-0015.


\bibliographystyle{plainnat} 
\bibliography{reference.bib}       

\newpage
\appendix

\section{Experiment on MovieLens 100k Dataset}
We obtained the MovieLens 100k Dataset from the official website \url{https://grouplens.org/datasets/movielens/}, which comprises 100,000 ratings given by 1000 users for 1700 movies. The data is transformed into a matrix of size $1000\times 1700$ with a significant number of missing entries. For our experiment, we utilize the earliest 80,000 ratings as the training data, while the remaining 20,000 ratings serve as the test data.

To address the problem of online matrix completion, we employ the oRGrad algorithm and compare its performance against the offline RGrad method proposed in \cite{wei2016guarantees}, which serves as a benchmark. We evaluate the results using the mean absolute error (MAE) metric:
$$\text{MAE} = \frac{1}{|\Omega_{\text{test}}|} \| [\hat \M - \M^*]_{\Omega_{\text{test}}}\|_{\ell_1},$$
where $\Omega_{\text{test}}$ is indices of test data and $|\Omega_{\text{test}}|$ is the number of test data, $\hat\M$ is the estimator output by either oRGrad or offline RGrad, and $\M^*$ represents the underlying matrix. 
The result are collected in Table \ref{table:movielens}.
\begin{table}[h]
	\centering
	\begin{tabular}{|c || c|c|c|c|}
		\hline
		MAE& $r = 2$& $r = 5$ & $r= 10$& $r = 15$\\ 
		\hline
		Online & 0.4871&0.4863 & 0.4808 & 0.4902\\ 
		\hline
		Offline (Benchmark) & 0.4749 & 0.4694 &0.4635 &0.4859\\ 
		\hline
	\end{tabular}
	\caption{Comparison with benchmark (offline RGrad).}
	\label{table:movielens}
\end{table}

Although offline RGrad outperforms oRGrad slightly in terms of MAE, it is important to note that offline RGrad is a method that relies on collecting data and lacks the ability to provide timely updates. In contrast, oRGrad sacrifices some accuracy but offers the advantage of providing timely updates without the need for data collection, which can be a time-consuming process.

\section{Tensor logistic regression}\label{sec:logistic}
Low-rank logistic regression has been studied in \cite{taki2021minimax,hung2013matrix,shi2014sparse}. In particular, \cite{taki2021minimax} provided a minimax lower bound for matrix logistic regression. In contrast, online low-rank logistic regression has been far less explored in the literature. Similarly as Example~\ref{ex:logisticregression}, consider the binary observation $Y_t\sim\text{Ber}(p_t)$ where $p_t = f(\inp{\bcalX_t}{\bcalT^*})$ and $f:\RR\rightarrow[0,1]$ is an {\it inverse link function}. Common choices of $f$ include the logistic link $f(\theta) = (1+e^{-\theta/\sigma})^{-1}$ and the probit link $f(\theta) = 1 - \Phi(-\theta/\sigma)$ where $\sigma>0$ is a scaling parameter.  The corresponding loss function is the negative log-likelihood defined as $h(\theta,y) = -y\log(f(\theta)) - (1-y)\log (1-f(\theta))$. For any given $\alpha>0$, define
\begin{align*}
\gamma_{\alpha} &:= \min\big\{\inf_{|\theta|\leq\alpha}\frac{(f'(\theta))^2 - f^{''}(\theta)f(\theta)}{f^2(\theta)},\quad\inf_{|\theta|\leq\alpha}\frac{f^{''}(\theta)(1-f(\theta)) + (f'(\theta))^2}{(1-f(\theta))^2}\big\},\\
\mu_{\alpha}&:= \max\big\{\sup_{|\theta|\leq\alpha}\frac{(f'(\theta))^2 - f^{''}(\theta)f(\theta)}{f^2(\theta)},\quad\sup_{|\theta|\leq\alpha}\frac{f^{''}(\theta)(1-f(\theta)) + (f'(\theta))^2}{(1-f(\theta))^2}\big\}.
\end{align*}
and 
\begin{align*}
	L_{\alpha} = \sup_{|\theta|\leq \alpha}\frac{f'(\theta)}{f(\theta)(1-f(\theta))},
\end{align*}
which characterizes $\errinf$ and $\errtwo$. In binary learning problems, $L_{\alpha}$ and $\lambda_{\submin}/L_{\alpha}$ are often regarded as noise level \citep{wang2020learning,davenport20141,cai2013max,cai2022generalized,han2022optimal} and the SNR, respectively.


\paragraph*{Logistic link}  If $f(\theta) = (1+e^{-\theta/\sigma})^{-1}$, we have $f'(\theta)=\sigma^{-1}\cdot f(\theta)(1-f(\theta))$ and $f''(\theta)=\sigma^{-2}\cdot f(\theta)(1-f(\theta))(1-2f(\theta))$. As a result, 
\begin{align*}
	\gamma_{\alpha} = \min_{|\theta|\leq \alpha}\frac{1}{\sigma^2}f(\theta)(1-f(\theta)) = \frac{e^{\alpha/\sigma}}{(1+e^{\alpha/\sigma})^2\sigma^2},\quad {\rm and}\quad 
	\mu_{\alpha} = \max_{|\theta|\leq \alpha}\frac{1}{\sigma^2}f(\theta)(1-f(\theta)) =\frac{1}{4\sigma^2}.
\end{align*}
As $\alpha$ becomes larger, $\gamma_{\alpha}$ approaches zero exponentially fast.  It is therefore typical to assume $\|\bcalT^{\ast}\|_{\rm F}$ is upper bounded. 


\begin{theorem}\label{thm:logisticregression}
Suppose Assumptions~\ref{assump:X-design} and \ref{assump:GLM-trueT} hold, the initialization satisfies $\fro{\bcalT_0 - \bcalT^*}\leq c_m\gamma_{\alpha}\mu_{\alpha}^{-1}\cdot \lambda_{\submin}$ for some sufficiently small constant $c_m>0$, $C_0\log\dmax\leq\dof$, the learning rate $\eta$ satisfies $\tilde C_1\eta\dof\log^{5/2}\dmax\leq 1$ with $\tilde C_1 = C_1\max\{\gamma_{\alpha}^{-1},\gamma_{\alpha}^{-1}\mu_{\alpha}^2,\gamma_{\alpha}^{-1}\mu_{\alpha}\dof^{-1}\}$, and the SNR satisfies 
	\begin{align*}
		\frac{\lambda_{\submin}^2}{L_{\alpha}^2}\geq C_m\eta\gamma_{\alpha}^{-1}\dof\max\{\log^2\dmax,\gamma_{\alpha}^{-2}\mu_{\alpha}^2\},
	\end{align*}
where $c_m, C_m>0$ depend only on $m$ and $C_0, C_1$ are absolute constants.  Then there exists an absolute constant $C>0$ such that, for any given horizon $T$, the oRGrad algorithm guarantees that, with probability exceeding $1-14T\dmax^{-10}$, for all $t\leq T$,
$$
\fro{\bcalT_t-\bcalT^*}^2 \leq 2\Big(1-\frac{\eta\gamma_{\alpha}}{8}\Big)^t\fro{\bcalT_0 - \bcalT^*}^2 + C\eta L_{\alpha}^2\gamma_{\alpha}^{-1}\cdot\dof.
$$
\end{theorem}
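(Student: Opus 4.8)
The plan is to derive Theorem~\ref{thm:logisticregression} as a direct specialization of the master result Theorem~\ref{thm:gen}. Three things need to be checked: (a) the logistic negative log-likelihood meets Assumption~\ref{assump:GLM-loss} with exactly the $\gamma_\alpha,\mu_\alpha$ displayed above; (b) the noise functionals satisfy $\errtwo\asymp\errinf\asymp L_\alpha$, hence $\errratio\asymp1$; and (c) under these substitutions the hypotheses and conclusion of Theorem~\ref{thm:gen} collapse to the stated ones. For (a), differentiating $h(\theta,y)=-y\log f(\theta)-(1-y)\log(1-f(\theta))$ gives $h_\theta(\theta,y)=-y\,f'(\theta)/f(\theta)+(1-y)\,f'(\theta)/(1-f(\theta))$ and $h_{\theta\theta}(\theta,y)=y\,\frac{(f'(\theta))^2-f''(\theta)f(\theta)}{f(\theta)^2}+(1-y)\,\frac{f''(\theta)(1-f(\theta))+(f'(\theta))^2}{(1-f(\theta))^2}$. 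Since $y\in\{0,1\}$, $h_{\theta\theta}(\theta,y)$ is always one of the two bracketed quantities, so $\inf_{|\theta|\le\alpha,\,y}h_{\theta\theta}=\gamma_\alpha$ and $\sup_{|\theta|\le\alpha,\,y}h_{\theta\theta}=\mu_\alpha$ with $\gamma_\alpha,\mu_\alpha$ as defined; then $(\theta_1-\theta_2)\big(h_\theta(\theta_1,y)-h_\theta(\theta_2,y)\big)=(\theta_1-\theta_2)\int_{\theta_2}^{\theta_1}h_{\theta\theta}(s,y)\,ds$ together with $\gamma_\alpha\le h_{\theta\theta}(s,y)\le\mu_\alpha$ on $|s|\le\alpha$ yields Assumption~\ref{assump:GLM-loss}. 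For the logistic link these reduce to the closed forms $\gamma_\alpha=e^{\alpha/\sigma}/((1+e^{\alpha/\sigma})^2\sigma^2)$, $\mu_\alpha=1/(4\sigma^2)$ already recorded, and $\EE_{Y_t}h_\theta(\inp{\bcalX_t}{\bcalT^*},Y_t)=-f'+f'=0$ since $\EE Y_t=f(\inp{\bcalX_t}{\bcalT^*})$, consistent with Assumption~\ref{assump:GLM-trueT}.

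The key observation for (b) is the \emph{deterministic} uniform bound $|h_\theta(\theta,y)|\le\max\{f'(\theta)/f(\theta),\,f'(\theta)/(1-f(\theta))\}\le L_\alpha$ for all $|\theta|\le\alpha$ and $y\in\{0,1\}$, because $f'/f=(1-f)\cdot\frac{f'}{f(1-f)}\le L_\alpha$ and $f'/(1-f)=f\cdot\frac{f'}{f(1-f)}\le L_\alpha$. On the event $\big\{|\inp{\bcalX_t}{\bcalT^*}|\le\alpha\ \forall\, t\le T\big\}$ — which holds with probability at least $1-2T\dmax^{-10}$ since $\inp{\bcalX_t}{\bcalT^*}$ is sub-Gaussian with norm $\lesssim\fro{\bcalT^*}$ (Assumption~\ref{assump:X-design}) and $C\fro{\bcalT^*}\log^{1/2}\dmax\le\alpha$ (Assumption~\ref{assump:GLM-trueT}), and which is already absorbed in the analysis underlying Theorem~\ref{thm:gen} — we get $\errinf\le L_\alpha$. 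Likewise $g_t(\theta)=(f'(\theta))^2/\big(f(\theta)(1-f(\theta))\big)\le L_\alpha\,f'(\theta)\le L_\alpha^2/4$ for the logistic link (using $f'=\sigma^{-1}f(1-f)\le(4\sigma)^{-1}$), so $\errtwo\le L_\alpha/2$; conversely, among $T$ observations some $\inp{\bcalX_t}{\bcalT^*}$ lands within $O(1)$ of $0$, where $f\approx1/2$, forcing $\errinf\gtrsim L_\alpha$ with high probability. Hence $\errratio\asymp1$ and $\errtwo\asymp\errinf\asymp L_\alpha$.

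For (c), I would substitute $\errratio\asymp1$ and $\errtwo^2\asymp L_\alpha^2$ into Theorem~\ref{thm:gen}. The initialization condition becomes $\fro{\bcalT_0-\bcalT^*}\le c_m\gamma_\alpha\mu_\alpha^{-1}\lambda_{\submin}$ and $C_0\errratio^{-2}\log\dmax\le\dof$ becomes $C_0\log\dmax\le\dof$. Each of the three step-size requirements (1)--(3) has the shape $\eta\,\gamma_\alpha^{-1}\,(\text{polynomial in }\mu_\alpha,\dof)\,\log^{O(1)}\dmax\lesssim1$; using $\gamma_\alpha\le\mu_\alpha$ and $\dof\gtrsim\log\dmax$ to discard the dominated terms, they merge into $\tilde C_1\eta\dof\log^{5/2}\dmax\le1$ with $\tilde C_1=C_1\max\{\gamma_\alpha^{-1},\gamma_\alpha^{-1}\mu_\alpha^2,\gamma_\alpha^{-1}\mu_\alpha\dof^{-1}\}$. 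The two SNR inequalities of Theorem~\ref{thm:gen} combine into $\lambda_{\submin}^2/L_\alpha^2\ge C_m\eta\gamma_\alpha^{-1}\dof\max\{\log^2\dmax,\gamma_\alpha^{-2}\mu_\alpha^2\}$, the statistical term $C\eta\gamma_\alpha^{-1}\dof\,\errtwo^2$ in the conclusion becomes $C\eta L_\alpha^2\gamma_\alpha^{-1}\dof$, and the contraction factor $(1-\eta\gamma_\alpha/8)^t$ and failure probability $14T\dmax^{-10}$ are inherited unchanged.

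\textbf{Main obstacle.} Essentially all the difficulty — the martingale and concentration control of the online Riemannian iterates updated on a single noisy observation — lives in Theorem~\ref{thm:gen}. Given that theorem, the only genuinely delicate points in the reduction are: pinning down the two-sided estimate $\errratio\asymp1$, in particular the high-probability lower bound $\errinf\gtrsim L_\alpha$ (without which the $\errratio^2$-terms in hypothesis (1) of Theorem~\ref{thm:gen} would not be harmless), and the bookkeeping that verifies none of the several terms in hypotheses (1)--(3) and the two SNR bounds of Theorem~\ref{thm:gen} is stricter than the consolidated conditions of Theorem~\ref{thm:logisticregression}. A cleaner alternative is to re-run the proof of Theorem~\ref{thm:gen} with the single deterministic quantity $L_\alpha$ used in place of both $\errinf$ and $\errtwo$, which is legitimate here since they are comparable for the logistic link and removes $\errratio$ from the analysis entirely.
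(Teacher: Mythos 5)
Your proposal is correct and follows exactly the paper's route: the paper proves Theorem~\ref{thm:logisticregression} by computing $h_\theta$, $g_t$, noting that Assumption~\ref{assump:GLM-loss} holds by the definitions of $\gamma_\alpha,\mu_\alpha$, and plugging $\errtwo^2=L_\alpha^2/4$, $\errinf=L_\alpha$, $\errratio=1/2$ into Theorem~\ref{thm:gen}. Your worry about a high-probability lower bound on $\errinf$ is unnecessary — the paper (and your own "cleaner alternative") simply uses the deterministic bound $|h_\theta(\theta,y)|\le L_\alpha$ in place of $\errinf$ throughout, which is all the proof of Theorem~\ref{thm:gen} requires.
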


\begin{proof}[Proof of Theorem~\ref{thm:logisticregression}]
It suffices to validate the conditions of Theorem~\ref{thm:gen}. Indeed, we have $h_{\theta}(\theta, y)=-y f'(\theta)/f(\theta)+(1-y)f'(\theta)/(1-f(\theta))$ and $g_t(\theta)=(f'(\theta))^2/(f(\theta)(1-f(\theta)))$.  It is clear that Assumption~\ref{assump:GLM-loss} holds from the definitions of $\gamma_{\alpha}$ and $\mu_{\alpha}$. Meanwhile, we have $\errtwo^2=L_{\alpha}^2/4, \errinf=L_{\alpha}$, and $\errratio=1/2$. The proof is concluded by Theorem~\ref{thm:gen}.  
\end{proof}

For ease of exposition, assume $m,\rmax,\mu_{\alpha},\gamma_{\alpha} \asymp O(1)$ and $d_j\asymp d$ for $j\in[m]$. Theorem~\ref{thm:logisticregression} requires an upper bound of step size $\eta\leq \eta_{\textsf{comp}}:= c(d\log^2d)^{-1}\cdot \min\big\{\lambda_{\submin}^2L_{\alpha}^{-2}, \log^{-1/2}d\big\}$ for some small constant $c>0$. Similarly, we observe a computational and statistical trade-off. To achieve statistical optimality, given any known horizon $T$, we can set $\eta=\eta_{\textsf{stat}}:=C_m(T\gamma_{\alpha})^{-1}\log d$ and Theorem~\ref{thm:logisticregression} gives rise to 
$$
\fro{\bcalT_T - \bcalT^*}^2 =O_m\bigg( d^{-C_m}\fro{\bcalT_0-\bcalT^*}^2  + L_{\alpha}^2\cdot \frac{\dof\log d}{T\gamma_{\alpha}^2}\bigg),
$$
which is valid when SNR $\lambda_{\submin}^2/L_{\alpha}^2=\Omega\big(T^{-1}d\log^3d\big)$ and horizon $T=\Omega\big(d\log^{7/2}d\big)$. If we choose a sufficiently large $C_m$ so that the first term is negligible, we end up with an optimal error rate matching the minimax lower bound established in \cite{taki2021minimax}. 

\section{Efficiency of Riemannian Gradient Descent}\label{sec:RGrad}
The efficiency of RGrad lies in the step of computing HOSVD. At each step, we will need to compute the following:
\begin{align*}
	\hosvd(\bcalT - \calP_{\TT}\bcalG), 
\end{align*}
where $\bcalT$ is the current estimator whose Tucker rank is $\r$, and we absorb the step-size into the gradient $\bcalG$. Now we illustrate how we RGrad can help reduce the computational cost. We denote $\bcalT = \bcalC\times_{j=1}^m\U_j$ be its Tucker decomposition. Then (see e.g. \citep{kressner2014low})
\begin{align*}
	\calP_{\TT}\bcalG = \bcalG\times_{j=1}^m(\U_j\U_j^\top) + \sum_{i=1}^m\bcalC\times_{j\in[m]\backslash i}\U_j\times \W_i,
\end{align*}
where $\W_i = (\I - \U_i\U_i^\top)\calM_i(\bcalG)(\U_m\otimes\cdots\otimes \U_{i+1}\otimes \U_{i-1}\otimes\cdots\otimes \U_1)\calM_i(\bcalC)^\dagger\in\RR^{d_i\times r_i}$. 
Then 
\begin{align*}
	\bcalT - \calP_{\TT}\bcalG &= \bcalC\times_{j=1}^m\U_j - \bcalG\times_{j=1}^m(\U_j\U_j^\top) -\sum_{i=1}^m\bcalC\times_{j\in[m]\backslash i}\U_j\times \W_i\\
	&=(\bcalC- \bcalG\times_{j=1}^m\U_j^\top)\times_{j=1}^m\U_j -\sum_{i=1}^m\bcalC\times_{j\in[m]\backslash i}\U_j\times \W_i\\
	&= \bcalL\times_{j=1}^m[\U_j \quad \W_j],
\end{align*}
where $\bcalL\in\RR^{2r_1\times\cdots\times 2r_m}$ such that $\bcalL(1:r_1,\cdots,1:r_m) = \bcalC- \bcalG\times_{j=1}^m\U_j^\top$, and $\bcalL(1:r_1,\cdots,r_j+1:2r_j,\cdots,1:r_m) = -\bcalC$ for all $j\in[m]$. Now we set $[\U_j \quad \W_j] = \Q_j\R_j$ be its compact QR decomposition, where $\Q_j\in\OO_{d_j,2r_j}$, and $\R_j\in\RR^{2r_j\times 2r_j}$. Then we have
\begin{align*}
	\bcalT - \calP_{\TT}\bcalG &= (\bcalL\times_{j=1}^m\R_j)\times_{j=1}^m\Q_j. 
\end{align*}
Here $\bcalL\times_{j=1}^m\R_j$ is a tensor of size $2r_1\times \cdots\times 2r_m$. And let the best rank $\r$ Tucker decomposition of it be
\begin{align*}
	\hosvd(\bcalL\times_{j=1}^m\R_j) = \tilde\bcalC\times_{j=1}^m\tilde\U_j,
\end{align*}
where $\tilde\bcalC\in\RR^{r_1\times\cdots\times r_m}$ and $\tilde\U_j\in\OO_{d_j,r_j}$. Then we have
\begin{align*}
	\hosvd(\bcalT - \calP_{\TT}\bcalG) =  \tilde\bcalC\times_{j=1}^m(\Q_j\tilde\U_j).
\end{align*}
Notice here it suffices to compute the HOSVD of a size $2\r$ tensor instead of a size $\d$ tensor, and this explains the efficiency of RGrad. 
\textit{Computational cost analysis. } In computing the HOSVD of the gradient update, we need to compute $\W_j$, the QR decomposition of $[\U_j \quad \W_j]$, and the $\hosvd(\bcalL\times_{j=1}^m\R_j)$ (including the tensor product $\bcalL\times_{j=1}^m\R_j$). When $d_1 = \cdots = d_m=d$, and $r_1 = \cdots =r_m=r$, the total computational cost is $O(md^mr+dr^m+r^{m+1}) = O(md^mr)$. 

Alternatively, if one uses projected gradient descent, that is we need to compute $	\hosvd(\bcalT - \bcalG)$ at each time, where $\bcalT - \bcalG$ is a full rank tensor. 
Therefore we need to compute $m$ SVD of full rank matrices, which requires the computation cost of order $O(md^{m+1})$.

\section{Introduction to the Representation Formula of Spectral Projectors}
This section reviews the representation formula of spectral projectors. We display this section specifically for our usage in analysis. Interested readers are referred to \cite{xia2021normal} for more details. 

\subsection{Symmetric Case}\label{sec:symetric}
Let $\A,\X$ be $d\times d$ symmetric matrices and $\text{rank}(\A)\leq r$ and $\A$ admits an eigen-decomposition 
$\A = \bTheta\bLambda\bTheta^\top$ with $\bTheta\in\RR^{d\times r}$ and $\bLambda = \text{diag}(\lambda_1,\cdots,\lambda_r)\in\RR^{r\times r}$. And the noise matrix $\X$ satisfies $\op{\X}\leq \frac{1}{2}\min_{i=1}^r|\lambda_i|$. Suppose we observe $\hat \A = \A+\X$ but $\A,\X$ are unknown and our aim is to estimate $\bTheta$. We denote $\hat\bTheta$ the $d\times r$ matrix containing the eigenvectors of $\hat\A$ with largest $r$ eigenvalues in absolute values. Now we are able to write $\hat\bTheta\hat\bTheta^\top - \bTheta\bTheta^\top$ explicitly. Follow Theorem 1 in \cite{xia2021normal}, we have 
$$\hat\bTheta\hat\bTheta^\top - \bTheta\bTheta^\top = \sum_{k\geq 1}\S_k,$$
where $\S_k$ has the following form
\begin{align}\label{Sk}
	\S_k = \sum_{s_1+\cdots+s_{k+1} = k}(-1)^{1+\lzero{\s}}\P^{-s_1}\X\cdots\X\P^{-s_{k+1}}
\end{align}
with $\P^0  = \bTheta_{\perp}\bTheta_{\perp}^\top=: \P^{\perp}$ and $\P^{-s} =  \bTheta\bLambda^{-s}\bTheta^\top$ for $s\geq 1$. In our problem setting only the first two terms are carefully dealt with so here we write out them explicitly
\begin{align*}
	\S_1 &= \P^{\perp}\X\P^{-1} + \P^{-1}\X\P^{\perp},\\
	\S_2 &= \P^{\perp}\X\P^{-2}\X\P^{\perp} + \P^{-2}\X\P^{\perp}\X\P^{\perp} +\P^{\perp}\X\P^{\perp}\X\P^{-2} \\
	&\quad - (\P^{-1}\X\P^{-1}\X\P^{\perp} + \P^{\perp}\X\P^{-1}\X\P^{-1} + \P^{-1}\X\P^{\perp}\X\P^{-1}).
\end{align*} 

\subsection{Asymmetric Case}\label{sec:assymetric}
Let $\M\in\RR^{d_1\times d_2}$ be a rank $r$ matrix that admits the compact SVD $\M = \U\bSigma\V^\top$ and $\Z\in\RR^{d_1\times d_2}$ be the noise matrix such that $\op{\Z}\leq\frac{1}{2}\sigma_r$. We observe $\hat\M = \M +\Z$ but $\M,\Z$ are unknown and our aim is to estimate $\U,\V$. Let $\hat\U$ and $\hat\V$ be $\hat \M$'s top-$r$ left and right singular vectors. We can reduce this case to the symmetric case using lifting. More specifically, set 
$$\hat\A = \mat{0&\hat\M\\ \hat\M^\top&0},\quad \A = \mat{0&\M\\ \M^\top&0},\quad \X = \mat{0&\Z\\ \Z^\top&0}.$$
As a result, $\hat\A$ admits a decomposition,
$$\A = \mat{0&\U\\ \V&0}\mat{0&\bSigma\\ \bSigma&0}\mat{0&\V^{\top}\\ \U^{\top}&0}.$$
So the projector onto the subspace spanned by non-zero eigenvectors of $\A$ is $\mat{\U\U^\top& 0\\ 0&\V\V^\top}$. So we can apply the result in the previous section, and we see
\begin{align*}
	\mat{\hat\U\hat\U^\top - \U\U^\top& 0\\ 0&\hat\V\hat\V^\top - \V\V^\top} = \sum_{k\geq 1}\S_k,
\end{align*}
where $\S_k$ is defined in \eqref{Sk} with $\P^{0} = \P^{\perp} = \mat{\U_{\perp}\U_{\perp}^{\top}& 0\\ 0& \V_{\perp}\V_{\perp}^{\top}}$ and for $s\geq 1$, 
\begin{equation*}
	\P^{-s}=\begin{cases}
		\mat{0&\U\bSigma^{-s}\V^\top\\\V\bSigma^{-s}\U^\top & 0}, & \text{if $s$ is odd},\\
		\mat{\U\bSigma^{-s}\U^\top&0\\ 0& \V\bSigma^{-s}\V^\top }, & \text{if $s$ is even}.
	\end{cases}
\end{equation*}
Now we take the left singular vectors as an example, that is, we are interested in $\hat\U\hat\U^\top - \U\U^\top$. From the above discussion, it has the following form
\begin{align*}
	\hat\U\hat\U^\top - \U\U^\top = \sum_{k\geq 1} \S_k^{\text{left}},
\end{align*}
where $\S_k^{\text{left}}$ is the top left block matrix of $\S_k$. 
For example, 
\begin{align}
	\S_1^{\text{left}} &= \P_{\U}^\perp\Z\V\bSigma^{-1}\U^\top + \U\bSigma^{-1}\V^\top\Z^\top\P_{\U}^\perp\label{S1},\\
	\S_{2,1}^{\text{left}} &=  \P_{\U}^\perp\Z\V\bSigma^{-2}\V^\top\Z^\top \P_{\U}^\perp \label{S21},
\end{align}
where $	\S_{2,1}^{\text{left}}$ is the term corresponding to $\s = (0,2,0)$.
Then for any $k\geq 1$ and $\s$ satisfies $s_1 + \cdots + s_{k+1} = k$, $\S_k(\s)$ must be of the following form
\begin{align}\label{Sks}
	\S_k(\s) = \A_1\bSigma^{-s_1}\B_1\bSigma^{-s_2}\cdots\B_k\bSigma^{-s_{k+1}}\A_2^\top,
\end{align}
where $\A_1,\A_2\in\{\U,\U^\top\}$, $\B_{i}\in\{\U^\top\Z\V, \U^\top_{\perp}\Z\V,\U^\top\Z\V_{\perp},\U_{\perp}^\top\Z\V_{\perp} \text{ or their transpose}\}$, and $\bSigma^0$ is the identity matrix whose size depends on the adjacent matrices.

\section{Martingale Concentration Inequality}
\begin{theorem}[Azuma-Hoeffding Inequality]\label{thm:martingale}
Suppose $X_n, n\geq 1$ is a martingale such that $X_0 = 0$ and $|X_{i} - X_{i-1}|\leq d_i, 1\leq i\leq n$ almost surely for some constant $d_i,1\leq i\leq n$. Then for every $t\geq 0$, 
$$\PP(|X_n|\geq t)\leq 2\exp(-\frac{t^2}{2\sum_{i=1}^nd_i^2}).$$
\end{theorem}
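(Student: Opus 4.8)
The plan is to run the standard Chernoff–Cram\'er argument adapted to martingales. Write $\Delta_i := X_i - X_{i-1}$ for the martingale differences, so $X_n = \sum_{i=1}^n \Delta_i$, each $\Delta_i$ is $\FF_i$-measurable, $\EE[\Delta_i \mid \FF_{i-1}] = 0$, and $|\Delta_i| \le d_i$ almost surely. First I would bound the one-sided tail $\PP(X_n \ge t)$. For any $\lambda > 0$, Markov's inequality applied to $e^{\lambda X_n}$ gives $\PP(X_n \ge t) \le e^{-\lambda t}\, \EE\, e^{\lambda X_n}$, so the task reduces to controlling the moment generating function $\EE\, e^{\lambda X_n}$.

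The key lemma is Hoeffding's inequality for a single bounded centered variable: if $Y$ is a random variable with $\EE[Y \mid \calG] = 0$ and $|Y| \le d$ almost surely (conditionally on a $\sigma$-field $\calG$), then $\EE[e^{\lambda Y} \mid \calG] \le e^{\lambda^2 d^2/2}$. I would prove this by convexity — writing $Y$ as a convex combination of the endpoints $\pm d$ to dominate $e^{\lambda Y}$ by a linear interpolant, then optimizing the resulting function of $\lambda d$ — or simply cite it as classical. Applying this conditionally on $\FF_{n-1}$ to $Y = \Delta_n$ yields $\EE[e^{\lambda \Delta_n} \mid \FF_{n-1}] \le e^{\lambda^2 d_n^2/2}$. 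Then, using that $X_{n-1}$ is $\FF_{n-1}$-measurable,
\begin{align*}
\EE\, e^{\lambda X_n}
= \EE\Big[ e^{\lambda X_{n-1}}\, \EE[e^{\lambda \Delta_n} \mid \FF_{n-1}] \Big]
\le e^{\lambda^2 d_n^2/2}\, \EE\, e^{\lambda X_{n-1}},
\end{align*}
and iterating this bound $n$ times (with the base case $X_0 = 0$) gives $\EE\, e^{\lambda X_n} \le \exp\big(\tfrac{\lambda^2}{2}\sum_{i=1}^n d_i^2\big)$.

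Combining the two displays, $\PP(X_n \ge t) \le \exp\big(-\lambda t + \tfrac{\lambda^2}{2}\sum_{i=1}^n d_i^2\big)$ for every $\lambda > 0$. Optimizing the quadratic in $\lambda$, I would set $\lambda = t / \sum_{i=1}^n d_i^2$, which yields $\PP(X_n \ge t) \le \exp\big(-\tfrac{t^2}{2\sum_{i=1}^n d_i^2}\big)$. Finally, since $-X_n$ is also a martingale with $-X_0 = 0$ and increments bounded by the same $d_i$, the identical argument gives $\PP(X_n \le -t) \le \exp\big(-\tfrac{t^2}{2\sum_{i=1}^n d_i^2}\big)$, and a union bound over the two events $\{X_n \ge t\}$ and $\{X_n \le -t\}$ produces the stated two-sided bound with the factor $2$. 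There is no real obstacle here — the only nontrivial ingredient is Hoeffding's lemma, and the conditioning in the telescoping step must be handled carefully (peeling off one increment at a time, always conditioning on the previous $\sigma$-field before taking expectations), but both are routine.
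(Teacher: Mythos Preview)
Your proof is correct and is the standard Chernoff--Hoeffding argument for the Azuma inequality. The paper itself does not prove this theorem; it simply states it as a known concentration tool to be invoked in the analysis, so there is nothing to compare against.
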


\section{Technical Lemmas}
The following lemma is a stronger version of Lemma 15.2 in \cite{cai2022generalized} when the perturbation has certain structure. 
\begin{lemma}\label{lemma:perturbation}
	Let $\bcalT = \bcalS\cdot(\V_1,\ldots,\V_m)$ be the tensor with Tucker rank $\r = (r_1,\ldots,r_m)$. Let $\bcalD = \calP_{\TT}(\bcalX)\in\RR^{d_1\times \cdots \times d_m}$ be a perturbation tensor such that $8\sigma_{\max}(\bcalD)\leq \lambda_{\submin}$. Then for $\calR = \hosvd$ we have 
	$$\fro{\calR(\bcalT +\bcalD) - \bcalT-\bcalD} \leq \frac{59m\fro{\bcalD}^2}{\lambda_{\submin}}.$$
\end{lemma}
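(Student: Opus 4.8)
The plan is to control the retraction error mode by mode. Writing $\hat\P_k := \hat\U_k\hat\U_k^{\top}$ for the projector onto the top-$r_k$ left singular subspace of $\calM_k(\bcalT+\bcalD)$, the definition $\calR(\bcalT+\bcalD) = (\bcalT+\bcalD)\times_{k=1}^m\hat\P_k$ together with the telescoping identity
\[
\calR(\bcalT+\bcalD) - (\bcalT+\bcalD) \;=\; \sum_{k=1}^m\Big((\bcalT+\bcalD)\times_{j<k}\hat\P_j\Big)\times_k(\hat\P_k - \I)
\]
and $\op{\hat\P_j}\le 1$ gives $\fro{\calR(\bcalT+\bcalD) - (\bcalT+\bcalD)}\le \sum_{k=1}^m\fro{(\I-\hat\P_k)\calM_k(\bcalT+\bcalD)}$. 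By Eckart--Young, $\fro{(\I-\hat\P_k)\calM_k(\bcalT+\bcalD)}$ equals the Frobenius distance from $\calM_k(\bcalT+\bcalD)$ to the rank-$(\le r_k)$ matrices, hence is at most $\fro{\calM_k(\bcalT+\bcalD)-\tilde\M_k}$ for \emph{any} rank-$(\le r_k)$ matrix $\tilde\M_k$ of my choosing. So it suffices to construct, for each $k$, such a $\tilde\M_k$ with $\fro{\calM_k(\bcalT+\bcalD)-\tilde\M_k}\le C\,\fro{\bcalD}^2/\lambda_{\submin}$, and sum over $k$.

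The construction uses that $\bcalD=\calP_{\TT}(\bcalX)\in\TT$, so it admits the orthogonal decomposition $\bcalD = \bcalD_{\mathrm{core}} + \sum_{l=1}^m\bcalD_l^{\mathrm{arm}}$ with $\bcalD_{\mathrm{core}} = \bcalD\times_j(\V_j\V_j^{\top})=\bcalG\cdot(\V_1,\dots,\V_m)$ and $\bcalD_l^{\mathrm{arm}} = \bcalS\times_l\W_l\times_{j\ne l}\V_j$ where $\W_l^{\top}\V_l=0$; in particular $\fro{\bcalD}^2=\fro{\bcalD_{\mathrm{core}}}^2+\sum_l\fro{\bcalD_l^{\mathrm{arm}}}^2$ and each piece matricized in mode $k$ has operator norm at most $\sigma_{\max}(\bcalD)$. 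Writing $\bcalM^{(0)} := \calM_k(\bcalT)$ (rank $r_k$, with $\sigma_{r_k}(\bcalM^{(0)})\ge\lambda_{\submin}$), I note that $\calM_k(\bcalD_{\mathrm{core}})$ has the same column space as $\bcalM^{(0)}$, so $\bcalM^{(1)} := \bcalM^{(0)}+\calM_k(\bcalD_{\mathrm{core}}) = \V_k\,\calM_k(\bcalS+\bcalG)\,\V_{\setminus k}^{\top}$ still has rank $\le r_k$. Crucially, because the arm $\bcalD_k^{\mathrm{arm}}$ \emph{reuses the core} $\bcalS$, its mode-$k$ matricization has row space inside that of $\bcalM^{(0)}$, hence $\calM_k(\bcalD_k^{\mathrm{arm}}) = \B_k\bcalM^{(0)}$ with $\B_k = \calM_k(\bcalD_k^{\mathrm{arm}})(\bcalM^{(0)})^{+}$, $\op{\B_k}\le\sigma_{\max}(\bcalD)/\lambda_{\submin}$; symmetrically $\calM_k(\bcalD_l^{\mathrm{arm}}) = \bcalM^{(0)}\C_l$ for $l\ne k$ with $\op{\C_l}\le\sigma_{\max}(\bcalD)/\lambda_{\submin}$ and $\fro{\C_l}\le\fro{\bcalD_l^{\mathrm{arm}}}/\lambda_{\submin}$. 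Using the explicit Kronecker form $\C_l = \V_{\setminus k}\big(\V_{\setminus k}^{(l)}\big)^{\top}$ (where $\V_{\setminus k}^{(l)}$ replaces the mode-$l$ factor $\V_l$ by $\W_l$), the $\C_l$'s have mutually orthogonal row domains since $\W_l^{\top}\V_l=0$, so $\fro{\sum_{l\ne k}\C_l}^2 = \sum_{l\ne k}\fro{\C_l}^2\le\fro{\bcalD}^2/\lambda_{\submin}^2$ — this is what prevents an extra $\sqrt m$.

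I then take $\tilde\M_k := (\I+\B_k)\,\bcalM^{(1)}\,\big(\I+\sum_{l\ne k}\C_l\big)$, which has rank $\le\operatorname{rank}(\bcalM^{(1)})\le r_k$. Expanding $\calM_k(\bcalT+\bcalD)=\bcalM^{(1)}+\B_k\bcalM^{(0)}+\bcalM^{(0)}\sum_{l\ne k}\C_l$ and substituting $\bcalM^{(0)}=\bcalM^{(1)}-\calM_k(\bcalD_{\mathrm{core}})$ to compare with $\tilde\M_k$, the difference collapses to
\[
\calM_k(\bcalT+\bcalD)-\tilde\M_k \;=\; -\B_k\calM_k(\bcalD_{\mathrm{core}}) \;-\; \calM_k(\bcalD_{\mathrm{core}})\textstyle\sum_{l\ne k}\C_l \;-\; \calM_k(\bcalD_k^{\mathrm{arm}})\textstyle\sum_{l\ne k}\C_l \;-\; \B_k\calM_k(\bcalD_{\mathrm{core}})\textstyle\sum_{l\ne k}\C_l .
\]
Each term is estimated by $\fro{AB}\le\op{A}\fro{B}$: the first three are bounded by $\sigma_{\max}(\bcalD)\cdot\fro{\bcalD}/\lambda_{\submin}\le\fro{\bcalD}^2/\lambda_{\submin}$ (using $\sigma_{\max}(\bcalD)\le\fro{\bcalD}$ and $\op{\B_k}\le\sigma_{\max}(\bcalD)/\lambda_{\submin}$), and the last, genuinely higher order, carries an extra factor $\sigma_{\max}(\bcalD)/\lambda_{\submin}\le 1/8$ by hypothesis. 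Hence $\fro{\calM_k(\bcalT+\bcalD)-\tilde\M_k}\le 4\,\fro{\bcalD}^2/\lambda_{\submin}$ for each $k$; summing the $m$ modes and being generous with the bookkeeping constants yields the claimed $59m\,\fro{\bcalD}^2/\lambda_{\submin}$.

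The main obstacle is the structural step: one has to pin down exactly the mode-$k$ matricizations of the tangent-space components and verify the identities $\calM_k(\bcalD_k^{\mathrm{arm}}) = \B_k\calM_k(\bcalT)$, $\calM_k(\bcalD_l^{\mathrm{arm}}) = \calM_k(\bcalT)\C_l$ with the small-norm multipliers, together with the orthogonality of the $\C_l$-domains. This is precisely where the tangent-space hypothesis is indispensable: the reuse of the core $\bcalS$ by the arms lets one cancel a factor of order $\lambda_{\submin}$ and obtain a genuinely quadratic bound, whereas a generic perturbation of comparable Frobenius size would only yield $O(\fro{\bcalD})$. Everything downstream is Weyl's inequality, Eckart--Young, and submultiplicativity of $\fro{\cdot}$ and $\op{\cdot}$.
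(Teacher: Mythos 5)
Your proof is correct, and it takes a genuinely different route from the paper's. The paper proves this lemma via the representation formula for spectral projectors of \cite{xia2021normal}: for each mode it expands $\P_{\U_j}-\P_{\V_j}=\sum_{k\ge 1}\S_{j,k}$ with $\op{\S_{j,k}}\le(4\sigma_{\max}(\bcalD)/\lambda_{\submin})^k$, multiplies out $(\bcalT+\bcalD)\cdot(\P_{\V_1}+\S_1,\dots,\P_{\V_m}+\S_m)$, and checks that the zeroth- and first-order terms reproduce $\bcalT+\bcalD$ exactly (using that $\bcalD$, being in $\TT$, is annihilated by two perpendicular projections), leaving a quadratic remainder. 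You instead reduce, via telescoping over modes and Eckart--Young, to exhibiting an explicit rank-$r_k$ competitor $\tilde\M_k=(\I+\B_k)\bcalM^{(1)}(\I+\sum_{l\ne k}\C_l)$ within $O(\fro{\bcalD}^2/\lambda_{\submin})$ of $\calM_k(\bcalT+\bcalD)$. This variational argument bypasses the projector expansion entirely and yields the sharper constant $4m$ in place of $59m$. Both proofs hinge on the same structural fact---the arms of the tangent-space decomposition reuse the core $\bcalS$, so the perturbation acts multiplicatively with relative size $\sigma_{\max}(\bcalD)/\lambda_{\submin}$ rather than additively with size $\fro{\bcalD}$. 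What the paper's route buys is reuse: the same projector expansion drives the incoherence and entrywise analyses later on, whereas your argument is self-contained but single-purpose.

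One wrinkle to repair: you invoke two different $\C_l$'s---the pseudoinverse choice $(\bcalM^{(0)})^{+}\calM_k(\bcalD_l^{\mathrm{arm}})$ for the bound $\fro{\C_l}\le\fro{\bcalD_l^{\mathrm{arm}}}/\lambda_{\submin}$, and the Kronecker form $\V_{\setminus k}(\V_{\setminus k}^{(l)})^{\top}$ for the orthogonality. The latter has $\fro{\C_l}=\fro{\W_l}\bigl(\prod_{j\ne k,l}r_j\bigr)^{1/2}$, which is too large by a rank-dependent factor. The fix is immediate: take the pseudoinverse choice throughout and bound
$\fro{\sum_{l\ne k}\C_l}\le\op{(\bcalM^{(0)})^{+}}\,\fro{\sum_{l\ne k}\calM_k(\bcalD_l^{\mathrm{arm}})}\le\fro{\bcalD}/\lambda_{\submin}$
directly from the mutual orthogonality of the arms themselves, which you have already established; no separate orthogonality of the $\C_l$'s is needed.
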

\begin{proof}
	We introduce some notations that will be used throughout the proof. For an orthogonal matrix $\U\in\RR^{d\times r}$, let $\P_{\U} = \U\U^\top$ be the projector and $\up\in\RR^{d\times (d-r)}$ be the orthogonal complement of $\U$ and $\P_{\U}^{\perp} = \up\up^\top$. For a tensor $\bcalT$, we use $\T_i = \bcalT_{(i)}$ to denote its $i$-th unfolding.
	
	Without loss of generality, we prove the lemma when $m = 3$. First notice 
	$$\calR(\bcalT +\bcalD) = (\bcalT +\bcalD)\cdot(\P_{\U_1},\P_{\U_2},\P_{\U_3}),$$
	where $\U_i$ are the leading $r_i$ left singular vectors of $\T_i+\D_i$. Following Theorem 1 in \cite{xia2021normal}, we have for all $i$, 
	$$\P_{\U_i} - \P_{\V_i} = \S_{i,1} + \sum_{k\geq 2}\S_{i,k},$$
	where $\S_{i,1} = (\T_i^\top)^{\dagger}\D_i^\top\P_{\V_i}^{\perp} + \P_{\V_i}^{\perp}\D_i\T_i^{\dagger}$ and $\S_{i,k}$ satisfy $\op{\S_{i,k}}\leq (\frac{4\sigma_{\max}(\bcalD)}{\lambda_{\submin}})^k$ and
	the explicit form of $\S_{i,k}$ can be found in \cite{xia2021normal}. Here we denote $\A^{\dagger}$ the pseudo-inverse of $\A$, i.e., $\A^{\dagger} = \R\bSigma^{-1}\L^\top$ given the compact SVD of $\A = \L\bSigma\R^\top$.
	
	For the sake of brevity, we denote $\S_i = \sum_{k\geq 1} \S_{i,k}$. As a result of $8\sigma_{\max}(\bcalD)\leq \lambda_{\submin}$, we see
	$$\op{\S_i}\leq \sum_{k\geq 1}(\frac{4\sigma_{\max}(\bcalD)}{\lambda_{\submin}})^k \leq \frac{8\sigma_{\max}(\bcalD)}{\lambda_{\submin}}.$$
	Therefore,
	\begin{align}\label{eqTU}
		&\quad\bcalT\cdot(\P_{\U_1},\P_{\U_2},\P_{\U_3}) = \bcalT\cdot(\P_{\V_1} + \S_1,\P_{\V_2}+\S_2,\P_{\V_3}+\S_3)\notag\\
		&= \underbrace{\bcalT\cdot(\P_{\V_1},\P_{\V_2},\P_{\V_3})}_{\text{zeroth order term} = \bcalT} + \underbrace{\bcalT\cdot(\S_1,\P_{\V_2},\P_{\V_3}) + \bcalT\cdot(\P_{\V_1},\S_2,\P_{\V_3}) + \bcalT\cdot(\P_{\V_1},\P_{\V_2},\S_3)}_{\text{first order term}}\notag\\
		&\quad+\underbrace{\bcalT\cdot(\S_1,\S_2,\P_{\V_3}) + \bcalT\cdot(\P_{\V_1},\S_2,\S_3) + \bcalT\cdot(\S_1,\P_{\V_2},\S_3)}_{\text{second order term}} + \underbrace{\bcalT\cdot(\S_1,\S_2,\S_3)}_{\text{third order term}}.
	\end{align}
Now we consider $\bcalT\cdot(\P_{\U_1},\P_{\U_2},\P_{\U_3}) - \bcalT$. The zeroth order term is exactly $\bcalT$. For the first order term, we take $\bcalT\cdot(\S_1,\P_{\V_2},\P_{\V_3})$ as an example. And we consider $\calM_1(\bcalT\cdot(\S_1,\P_{\V_2},\P_{\V_3}))$, 
\begin{align}\label{eq1st}
	\calM_1\big(\bcalT\cdot(\S_1,\P_{\V_2},\P_{\V_3})\big) &= \S_1\T_1 = \S_{1,1}\T_1 + \sum_{k\geq 2}\S_{1,k}\T_1\notag\\
	&= \P_{\V_1}^{\perp}\D_1\T_1^\dagger\T_1 + \sum_{k\geq 2}\S_{1,k}\T_1\notag\\
	&= \calM_1\big(\bcalD\cdot(\P_{\V_1}^{\perp},\P_{\V_2},\P_{\V_3})\big) + \sum_{k\geq 2}\S_{1,k}\T_1,
\end{align}
where in the second line we use the fact $\bcalP_{\V_1}^{\perp}\T_1 = \boldsymbol{0}$.
This implies
$\fro{\S_1\T_1} \leq 2\fro{\bcalD}$ and 
 meanwhile $$\fro{\sum_{k\geq 2}\S_{1,k}\T_1} \leq \lambda_{\submin}\sum_{k\geq 2}(\frac{4\sigma_{\max}(\bcalD)}{\lambda_{\submin}})^k \leq 32\fro{\bcalD}^2/\lambda_{\submin}.$$
For the second order term, we take $\bcalT\cdot(\S_1,\S_2,\P_{\V_3})$ as an example and consider $\calM_1\big(\bcalT\cdot(\S_1,\S_2,\P_{\V_3})\big)$,
\begin{align}\label{eq2nd}
	\fro{\calM_1\big(\bcalT\cdot(\S_1,\S_2,\P_{\V_3})\big)} = \fro{\S_1\T_1(\P_{\V_3}\otimes \S_2)} \leq \fro{\S_1\T_1}\op{\P_{\V_3}\otimes \S_2}\leq \frac{16\fro{\bcalD}^2}{\lambda_{\submin}}.
\end{align}
For the third order term, we can similarly show $\fro{\bcalT\cdot(\S_1,\S_2,\S_3)} \leq \frac{\fro{\bcalD}^2}{\lambda_{\submin}}$. Putting this and \eqref{eqTU} - \eqref{eq2nd} together and we see 
\begin{align}\label{eq:TUR1}
	\bcalT\cdot(\P_{\U_1},\P_{\U_2},\P_{\U_3}) - \bcalT &= \bcalD\cdot(\P_{\V_1}^{\perp},\P_{\V_2},\P_{\V_3}) + \bcalD\cdot(\P_{\V_1},\P_{\V_2}^{\perp},\P_{\V_3}) \notag\\
	&\quad+ \bcalD\cdot(\P_{\V_1},\P_{\V_2},\P_{\V_3}^{\perp}) + \bcalR_1
\end{align}
with the remainder satisfying $\fro{\bcalR_1}\leq \frac{49m\fro{\bcalD}^2}{\lambda_{\submin}}$. We now consider $\bcalD\cdot(\P_{\U_1},\P_{\U_2},\P_{\U_3})$. Expanding this gives us
\begin{align}\label{eqDU}
	&\quad\bcalD\cdot(\P_{\U_1},\P_{\U_2},\P_{\U_3}) = \bcalD\cdot(\P_{\V_1} + \S_1,\P_{\V_2}+\S_2,\P_{\V_3}+\S_3)\notag\\
	&= \bcalD\cdot(\P_{\V_1},\P_{\V_2},\P_{\V_3}) + \bcalD\cdot(\S_1,\P_{\V_2},\P_{\V_3}) + \bcalD\cdot(\P_{\V_1},\S_2,\P_{\V_3}) + \bcalD\cdot(\P_{\V_1},\P_{\V_2},\S_3)\notag\\
	&\quad+\bcalD\cdot(\S_1,\S_2,\P_{\V_3}) + \bcalD\cdot(\P_{\V_1},\S_2,\S_3) + \bcalD\cdot(\S_1,\P_{\V_2},\S_3) +\bcalD\cdot(\S_1,\S_2,\S_3)\notag\\
	&=: \bcalD\cdot(\P_{\V_1},\P_{\V_2},\P_{\V_3}) + \bcalR_2.
\end{align}
Using a similar idea we can show that $\fro{\bcalR_2}\leq \frac{10m\fro{\bcalD}^2}{\lambda_{\submin}}$. 
Finally since $\bcalD = \calP_{\TT}(\bcalX)$ has the following form,
$$\bcalD = \calP_{\TT}(\bcalX) = \bcalX\times_{i=1}^3\P_{\V_i} + \sum_{i = 1}^3\bcalS\times_{j\in[3]\backslash i}\V_j\times_i \W_i.$$
Therefore $\bcalD\cdot(\P_{\V_1}^{\perp},\P_{\V_2}^{\perp},\P_{\V_3}) = \bcalD\cdot(\P_{\V_1}^{\perp},\P_{\V_2}^{\perp},\P_{\V_3}^{\perp}) = \boldsymbol{0}$.
Notice
\begin{align}\label{eq:DUR2}
	\bcalD &= \bcalD\cdot(\P_{\V_1}+\P_{\V_1}^{\perp}, \P_{\V_2}+\P_{\V_2}^{\perp},\P_{\V_3}+\P_{\V_3}^{\perp}) = \bcalD\cdot(\P_{\V_1},\P_{\V_2},\P_{\V_3})\notag\\
	&\quad + \bcalD\cdot(\P_{\V_1}^{\perp},\P_{\V_2},\P_{\V_3}) +\bcalD\cdot(\P_{\V_1},\P_{\V_2}^{\perp},\P_{\V_3})+\bcalD\cdot(\P_{\V_1},\P_{\V_2},\P_{\V_3}^{\perp}).
\end{align}
From \eqref{eq:TUR1},\eqref{eqDU} and \eqref{eq:DUR2}, we see 
\begin{align*}
	\calR(\bcalT+\bcalD) - \bcalT-\bcalD = \bcalR_1 + \bcalR_2,
\end{align*}
which implies $\fro{\calR(\bcalT+\bcalD) - \bcalT-\bcalD} \leq 59m\frac{\fro{\bcalD}^2}{\lambda_{\submin}}$.
\end{proof}

\begin{lemma}\label{lemma:projectgaussian}
	Let $\X\in\RR^{d_1\times d_2}$ be the random matrix with i.i.d. standard normal entries. Then for any orthogonal matrices $\U\in\RR^{d_1\times r_1}$, $\V\in\RR^{d_2\times r_2}$,
	$\psionel{\fro{\U^T\X\V}^2}\lesssim r_1r_2.$
\end{lemma}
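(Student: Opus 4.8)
The plan is to use the rotational invariance of the standard Gaussian ensemble to reduce the claim to a one-line Orlicz-norm estimate. First I would observe that $\U^\top\X\V$ is itself a matrix with i.i.d.\ standard normal entries: writing $\mathrm{vec}(\U^\top\X\V)=(\V^\top\otimes\U^\top)\,\mathrm{vec}(\X)$ and using $\U^\top\U=\I_{r_1}$ and $\V^\top\V=\I_{r_2}$, the vector $\mathrm{vec}(\U^\top\X\V)$ is centered Gaussian with covariance $(\V^\top\otimes\U^\top)(\V\otimes\U)=(\V^\top\V)\otimes(\U^\top\U)=\I_{r_1r_2}$, hence $N(\boldsymbol 0,\I_{r_1r_2})$. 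Consequently $\fro{\U^\top\X\V}^2$ is distributed as $\sum_{k=1}^{r_1r_2} g_k^2$ with $g_1,\dots,g_{r_1r_2}$ i.i.d.\ $N(0,1)$, i.e.\ a $\chi^2_{r_1r_2}$ variable.

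Second, I would record the elementary fact that a single squared standard Gaussian has $\psi_1$-norm bounded by an absolute constant $c_0$; this follows from $\EE\exp(g^2/C)=(1-2/C)^{-1/2}\le 2$ for, say, $C=8/3$. Third, since the Young function $x\mapsto e^{x}-1$ is convex (the exponent $\alpha=1$ is at least $1$), the functional $\|\cdot\|_{\psi_1}$ is a genuine norm, so the triangle inequality yields $\psionel{\sum_{k} g_k^2}\le\sum_k\psione{g_k^2}\le c_0\,r_1r_2$, which is precisely the asserted bound $\psionel{\fro{\U^\top\X\V}^2}\lesssim r_1r_2$.

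There is essentially no hard step here: the distributional identity in the first step is the only thing that genuinely needs checking, and it is a routine consequence of the orthonormality of $\U$ and $\V$ together with the $(\V^\top\otimes\U^\top)$-representation of the two-sided multiplication. If one prefers to avoid the $\mathrm{vec}$/Kronecker bookkeeping, an equivalent route is to note that the columns of $\X\V$ are i.i.d.\ $N(\boldsymbol 0,\I_{d_1})$ because $\V$ has orthonormal columns, and then that applying $\U^\top$ to each such column produces i.i.d.\ $N(\boldsymbol 0,\I_{r_1})$ columns; either way the conclusion is the same, and the only actual computation is the chi-square moment generating function used in the second step.
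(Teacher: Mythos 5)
Your proof is correct, but it takes a different (and in fact cleaner) route than the paper's. Both arguments share the same first step: reducing to the observation that $\U^\top\X\V$ is distributed as an $r_1\times r_2$ matrix with i.i.d.\ standard normal entries. From there the paper bounds $\fro{\U^\top\X\V}^2\leq (r_1\wedge r_2)\op{\U^\top\X\V}^2$ and invokes the sub-Gaussian concentration of the operator norm of a Gaussian matrix, $\psitwo{\op{\G}}\lesssim \sqrt{r_1}+\sqrt{r_2}$, so that $\psione{\op{\G}^2}\lesssim r_1\vee r_2$ and the product $(r_1\wedge r_2)(r_1\vee r_2)=r_1r_2$ gives the claim. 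You instead identify $\fro{\U^\top\X\V}^2$ exactly as a $\chi^2_{r_1r_2}$ variable and apply the triangle inequality for the $\psi_1$ Luxemburg norm together with the elementary moment-generating-function bound $\EE\exp(g^2/C)=(1-2/C)^{-1/2}\leq 2$. Your version avoids the operator-norm concentration input entirely, requires only the convexity of $x\mapsto e^x-1$ to justify the triangle inequality, and pins down the exact distribution rather than passing through a rank-times-spectral-norm bound; the paper's version, by contrast, generalizes more readily to settings where only an operator-norm tail bound (and not an exact distributional identity) is available. Both yield the stated $O(r_1r_2)$ bound with absolute constants.
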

\begin{proof}
	First we have $\fro{\U^T\X\V}^2\leq r_1\wedge r_2\op{\U^T\X\V}$. Now the result follows that $\U^T\X\V$ has the same distribution as a matrix of size $r_1\times r_2$ with i.i.d. standard normal entries, whose $\psi_1$ norm is bounded by $O(r_1\vee r_2)$.
\end{proof}

\begin{lemma}\label{lemma:psioneofptx}
	Let $\bcalT = \bcalC\cdot(\U_1,\cdots,\U_m)\in\RR^{d_1\times \cdots\times d_m}$ be a Tucker rank $\r$ tensor. Let $\TT$ be its corresponding tangent plane. Let $\bcalX\in\RR^{d_1\times \cdots\times d_m}$ be the random tensor having i.i.d. standard normal entries that is independent of $\bcalT$. Then
	$\psionel{\fro{\calP_{\TT}\bcalX}^2} \lesssim \dof.$
\end{lemma}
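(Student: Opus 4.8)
\textbf{Proof proposal for Lemma~\ref{lemma:psioneofptx}.}
The plan is to exploit the independence of $\bcalX$ and $\bcalT$: after conditioning on $\bcalT$, the tangent space $\TT$ is deterministic and $\calP_{\TT}$ is a fixed orthogonal projection, so $\fro{\calP_{\TT}\bcalX}^2$ becomes exactly a chi-squared random variable whose number of degrees of freedom is $\dim\TT\le\dof$; it then suffices to bound the $\psi_1$ norm of a chi-squared variable by its degrees of freedom and to remove the conditioning by the tower property.

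Concretely, I would proceed in the following steps. First, record that $\calP_{\TT}$ is the orthogonal projection onto the linear subspace $\TT\subset\RR^{d_1\times\cdots\times d_m}$ with respect to the Frobenius inner product — this is the same map whose closed form is used in the proof of Lemma~\ref{lemma:perturbation} — and that, since $\bcalT$ has Tucker rank $\r$, $\dim\TT=r^*+\sum_{i=1}^m r_i(d_i-r_i)\le r^*+\sum_{i=1}^m d_ir_i=\dof$. Second, condition on $\bcalT$; because $\bcalX$ is independent of $\bcalT$, its entries remain i.i.d.\ $N(0,1)$ conditionally, and after vectorizing and picking an orthonormal basis of $\TT$ we get $\fro{\calP_{\TT}\bcalX}^2\overset{d}{=}\sum_{j=1}^{\dim\TT}g_j^2$ with $g_j$ i.i.d.\ standard normal, i.e.\ a $\chi^2_{\dim\TT}$ variable. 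Third, since $\|g^2\|_{\psi_1}=\|g\|_{\psi_2}^2\le C_0$ for $g\sim N(0,1)$, the triangle inequality for the Orlicz norm $\|\cdot\|_{\psi_1}$ gives $\big\|\fro{\calP_{\TT}\bcalX}^2\,\big|\,\bcalT\big\|_{\psi_1}\le C_0\dim\TT\le C_0\dof$ with a constant $C_0$ not depending on the realization of $\bcalT$; equivalently $\EE\big[\exp\big(\fro{\calP_{\TT}\bcalX}^2/(C_0\dof)\big)\,\big|\,\bcalT\big]\le 2$ almost surely. Fourth, taking expectations over $\bcalT$ yields $\EE\exp\big(\fro{\calP_{\TT}\bcalX}^2/(C_0\dof)\big)\le 2$, i.e.\ $\psionel{\fro{\calP_{\TT}\bcalX}^2}\le C_0\dof$.

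I do not expect a genuine obstacle: the argument is essentially ``the orthogonal projection of a standard Gaussian onto a fixed $k$-dimensional subspace is $\chi^2_k$, applied conditionally.'' The only points needing a little care are (a) confirming that $\calP_{\TT}$ is truly an orthogonal projection and that $\dim\TT\le\dof$ — alternatively, one may split $\calP_{\TT}\bcalX$ into its $m+1$ mutually Frobenius-orthogonal pieces, the core piece $\bcalX\times_{i=1}^m\P_{\U_i}$ and the $m$ arm pieces, bound the core via a $\chi^2_{r^*}$ tail and each arm via Lemma~\ref{lemma:projectgaussian} applied to the appropriate matricization, and add the resulting bounds to get the same $\dof$ total — and (b) making sure the conditional $\psi_1$ bound holds with a constant independent of $\bcalT$, so that de-conditioning via the tower property is legitimate.
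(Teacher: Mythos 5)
Your proof is correct, and it takes a somewhat different route from the paper's. The paper works componentwise: it writes $\calP_{\TT}\bcalX$ as the sum of the $m+1$ mutually Frobenius-orthogonal pieces (the core piece $\bcalX\times_{i=1}^m\P_{\U_i}$ and the $m$ arm pieces $\bcalC\times_{j\neq i}\U_j\times_i\W_i$), bounds the $\psi_1$ norm of each squared piece separately — $\lesssim r^*$ for the core via Lemma~\ref{lemma:projectgaussian}, and $\lesssim d_ir_i$ for each arm after reducing $\fro{\bcalC\times_{j\neq i}\U_j\times_i\W_i}^2$ to $\fro{\bcalX_{(i)}\O_i}^2$ with $\O_i$ orthogonal — and then adds by the triangle inequality. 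You instead observe globally that $\calP_{\TT}$ is a fixed orthogonal projection of rank $\dim\TT=r^*+\sum_i r_i(d_i-r_i)\le\dof$ once one conditions on $\bcalT$, so that $\fro{\calP_{\TT}\bcalX}^2$ is exactly $\chi^2_{\dim\TT}$ conditionally, and the uniform conditional $\psi_1$ bound de-conditions by the tower property. Both arguments ultimately rest on the same structural fact (the mutual orthogonality of the $m+1$ tangent-space components, which is what makes $\calP_{\TT}$ an orthogonal projection with the stated rank); yours is more economical and yields the exact conditional law, while the paper's componentwise version is the form that gets reused elsewhere (e.g.\ in the analogous computation for the completion setting). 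Your two points of care are exactly the right ones, and both check out: the formula $\W_i=\P_{\U_i}^{\perp}\bcalX_{(i)}(\otimes_{j\neq i}\U_j)\bcalC_{(i)}^{\dagger}$ is precisely the least-squares (hence orthogonal) projection onto each arm subspace, and the conditional distribution depends on $\bcalT$ only through the fixed rank $\r$, so the constant in the conditional $\psi_1$ bound is uniform.
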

\begin{proof}
	Notice $\calP_{\TT}\bcalX$ must be of the following form:
	$$\calP_{\TT}\bcalX = \bcalX\times_{i=1}^m\P_{\U_i} + \sum_{i = 1}^m \bcalC\times_{j\in[m]\backslash i}\U_j\times \W_i,$$
	where $\W_i = \big(\bcalX\times_{j\in[m]\backslash i}\U_j^\top\times_i\P_{\U_i}^{\perp}\big)_{(i)}\bcalC_{(i)}^\dagger$. This implies each of the $m+1$ components are mutually orthogonal under the standard inner product. As a result,
	$$\fro{\calP_{\TT}\bcalX}^2 = \fro{\bcalX\times_{i=1}^m\P_{\U_i}}^2 + \sum_{i = 1}^m \fro{\bcalC\times_{j\in[m]\backslash i}\U_j\times \W_i}^2.$$
	
	For the first term, we have $\fro{\bcalX\times_{i=1}^m\P_{\U_i}}^2 = \fro{\U_1\bcalX_{(1)}(\U_m\otimes \cdots\otimes \U_2)^\top}^2$. 
Then from Lemma \ref{lemma:projectgaussian}, we have 
	$$\psionel{\fro{\bcalX\times_{i=1}^m\P_{\U_i}}^2} \lesssim r^*.$$
	
	On the other hand, for each $i\in[m]$, 
	\begin{align*}
		\fro{\bcalC\times_{j\in[m]\backslash i}\U_j\times \W_i}^2 &= \fro{\W_i\bcalC_{(i)}(\otimes_{j\neq i}\U_j)^\top}^2 = \fro{\P_{\U_i}^{\perp}\bcalX_{(i)}(\otimes_{j\neq i}\U_j)\bcalC_{(i)}^\dagger\bcalC_{(i)}(\otimes_{j\neq i}\U_j)^\top}^2\\
		&\leq \fro{\bcalX_{(i)}\O_i}^2,
	\end{align*}
	where $\O_i\in\RR^{d_i^-\times r_i}$ is an orthogonal matrix since $(\otimes_{j\neq i}\U_j)\bcalC_{(i)}^\dagger\bcalC_{(i)}(\otimes_{j\neq i}\U_j)^\top$ is a projector. And therefore
	$$\psionel{\fro{\bcalC\times_{j\in[m]\backslash i}\U_j\times \W_i}^2}\lesssim d_ir_i.$$
Using a triangular inequality and we get the desired result.
\end{proof}

\begin{lemma}[Lemma 3.2 of \cite{koch2010dynamical}]\label{lemma:ptperp}
Let $\bcalT\in\RR^{d_1\times \cdots\times d_m}$ be a Tucker rank $\r$ tensor. Let $\TT$ be its corresponding tangent plane. Then for any $\bcalX\in\RR^{d_1\times \cdots\times d_m}$ such that $\fro{\bcalX-\bcalT} \leq \frac{\sigma_{\min}(\bcalT)}{16m(m+3)}$ we have 
$$\fro{\calP_{\TT}^{\perp}\bcalX} \leq \frac{8m(m+3)}{\sigma_{\min}(\bcalT)}\fro{\bcalT-\bcalX}^2.$$
\end{lemma}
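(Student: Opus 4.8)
Since the statement is precisely Lemma~3.2 of \cite{koch2010dynamical} — a curvature bound for the fixed-rank manifold $\MM_{\r}$ — I would reproduce its proof; here is the plan. Write $\bcalT=\bcalC\cdot(\U_1,\dots,\U_m)$ with $\U_j$ orthonormal and $\bcalC$ of full multilinear rank, put $\bcalD:=\bcalX-\bcalT$ and $\P_j:=\U_j\U_j^\top$, $\P_j^{\perp}:=\I-\P_j$, and abbreviate $\lambda:=\sigma_{\min}(\bcalT)=\min_k\sigma_{r_k}(\calM_k(\bcalT))$. (I take $\bcalX\in\MM_{\r}$, which is the only case needed in our applications and the case stated in \cite{koch2010dynamical}; for a point off the manifold the inequality is false, since $\calP_{\TT}^{\perp}\bcalX$ then carries a first-order term.) The starting point is the orthogonal splitting $\TT=\TT_0\oplus\bigoplus_{i=1}^m\TT_i$ with $\TT_0=\{\bcalG\cdot(\U_1,\dots,\U_m)\}$ and $\TT_i=\{\bcalC\times_i\W_i\times_{j\ne i}\U_j:\U_i^\top\W_i=0\}$, whose projectors are exactly the pieces in Lemma~\ref{lemma:psioneofptx}: $\calP_{\TT_0}\bcalX=\bcalX\times_1\P_1\cdots\times_m\P_m$ and $\calP_{\TT_i}\bcalX=\bcalC\times_i\W_i\times_{j\ne i}\U_j$ with $\W_i=(\bcalX\times_{j\ne i}\U_j^\top\times_i\P_i^{\perp})_{(i)}\bcalC_{(i)}^{\dagger}$. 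Expanding $\bcalX=\sum_{S\subseteq[m]}\bcalX\times_{j\in S}\P_j^{\perp}\times_{j\notin S}\P_j$ by inclusion--exclusion and cancelling the $S=\emptyset$ term against $\calP_{\TT_0}\bcalX$ gives
\begin{equation*}
\calP_{\TT}^{\perp}\bcalX=\sum_{i=1}^m\Big(\bcalX\times_i\P_i^{\perp}\times_{j\ne i}\P_j-\calP_{\TT_i}\bcalX\Big)+\sum_{|S|\ge2}\bcalX\times_{j\in S}\P_j^{\perp}\times_{j\notin S}\P_j .
\end{equation*}

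The plan is to bound each summand by $O(m)\cdot\fro{\bcalD}^2/\lambda$. For the perturbation input I take an HOSVD $\bcalX=\bcalB\cdot(\V_1,\dots,\V_m)$ with $\V_j$ orthonormal; since $\calM_i(\bcalX)=\calM_i(\bcalT)+\calM_i(\bcalD)$ has column space $\mathrm{col}(\V_i)$ while $\sigma_{r_i}(\calM_i(\bcalT))\ge\lambda\ge2\fro{\bcalD}$, the Davis--Kahan/Wedin theorem yields $\|\P_i^{\perp}\P_{\V_i}\|\le\fro{\bcalD}/(\lambda-\fro{\bcalD})\le 2\fro{\bcalD}/\lambda=:q$, and the hypothesis forces $q\le(8m(m+3))^{-1}$. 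Two identities drive everything: $\P_{\V_i}\calM_i(\bcalX)=\calM_i(\bcalX)$ — hence $\P_i^{\perp}\calM_i(\bcalX)=\P_i^{\perp}\P_{\V_i}\calM_i(\bcalX)$, which is how the small angle $q$ gets extracted — and $\calM_i(\bcalT)$ is annihilated by any mode-$k$ factor $\P_k^{\perp}\U_k=0$ and by the right factor $\bcalC_{(i)}^{\dagger}\bcalC_{(i)}-\I$, which is how $\calM_i(\bcalX)$ gets replaced by the small $\calM_i(\bcalD)$.

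For a term with $|S|=\ell\ge2$ I would pick $i=\min S$, matricize along mode $i$, substitute the HOSVD, and observe $\P_i^{\perp}\V_i\calM_i(\bcalB)=\P_i^{\perp}\calM_i(\bcalD)(\otimes_{j\ne i}\V_j)$, of Frobenius norm $\le\fro{\bcalD}$, while each remaining mode $j\in S\setminus\{i\}$ contributes a factor $\|\P_j^{\perp}\V_j\|\le q$ and each $j\notin S$ a factor of norm $\le1$; this gives $\le\fro{\bcalD}\,q^{\ell-1}$. For the $S=\{i\}$ terms — the only ones individually first-order — I would compute that $\bcalX\times_i\P_i^{\perp}\times_{j\ne i}\P_j-\calP_{\TT_i}\bcalX$ has mode-$i$ matricization $\P_i^{\perp}\calM_i(\bcalX)(\otimes_{j\ne i}\U_j)(\I-\bcalC_{(i)}^{\dagger}\bcalC_{(i)})(\otimes_{j\ne i}\U_j)^\top$; rewriting this as $\P_i^{\perp}\P_{\V_i}\big[\calM_i(\bcalD)(\otimes_{j\ne i}\U_j)(\I-\bcalC_{(i)}^{\dagger}\bcalC_{(i)})\big](\otimes_{j\ne i}\U_j)^\top$ (legal by the two identities) bounds it by $q\fro{\bcalD}$. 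Summing, $\fro{\calP_{\TT}^{\perp}\bcalX}\le m\,q\fro{\bcalD}+\fro{\bcalD}\sum_{\ell\ge2}\binom{m}{\ell}q^{\ell-1}\le\frac{2m^2}{\lambda}\fro{\bcalD}^2\le\frac{8m(m+3)}{\lambda}\fro{\bcalD}^2$, the middle inequality being a routine geometric-sum estimate using $(m-2)q\le\frac18$.

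The hard part is clearly the $S=\{i\}$ case. Every other block of $\calP_{\TT}^{\perp}\bcalX$ is manifestly second-order, but $\bcalX\times_i\P_i^{\perp}\times_{j\ne i}\P_j$ alone is only $O(\fro{\bcalD})$; one must subtract the factor-variation projection $\calP_{\TT_i}\bcalX$ and then see that the residual acquires an \emph{independent} second factor of $q$ — this is exactly the cancellation $\bcalC_{(i)}(\I-\bcalC_{(i)}^{\dagger}\bcalC_{(i)})=0$ combined with the angle bound $\|\P_i^{\perp}\P_{\V_i}\|\le q$. The bookkeeping of which matricization carries which small factor — so that one is always left with $\fro{\bcalD}$ and never the uncontrolled $\fro{\bcalX}$ — is the only delicate point; recovering the stated constant $8m(m+3)$ from there is mechanical.
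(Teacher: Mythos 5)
Your argument is correct, but note that the paper itself supplies no proof of this lemma: it is imported verbatim from Lemma~3.2 of \cite{koch2010dynamical} by citation, so your write-up is a genuine reconstruction rather than a parallel to anything in the text. The reconstruction is sound. The orthogonal splitting $\calP_{\TT}=\calP_{\TT_0}+\sum_i\calP_{\TT_i}$ with the explicit projectors matches what the paper uses in Lemma~\ref{lemma:psioneofptx}; the inclusion--exclusion expansion of $\bcalX$ over $\prod_j(\P_j+\P_j^{\perp})$ is exact; the $|S|\ge 2$ blocks each pick up $\ell-1$ factors of $q=2\fro{\bcalD}/\lambda$ via the Wedin bound $\|\P_i^{\perp}\P_{\V_i}\|\le q$; and the delicate $|S|=1$ residual $\P_i^{\perp}\calM_i(\bcalX)(\otimes_{j\ne i}\U_j)\bigl[\I-\bcalC_{(i)}^{\dagger}\bcalC_{(i)}\bigr](\otimes_{j\ne i}\U_j)^{\top}$ is correctly shown to be second order, since $\calM_i(\bcalT)$ is annihilated both by $\P_i^{\perp}$ on the left and by $\I-\bcalC_{(i)}^{\dagger}\bcalC_{(i)}$ on the right, which lets you insert $\P_{\V_i}$ for free and harvest the extra factor $q$. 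Your caveat that the hypothesis must be read as $\bcalX\in\MM_{\r}$ is also right and worth recording: for arbitrary $\bcalX$ the claim fails (take $\bcalX-\bcalT\in\TT^{\perp}$), the cited source states the bound for two points on the manifold, and every invocation in the paper applies it with $\bcalX=\bcalT^{\ast}\in\MM_{\r}$, so the restriction is harmless but should be made explicit in the statement.
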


\begin{lemma}[Incoherence implies spikiness]\label{lemma:incohspiki}
Let $\bcalT\in\RR^{d_1\times \cdots\times d_m}$ be a Tucker rank $\r$ tensor with the decomposition $\bcalT = \bcalC\times_{j=1}^m\U_j$. Denote the condition number of $\bcalT$ as $\kappa_0 = \sigma_{\max}(\bcalT)/\sigma_{\min}(\bcalT)$. Suppose $\incoh(\U_j)\leq\mu_0$, then $\spiki(\bcalT) \leq \sqrt{r^*/\rmax}\kappa_0\cdot \mu_0^{m/2}$, where $r^* = r_1\cdots r_m$, $\rmax = \max_{j=1}^mr_j$.
\end{lemma}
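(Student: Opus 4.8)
The plan is to control $\linf{\bcalT}$ from above and $\fro{\bcalT}$ from below, then divide, since (recalling the definition) $\spiki(\bcalT)=\sqrt{d^{\ast}}\,\linf{\bcalT}/\fro{\bcalT}$. Concretely, I will establish the two estimates
$$\linf{\bcalT}\le \sigma_{\max}(\bcalT)\Big(\frac{r^*}{d^{\ast}}\Big)^{1/2}\mu_0^{m/2}\qquad\text{and}\qquad \fro{\bcalT}\ge \sqrt{\rmax}\,\sigma_{\min}(\bcalT),$$
and then multiplying the first by $\sqrt{d^{\ast}}$ and dividing by the second produces exactly $\kappa_0\sqrt{r^*/\rmax}\,\mu_0^{m/2}$.

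For the entrywise bound, fix a multi-index $(i_1,\ldots,i_m)$ and use the matricization identity $\calM_1(\bcalT)=\U_1\calM_1(\bcalC)(\U_2\otimes\cdots\otimes\U_m)^\top$ to write
$$[\bcalT]_{i_1,\ldots,i_m}=\big(\U_1^\top\e_{i_1}\big)^\top\calM_1(\bcalC)\big(\U_2^\top\e_{i_2}\otimes\cdots\otimes\U_m^\top\e_{i_m}\big).$$
By Cauchy–Schwarz and submultiplicativity, $|[\bcalT]_{i_1,\ldots,i_m}|\le\ltwo{\U_1^\top\e_{i_1}}\cdot\op{\calM_1(\bcalC)}\cdot\prod_{j=2}^m\ltwo{\U_j^\top\e_{i_j}}$. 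Since every $\U_j$ has orthonormal columns — hence so does the Kronecker factor $\U_2\otimes\cdots\otimes\U_m$ — left and right multiplication by these factors leaves singular values unchanged, so $\op{\calM_1(\bcalC)}=\op{\calM_1(\bcalT)}\le\sigma_{\max}(\bcalT)$. The hypothesis $\incoh(\U_j)\le\mu_0$ unwinds, by the definition of $\incoh$, to $\ltwo{\U_j^\top\e_{i_j}}^2\le r_j\mu_0/d_j$ for every $j$; taking the product over $j=1,\ldots,m$ gives $\prod_{j=1}^m\ltwo{\U_j^\top\e_{i_j}}^2\le(r^*/d^{\ast})\mu_0^m$. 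Combining these and maximizing over the index gives the first displayed estimate.

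For the Frobenius lower bound, first note $\fro{\bcalT}=\fro{\bcalC}$, by the same singular-value argument. Pick $j^{\star}$ with $r_{j^{\star}}=\rmax$; then $\calM_{j^{\star}}(\bcalC)$ has exactly $\rmax$ nonzero singular values, equal to the corresponding singular values of $\calM_{j^{\star}}(\bcalT)$, so
$$\fro{\bcalC}^2=\fro{\calM_{j^{\star}}(\bcalC)}^2=\sum_{k=1}^{\rmax}\sigma_k^2\big(\calM_{j^{\star}}(\bcalT)\big)\ge\rmax\cdot\sigma_{\rmax}^2\big(\calM_{j^{\star}}(\bcalT)\big)\ge\rmax\,\sigma_{\min}^2(\bcalT),$$
using $\sigma_{\min}(\bcalT)=\min_k\sigma_{r_k}(\calM_k(\bcalT))$. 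This yields the second estimate, completing the proof.

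Every step above is elementary; the only thing requiring a little care — and the reason the final bound carries the factor $\kappa_0$ while improving on the crude $\sqrt{r^*}\,\mu_0^{m/2}$ one would get by bounding $|[\bcalT]_{i_1,\ldots,i_m}|\le\fro{\bcalC}\prod_j\ltwo{\U_j^\top\e_{i_j}}$ — is (i) routing the entry through a matricization so that the operator norm $\sigma_{\max}(\bcalT)$, rather than $\fro{\bcalC}$, governs the $\ell_\infty$ estimate, and (ii) selecting the largest-rank mode when lower bounding $\fro{\bcalC}$. I do not foresee any genuine obstacle.
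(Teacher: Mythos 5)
Your proof is correct and is essentially the same as the paper's: you bound $\linf{\bcalT}$ by routing the entry through a matricization so that $\sigma_{\max}(\bcalT)$ times the product of the incoherence-controlled row norms $\prod_j(\mu_0 r_j/d_j)^{1/2}$ appears, and you lower-bound $\fro{\bcalT}$ by $\sqrt{\rmax}\,\sigma_{\min}(\bcalT)$ via the largest-rank mode. The paper states these two estimates without the intermediate detail, but the argument is identical.
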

\begin{proof}
	Notice that $$\linf{\bcalT} \leq \sigma_{\max}(\bcalC)\prod_{j=1}^m\ltwoinf{\U_j}\leq \sigma_{\max}(\bcalT)\prod_{j=1}^m\sqrt{\frac{\mu_0r_j}{d_j}}.$$
	On the other hand, 
	$\fro{\bcalT}\geq \sqrt{\rmax}\sigma_{\min}(\bcalT)$. Therefore
	$$\spiki(\bcalT) \leq \sqrt{r_1\cdots r_m/\rmax}\kappa_0\cdot \mu_0^{m/2}.$$
\end{proof}

\begin{lemma}\label{pseudoinversedistance}
	Let $\M_1,\M_2\in\RR^{d_1\times d_2}$ be two rank $r$ matrices. Denote $\sigma_{\submin} = \min\{\sigma_{\submin}(\M_1), \sigma_{\submin}(\M_2)\}$ and $\sigma_{\submax} = \max\{\sigma_{\submax}(\M_1), \sigma_{\submax}(\M_2)\}$. Suppose $\op{\M_1 - \M_2}\leq\frac{1}{2}\sigma_{\submin}$, then
	$$\op{\M_1^\dagger - \M_2^\dagger}\leq (4\sqrt{2}+1)\frac{\sigma_{\submax}}{\sigma_{\submin}^3}\op{\M_1-\M_2}.$$
\end{lemma}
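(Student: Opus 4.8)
The statement is the classical perturbation bound for the Moore--Penrose pseudoinverse of two matrices of the same rank, and the plan is the textbook one: reduce to an exact algebraic identity and then bound the pieces. Concretely, I would start from
$$
\M_1^\dagger-\M_2^\dagger=-\,\M_1^\dagger(\M_1-\M_2)\M_2^\dagger+\M_1^\dagger\bigl(\I-\M_2\M_2^\dagger\bigr)-\bigl(\I-\M_1^\dagger\M_1\bigr)\M_2^\dagger .
$$
This holds for arbitrary matrices and is pure non-commutative algebra: expanding the right-hand side, every term carrying a factor $\M_1^\dagger\M_2\M_2^\dagger$ cancels against its twin, every term carrying $\M_1^\dagger\M_1\M_2^\dagger$ does likewise, and what survives is exactly $\M_1^\dagger-\M_2^\dagger$. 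The Moore--Penrose structure (and the equal-rank hypothesis) only enter when estimating the three summands, which is the next step.

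For the bookkeeping I would fix compact SVDs $\M_i=\U_i\bSigma_i\V_i^\top$ with $\U_i\in\RR^{d_1\times r}$, $\V_i\in\RR^{d_2\times r}$, so that $\M_i^\dagger=\V_i\bSigma_i^{-1}\U_i^\top$ and $\op{\M_i^\dagger}=1/\sigma_{\submin}(\M_i)\le 1/\sigma_{\submin}$. The first summand is immediate, $\op{\M_1^\dagger(\M_1-\M_2)\M_2^\dagger}\le\op{\M_1^\dagger}\,\op{\M_1-\M_2}\,\op{\M_2^\dagger}\le\op{\M_1-\M_2}/\sigma_{\submin}^2$. The other two are subspace-perturbation terms, handled by a kernel-annihilation trick. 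Since $\I-\M_2\M_2^\dagger$ is the orthogonal projector onto $\mathrm{range}(\M_2)^\perp$ we have $(\I-\M_2\M_2^\dagger)\M_2=\boldsymbol 0$; combining this with $\U_1=\M_1\V_1\bSigma_1^{-1}$ and $\M_1=\M_2+(\M_1-\M_2)$ gives $(\I-\M_2\M_2^\dagger)\U_1=(\I-\M_2\M_2^\dagger)(\M_1-\M_2)\V_1\bSigma_1^{-1}$, hence $\op{(\I-\M_2\M_2^\dagger)\U_1}\le\op{\M_1-\M_2}/\sigma_{\submin}$ and therefore, using that $\V_1$ has orthonormal columns, $\op{\M_1^\dagger(\I-\M_2\M_2^\dagger)}=\op{\bSigma_1^{-1}\U_1^\top(\I-\M_2\M_2^\dagger)}\le\op{\M_1-\M_2}/\sigma_{\submin}^2$. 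The third term is symmetric: with $\I-\M_1^\dagger\M_1=\I-\V_1\V_1^\top$, $\V_2=\M_2^\top\U_2\bSigma_2^{-1}$ and $(\I-\V_1\V_1^\top)\M_1^\top=\boldsymbol 0$ one gets $\op{(\I-\M_1^\dagger\M_1)\M_2^\dagger}\le\op{\M_1-\M_2}/\sigma_{\submin}^2$. (These two estimates are precisely Davis--Kahan/Wedin $\sin\Theta$ bounds for the left and right singular subspaces of $\M_1$ versus $\M_2$; the rank-$r$ assumption on both matrices together with $\op{\M_1-\M_2}\le\tfrac12\sigma_{\submin}$ is what keeps the perturbation ``acute'' and the projectors in the identity well behaved.)

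Summing the three bounds yields $\op{\M_1^\dagger-\M_2^\dagger}\le 3\,\op{\M_1-\M_2}/\sigma_{\submin}^2$, and since $\sigma_{\submax}\ge\sigma_{\submin}$ this is at most $3\,\sigma_{\submax}\op{\M_1-\M_2}/\sigma_{\submin}^3\le(4\sqrt 2+1)\,\sigma_{\submax}\op{\M_1-\M_2}/\sigma_{\submin}^3$, which is the claim; the larger constant $4\sqrt 2+1$ is a harmless over-estimate (it accommodates, e.g., a cruder recursive treatment of the projector difference $\op{\M_1\M_1^\dagger-\M_2\M_2^\dagger}$, which carries an extra $\sigma_{\submax}/\sigma_{\submin}$ factor, and a $2\times 2$-block symmetric embedding of the kind used in the appendix, which is where a $\sqrt2$ would appear). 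I do not anticipate a genuine obstacle here; the only points requiring care are selecting a correct form of the identity (common textbook versions differ by transposes, and some contain typos) and carrying out the two $\sin\Theta$-type estimates for the off-diagonal terms, both of which are the routine work above.
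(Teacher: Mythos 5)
Your proof is correct, but it takes a genuinely different route from the paper's. You start from the exact algebraic identity
\begin{equation*}
\M_1^\dagger-\M_2^\dagger=-\M_1^\dagger(\M_1-\M_2)\M_2^\dagger+\M_1^\dagger\bigl(\I-\M_2\M_2^\dagger\bigr)-\bigl(\I-\M_1^\dagger\M_1\bigr)\M_2^\dagger,
\end{equation*}
which indeed checks out by expansion, and then bound the two projector terms by the kernel-annihilation computations $(\I-\M_2\M_2^\dagger)\U_1=(\I-\M_2\M_2^\dagger)(\M_1-\M_2)\V_1\bSigma_1^{-1}$ and its transpose analogue; each of the three summands is then at most $\op{\M_1-\M_2}/\sigma_{\submin}^2$, giving $3\op{\M_1-\M_2}/\sigma_{\submin}^2$, which dominates the stated bound since $\sigma_{\submax}\geq\sigma_{\submin}$ and $3\leq 4\sqrt2+1$. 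The paper instead aligns the two compact SVDs by Procrustes rotations, invokes Wedin's $\sin\Theta$ theorem to bound $\op{\U_1-\U_2}$ and $\op{\V_1-\V_2}$ by $\sqrt2\,\op{\M_1-\M_2}/\sigma_{\submin}$, separately bounds $\op{\bSigma_1-\bSigma_2}$ and hence $\op{\bSigma_1^{-1}-\bSigma_2^{-1}}$, and telescopes $\V_1\bSigma_1^{-1}\U_1^\top-\V_2\bSigma_2^{-1}\U_2^\top$ into three factor differences; this is where the $\sqrt2$'s and the extra condition-number factor $\sigma_{\submax}/\sigma_{\submin}$ accumulate to $(4\sqrt2+1)\sigma_{\submax}/\sigma_{\submin}^3$. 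Your argument is more elementary and actually sharper: it produces a bound without the $\sigma_{\submax}/\sigma_{\submin}$ factor and, notably, never uses the hypothesis $\op{\M_1-\M_2}\leq\tfrac12\sigma_{\submin}$ (that hypothesis is needed in the paper's route to make Wedin's theorem applicable). The paper's route has the side benefit of producing aligned factor bounds such as $\op{\U_1-\U_2}$, which are reused elsewhere in the analysis, whereas yours only delivers the pseudoinverse bound itself. Your closing speculation about where the constant $4\sqrt2+1$ ``comes from'' is not needed and not quite how the paper obtains it, but it does not affect the validity of your proof.
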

\begin{proof}
	Let $\M_1 = \U_1\bSigma_1\V_1^\top$, be the compact SVD of $\M_1$ and 
	$\M_2 = \U_2'\bSigma_2'\V_2'^\top$ 
	be the compact SVD of $\M_1,\M_2$.
	Define
	\begin{align*}
		\L = \arg\min_{\L\in\OO_r}\op{\U_1-\U_2'\L}, \quad \R = \arg\min_{\R\in\OO_r}\op{\V_1-\V_2'R},
	\end{align*}
where $\OO_r$ is the set of all orthogonal matrices of size $r\times r$. 
Denote $\U_2 = \U_2'\L, \V_2 = \V_2'\R, \bSigma_2 = \L^\top\bSigma_2'\R$. Then $\M_2 = \U_2\bSigma_2\V_2^\top$
but $\bSigma_2$ is not necessarily diagonal. 
As a result, we can write $\M_1^\dagger = \V_1\bSigma_1^{-1}\U_1^\top, \M_2^\dagger = \V_2\bSigma_2^{-1}\U_2^\top$. 
Now since $\op{\M_1 - \M_2}\leq\frac{1}{2}\sigma_{\submin}$, using Wedin's sin$\Theta$ Theorem, we obtain 
\begin{align}\label{alldistance}
	\max\{\op{\U_1- \U_2}, \op{\V_1- \V_2}\}\leq \frac{\sqrt{2}\op{\M_1-\M_2}}{\sigma_{\submin}}.
\end{align}
We first bound $\op{\bSigma_1 - \bSigma_2}$. In fact,
\begin{align*}
	\bSigma_1 - \bSigma_2 &= \U_1^\top\U_1\bSigma_1\V_1^\top\V_1 -\U_2^\top\U_2\bSigma_2\V_2^\top\V_2 \\
	&= (\U_1-\U_2)^\top\M_1\V_1 + \U_2^\top(\M_1-\M_2)\V_1 + \U_2^\top\M_2(\V_1 - \V_2).	
\end{align*}
Using triangle inequality and \eqref{alldistance}, we obtain
\begin{align*}
	\op{\bSigma_1 - \bSigma_2} \leq (2\sqrt{2}+1)\frac{\sigma_{\submax}}{\sigma_{\submin}}\op{\M_1-\M_2}.
\end{align*}
Next we bound $\op{\bSigma_1^{-1} - \bSigma_2^{-1}}$ using this inequality,
\begin{align}\label{sigmainverse}
	\op{\bSigma_1^{-1} - \bSigma_2^{-1}} = \op{\bSigma_1^{-1}\bSigma_1(\bSigma_1^{-1} - \bSigma_2^{-1})\bSigma_2\bSigma_2^{-1}} \leq \sigma_{\submin}^{-2}\op{\bSigma_1-\bSigma_2}\leq (2\sqrt{2}+1)\frac{\sigma_{\submax}}{\sigma^3_{\submin}}\op{\M_1-\M_2}.
\end{align}
Now we are ready to bound $\op{\M_1^\dagger - \M_2^\dagger}$,
\begin{align*}
	\M_1^\dagger - \M_2^\dagger = \V_1\bSigma_1^{-1}\U_1^\top - \V_2\bSigma_2^{-1}\U_2^\top = (\V_1-\V_2)\bSigma_1^{-1}\U_1^\top + \V_2(\bSigma_1^{-1}-\bSigma_2^{-1})\U_1^\top + \V_2\bSigma_2^{-1}(\U_1 - \U_2)^\top.
\end{align*}
Using triangle inequality, \eqref{alldistance} and \eqref{sigmainverse}, we obtain
\begin{align*}
	\op{\M_1^\dagger - \M_2^\dagger}\leq (4\sqrt{2}+1)\frac{\sigma_{\submax}}{\sigma_{\submin}^3}\op{\M_1-\M_2}.
\end{align*}
\end{proof}

\section{Proofs}
\subsection{Proof of Theorem \ref{thm:gen}}
Define the event 
\begin{align*}
	&\calE_t = \bigg\{\forall 0\leq l\leq t, \fro{\bcalT_l - \bcalT^*}^2\leq 2(1-\frac{\eta\gamma_{\alpha}}{8})^{l}\fro{\bcalT_0 - \bcalT^*}^2 + C\eta\gamma_{\alpha}^{-1}\dof\errtwo^2,\\
	&\hspace{1.5cm}\fro{\calP_{\TT_l}(\bcalX_l)}\leq C(\dof\cdot \log\dmax)^{1/2},
	|\inp{\bcalX_l}{\calP_{\TT_l}(\bcalT_l-\bcalT^*)}|\leq C\fro{\calP_{\TT_l}(\bcalT_l-\bcalT^*)}\log^{1/2}\dmax,\\ 
	&\hspace{1.5cm}|\inp{\bcalT^*}{\bcalX_l}| \leq C\fro{\bcalT^*}\log^{1/2}\dmax,
	\quad|\inp{\bcalT_l}{\bcalX_l}| \leq C\fro{\bcalT_l}\log^{1/2}\dmax,\\
	&\hspace{8cm}|\inp{\bcalX_l}{\bcalT_l-\bcalT^*}|\leq C\fro{\bcalT_l-\bcalT^*}\log^{1/2} \dmax\bigg\}.
\end{align*}
Then under $\calE_t$ and Assumption \ref{assump:GLM-trueT},
\begin{align*}
	|\inp{\bcalT^*}{\bcalX_t}| &\leq C\fro{\bcalT^*}\log^{1/2}\dmax\leq \alpha,\\
	|\inp{\bcalT_t}{\bcalX_t}| &\leq C\fro{\bcalT_t}\log^{1/2}\dmax \leq C\fro{\bcalT^*}\log^{1/2}\dmax + C\fro{\bcalT_t-\bcalT^*}\log^{1/2}\dmax\\
	&\leq C\fro{\bcalT^*}\log^{1/2}\dmax  + C\fro{\bcalT_0-\bcalT^*}\log^{1/2}\dmax + C\big(\eta\gamma_{\alpha}^{-1}\dof\errtwo^2\log\dmax\big)^{1/2}\\
	&\leq \alpha,
\end{align*}
where the last inequality holds as long as $\eta\gamma_{\alpha}^{-1}\dof\errtwo^2\log\dmax\lesssim \alpha^2$.

\noindent\textit{Step 1: Relation between $\fro{\bcalT_{t+1} - \bcalT^*}^2\cdot\indicator{t}$ and $\fro{\bcalT_{t} - \bcalT^*}^2\cdot\indicator{t-1}$.}
\begin{align}\label{eq:contraction:gen}
	&\fro{\bcalT_{t+1} - \bcalT^*}^2\cdot\indicator{t} = \fro{\hosvd(\bcalT_t^+) -\bcalT_t^+ + \bcalT_t^+ - \bcalT^*}^2\cdot\indicator{t} \notag\\
	&\quad\leq (1+\delta^{-1}) \fro{\hosvd(\bcalT_t^+) -\bcalT_t^+}^2\cdot\indicator{t}  + (1+\delta)\fro{\bcalT_t^+ - \bcalT^*}^2\cdot\indicator{t}.
\end{align}
Since $\bcalG_t = h_{\theta}(\inp{\bcalT_t}{\bcalX_t},Y_t)\bcalX_t$, and $|\inp{\bcalT_t}{\bcalX_t}|,|\inp{\bcalT^*}{\bcalX_t}|\leq \alpha$, from Assumption \ref{assump:GLM-loss},
\begin{align}\label{eq:h}
	|h_{\theta}(\inp{\bcalT_t}{\bcalX_t},Y_t)| &\leq |h_{\theta}(\inp{\bcalT_t}{\bcalX_t},Y_t) -h_{\theta}(\inp{\bcalT^*}{\bcalX_t},Y_t)| + |h_{\theta}(\inp{\bcalT^*}{\bcalX_t},Y_t)|\notag\\
	&\leq \mu_{\alpha}|\inp{\bcalT_t-\bcalT^*}{\bcalX_t}|  +  |h_{\theta}(\inp{\bcalT^*}{\bcalX_t},Y_t)|,
\end{align}
we obtain under $\calE_t$,
\begin{align*}
	\fro{\calP_{\TT_t}(\bcalG_t)} &\leq C\bigg(\mu_{\alpha}\fro{\bcalT_t-\bcalT^*}\cdot\log^{1/2}\dmax  +  |h_{\theta}(\inp{\bcalT^*}{\bcalX_t},Y_t)|\bigg)(\dof\cdot\log\dmax)^{1/2}\\
	&\leq C\bigg(\mu_{\alpha}\fro{\bcalT_t-\bcalT^*}\cdot\log^{1/2}\dmax  +  \errinf\bigg)(\dof\cdot\log\dmax)^{1/2}.
\end{align*}
Under $\calE_t$, this implies
\begin{align*}
	\fro{\calP_{\TT_t}(\bcalG_t)} &\leq C\mu_{\alpha}\log\dmax\dof^{1/2}\big(\fro{\bcalT_0-\bcalT^*} + (\eta\gamma_{\alpha}^{-1}\dof\errtwo^2)^{1/2}\big) + C\errinf(\dof\cdot\log\dmax)^{1/2}\\
	&\leq C\mu_{\alpha}\log\dmax\dof^{1/2}\fro{\bcalT_0-\bcalT^*} +C\errinf(\dof\cdot\log\dmax)^{1/2},
\end{align*}
as long as $\eta\gamma_{\alpha}^{-1}\dof\mu_{\alpha}^2\log\dmax\frac{\errtwo^2}{\errinf^2}\lesssim 1$.
Now as long as $\eta\mu_{\alpha}\log\dmax(\dof)^{1/2}\fro{\bcalT_0-\bcalT^*}\lesssim\lambda_{\submin}$ and $\eta\errinf(\dof\log\dmax)^{1/2}\lesssim \lambda_{\submin}$,
$\eta\fro{\calP_{\TT_t}(\bcalG_t)}\leq \frac{1}{8}\lambda_{\submin}$, we use Lemma \ref{lemma:perturbation} with $\bcalD = -\eta\calP_{\TT_t}(\bcalG_t), \bcalT = \bcalT_t$, and we have 
\begin{align*}
	&\quad\fro{\hosvd(\bcalT_t^+) - \bcalT_t^+}^2\cdot\mathds{1}(\calE_t) \leq C_m\frac{\eta^4\fro{\calP_{\TT_t}(\bcalG_t)}^4\cdot\mathds{1}(\calE_t)}{\lambda_{\submin}^2}\\
	&\leq C_m \eta^4\mu_{\alpha}^4\fro{\bcalT_t-\bcalT^*}^4\cdot\log^{2}\dmax (\dof\cdot\log\dmax)^{2}\lambda_{\submin}^{-2}\cdot\mathds{1}(\calE_t) +C_m\eta^4 \errinf^4(\dof\cdot\log\dmax)^{2}\lambda_{\submin}^{-2}\\
	&\leq \frac{\eta^2\gamma_{\alpha}^2}{12}\fro{\bcalT_t-\bcalT^*}^2\cdot\mathds{1}(\calE_t) +C_m\eta^4 \errinf^4(\dof\cdot\log\dmax)^{2}\lambda_{\submin}^{-2},
\end{align*}
as long as $\eta^2\gamma_{\alpha}^{-2}\mu_{\alpha}^4\fro{\bcalT_0-\bcalT^*}^2\dof^2\log^4\dmax\lambda_{\submin}^{-2}\lesssim 1$ and $\eta^3\gamma_{\alpha}^{-3}\mu_{\alpha}^4\dof^3\log^5\dmax\errtwo^2\lambda_{\submin}^{-2}\lesssim 1$.

\noindent\textit{\underline{Lower bound for $\EE_t\inp{\bcalT_t - \bcalT^*}{\calP_{\TT_t}(\bcalG_t)}\cdot\indicator{t}$.}} 
We denote
\begin{align}\label{event:y}
		\calY_t &=\bigg\{\fro{\bcalT_t - \bcalT^*}^2\leq 2(1-\frac{\eta\gamma_{\alpha}}{8})^{t}\fro{\bcalT_0 - \bcalT^*}^2 + C\eta\gamma_{\alpha}^{-1}\dof\errtwo^2,\notag\\
		&\hspace{1.5cm}\fro{\calP_{\TT_t}(\bcalX_t)}\leq C(\dof\cdot \log\dmax)^{1/2},
		|\inp{\bcalX_t}{\calP_{\TT_t}(\bcalT_t-\bcalT^*)}|\leq C\fro{\calP_{\TT_t}(\bcalT_t-\bcalT^*)}\log^{1/2}\dmax,\notag\\ 
		&\hspace{1.5cm}|\inp{\bcalT^*}{\bcalX_t}| \leq C\fro{\bcalT^*}\log^{1/2}\dmax,
		\quad|\inp{\bcalT_t}{\bcalX_t}| \leq C\fro{\bcalT_t}\log^{1/2}\dmax,\notag\\
		&\hspace{8cm}|\inp{\bcalX_t}{\bcalT_t-\bcalT^*}|\leq C\fro{\bcalT_t-\bcalT^*}\log^{1/2} \dmax\bigg\}.
\end{align}
And $\bcalG_t^* = \nabla l(\bcalT^*,\mathfrak{D}_t) = h_{\theta}(\inp{\bcalT^*}{\bcalX_t},Y_t)\bcalX_t$. Then from Assumption \ref{assump:GLM-trueT}, $\EE_{Y_t}\bcalG_t^* = 0$. Since $\calY_t$ is independent of $Y_t$, $\EE_t\bcalG_t^* \cdot\mathds{1}(\calY_{t}) = 0$.
So we have $\EE_t\calP_{\TT_t}(\bcalG_t^*) \cdot\mathds{1}(\calY_{t}) = 0$ since $\calP_{\TT_t}$ is linear operator and  the expectation is taken with respect to $\bcalX_t,Y_t$ following the definition of $\EE_t$. 
Then $\EE_t\inp{\bcalT_t - \bcalT^*}{\calP_{\TT_t}(\bcalG_t)}\cdot\indicator{t} = \mathds{1}(\calE_{t-1})\cdot\EE_t\inp{\bcalT_t - \bcalT^*}{\calP_{\TT_t}(\bcalG_t)}\cdot\mathds{1}(\calY_{t})$. Meanwhile,
\begin{align*}
&\quad\EE_t\inp{\bcalT_t - \bcalT^*}{\calP_{\TT_t}(\bcalG_t)}\cdot\mathds{1}(\calY_{t}) \\
&= \EE_t\inp{\bcalT_t - \bcalT^*}{\bcalG_t-\bcalG_t^* }\cdot\mathds{1}(\calY_{t}) - \EE_t\inp{\bcalT_t - \bcalT^*}{\calP_{\TT_t}^\perp(\bcalG_t-\bcalG_t^* )}\cdot\mathds{1}(\calY_{t}).
\end{align*}
And since $|\inp{\bcalT_t}{\bcalX_t}|,|\inp{\bcalT^*}{\bcalX_t}|\leq \alpha$, using Assumption \ref{assump:GLM-loss},
\begin{align*}
	\EE_t\inp{\bcalT_t - \bcalT^*}{\bcalG_t-\bcalG_t^* }\cdot\mathds{1}(\calY_{t}) &= \EE_t\inp{\bcalT_t - \bcalT^*}{\bcalX_t}\big(h_{\theta}(\inp{\bcalT_t}{\bcalX_t},Y_t) - h_{\theta}(\inp{\bcalT^*}{\bcalX_t},Y_t)\big)\cdot\mathds{1}(\calY_{t})\\
	&\geq \gamma_{\alpha}\EE_t\inp{\bcalT_t - \bcalT^*}{\bcalX_t}^2\cdot\mathds{1}(\calY_{t})\\
	&=\gamma_{\alpha}\EE_t\inp{\bcalT_t - \bcalT^*}{\bcalX_t}^2\cdot\big(1 - \mathds{1}(\calY_{t}^c)\big)\\
	&\geq \gamma_{\alpha}\fro{\bcalT_t - \bcalT^*}^2 - \gamma_{\alpha}\big(\EE_t\inp{\bcalT_t - \bcalT^*}{\bcalX_t}^4\cdot\PP(\calY_{t}^c)\big)^{1/2}\\
	&\geq \frac{4}{5}\gamma_{\alpha}\fro{\bcalT_t - \bcalT^*}^2,
\end{align*}
where the last inequality holds since $\PP(\calY_{t}^c)\leq 14\dmax^{-10}$. On the other hand, 
\begin{align*}
	\EE_t\inp{\bcalT_t - \bcalT^*}{\calP_{\TT_t}^\perp(\bcalG_t-\bcalG_t^* )}\cdot\mathds{1}(\calY_{t})&\leq \EE_t|\inp{\bcalT_t - \bcalT^*}{\calP_{\TT_t}^\perp(\bcalX_t)}|\cdot|h_{\theta}(\inp{\bcalT_t}{\bcalX_t},Y_t) - h_{\theta}(\inp{\bcalT^*}{\bcalX_t},Y_t)|\\
	&\leq C\mu_{\alpha}\fro{\calP_{\TT_t}^\perp\bcalT^*}\fro{\bcalT_t-\bcalT^*}\leq C_m\mu_{\alpha}\frac{\fro{\bcalT_t-\bcalT^*}^3}{\lambda_{\submin}}.
\end{align*}
Now as long as 
$\fro{\bcalT_0-\bcalT^*}\lesssim_m \frac{\gamma_{\alpha}}{\mu_{\alpha}}\lambda_{\submin}$ and 
$\eta\gamma_{\alpha}^{-1}\dof\frac{\mu_{\alpha}^2}{\gamma_{\alpha}^2}\lesssim_m\frac{\lambda_{\submin}^2}{\errtwo^2}$,
we obtain 
\begin{align}\label{lowerbound:inp}
	\EE_t\inp{\bcalT_t - \bcalT^*}{\calP_{\TT_t}(\bcalG_t)}\cdot\indicator{t} \geq \frac{3}{4}\gamma_{\alpha}\fro{\bcalT_t - \bcalT^*}^2\cdot\indicator{t-1}.
\end{align}
We derive the bound for $\EE_t\fro{\calP_{\TT_t}\bcalG_t}^2\cdot\indicator{t}$. In fact, from \eqref{eq:h} and Lemma \ref{lemma:psioneofptx}, and $|\inp{\bcalT^*}{\bcalX_t}|\leq\alpha$,
\begin{align*}
	\EE_t\fro{\calP_{\TT_t}\bcalG_t}^2\cdot\indicator{t} &\leq \EE_t\fro{\calP_{\TT_t}\bcalX_t}^2\cdot\indicator{t-1} \big(\mu_{\alpha}^2\inp{\bcalT_t-\bcalT^*}{\bcalX_t}^2+ h^2_{\theta}(\inp{\bcalT^*}{\bcalX_t},Y_t)\big)\\
	&=\EE_{\bcalX_t}\fro{\calP_{\TT_t}\bcalX_t}^2\cdot\indicator{t-1} \big(\mu_{\alpha}^2\inp{\bcalT_t-\bcalT^*}{\bcalX_t}^2+\EE_{Y_t} h^2_{\theta}(\inp{\bcalT^*}{\bcalX_t},Y_t)\big)\\
	&\leq C\dof\mu_{\alpha}^2\fro{\bcalT_t-\bcalT^*}^2\cdot\indicator{t-1} + C\dof\errtwo^2,
\end{align*}
where in the last inequality is from the definition of $\errtwo^2$.
And therefore 
\begin{align*}
	\fro{\bcalT_t^+ - \bcalT^*}^2\cdot\indicator{t} &= \fro{\bcalT_t - \bcalT^* -\eta\calP_{\TT_t}\bcalG_t}^2\cdot\indicator{t}\\
	&=\fro{\bcalT_t - \bcalT^*}^2\cdot\indicator{t} - 2\eta\inp{\bcalT_t - \bcalT^*}{\calP_{\TT_t}\bcalG_t}\cdot\indicator{t} + \eta^2\fro{\calP_{\TT_t}\bcalG_t}^2\cdot\indicator{t}\\
	&= \bigg(\fro{\bcalT_t - \bcalT^*}^2 - 2\eta\EE_t\inp{\bcalT_t - \bcalT^*}{\calP_{\TT_t}\bcalG_t}\bigg)\cdot\indicator{t}\\ 
	&\quad+\bigg(2\eta\EE_t\inp{\bcalT_t - \bcalT^*}{\calP_{\TT_t}\bcalG_t} - 2\eta\inp{\bcalT_t - \bcalT^*}{\calP_{\TT_t}\bcalG_t}\bigg)\cdot\indicator{t}\\
	&\quad + \eta^2\EE_t\fro{\calP_{\TT_t}\bcalG_t}^2\cdot\indicator{t}+ \eta^2(\fro{\calP_{\TT_t}\bcalG_t}^2 - \EE_t \fro{\calP_{\TT_t}\bcalG_t}^2)\cdot\indicator{t}\\
	&\leq (1-\frac{3}{2}\eta\gamma_{\alpha})\fro{\bcalT_t - \bcalT^*}^2\cdot\indicator{t-1}+C\eta^2\dof(\mu_{\alpha}^2\fro{\bcalT_t-\bcalT^*}^2\cdot\indicator{t-1} + \errtwo^2)\\
	&\quad+ \eta^2(\fro{\calP_{\TT_t}\bcalG_t}^2 - \EE_t \fro{\calP_{\TT_t}\bcalG_t}^2)\cdot\indicator{t}\\
	&\quad +\bigg(2\eta\EE_t\inp{\bcalT_t - \bcalT^*}{\calP_{\TT_t}\bcalG_t} - 2\eta\inp{\bcalT_t - \bcalT^*}{\calP_{\TT_t}\bcalG_t}\bigg)\cdot\indicator{t}\\
	&\leq (1-\eta\gamma_{\alpha})\fro{\bcalT_t - \bcalT^*}^2\cdot\indicator{t-1}+C\eta^2\dof\errtwo^2\\
	&\quad+ \eta^2(\fro{\calP_{\TT_t}\bcalG_t}^2 - \EE_t \fro{\calP_{\TT_t}\bcalG_t}^2)\cdot\indicator{t}\\
	&\quad +\bigg(2\eta\EE_t\inp{\bcalT_t - \bcalT^*}{\calP_{\TT_t}\bcalG_t} - 2\eta\inp{\bcalT_t - \bcalT^*}{\calP_{\TT_t}\bcalG_t}\bigg)\cdot\indicator{t},
\end{align*}
where the last inequality holds as long as $\eta \dof\frac{\mu_{\alpha}^2}{\gamma_{\alpha}}\lesssim 1$.
Now we set $\delta = \frac{1}{2}\eta\gamma_{\alpha}$ in \eqref{eq:contraction:gen}, and we get
\begin{align*}
	\fro{\bcalT_{t+1} - \bcalT^*}^2\cdot\indicator{t} &\leq (1-\frac{1}{2}\eta\gamma_{\alpha})\fro{\bcalT_t - \bcalT^*}^2\cdot\indicator{t-1} + C\eta^2\dof\errtwo^2\\
	&\quad  +(1+\frac{1}{2}\eta\gamma_{\alpha})\eta^2(\fro{\calP_{\TT_t}\bcalG_t}^2 - \EE_t \fro{\calP_{\TT_t}\bcalG_t}^2)\cdot\indicator{t}\\
	&\quad +2\eta(1+\frac{1}{2}\eta\gamma_{\alpha})\bigg(\EE_t\inp{\bcalT_t - \bcalT^*}{\calP_{\TT_t}\bcalG_t} - \inp{\bcalT_t - \bcalT^*}{\calP_{\TT_t}\bcalG_t}\bigg)\cdot\indicator{t}\\
	&\quad + \frac{\eta\gamma_{\alpha}}{4}\fro{\bcalT_t-\bcalT^*}^2\cdot\mathds{1}(\calE_t) +C_m\eta^3\gamma_{\alpha}^{-1}\errinf^4(\dof\cdot\log\dmax)^{2}\lambda_{\submin}^{-2}\\
	&\leq (1-\frac{1}{4}\eta\gamma_{\alpha})\fro{\bcalT_t - \bcalT^*}^2\cdot\indicator{t-1} + C\eta^2\dof\errtwo^2\\
	&\quad  +(1+\frac{1}{2}\eta\gamma_{\alpha})\eta^2(\fro{\calP_{\TT_t}\bcalG_t}^2 - \EE_t \fro{\calP_{\TT_t}\bcalG_t}^2)\cdot\indicator{t}\\
	&\quad +2\eta(1+\frac{1}{2}\eta\gamma_{\alpha})\bigg(\EE_t\inp{\bcalT_t - \bcalT^*}{\calP_{\TT_t}\bcalG_t} - \inp{\bcalT_t - \bcalT^*}{\calP_{\TT_t}\bcalG_t}\bigg)\cdot\indicator{t},
\end{align*}
where the last inequality holds as long as $\frac{\errinf^2}{\errtwo^2}\eta\gamma_{\alpha}^{-1}\errinf^2\lambda_{\submin}^{-2}\dof\log^2\dmax\lesssim_m 1$. Telescoping this and we get
\begin{align*}
	\fro{\bcalT_{t+1} - \bcalT^*}^2&\cdot\indicator{t} \leq (1-\frac{\eta\gamma_{\alpha}}{4})^{t+1}\fro{\bcalT_0 - \bcalT^*}^2 + C\eta\gamma_{\alpha}^{-1}\dof\errtwo^2\\
	& + \sum_{l=0}^t\underbrace{(1-\frac{\eta\gamma_{\alpha}}{4})^{t-l}(1+\frac{1}{2}\eta\gamma_{\alpha})\eta^2(\fro{\calP_{\TT_l}\bcalG_l}^2 - \EE_l \fro{\calP_{\TT_l}\bcalG_l}^2)\cdot\indicator{l}}_{=:D_l}\\
	&+ \sum_{l=0}^t\underbrace{(1-\frac{\eta\gamma_{\alpha}}{4})^{t-l}2\eta(1+\frac{1}{2}\eta\gamma_{\alpha})\bigg(\EE_l\inp{\bcalT_l - \bcalT^*}{\calP_{\TT_l}\bcalG_l} - \inp{\bcalT_l - \bcalT^*}{\calP_{\TT_l}\bcalG_l}\bigg)\cdot\indicator{l}}_{=:F_l}.
\end{align*}

\hspace{1cm}

\noindent\textit{Step 2: Martingale concentration inequality.}
Now we use Azuma-Hoeffding inequality (c.f. Theorem \ref{thm:martingale}) to bound $|\sum_lD_l|$ and $|\sum_l F_l|$. Notice under the event $\calE_l$, 
\begin{align*}
	\fro{\calP_{\TT_l}\bcalG_l}^2\leq C\bigg(\mu_{\alpha}^2\fro{\bcalT_l-\bcalT^*}^2\cdot\log\dmax\cdot\indicator{l}  +  \errinf^2\bigg)(\dof\cdot\log\dmax).
\end{align*}
And therefore
\begin{align*}
	|D_l| \leq C(1-\frac{\eta\gamma_{\alpha}}{4})^{t-l}\eta^2\bigg(\mu_{\alpha}^2\fro{\bcalT_l-\bcalT^*}^2\cdot\log\dmax\cdot\indicator{l}   +  \errinf^2\bigg)(\dof\cdot\log\dmax) =: d_l.
\end{align*}
And
\begin{align*}
	\sum_{l=0}^{t} d_l^2 &\leq \sum_{l=0}^{t} C(1-\frac{\eta\gamma_{\alpha}}{4})^{2t-2l}\eta^4 \bigg(\mu_{\alpha}^4\fro{\bcalT_l-\bcalT^*}^4\cdot\log^2\dmax\cdot\indicator{l}   +  \errinf^4\bigg)(\dof^2\cdot\log^2\dmax)\\
	&\leq \sum_{l=0}^{t} C(1-\frac{\eta\gamma_{\alpha}}{4})^{2t-2l}\eta^4 \bigg(\mu_{\alpha}^4(1-\frac{\eta\gamma_{\alpha}}{8})^{2l}\fro{\bcalT_0-\bcalT^*}^4\cdot\log^2\dmax\cdot\indicator{l}  +  \errinf^4\bigg)(\dof^2\cdot\log^2\dmax)\\
	&\leq C\eta^3\gamma_{\alpha}^{-1}\mu_{\alpha}^4\dof^2\log^4\dmax(1-\frac{\eta\gamma_{\alpha}}{8})^{2t+2}\fro{\bcalT_0-\bcalT^*}^4 + C\eta^3\gamma_{\alpha}^{-1}\dof^2\cdot\log^2\dmax\errinf^4,
\end{align*}
where the second inequality holds as long as $\frac{\errtwo^4}{\errinf^4}\eta^2\mu_{\alpha}^4\gamma_{\alpha}^{-2}\dof^2\log^2\dmax\lesssim 1$. So with probability exceeding $1-2\dmax^{-10}$,
\begin{align}\label{D:gen}
	|\sum_{l=0}^{t} D_l| &\leq C(\eta^3\gamma_{\alpha}^{-1}\mu_{\alpha}^4\dof^2\log^5\dmax(1-\frac{\eta\gamma_{\alpha}}{8})^{2t+2}\fro{\bcalT_0-\bcalT^*}^4)^{1/2} + C(\eta^3\gamma_{\alpha}^{-1}\dof^2\cdot\log^3\dmax\errinf^4)^{1/2}\notag\\
	&\leq \frac{1}{2}(1-\frac{\eta\gamma_{\alpha}}{8})^{t+1}\fro{\bcalT_0-\bcalT^*}^2 + C\eta\gamma_{\alpha}^{-1}\dof\errtwo^2,
\end{align}
as long as $\eta^3\gamma_{\alpha}^{-1}\mu_{\alpha}^4\dof^2\log^5\dmax\lesssim 1$ and $\eta\gamma_{\alpha}\frac{\errinf^4}{\errtwo^4}\log^3\dmax\lesssim 1$. On the other hand, 
\begin{align*}
	&\quad|\inp{\bcalT_l - \bcalT^*}{\calP_{\TT_l}\bcalG_l}\cdot\indicator{l}| = |\inp{\bcalT_l - \bcalT^*}{\calP_{\TT_l}(\bcalG_l-\bcalG_l^*)}\cdot\indicator{l}| + |\inp{\bcalT_l - \bcalT^*}{\calP_{\TT_l}\bcalG_l^*}\cdot\indicator{l}|\\
	&\leq |\inp{\bcalT_l - \bcalT^*}{\calP_{\TT_l}\bcalX_l}|\cdot|\inp{\bcalT_l - \bcalT^*}{\bcalX_l}|\cdot\indicator{l} + |h_{\theta}(\inp{\bcalX_l}{\bcalT^*},Y_l)|\cdot|\inp{\bcalT_l - \bcalT^*}{\calP_{\TT_l}\bcalX_l}|\cdot\indicator{l}\\
	&\leq C\fro{\bcalT_l-\bcalT^*}^2\log\dmax\cdot\indicator{l} + C\errinf\fro{\bcalT_l-\bcalT^*}\sqrt{\log\dmax}\cdot\indicator{l}.
\end{align*}
And thus 
\begin{align*}
	|F_l| \leq C(1-\frac{\eta\gamma_{\alpha}}{4})^{t-l}\eta\bigg(\fro{\bcalT_l-\bcalT^*}^2\log\dmax\cdot\indicator{l} + \errinf\fro{\bcalT_l-\bcalT^*}\sqrt{\log\dmax}\cdot\indicator{l}\bigg)=:f_l.
\end{align*}
Also,
\begin{align*}
	\sum_{l=0}^{t}f_l^2 &\leq \sum_{l=0}^{t}C(1-\frac{\eta\gamma_{\alpha}}{4})^{2t-2l}\eta^2 \bigg(\fro{\bcalT_l-\bcalT^*}^4\cdot\log^2\dmax\cdot\indicator{l}   +  \errinf^2\fro{\bcalT_l-\bcalT^*}^2\cdot\log\dmax\cdot\indicator{l}\bigg)\\
	&\leq \sum_{l=0}^{t}C(1-\frac{\eta\gamma_{\alpha}}{4})^{2t-2l}\eta^2\bigg[(1-\frac{\eta\gamma_{\alpha}}{8})^{2l}\fro{\bcalT_0-\bcalT^*}^4\log^2\dmax + \eta^2 \gamma_{\alpha}^{-2}\dof^2\errtwo^4\log^2\dmax\\
	&\hspace{3cm}+ (1-\frac{\eta\gamma_{\alpha}}{8})^{l}\fro{\bcalT_0-\bcalT^*}^2\errinf^2\log\dmax + \eta\gamma_{\alpha}^{-1}\dof\errtwo^2\errinf^2\log\dmax\bigg]\\
	&\leq C(1-\frac{\eta\gamma_{\alpha}}{8})^{2t+2}\eta\gamma_{\alpha}^{-1}\fro{\bcalT_0-\bcalT^*}^4\log^2\dmax  + C\eta^3\gamma_{\alpha}^{-3}\dof^2\errtwo^4\log^2\dmax\\
	&\quad+ C(1-\frac{\eta\gamma_{\alpha}}{8})^{t+1}\eta\gamma_{\alpha}^{-1}\fro{\bcalT_0-\bcalT^*}^2\errinf^2\log\dmax + C\eta^2\gamma_{\alpha}^{-2}\dof\errtwo^2\errinf^2\log\dmax\\
	&\leq C(1-\frac{\eta\gamma_{\alpha}}{8})^{2t+2}\eta\gamma_{\alpha}^{-1}\fro{\bcalT_0-\bcalT^*}^4\log^2\dmax+ C(1-\frac{\eta\gamma_{\alpha}}{8})^{t+1}\eta\gamma_{\alpha}^{-1}\fro{\bcalT_0-\bcalT^*}^2\errinf^2\log\dmax\\
	&\quad + C\eta^2\gamma_{\alpha}^{-2}\dof\errtwo^2\errinf^2\log\dmax,
\end{align*}
where the last line holds as long as $\eta\gamma_{\alpha}^{-1}\dof\errtwo^2\errinf^{-2}\log\dmax\lesssim 1$. Now using the inequality $2ab\leq a^2+b^2$ to the second term and we obtain 
\begin{align*}
	\sum_{l=0}^{t}f_l^2 &\leq \frac{1}{4}(1-\frac{\eta\gamma_{\alpha}}{8})^{2t+2}\fro{\bcalT_0-\bcalT^*}^4(\log\dmax)^{-1} 
	+ C\eta^2\gamma_{\alpha}^{-2}\dof\errtwo^2\errinf^2\log^2\dmax,
\end{align*}
as long as $\eta\gamma_{\alpha}^{-1}\log^3\dmax\lesssim 1$ and $\frac{\errinf^2}{\errtwo^2}\log^2\dmax\lesssim\dof$. So with probability exceeding $1-2\dmax^{-10}$,
\begin{align}\label{F:gen}
	|\sum_{l=0}^{t} F_l| &\leq \frac{1}{2}(1-\frac{\eta\gamma_{\alpha}}{8})^{t+1}\fro{\bcalT_0-\bcalT^*}^2
	+ C\eta\gamma_{\alpha}^{-1}\dof^{1/2}\errtwo\errinf\log^{3/2}\dmax\notag\\
	&\leq \frac{1}{2}(1-\frac{\eta\gamma_{\alpha}}{8})^{t+1}\fro{\bcalT_0-\bcalT^*}^2
	+ C\eta\gamma_{\alpha}^{-1}\dof\errtwo^2,
\end{align}
where the last inequality holds as long as $\frac{\errinf^2}{\errtwo^2}\log^2\dmax\lesssim\dof$. Now from \eqref{D:gen} and \eqref{F:gen}, we obtain
\begin{align*}
	\fro{\bcalT_{t+1} - \bcalT^*}^2\cdot\indicator{t} \leq 2(1-\frac{\eta\gamma_{\alpha}}{8})^{t+1}\fro{\bcalT_0 - \bcalT^*}^2 + C\eta\gamma_{\alpha}^{-1}\dof\errtwo^2\log\dmax.
\end{align*}

\noindent\textit{Step 3: Controlling the probability.}
Now we bound the probability of the event $\calE_T$. Notice 
\begin{align*}
	&\quad\PP(\calE_{t}\cap\calE_{t+1}^c) \leq \PP\bigg(\calE_{t}\cap \{\fro{\bcalT_{t+1}-\bcalT^*}^2\geq 2(1-\frac{\eta\gamma_{\alpha}}{8})^{t+1}\fro{\bcalT_0 - \bcalT^*}^2 + C\eta\gamma_{\alpha}^{-1}\dof\errtwo^2.\}\bigg) \\
	&+ \PP\bigg(\fro{\calP_{\TT_{t+1}}(\bcalX_{t+1})}\gtrsim\sqrt{\dof \log d}\bigg) + \PP\bigg(|\inp{\bcalX_{t+1}}{\bcalT_{t+1}-\bcalT^*}|\gtrsim\fro{\bcalT_{t+1}-\bcalT^*}\sqrt{\log d}\bigg) \\
	&+ \PP\bigg(|\inp{\bcalX_{t+1}}{\calP_{\TT_{t+1}}(\bcalT_{t+1}-\bcalT^*)}|\gtrsim\fro{\calP_{\TT_{t+1}}(\bcalT_{t+1}-\bcalT^*)}\sqrt{\log d}\bigg) \\
	&+\PP\bigg(|\inp{\bcalT^*}{\bcalX_{t+1}}| \leq C\fro{\bcalT^*}\log^{1/2}\dmax\bigg) +
	\PP\bigg(|\inp{\bcalT_{t+1}}{\bcalX_{t+1}}| \leq C\fro{\bcalT_{t+1}}\log^{1/2}\dmax\bigg)\\
	&\leq 14d^{-10},
\end{align*}
where the last inequality holds from Lemma \ref{lemma:psioneofptx}, and the other four are Gaussian random variables since $\bcalX_t$ is independent of $\bcalT_t, \bcalT^*$. 
And thus
$$\PP(\calE_T^c) = \sum_{t= 1}^T\PP(\calE_{t-1}\cap\calE_t^c)\leq 14Td^{-10}.$$

\subsection{Proof of Theorem \ref{thm:init:linear-regression}}
\begin{proof}
	Notice for each $j\in[m]$, $\hat\U_j$ is actually the top $r_j$ left singular vectors of the following matrix:
	\begin{align*}
		\hat\N_j = \frac{1}{T_1(T_1-1)}\sum_{1\leq i<i'\leq T_1} Y_iY_{i'}(\calM_j(\bcalX_i)\calM_j(\bcalX_{i'})^\top+\calM_j(\bcalX_{i'})\calM_j(\bcalX_i)^\top).
	\end{align*}
	We denote $\N_j = \calM_j(\bcalT^*)\calM_j(\bcalT^*)^\top$. 
	Using Wedin's sin$\Theta$ theorem, we have
	\begin{align*}
		\op{\hat\U_j\hat\U_j^\top- \U_j^*\U_j^{*\top}} \leq \frac{\sqrt{2}\op{\hat\N_j - \N_j}}{\lambda_{\submin}^2}. 
	\end{align*}
	Now $\hat\N_j  -\N_j$ is a U-statistics of order 2, using standard decoupling techniques for U-statistics (see e.g. Theorem 3.4.1 in \cite{de2012decoupling}), we have 
	\begin{align*}
		\PP(\op{\hat\N_j - \N_j}\geq t )\leq 15\PP(\op{\tilde\N_j - \N_j}\geq t), 
	\end{align*}
	where 
	\begin{align*}
		\tilde\N_j = \frac{1}{2T_1(T_1-1)}\sum_{i\neq i'} Y_i\tilde Y_{i'}(\calM_j(\bcalX_i)\calM_j(\tilde\bcalX_{i'})^\top+\calM_j(\tilde\bcalX_{i'})\calM_j(\bcalX_{i})^\top),
	\end{align*}
	with $\tilde\bcalX_i, \tilde Y_i$ i.i.d. copy of $\bcalX_i,Y_i$ such that 
	\begin{align*}
		\tilde Y_{i} = \inp{\tilde\bcalX_{i}}{\bcalT^*} +\tilde\epsilon_i. 
	\end{align*}
	For notation simplicity, we drop the subscript $j$, and we denote $m_1 = d_j$, $m_2=  d_j^-$, $\M = \calM_j(\bcalT^*)\in\RR^{m_1\times m_2}$, and $\X_i = \calM_j(\bcalX_i)$. 
	We define 
	\begin{align*}
		\S_1 = \bDel_1 + \Z_1, \quad \S_2 = \bDel_2 + \Z_2,
	\end{align*}
	where 
	\begin{align*}
		&\bDel_1 = \left(\frac{1}{T_1}\sum_{i=1}^{T_1}\inp{\X_i}{\M}\X_i - \M\right),  &\bDel_2 = \left(\frac{1}{T_1}\sum_{i=1}^{T_1}\inp{\tilde\X_i}{\M}\tilde\X_i - \M\right),\\
		&\Z_1 = \frac{1}{T_1}\sum_{i=1}^{T_1}\epsilon_i\X_i, &\Z_2 = \frac{1}{T_1}\sum_{i=1}^{T_1}\tilde\epsilon_i\tilde\X_i.
	\end{align*}
	Recall we write $\N_j = \M\M^\top$ and thus 
	\begin{align*}
		\tilde\N_j - \N_j &= \frac{T_1}{2(T_1 - 1)}(\S_1\S_2^\top +\S_2\S_1^\top)+ \frac{T_1}{2(T_1-1)}(\S_1+\S_2)\M^\top + \frac{T_1}{2(T_1-1)}\M(\S_1+\S_2)^\top \\
		&\quad+ \frac{1}{T_1-1}\left(\frac{1}{2T_1}\sum_{i=1}^{T_1}Y_i\tilde Y_i(\X_i\tilde\X_i^\top + \tilde\X_i\X_i^\top) - \M\M^\top\right).
	\end{align*}
	We denote $M = \max\{m_1,m_2\}$. 
	Using matrix Bernstein inequality (see e.g. \cite{koltchinskii2011nuclear}) and Lemma 2.1 in \cite{koltchinskii2016perturbation}, we have the following event 
	\begin{align*}
		\calE_1 = \bigg\{\max\{\op{\Z_1},\op{\Z_2}\} \leq C\sigma\frac{\sqrt{M}\log M}{\sqrt{T_1}}\bigg\}
	\end{align*}
	holds with probability exceeding $1-M^{-10}$. And from matrix Bernstein inequality, 
	\begin{align*}
		\calE_2 = \bigg\{\max\{\op{\bDel_1}, \op{\bDel_2}\}\leq \fro{\M}\frac{\sqrt{M}\log M}{\sqrt{T_1}}\bigg\}
	\end{align*}
	holds with probability exceeding $1-M^{-10}$. We now proceed our proof conditioning on $\calE_1\cap \calE_2$. 
	
	\hspace{1cm}
	
	\noindent\textit{Upper bound for $\op{\S_1\S_2^\top}, \op{\S_2\S_1^\top}$. }
	We only consider the upper bound for $\op{\S_2\S_1^\top}$. Notice $\S_2$ is independent of $Y_i\X_i$. We shall proceed conditioning on $\S_2$. In fact, 
	\begin{align*}
		\S_2\S_1^\top = \frac{1}{T_1}\sum_{i=1}^{T_1}(Y_i\S_2\X_i^\top - \S_2\M^\top), 
	\end{align*}
	and that 
	\begin{align*}
		\EE \S_2(Y_i\X_i^\top -\M^\top)(Y_i\X_i -\M)\S_2^\top \preccurlyeq \sigma^2m_2\S_2\S_2^\top + 3m_1\fro{\M}^2\S_2\S_2^\top,
	\end{align*}
	and 
	\begin{align*}
		\EE (Y_i\X_i -\M)\S_2^\top\S_2(Y_i\X_i^\top -\M^\top) \preccurlyeq \sigma^2\tr(\S_2^\top\S_2)\I_{m_1} + 3\fro{\M}^2\tr(\S_2^\top\S_2)\I_{m_1}. 
	\end{align*}
	Moreover, we have 
	\begin{align*}
		&\quad \bigg\|\op{\S_2(Y_i\X_i^\top - \M^\top)}\bigg\|_{\psi_1} \leq \bigg\|\op{\S_2(\inp{\X_i}{\M}\X_i^\top - \M^\top)}\bigg\|_{\psi_1} + \bigg\|\epsilon_i\op{\S_2\X_i^\top}\bigg\|_{\psi_1}\\
		&\leq \psitwo{\inp{\X_i}{\M}}\cdot\psitwo{\op{\S_2\X_i^\top}} + \op{\S_2\M^\top} + \psitwo{\epsilon_i}\psitwo{\op{\S_2\X_i^\top}}\\
		&\lesssim (\fro{\M} +\sigma)\cdot\sqrt{m_1}\op{\S_2},
	\end{align*}
	where in the last line we use the fact that $\psitwo{\op{\S_2\X_i^\top}} \lesssim \sqrt{m_1}\op{\S_2}$ (see \cite{vershynin2011spectral}). 
	Now using the matrix Bernstein inequality (see e.g. Proposition 2 in \cite{koltchinskii2016perturbation}), we have with probability exceeding $1-M^{-10}$, the following event holds
	\begin{align*}
		\calE_3 = \bigg\{\op{\S_2\S_1^\top}\lesssim (\fro{\M} +\sigma)\cdot\op{\S_2}\frac{\sqrt{m_1}}{\sqrt{T_1}}\log M\bigg\}.
	\end{align*}
	And thus under $\calE_1\cap \calE_2\cap\calE_3$, 
	\begin{align*}
		\op{\S_2\S_1^\top}\lesssim (\fro{\M}^2 +\sigma^2)\cdot\frac{\sqrt{m_1M}}{T_1}\log^2 M.
	\end{align*}
	
	\hspace{1cm}
	
	\noindent\textit{Upper bound for $\op{\S_1\M^\top}, \op{\S_2\M^\top}$. }
	Notice 
	\begin{align*}
		\S_1\M^\top  = \frac{1}{T_1}\sum_{i=1}^{T_1}(Y_i\X_i - \M)\M^\top.
	\end{align*}
	And it is easy to verify that 
	\begin{align*}
		&\quad\max\bigg\{\big\|\EE(Y_i\X_i - \M)\M^\top\M(Y_i\X_i - \M)\big\|, \big\|\EE\M(Y_i\X_i - \M)(Y_i\X_i - \M)\M^\top\big\|\bigg\}\\
		&\lesssim m_1(\sigma^2 + \fro{\M}^2)\op{\M}^2, 
	\end{align*}
	and 
	\begin{align*}
		&\quad\bigg\|\op{(Y_i\X_i - \M)\M^\top}\bigg\|_{\psi_1} \leq \psitwo{\epsilon_i}\psitwo{\op{\X_i\M^\top}} + \op{\M}^2 + \psitwo{\inp{\X_i}{\M}}\psitwo{\op{\X_i\M^\top}}\\
		&\lesssim \sqrt{m_1}(\sigma + \fro{\M})\op{\M}. 
	\end{align*}
	Using matrix Bernstein inequality again, and we see with probability exceeding $1-M^{-10}$, the following event holds,
	\begin{align*}
		\calE_4 = \bigg\{\op{\S_1\M^\top} \lesssim (\sigma + \fro{\M})\op{\M}\sqrt{\frac{m_1}{T_1}}\log M\bigg\}. 
	\end{align*}

	\hspace{1cm}
	
	\noindent\textit{Upper bound for $\frac{1}{T_1-1}\left(\frac{1}{2T_1}\sum_{i=1}^{T_1}Y_i\tilde Y_i(\X_i\tilde\X_i^\top + \tilde\X_i\X_i^\top) - \M\M^\top\right)$. }
	In fact, we have
	\begin{align*}
		\frac{1}{T_1}\sum_{i=1}^{T_1}Y_i\tilde Y_i\X_i\tilde\X_i^\top- \M\M^\top &= \frac{1}{T_1}\sum_{i=1}^{T_1}Y_i\X_i(\tilde Y_i \tilde \X_i^\top - \M^\top) + \frac{1}{T_1}\sum_{i=1}^{T_1}(Y_i\X_i - \M)\M^\top.
	\end{align*}
	Here the second term is $\S_1\M^\top$, which is just bounded above. Now we conditioned on $Y_i\X_i, i = 1,\cdots, T_1$. One can similarly show 
	\begin{align*}
		\bigg\|\op{Y_i\X_i(\tilde Y_i \tilde \X_i^\top - \M^\top)}\bigg\|_{\psi_1} \lesssim \sqrt{m_1}(\sigma + \fro{\M})\op{Y_i\X_i}. 
	\end{align*}
	And 
	\begin{align*}
		&\quad\max\bigg\{\big\|\EE \sum_{i=1}^{T_1}Y_i^2\X_i(\tilde Y_i\tilde\X_i - \M)(\tilde Y_i\tilde\X_i - \M)\X_i^\top\big\|, \big\|\EE\sum_{i=1}^{T_1}Y_i^2(\tilde Y_i\tilde\X_i - \M)\X_i^\top\X_i(\tilde Y_i\tilde\X_i - \M)\big\|\bigg\}\\
		&\lesssim m_1(\sigma^2 + \fro{\M}^2)\max\bigg\{\left\|\sum_{i=1}^{T_1}Y_i^2\X_i\X_i^\top\right\|, \left\|\sum_{i=1}^{T_1}Y_i^2\X_i^\top\X_i\right\|\bigg\}.
	\end{align*}
	And we have from matrix Bernstein inequality again, and we see with probability exceeding $1-M^{-10}$, the following event holds,
	\begin{align*}
		\calE_5 = \bigg\{&\op{\frac{1}{T_1}\sum_{i=1}^{T_1}Y_i\X_i(\tilde Y_i \tilde \X_i^\top - \M^\top)} \lesssim \sqrt{m_1}(\sigma + \fro{\M})\frac{\log M}{T_1}\max_{i}\op{Y_i\X_i}\\
		&\quad + \sqrt{m_1}(\sigma + \fro{\M})\frac{\sqrt{\log M}}{T_1}\sqrt{\max\bigg\{\left\|\sum_{i=1}^{T_1}Y_i^2\X_i\X_i^\top\right\|, \left\|\sum_{i=1}^{T_1}Y_i^2\X_i^\top\X_i\right\|\bigg\}}
		\bigg\}.
	\end{align*}
	Now we consider the following event 
	\begin{align*}
		\calE_6 = \bigg\{\max_i \op{Y_i\X_i}\lesssim \sqrt{M}(\sigma + \fro{\M})\log T_1\bigg\}.
	\end{align*}
	Since $\psione{\op{Y_i\X_i}}\lesssim \sqrt{M}(\sigma + \fro{\M})$, $\calE_6$ holds with probability exceeding $1-T_1^{-10}$. 
	So we conclude on $\calE_5\cap\calE_6$, we have 
	\begin{align*}
		\op{\frac{1}{T_1}\sum_{i=1}^{T_1}Y_i\X_i(\tilde Y_i \tilde \X_i^\top - \M^\top)} \lesssim (\fro{\M}^2 + \sigma^2)\sqrt{\frac{m_1M\log M\log^2 T_1}{T_1}}
	\end{align*}
	
	\hspace{1cm}
	
	\noindent\textit{Finalize the proof for subspace. }
	Under $\bigcap_{k=1}^6\calE_k$, we have 
	\begin{align*}
		\op{\tilde\N_j - \N_j} &\lesssim (\sigma^2 + \fro{\M^2}) \frac{(m_1M)^{1/2}\log^2M}{T_1} + (\sigma + \fro{\M})\op{\M}\sqrt{\frac{m_1}{T_1}}\log M\\
		&\quad + (\sigma^2 + \fro{\M^2})\frac{(m_1M\log M \log^2 T_1)^{1/2}}{T_1^{3/2}}\\
		&\lesssim (\sigma^2 + \fro{\M^2}) \frac{(m_1M)^{1/2}\log^2M}{T_1} + (\sigma + \fro{\M})\op{\M}\sqrt{\frac{m_1}{T_1}}\log M
	\end{align*}
	as long as $T_1\geq \log^2 T_1$. We now plug in the $m_1, m_2, M$ and we obtain 
	\begin{align*}
		\op{\tilde\N_j - \N_j} &\lesssim_m(\sigma^2 + \fro{\M^2}) \frac{\big((d^*)^{1/2} + d_j\big)\log^2\dmax}{T_1} + (\sigma + \fro{\M})\op{\M}\sqrt{\frac{d_j}{T_1}}\log \dmax, 
	\end{align*}
	where $\lesssim_m$ hides constant depending only on the dimension $m$ of the tensor. Finally, we conclude with probability exceeding $1-6\dmax^{-10}$, 
	\begin{align*}
		\op{\hat\U_j\hat\U_j^\top - \U_j^*\U_j^{*\top}} &\lesssim_m (\sigma^2/\lambda_{\submin}^2 + \fro{\M}^2/\lambda_{\submin}^2) \frac{\big((d^*)^{1/2} + d_j\big)\log^2\dmax}{T_1} \\
		&\quad+ (\sigma/\lambda_{\submin} + \fro{\M}/\lambda_{\submin})\kappa_0\sqrt{\frac{d_j}{T_1}}\log \dmax. 
	\end{align*}

	\hspace{1cm}
	
	\noindent\textit{Estimation of core tensor. }
	Now we consider the accuracy for the core tensor estimation. 
	For notation simplicity, we shall use $\bcalX_t$ instead of $\bcalX_{t-T_1}$ and then $\hat\U_{j}, j\in[m]$ is independent of $\{\bcalX_t\}_{t=1}^{T_2}$. We denote the loss function 
	\begin{align*}
		L(\bcalC) = \frac{1}{2T_2} \sum_{i=1}^{T_2}\big(\inp{\bcalX_t}{\bcalC\times_1\hat\U_1\cdots\times_m\hat\U_m} - Y_t\big)^2. 
	\end{align*}
	And simple computation shows $\nabla L(\bcalC) =\frac{1}{T_2} \sum_{i=1}^{T_2} \big(\inp{\bcalX_t}{\bcalC\times_{j=1}^m\hat\U_j} - Y_t\big)\bcalX_t\times_{j=1}^m\hat\U_j^\top$, and 
	\begin{align*}
		\EE\nabla L(\bcalC) = \bcalC - \bcalC^*\times_{j=1}^m(\hat\U_j^\top\U_j^{*}). 
	\end{align*}
	It would also be helpful to notice $\nabla L(\hat\bcalC) = 0$ since $\hat\bcalC$ is the least square estimator. 
	We now decompose 
	\begin{align*}
		\fro{\hat \bcalC - \bcalC^*}^2 &= \inp{\hat \bcalC - \bcalC^*}{\hat \bcalC - \bcalC^*} = \inp{\hat \bcalC - \bcalC^*}{\EE\nabla L(\hat\bcalC) - \EE\nabla L(\bcalC^*)}\\
		&= \underbrace{\inp{\hat \bcalC - \bcalC^*}{\EE\nabla L(\hat\bcalC) -\nabla L(\hat\bcalC) }}_{=:\beta_1}- \underbrace{\inp{\hat \bcalC - \bcalC^*}{\EE\nabla L(\bcalC^*)}}_{\beta_2}. 
	\end{align*}
	Using the expression for $\EE\nabla L(\hat\C)$ and $\nabla L(\hat\bcalC)$ above, we can further decompose $\beta_1$ as 
	\begin{align*}
		\beta_1 &= - \frac{1}{T_2}\sum_{t=1}^{T_2} \inp{\bcalX_t\times_j\hat\U_j^\top}{\hat\bcalC - \bcalC^*\times_{j=1}^m(\hat\U_j^\top\U_j^{*})}\inp{\bcalX_t\times_j\hat\U_j^\top}{\hat\bcalC - \bcalC^*}\\
		&\quad +\inp{\hat \bcalC - \bcalC^*}{\hat\bcalC - \bcalC^*\times_{j=1}^m(\hat\U_j^\top\U_j^{*})} + \frac{1}{T_2}\sum_{t=1}^{T_2}\epsilon_t\inp{\bcalX_t\times_j\hat\U_j^\top}{\hat\bcalC - \bcalC^*}. 
	\end{align*}
	And we have 
	\begin{align*}
		\beta_{1,1}&:= \bigg|- \frac{1}{T_2}\sum_{t=1}^{T_2} \inp{\bcalX_t\times_j\hat\U_j^\top}{\hat\bcalC - \bcalC^*\times_{j=1}^m(\hat\U_j^\top\U_j^{*})}\inp{\bcalX_t\times_j\hat\U_j^\top}{\hat\bcalC - \bcalC^*} \\
		&\hspace{6cm}+\inp{\hat \bcalC - \bcalC^*}{\hat\bcalC - \bcalC^*\times_{j=1}^m(\hat\U_j^\top\U_j^{*})}\bigg|\\
		&=|\inp{\frac{1}{T_2}\sum_{t=1}^{T_2}\g_i\g_i^\top - \I}{\a\b^\top}|, 
	\end{align*}
	where $\g_i = \vec(\bcalX_t\times_j\hat\U_j^\top)\in\RR^{r^*}$ satisfies $\psitwo{\inp{\g_i}{\m}}\leq C_0\ltwo{\m}$ (notice we here implicitly use the fact $\hat\U_j$ and $\bcalX_i$ are independent), and $$\a = \vec(\hat \bcalC - \bcalC^*), \b = \vec\big(\hat\bcalC - \bcalC^*\times_{j=1}^m(\hat\U_j^\top\U_j^{*})\big). $$ 
	Using standard $\epsilon$-net argument, and we can see with probability exceeding $1-e^{-\delta_1}$ (for some $\delta_1\leq T_2$ to be specified),
	\begin{align*}
		\op{\frac{1}{T_2}\sum_{t=1}^{T_2}\g_i\g_i^\top - \I} \lesssim \sqrt{\frac{r^* + \delta_1}{T_2}}. 
	\end{align*} 
	Therefore, we have 
	\begin{align*}
		|\inp{\frac{1}{T_2}\sum_{t=1}^{T_2}\g_i\g_i^\top - \I}{\a\b^\top}| &\leq \sqrt{\frac{r^* + \delta_1}{T_2}}\nuc{\a\b^\top}\\
		&= \sqrt{\frac{r^* + \delta_1}{T_2}}\fro{\hat \bcalC - \bcalC^*} \fro{\hat\bcalC - \bcalC^*\times_{j=1}^m(\hat\U_j^\top\U_j^{*})}. 
	\end{align*}
	We have 
	\begin{align}\label{decomposition}
		&\quad \hat\bcalC - \bcalC^*\times_{j=1}^m(\hat\U_j^\top\U_j^{*}) = \hat\bcalC - \bcalC^* + \bcalC^* -  \bcalC^*\times_{j=1}^m(\hat\U_j^\top\U_j^{*})\notag\\
		&= \hat\bcalC - \bcalC^* +\sum_{j=1}^m \bcalC^*\times_1\I_{r_1}\cdots \times_j(\I_{r_j} - \hat\U_j^\top\U_j^{*})\times_{j+1}(\hat\U_{j+1}^\top\U_{j+1}^{*}) \cdots \times_m(\hat\U_{m}^\top\U_{m}^{*}).
	\end{align}
	Therefore 
	\begin{align*}
		\fro{\hat\bcalC - \bcalC^*\times_{j=1}^m(\hat\U_j^\top\U_j^{*})} &\leq \fro{\hat\bcalC - \bcalC^*} + \lambda_{\submax}\sum_{j=1}^m\fro{\I_{r_j} - \hat\U_j^\top\U_j^{*}}\\
		&\leq  \fro{\hat\bcalC - \bcalC^*} + \lambda_{\submax}\sum_{j=1}^m\fro{\hat\U_j\hat\U_j^\top - \U_j^*\U_j^{*\top}}.
	\end{align*}
	So we conclude 
	\begin{align*}
		\beta_{1,1} \leq \sqrt{\frac{r^* + \delta_1}{T_2}}\fro{\hat \bcalC - \bcalC^*}\bigg(\fro{\hat \bcalC - \bcalC^*} + \lambda_{\submax}\sum_{j=1}^m\fro{\hat\U_j\hat\U_j^\top - \U_j^*\U_j^{*\top}}\bigg).
	\end{align*}
	On the other hand, using similar $\epsilon$-net argument, we conclude with probability exceeding $1-e^{-\delta_1}$, 
	\begin{align*}
		\beta_{1,2} &:= \bigg|\frac{1}{T_2}\sum_{t=1}^{T_2}\epsilon_t\inp{\bcalX_t\times_j\hat\U_j^\top}{\hat\bcalC - \bcalC^*}\bigg| \leq \sqrt{\frac{r^* + \delta_1}{T_2}}\sigma\fro{\hat\bcalC - \bcalC^*}. 
	\end{align*}
	For $\beta_2$, using once again the decomposition in \eqref{decomposition}, 
	\begin{align*}
		|\beta_2| &= |\inp{\hat \bcalC - \bcalC^*}{\EE\nabla L(\bcalC^*)} = \inp{\hat \bcalC - \bcalC^*}{\bcalC^* - \bcalC^*\times_{j=1}^m(\hat\U_j^\top\U_j^{*})}|\\
		&=\big|\sum_{j=1}^m\inp{\hat \bcalC - \bcalC^*}{ \bcalC^*\times_1\I_{r_1}\cdots \times_j(\I_{r_j} - \hat\U_j^\top\U_j^{*})\times_{j+1}(\hat\U_{j+1}^\top\U_{j+1}^{*}) \cdots \times_m(\hat\U_{m}^\top\U_{m}^{*})}\big|\\
		&\leq \sum_{j=1}^m \fro{\hat\bcalC - \bcalC^*}\cdot \lambda_{\submax}\fro{\hat\U_j\hat\U_j^\top - \U_j^*\U_j^{*\top}}. 
	\end{align*}
	Putting everything together and we have 
	\begin{align*}
		\fro{\hat\bcalC - \bcalC^*}^2 &\leq \sqrt{\frac{r^* + \delta_1}{T_2}}\fro{\hat \bcalC - \bcalC^*}\bigg(\fro{\hat \bcalC - \bcalC^*} + \lambda_{\submax}\sum_{j=1}^m\fro{\hat\U_j\hat\U_j^\top - \U_j^*\U_j^{*\top}} + \sigma\bigg) \\
		&\quad + \sum_{j=1}^m \fro{\hat\bcalC - \bcalC^*}\cdot \lambda_{\submax}\fro{\hat\U_j\hat\U_j^\top - \U_j^*\U_j^{*\top}}.
	\end{align*}
	As a result, as long as $\sqrt{\frac{r^* + \delta_1}{T_2}}\leq \frac{1}{2}$ we have 
	\begin{align*}
		\fro{\hat\bcalC - \bcalC^*}\lesssim \lambda_{\submax}\sum_{j=1}^m\fro{\hat\U_j\hat\U_j^\top - \U_j^*\U_j^{*\top}}  +\sqrt{\frac{r^* + \delta_1}{T_2}}\sigma
	\end{align*}
	Finally, we have 
	\begin{align*}
		\fro{\hat\bcalC\times_{j=1}^m\hat\U_j - \bcalC^*\times_{j=1}^m\U_j^*} \leq \fro{\hat\bcalC - \bcalC^*} + \lambda_{\submax}\sum_{j=1}^m\fro{\hat\U_j\hat\U_j^\top - \U_j^*\U_j^{*\top}} \leq c_m\lambda_{\submin} 
	\end{align*}
	under the given sample size condition and SNR condition. 
\end{proof}

\subsection{Proof of Theorem~\ref{thm:poissonregression}}
Since $h_{\theta}(\theta,y) = -I^{-1}y + e^\theta$ and $g_t(\theta) = I^{-1}e^{\theta}$, 
Assumption \ref{assump:GLM-loss} holds with $\gamma_{\alpha} = e^{-\alpha}$ and $\mu_{\alpha} = e^{\alpha}$. Observe that 
\begin{align*}
	\max_{t=0}^T |h_{\theta}(\inp{\bcalX_t}{\bcalT^*},Y_t)| = I^{-1}\max_{t=0}^T|Y_t - I\exp(\inp{\bcalX_t}{\bcalT^*})|.
\end{align*}
Poisson distribution is sub-exponential (see Lemma \ref{lemma:poisson:psi1}) and $\|Y_t - I\exp(\inp{\bcalX_t}{\bcalT^*})\|_{\psi_1}\leq \log^{-1}(2I^{-1}e^{-\alpha} + 1)$, implying that $\errinf= \bar C_1 I^{-1}\log\dmax$  with probability exceeding $1-2T\dmax^{-10}$, where $\bar C_1 = \log^{-1}(2I^{-1}e^{-\alpha} + 1)$ holds. Similarly, by definition, $	\errtwo^2 = \max_{t=0}^T \max_{|\theta|\leq \alpha}g_t(\theta)= I^{-1}e^{\alpha}.$ 
Consequently, $\errratio = \bar c_1 I^{1/2}\log^{-1}\dmax$ with $\bar c_1 = e^{\alpha/2}\log^{-1}(2I^{-1}e^{-\alpha} + 1)$. The proof is concluded by Theorem \ref{thm:gen}.

\subsection{Proof of Theorem \ref{thm:init:poisson}}
We first state two lemmas that are useful in the proof.

\begin{lemma}\label{lemma:poisson:psi1}
	Let $W\sim\text{Pois}(\nu)$, then $\psione{{W-\nu}}\leq \frac{1}{\log(\frac{2}{\nu} + 1)}$. 
\end{lemma}
\begin{proof}
	This is as a result of $\EE e^{\lambda(W-\nu)} = \exp\big(\nu(e^{\lambda }- \lambda - 1)\big)$. And if we set $\lambda =\log(\frac{2}{\nu} + 1)$, $\EE e^{\lambda(W-\nu)} = 2$. The result is then from the equivalent definition of $\psi_1$ norm. 
\end{proof}

\begin{lemma}\label{lemma:poisson}
	Let $W\sim\text{Pois}(\nu)$ and $W' = W\mathds{1}(\frac{1}{10}\nu\leq W\leq 10\nu)+\nu\mathds{1}(W\notin[\frac{1}{10}\nu,10\nu])$. 
	Then there exists absolute constant $C_0$ such that for all $\nu\geq C_0$, we have 
	\begin{align*}
		\EE\big(\log(W'+\frac{1}{2}) - \log\nu\big)^2 \leq \frac{C}{\nu}
	\end{align*}
	for some absolute constant $C>0$. 
\end{lemma}
\begin{proof}
	We have 
	\begin{align*}
		\EE\big(\log(W'+\frac{1}{2}) - \log\nu\big)^2  = \EE\big(\log(W'+\frac{1}{2}) -\EE\log(W'+\frac{1}{2})\big)^2 + \big(\EE\log(W'+\frac{1}{2}) - \log \nu\big)^2.
	\end{align*}
	Using Lemma B3 in \cite{shi2022high}, we have
	\begin{align*}
		\psitwo{\log(W'+\frac{1}{2}) -\EE\log(W'+\frac{1}{2})}\leq \frac{C}{\sqrt{\nu}}.
	\end{align*}
	And thus 
	\begin{align*}
		\EE\big(\log(W'+\frac{1}{2}) -\EE\log(W'+\frac{1}{2})\big)^2 \leq \frac{C}{\nu}.
	\end{align*}
	On the other hand, from Lemma B1 in \cite{shi2022high}, we have 
	\begin{align*}
		\big(\EE\log(W'+\frac{1}{2}) - \log \nu\big)^2 \leq \frac{C}{\nu}.
	\end{align*}
	These together give the desired result. 
\end{proof}

\hspace{1cm}

\begin{proof}[Proof of Theorem \ref{thm:init:poisson}. ]
We consider the following event
\begin{align*}
	\calE_1 = \bigg\{|\inp{\bcalX_t}{\bcalT^*}|\leq \underbrace{C_0\fro{\bcalT^*}\log^{1/2}\dmax}_{:=\tau_0}, \quad\forall t\in[T]\bigg\},
\end{align*}
which holds with probability exceeding $1-T\dmax^{-100}$. 
Also notice form Assumption \ref{assump:GLM-trueT}, $\tau_0\leq \alpha$.  
We denote $\nu_t = I\exp(\inp{\bcalX_t}{\bcalT^*})$. 
Moreover we set $Y_t' = Y_t\mathds{1}(\frac{1}{10}\nu_t\leq Y_t\leq 10\nu_t)+\nu_t\mathds{1}(Y_t\notin[\frac{1}{10}\nu_t,10\nu_t])$. We also consider the event 
\begin{align*}
	\calE_2 = \bigg\{Y_t = Y_t', \quad \forall t\in[T]\bigg\}. 
\end{align*}
Then we have 
\begin{align*}
	\PP(Y_t\neq Y_t') \leq \PP(Y_t\geq 10\nu_t) + \PP(Y_t\leq \frac{1}{10}\nu_t).  
\end{align*}
And 
\begin{align*}
	\PP(Y_t\geq 10\nu_t)  &\leq   \PP(Y_t\geq 10\nu_t|\calE_1)  + \PP(\calE_1^c)\\
	&= \PP\big(Y_t\geq 10\nu_t\big| \nu_t\in [I\exp(-\tau_0), I\exp(\tau_0)]\big) + \PP(\calE_1^c)\\
	&\leq \exp\big(-I\exp(-\tau_0)\big) + T\dmax^{-100}\\
	&\leq 2T\dmax^{-100}. 
\end{align*}
where second inequality is from the tail probability of Poisson (see e.g. Lemma E.8 in \cite{han2022optimal}), and the last inequality holds as long as $I\geq C_0\exp(\tau_0)\log \dmax$. We can similarly bound the second term using again Lemma E.8 in \cite{han2022optimal}, and we conclude $\PP(\calE_2)\geq 1- 4T\dmax^{-100}$. Now we proceed under the event $\calE_1\cap\calE_2$. Under these events, we have 
\begin{align*}
	\op{\frac{1}{T}\sum_{t=1}^T \log\frac{Y_t' + \frac{1}{2}}{I}\X_t - \T^*} &\leq \op{\frac{1}{T}\sum_{t=1}^T \log\frac{Y_t' + \frac{1}{2}}{I}\X_t - \EE\log\frac{Y_1' + \frac{1}{2}}{I}\X_1} \\
	&\quad+ \op{ \EE\log\frac{Y_1' + \frac{1}{2}}{I}\X_1- \T^*},
\end{align*}
where $\T^* = \calM_{\calI}(\bcalT^*)$. 
Also, we have 
\begin{align*}
	\big|\log\frac{Y_t' + \frac{1}{2}}{I}\big| \leq \big|\log\frac{10Ie^{\tau_0} + \frac{1}{2}}{I}\big|\leq \log 20  +\tau_0,
\end{align*}
where the last inequality is due to $\frac{1}{2}\leq 10Ie^{\tau_0}$.
Therefore
\begin{align*}
	\big\|\op{\log\frac{Y_t' + \frac{1}{2}}{I}\X_t}\big\|_{\psi_2} \leq \sqrt{D_{\submax}}(\log 20  +\tau_0).
\end{align*}
Also, 
\begin{align*}
	\max\big\{\op{\EE \log^2\frac{Y_t' + \frac{1}{2}}{I}\X_t\X_t^\top}, \op{\EE \log^2\frac{Y_t' + \frac{1}{2}}{I}\X_t^\top\X_t}\big\} \leq \dmax(\log 20  +\tau_0)^2. 
\end{align*}
Using the matrix Bernstein inequality (see e.g. Proposition 2 in \cite{koltchinskii2016perturbation}), we have with probability exceeding $1-\dmax^{-100}$, 
\begin{align*}
	\op{\frac{1}{T}\sum_{t=1}^T \log\frac{Y_t' + \frac{1}{2}}{I}\X_t - \EE\log\frac{Y_1' + \frac{1}{2}}{I}\X_1} &\lesssim \bigg(\sqrt{\frac{ \log \dmax}{T}}  + \frac{ \log \dmax}{T}\bigg)\sqrt{\dmax}(\log 20  +\tau_0).
\end{align*}
Next we consider $\op{ \EE\log\frac{Y_1' + \frac{1}{2}}{I}\X_1- \T^*}$:
\begin{align*}
	\op{ \EE\log\frac{Y_1' + \frac{1}{2}}{I}\X_1- \T^*} &= \op{ \EE\big(\log\frac{Y_1' + \frac{1}{2}}{I}- \inp{\bcalX_1}{\bcalT^*}\big)\X_1}\\
	&=\sup_{\ltwo{\x}=\ltwo{\y}=1} \x^\top\EE\big(\log\frac{Y_1' + \frac{1}{2}}{I}- \inp{\bcalX_1}{\bcalT^*}\big)\X_1 \y\\
	&\leq \sup_{\ltwo{\x}=\ltwo{\y}=1}  \big[\EE\big(\log\frac{Y_1' + \frac{1}{2}}{I}- \inp{\bcalX_1}{\bcalT^*}\big)^2\big]^{1/2}\big[\EE(\x^\top\X_1 \y)^2]^{1/2}\\
	&= \big[\EE\big(\log(Y_1' + \frac{1}{2})-\log I- \inp{\bcalX_1}{\bcalT^*}\big)^2\big]^{1/2}\\
	&= \big[\EE \big(\log(Y_1' + \frac{1}{2}) - \log\nu_1\big)^2\big]^{1/2}\\
	&\leq \frac{C\exp(\tau_0/2)}{\sqrt{I}},
\end{align*}
where the last inequality is from Lemma \ref{lemma:poisson} and $\nu_1\geq I\exp(-\tau_0)$. 
In conclusion, we have with probability exceeding $1-5T\dmax^{-100}$, 
\begin{align*}
	\op{\frac{1}{T}\sum_{t=1}^T \log\frac{Y_t' + \frac{1}{2}}{I}\X_t - \T^*}\lesssim \sqrt{\frac{\dmax\log \dmax}{T}}(\log 20  +\tau_0) + \frac{\exp(\tau_0/2)}{\sqrt{I}}. 
\end{align*}
Next use Lemma 18 in \cite{shen2023computationally}, we conclude 
\begin{align*}
	\op{\tilde\T - \T^*} \lesssim \sqrt{\frac{\dmax\log \dmax}{T}}(\log 20  +\tau_0) + \frac{\exp(\tau_0/2)}{\sqrt{I}}
\end{align*}
under the conditions on intensity and sample size. Since $\rank(\hat\T), \rank(\T^*)\leq R$, we have 
\begin{align*}
	\fro{\tilde\T - \T^*} \lesssim \sqrt{\frac{R\dmax\log \dmax}{T}}(\log 20  +\tau_0) + \frac{\sqrt{R}\exp(\tau_0/2)}{\sqrt{I}}.
\end{align*}
Since $\calM_{\calI}$ is linear and preserve the norm, we have 
\begin{align*}
	\fro{\tilde\bcalT - \bcalT^*} \lesssim \sqrt{\frac{R\dmax\log \dmax}{T}}(\log 20  +\tau_0) + \frac{\sqrt{R}\exp(\tau_0/2)}{\sqrt{I}}.
\end{align*}
Finally using Lemma \ref{lemma:perturbation}, we conclude 
\begin{align*}
	\fro{\hat\bcalT - \bcalT^*} \lesssim \sqrt{\frac{R\dmax\log \dmax}{T}}(\log 20  +\tau_0) + \frac{\sqrt{R}\exp(\tau_0/2)}{\sqrt{I}}\leq c_me^{-2\alpha}\lambda_{\submin}^{-1},
\end{align*}
where the last inequality holds under the intensity and sample size conditions. 
\end{proof}

\subsection{Proof of Theorem \ref{thm:localrefinement:completion}}
In this section we present the proof of \cref{thm:localrefinement:completion}. 
We first introduce some notations. For any $\omega\in[d_1]\times\cdots\times[d_m]$, denote $\bcalE_\omega = \e_{\omega_1}\otimes\cdots\otimes\e_{\omega_m}$ be the standard basis for $\RR^{d_1\times\cdots\times d_m}$. Also, recall $\bcalT^*$ admits the decomposition $\bcalT^* = \bcalC^*\cdot(\U_1^*,\ldots,\U_m^*)$, and denote $\bcalT_t = \bcalC_t\cdot(\U_{t,1},\ldots,\U_{t,m})$.
Define the event 
\begin{align*}
	\calE_t &= \bigg\{\forall 0\leq l\leq t, \fro{\bcalT_l-\bcalT^*}^2\leq 2(1-\frac{\eta}{4})^{l}\fro{\bcalT_0-\bcalT^*}^2 + 10\eta\dof\sigma^2,\\
	&\linf{\bcalT_l-\bcalT^*} \lesssim_m (1-\frac{\eta}{4})^{\frac{l}{2}}\kappa_0^2(\frac{\mu_0^mr^*}{d^*})^{1/2}\lambda_{\submax} + \kappa_0^3(\frac{\mu_0^mr^*}{d^*})^{1/2}(\eta\dof)^{1/2}\sigma,\\
	&\hspace{3cm}|\epsilon_l|\lesssim \sigma\sqrt{\log\dmax},\hspace{0.5cm} \forall j\in[m], \ltwoinf{\U_{l,j}}^2 \leq \frac{\mu_0r_j}{d_j}\bigg\},
\end{align*}
where $\mu_0 = 20\kappa_0^2\mu$. 
In the following proof, at time step $t$, we are conditioning on the event $\calE_t$.  For notation simplicity, we shall drop the subscripts $t$ in the expression of $\bcalT_t$, i.e. $\bcalT_t = \bcalC\cdot(\U_1,\cdots,\U_m)$.
We also introduce the spikiness of a tensor, which is closely related to the incoherence. 
\begin{definition}\label{def:spikiness}
	Let $\bcalT\in\RR^{d_1\times \cdots\times d_m}$. The spikiness of $\bcalT$ is defined as 
	$$\spiki(\bcalT) := \frac{\sqrt{d_1\cdots d_m}\linf{\bcalT}}{\fro{\bcalT}}.$$
\end{definition}
Their relation between spikiness and incoherence is summarized in Lemma \ref{lemma:incohspiki}. And therefore, $\spiki(\bcalT_t) \leq \sqrt{r^*/\rmax}\kappa_0\cdot \mu_0^{m/2} =: \nu_0$.

For a tensor $\bcalT$ of Tucker rank $\r$, we define 
$$\sigma_{\submin}(\bcalT) = \min_{i=1}^m\sigma_{r_i}(\bcalT_{(i)}), \quad \sigma_{\submax}(\bcalT) = \max_{i=1}^m\sigma_{1}(\bcalT_{(i)}).$$
And recall we denote $\lambda_{\submin}= \sigma_{\submin}(\bcalT^*)$ and $\lambda_{\submax} = \sigma_{\submax}(\bcalT^*)$.

From $\fro{\bcalT_l-\bcalT^*}^2\leq 2(1-\frac{\eta}{4})^{l}\fro{\bcalT_0-\bcalT^*}^2 + 10\eta\dof\sigma^2$, we see that 
\begin{align}\label{fro:Tl}
	\fro{\bcalT_l} \leq \fro{\bcalT_l-\bcalT^*} + \fro{\bcalT^*} \leq \sqrt{2\rmin}\lambda_{\submax},
\end{align}
where the last inequality holds as long as $\eta\dof(\sigma/\lambda_{\submin})^2\lesssim \rmin\kappa_0^2$. And 
\begin{align}\label{lowerbound:signalstrength:Tl}
	\sigma_{\submin}(\bcalT_l) \geq \sigma_{\submin}(\bcalT^*) - \fro{\bcalT^*-\bcalT_l} \geq \frac{1}{2}\lambda_{\submin},
\end{align}
where the last line holds as long as $\eta\dof\sigma^2\lesssim\lambda_{\submin}^2$.

We now derive some bounds that will be used frequently.
Recall we write $\bcalT_t = \bcalC\cdot(\U_1,\cdots,\U_m)$ where $\U_j$ are $\mu_0$ incoherent under $\calE_t$. For any $\omega\in[d_1]\times\cdots\times[d_m]$,
$$\calP_{\TT_t}\bcalE_\omega = \bcalE_{\omega}\times_{j=1}^m \P_{\U_j} + \sum_{j=1}^m\bcalC\times_{k\neq j}\U_k\times_j\W_j$$
with $\W_j = \P_{\U_j}^\perp(\bcalE_\omega)_{(j)}(\otimes_{k\neq j}\U_k)\bcalC_{(j)}^\dagger$.
Notice these components are mutually orthogonal, using the incoherence of $\U_j$,
\begin{align}\label{uniform:ptE}
	\fro{\calP_{\TT}\bcalE_\omega}^2 &= \fro{\bcalE_{\omega}\times_{j=1}^m \P_{\U_j}}^2 + \sum_{j=1}^m\fro{\bcalC\times_{k\neq j}\U_k\times_j\W_j}^2\leq \frac{\mu_0^mr^*}{d^*}  +\sum_{j=1}^m \frac{\mu_0^{m-1}r_j^-}{d_j^-}\notag\\
	&\leq \frac{\mu_0^{m-1}r^*}{d^*}(\mu_0+\sum_{j=1}^md_j/r_j)\leq (m+1)\frac{\mu_0^{m-1}r^*}{d^*}\frac{\dmax}{\rmin},
\end{align}
where the last inequality holds since from the definition of incoherence, we have $\mu_0\rmin\leq \dmax$.
As a result,
\begin{align}\label{uniform:ptx}
	\fro{\calP_{\TT}\bcalX_t}^2 \leq (m+1)\mu_0^{m-1}r^*\frac{\dmax}{\rmin}.
\end{align}
Meanwhile,
\begin{align}\label{expectation:ptx}
	\EE_t\fro{\bcalP_{\TT_t}\bcalX_t}^2 &= \sum_{\omega} (\fro{\bcalE_{\omega}\times_{j=1}^m \P_{\U_j}}^2 + \sum_{j=1}^m\fro{\bcalC\times_{k\neq j}\U_k\times_j\W_j}^2)\leq  r^* + \sum_{j=1}^m d_jr_j = \dof,
\end{align}
since 
\begin{align*}
	\sum_{\omega}\fro{\bcalC\times_{k\neq j}\U_k\times_j\W_j}^2 = \fro{\P_{\U_j}^\perp}^2\cdot\fro{(\otimes_{k\neq j}\U_k)\bcalC_{(j)}^\dagger\bcalC_{(j)}(\otimes_{k\neq j}\U_k)^\top}^2\leq d_jr_j.
\end{align*}
Next we bound $\linf{\bcalT_t - \bcalT^*}$ that is dependent on $\fro{\bcalT_t-\bcalT^*}$. Notice 
\begin{align*}
	\bcalT_t - \bcalT^* &= (\bcalT_t - \bcalT^*)\times_{j=1}^m(\P_{\U_j^*} + \P_{\U_j^*}^\perp) \\
	&=(\bcalT_t - \bcalT^*)\times_{j=1}^m\P_{\U_j^*}+ \sum_{j=1}^m (\bcalT_t - \bcalT^*)\times_{k=1}^{j-1}\P_{\U_k^*}\times_j \P_{\U_j^*}^\perp\\
	&=(\bcalT_t - \bcalT^*)\times_{j=1}^m\P_{\U_j^*}+ \sum_{j=1}^m \bcalT_t  \times_{k=1}^{j-1}\P_{\U_k^*}\times_j \P_{\U_j^*}^\perp\times_{k=j+1}^m\P_{\U_k},
\end{align*}
where the last inequality holds since $\bcalT^*\times_j \P_{\U_j^*}^\perp = 0$ and $\bcalT_t = \bcalT_t\times_j\P_{\U_j}$. For any $\omega\in[d_1]\times \cdots\times [d_m]$, using the incoherence of $\U_j$ and $\U_j^*$, we obtain
\begin{align*}
	|\inp{(\bcalT_t - \bcalT^*)\times_{j=1}^m\P_{\U_j^*}}{\bcalE_{\omega}}| = |\inp{(\bcalT_t - \bcalT^*)}{\bcalE_{\omega}\times_{j=1}^m\P_{\U_j^*}}| \leq \fro{\bcalT_t - \bcalT^*}\sqrt{\frac{\mu^mr^*}{d^*}},
\end{align*}
and
\begin{align*}
	&\quad |\inp{\bcalT_t  \times_{k=1}^{j-1}\P_{\U_k^*}\times_j \P_{\U_j^*}^\perp\times_{k=j+1}^m\P_{\U_k}}{\bcalE_\omega}|  = |\inp{\bcalT_t  \times_j \P_{\U_j^*}^\perp}{\bcalE_\omega\times_{k=1}^{j-1}\P_{\U_k^*}\times_{k=j+1}^m\P_{\U_k}}|\\
	&= |\inp{(\bcalT_t-\bcalT^*)  \times_j \P_{\U_j^*}^\perp}{\bcalE_\omega\times_{k=1}^{j-1}\P_{\U_k^*}\times_{k=j+1}^m\P_{\U_k}}|\leq \fro{\bcalT_t-\bcalT^*}\sqrt\frac{\mu_0^{m-1}r_j^-}{d_j^-}.
\end{align*}
Therefore
\begin{align}\label{T-T*infty}
	\linf{\bcalT_t - \bcalT^*} \leq \fro{\bcalT_t - \bcalT^*}\big(\sqrt{\frac{\mu^mr^*}{d^*}}+\sum_{j=1}^m\sqrt\frac{\mu_0^{m-1}r_j^-}{d_j^-}\big).
\end{align}
On the other hand, using the spikiness of the components of $\bcalT_t$, we can derive a bound that is independent of $\fro{\bcalT_t - \bcalT^*}$, and from \eqref{fro:Tl},
\begin{align}\label{T-T*infty:idpt}
	\linf{\bcalT_t- \bcalT^*}^2 \leq(\linf{\bcalT_t}+\linf{\bcalT^*})^2\leq 3\frac{\nu_0^2\rmin\lambda_{\submax}^2}{d^*}.
\end{align}
Recall $\bcalG_t = (\inp{\bcalX_t}{\bcalT_t-\bcalT^*} - \epsilon_t)\bcalX_t$ and under the event $\calE_t$, $|\epsilon_t|\lesssim\sigma \sqrt{\log\dmax}$. And $\inp{\bcalX_t}{\bcalT_t - \bcalT^*}^2 \leq d^*\linf{\bcalT_t- \bcalT^*}^2$.
Using \eqref{uniform:ptx} and \eqref{T-T*infty:idpt}, we can derive a uniform bound for $\fro{\calP_{\TT_t}\bcalG_t}^2$:
\begin{align}\label{ptg:uniform}
	\fro{\calP_{\TT_t}\bcalG_t}^2 \leq 2(\inp{\bcalX_t}{\bcalT_t - \bcalT^*}^2 + |\epsilon_t|^2)\fro{\calP_{\TT}\bcalX_t}^2\lesssim_m (\nu_0^2\rmin\lambda_{\submax}^2 + \sigma^2\log \dmax)\mu_0^{m-1}r^*\frac{\dmax}{\rmin}.
\end{align}
We also derive the bound for $\EE_t\fro{\calP_{\TT_t}\bcalG_t}^2$ using \eqref{expectation:ptx}:
\begin{align}\label{expectation:ptg}
	\EE_t \fro{\calP_{\TT_t}\bcalG_t}^2 &= \EE_t \inp{\bcalX_t}{\bcalT_t - \bcalT^*}^2\fro{\calP_{\TT_t}\bcalX_t}^2 +\sigma^2\EE_t\fro{\calP_{\TT_t}\bcalX_t}^2\notag\\
	&\leq \frac{1}{d^*}\sum_{\omega}d^*[\bcalT_t - \bcalT^*]_{\omega}^2 \cdot d^*\fro{\calP_{\TT_t}\bcalE_\omega}^2 + \dof\cdot\sigma^2\notag\\
	&\leq 2m\mu_0^{m-1}r^*\frac{\dmax}{\rmin}\fro{\bcalT_t-\bcalT^*}^2 + \dof\cdot\sigma^2.
\end{align}

\hspace{1cm}

\noindent\textit{Step 1: Bounding $\fro{\bcalT_{t+1}-\bcalT^*}^2.$}
Using triangle inequality and we get 
\begin{align}\label{t+1andt:01}
	\fro{\bcalT_{t+1} - \bcalT^*}^2 &\leq (1+\frac{2}{\eta})\fro{\textsf{HOSVD}_{\r}(\bcalT_t^+ )- \bcalT_t^+ }^2 + (1+\frac{\eta}{2})\fro{\bcalT_t^+  - \bcalT^*}^2.
\end{align}
We can use Lemma \ref{lemma:perturbation} to derive a bound for the first term. 
As shown in \eqref{ptg:uniform}, we have 
$$\eta\fro{\calP_{\TT_t}\bcalG_t}\lesssim_m\eta(\nu_0^2\rmin\lambda_{\submax}^2 + \sigma^2\log \dmax)^{1/2}\mu_0^{(m-1)/2}(r^*)^{1/2}\frac{\dmax^{1/2}}{\rmin^{1/2}}\leq \frac{1}{8}\lambda_{\submin}$$
as long as $\eta\nu_0\mu_0^{(m-1)/2}\kappa_0(\dmax r^*)^{1/2}\lesssim_m 1$ and $\eta\frac{\sigma}{\lambda_{\submin}}\mu_0^{(m-1)/2}(\frac{\dmax}{\rmin}r^*\log\dmax)^{1/2}\lesssim_m 1$. We have verified the condition in Lemma \ref{lemma:perturbation}, and thus
\begin{align}
	\fro{\textsf{HOSVD}_{\r}(\bcalT_t^+ )- \bcalT_t^+ }^2 \leq C_m\frac{\eta^4\fro{\calP_{\TT_t}\bcalG_t}^4}{\lambda_{\submin}^2}
\end{align}
for some $C_m >0$ depending only on $m$. 
From \eqref{t+1andt:01} and \eqref{ptg:uniform}, we get 
\begin{align}\label{t+1andt:02}
	\fro{\bcalT_{t+1} - \bcalT^*}^2 &\leq C_m
	\eta^3(\nu_0^2\kappa_0^2\mu_0^{m-1}\dmax r^* + (\frac{\sigma}{\lambda_{\submin}})^2\mu_0^{m-1}r^*\frac{\dmax}{\rmin}\log\dmax)\fro{\calP_{\TT_t}\bcalG_t}^2 \notag\\
	&\quad + (1+\frac{\eta}{2})\fro{\bcalT_l^+ - \bcalT^*}^2\notag\\
	&\leq 2\eta^2\fro{\calP_{\TT_t}\bcalG_t}^2 + (1+\frac{\eta}{2})\fro{\bcalT_l^+ - \bcalT^*}^2,
\end{align}
where the last inequality holds as long as $\eta\nu_0^2\kappa_0^2\mu_0^{m-1}\dmax r^*\lesssim_m 1$ and $\eta(\frac{\sigma}{\lambda_{\submin}})^2\mu_0^{m-1}r^*\frac{\dmax}{\rmin}\log \dmax\lesssim_m 1$.
Now we consider $\EE_t\fro{\bcalT_t^+ - \bcalT^*}^2$:
\begin{align*}
	\EE_t\fro{\bcalT_t^+ - \bcalT^*}^2 &= \EE_t (\fro{\bcalT_t - \bcalT^*}^2 - 2\eta\inp{\bcalT_t - \bcalT^*}{\calP_{\TT_t}\bcalG_t} + \eta^2\fro{\calP_{\TT_t}\bcalG_t}^2) \\
	&= \fro{\bcalT_t - \bcalT^*}^2 - 2\eta\fro{\calP_{\TT_t}(\bcalT_t -\bcalT^*)}^2 + \eta^2\EE_t\fro{\calP_{\TT_t}\bcalG_t}^2\\
	&\leq (1-\frac{3\eta}{2})\fro{\bcalT_t - \bcalT^*}^2 + \eta^2\EE_t\fro{\calP_{\TT_t}\bcalG_t}^2,
\end{align*}
where in the last inequality we use Lemma \ref{lemma:ptperp}.
From \eqref{expectation:ptg} and \eqref{t+1andt:02}, we get
\begin{align*}
	\fro{\bcalT_{t+1} - \bcalT^*}^2 
	&\leq 2\eta^2\fro{\calP_{\TT_t}\bcalG_t}^2 + (1+\frac{\eta}{2})\fro{\bcalT_l^+ - \bcalT^*}^2 - \EE_t(2\eta^2\fro{\calP_{\TT_t}\bcalG_t}^2 + (1+\frac{\eta}{2})\fro{\bcalT_l^+ - \bcalT^*}^2)\\ 
	&\quad+ \EE_t(2\eta^2\fro{\calP_{\TT_t}\bcalG_t}^2 + (1+\frac{\eta}{2})\fro{\bcalT_t^+ - \bcalT^*}^2)\\
	&\leq 2\eta^2\fro{\calP_{\TT_t}\bcalG_t}^2 + (1+\frac{\eta}{2})\fro{\bcalT_t^+ - \bcalT^*}^2 - \EE_t\big(2\eta^2\fro{\calP_{\TT_t}\bcalG_t}^2 + (1+\frac{\eta}{2})\fro{\bcalT_t^+ - \bcalT^*}^2\big)\\ 
	&\quad + (1-\eta)\fro{\bcalT_t - \bcalT^*}^2  + 4\eta^2\EE_t\fro{\calP_{\TT_t}\bcalG_t}^2\\
	&\leq 2\eta^2\fro{\calP_{\TT_t}\bcalG_t}^2 + (1+\frac{\eta}{2})\fro{\bcalT_t^+ - \bcalT^*}^2 - \EE_t\big(2\eta^2\fro{\calP_{\TT_t}\bcalG_t}^2 + (1+\frac{\eta}{2})\fro{\bcalT_t^+ - \bcalT^*}^2\big)\\ 
	&\quad + (1-\frac{\eta}{2})\fro{\bcalT_t - \bcalT^*}^2  + 4\eta^2\dof\cdot\sigma^2,
\end{align*}
where the last inequality holds as long as $\eta\mu_0^{m-1}r^*\frac{\dmax}{\rmin}\lesssim_m 1$. Telescoping this equality and we get
\begin{align}\label{t+1t}
	&\fro{\bcalT_{t+1} - \bcalT^*}^2  \leq (1-\frac{\eta}{2})^{t+1}\fro{\bcalT_{0} - \bcalT^*}^2+8\eta\dof\cdot\sigma^2\notag\\ &+\sum_{l=0}^{t}\underbrace{(1-\frac{\eta}{2})^{t-l}\big[2\eta^2\fro{\calP_{\TT_l}\bcalG_l}^2 + (1+\frac{\eta}{2})\fro{\bcalT_l^+ - \bcalT^*}^2 - \EE_t\big(2\eta^2\fro{\calP_{\TT_l}\bcalG_l}^2 + (1+\frac{\eta}{2})\fro{\bcalT_l^+ - \bcalT^*}^2\big)\big]}_{=:D_l}.
\end{align}
Now we use martingale concentration inequality to bound $\sum_{l=0}^tD_l$. First we consider the uniform bound for $\fro{\calP_{\TT_l}\bcalG_l}^2$ using \eqref{uniform:ptE} and \eqref{T-T*infty},
\begin{align}\label{uniform:ptg:2}
	\fro{\calP_{\TT_l}\bcalG_l}^2 &\leq 2(\inp{\bcalX_l}{\bcalT_l-\bcalT^*}^2 +\epsilon_l^2) \fro{\calP_{\TT_l}\bcalX_l}^2\notag\\
	&\leq (2d^*\linf{\bcalT_l-\bcalT^*}^2 + 2\sigma^2\log\dmax)\cdot d^*\max_{\omega}\fro{\calP_{\TT_l}\bcalE_{\omega}}^2\notag\\
	&\lesssim_m (\mu_0^{m-1}r^*\frac{\dmax}{\rmin}\fro{\bcalT_l - \bcalT^*}^2 + \sigma^2\log\dmax)\mu_0^{m-1}r^*\frac{\dmax}{\rmin}.
\end{align}
Meanwhile,
\begin{align*}
	\bigg|\fro{\bcalT_l^+ - \bcalT^*}^2 - \EE_l\fro{\bcalT_l^+ - \bcalT^*}^2\bigg| &\leq 2\eta\bigg|\inp{\bcalT_l-\bcalT^*}{\calP_{\TT_l}\G_l} - \EE_l\inp{\bcalT_l-\bcalT^*}{\calP_{\TT_l}\G_l}\bigg|\\
	&\quad + \eta^2\bigg|\fro{\calP_{\TT_l}\G_l}^2 - \EE_l\fro{\calP_{\TT_l}\G_l}^2\bigg|.
\end{align*}
We consider the uniform bound for $|\inp{\bcalT_l-\bcalT^*}{\calP_{\TT_l}\G_l}|$. Using Cauchy-Schwartz inequality, and \eqref{uniform:ptg:2},
\begin{align*}
	|\inp{\bcalT_l-\bcalT^*}{\calP_{\TT_l}\G_l}| &\leq \fro{\bcalT_l-\bcalT^*} \fro{\calP_{\TT_l}\G_l}\\
	&\lesssim_m (\mu_0^{m-1}r^*\frac{\dmax}{\rmin}\fro{\bcalT_l - \bcalT^*}^2 + \sqrt{\mu_0^{m-1}r^*\frac{\dmax}{\rmin}}\sigma\sqrt{\log\dmax}\fro{\bcalT_l-\bcalT^*}).
\end{align*}
And therefore the uniform bound for $D_l$ is as follows
\begin{align}\label{D_l:uniform}
	|D_l|&\lesssim_m (1-\frac{\eta}{2})^{t-l}\eta\bigg(\mu_0^{m-1}r^*\frac{\dmax}{\rmin}\fro{\bcalT_l - \bcalT^*}^2 + \sqrt{\mu_0^{m-1}r^*\frac{\dmax}{\rmin}}\sigma\sqrt{\log d}\fro{\bcalT_l-\bcalT^*}\bigg)\notag\\
	&\quad +  (1-\frac{\eta}{2})^{t-l}\eta^2(\mu_0^{m-1}r^*\frac{\dmax}{\rmin}\fro{\bcalT_l - \bcalT^*}^2 + \sigma^2\log d)\mu_0^{m-1}r^*\frac{\dmax}{\rmin}\notag\\
	&\leq (1-\frac{\eta}{2})^{t-l}\eta\mu_0^{m-1}r^*\frac{\dmax}{\rmin}\fro{\bcalT_l - \bcalT^*}^2 + \eta\sigma^2\log\dmax\notag\\
	&\leq \frac{1}{2}(1-\frac{\eta}{4})^{t+1}\fro{\bcalT_0-\bcalT^*}^2(\log\dmax)^{-1} + \eta\dof\cdot\sigma^2(\log\dmax)^{-1},
\end{align}
where the second inequality uses $2ab\leq a^2 + b^2$ and $\eta\mu_0^{m-1}r^*\frac{\dmax}{\rmin}\log\dmax\lesssim 1$, and the last inequality is from 
$\fro{\bcalT_l-\bcalT^*}^2\leq 2(1-\frac{\eta}{4})^{l}\fro{\bcalT_0-\bcalT^*}^2 + C\eta\dof\sigma^2$ and $\eta\mu_0^{m-1}r^*\frac{\dmax}{\rmin}\log\dmax\lesssim 1$.

We now consider the variance bound for $D_l$. Firstly we consider $\EE_l \fro{\calP_{\TT_l}\bcalG_l}^4$ using \eqref{T-T*infty} and \eqref{uniform:ptx}, 
\begin{align*}
	&\EE_l \fro{\calP_{\TT_l}\bcalG_l}^4 \leq 4\EE_l\inp{\bcalX_l}{\bcalT_l - \bcalT^*}^4 \fro{\calP_{\TT_l}\bcalX_l}^4 + 4\EE_l\epsilon_l^4 \fro{\calP_{\TT_l}\bcalX_l}^4\\
	&\lesssim_m \fro{\bcalT_l-\bcalT^*}^2\frac{\mu_0^{m-1}r^*\dmax}{\rmin}\cdot (\frac{\mu_0^{m-1}r^*\dmax}{\rmin})^2\cdot\fro{\bcalT_l-\bcalT^*}^2 + \dof\cdot\frac{\mu_0^{m-1}r^*\dmax}{\rmin}\sigma^4\\
	&= (\frac{\mu_0^{m-1}r^*\dmax}{\rmin})^3\fro{\bcalT_l-\bcalT^*}^4  + \dof\cdot\frac{\mu_0^{m-1}r^*\dmax}{\rmin}\sigma^4.
\end{align*}
On the other hand, 
\begin{align*}
	\EE_l|\inp{\bcalT_l-\bcalT^*}{\calP_{\TT_l}\bcalG_l}|^2 &= \EE_l|\inp{\bcalT_l-\bcalT^*}{\calP_{\TT_l}\bcalX_l}|^2(\epsilon_l^2 + |\inp{\bcalT_l-\bcalT^*}{\bcalX_l}|^2)\\
	&\leq \sigma^2\EE_l|\inp{\bcalT_l-\bcalT^*}{\calP_{\TT_l}\bcalX_l}|^2 + d^*\linf{\bcalT_l-\bcalT^*}^2\EE_l|\inp{\bcalT_l-\bcalT^*}{\calP_{\TT_l}\bcalX_l}|^2\\
	&\lesssim_m \sigma^2\fro{\bcalT_l-\bcalT^*}^2 + \frac{\mu_0^{m-1}r^*\dmax}{\rmin}\fro{\bcalT_l-\bcalT^*}^4.
\end{align*}
So as long as $\eta\mu_0^{m-1}\frac{r^*\dmax}{\rmin}\lesssim_m 1$,
\begin{align*}
	\text{Var}_l D_l^2&\lesssim_m (1-\frac{\eta}{2})^{2t-2l}\eta^2\frac{\mu_0^{m-1}r^*\dmax}{\rmin}\fro{\bcalT_l - \bcalT^*}^4 +(1-\frac{\eta}{2})^{2t-2l}\eta^4\frac{\mu_0^{m-1}r^*\dmax}{\rmin}\sigma^4 \\
	&\quad + (1-\frac{\eta}{2})^{2t-2l}\eta^2\sigma^2\fro{\bcalT_l - \bcalT^*}^2.
\end{align*}
Together with $\fro{\bcalT_l-\bcalT^*}^2 \leq 2(1-\frac{\eta}{4})^{l}\fro{\bcalT_0-\bcalT^*}^2 + C\eta\dof\sigma^2$, we obtain
\begin{align*}
	\sum_{l=0}^t\text{Var}_l D_l^2 &\lesssim_m (1-\frac{\eta}{4})^{2t+2}\eta\frac{\mu_0^{m-1}r^*\dmax}{\rmin}\fro{\bcalT_0-\bcalT^*}^2 + \eta^3\frac{\mu_0^{m-1}r^*\dmax}{\rmin}\dof^2\sigma^4\\
	&\quad  +\eta^3\frac{\mu_0^{m-1}r^*\dmax}{\rmin}\sigma^4+(1-\frac{\eta}{4})^{t+1}\eta\sigma^2\fro{\bcalT_0-\bcalT^*}^2 + \eta^2\dof\sigma^4\\
	&\leq \frac{1}{4}(1-\frac{\eta}{4})^{2t+2}\fro{\bcalT_0-\bcalT^*}^4(\log\dmax)^{-1} + 
	 \eta^2\dof^2\sigma^4(\log\dmax)^{-1},
\end{align*}
where the last inequality holds as long as $\log\dmax\lesssim \dof$ and $\eta\frac{\mu_0^{m-1}r^*\dmax}{\rmin}\log\dmax\lesssim 1$.
%
Using the variance bound and \eqref{D_l:uniform}, we see that with probability exceeding $1-\dmax^{-10}$,
\begin{align*}
	\sum_{l=0}^{t}D_l &\leq (1-\frac{\eta}{4})^{t+1}\fro{\bcalT_0-\bcalT^*}^2+ 2\eta\dof\cdot\sigma^2.
\end{align*}
Put this back to \eqref{t+1t}, and we see that 
\begin{align}
	\fro{\bcalT_{t+1} - \bcalT^*}^2 \leq 2(1-\frac{\eta}{4})^{t+1}\fro{\bcalT_0-\bcalT^*}^2 + 10 \eta\dof\cdot\sigma^2.
\end{align}

\hspace{1cm}

\noindent\textit{Step 2: Incoherence and entry-wise error bound.} 
Now we bound the incoherence for $\U_{t+1,j}$ for $j\in[m]$ and the entry-wise bound using the representation formula of spectral projectors. Remember we denote $\bcalT_t = \bcalC\cdot(\U_1,\cdots,\U_m)$.

For each $j\in[m], i\in[d_j]$, we are interested in $\e_i^\top\U_{t+1,j}\U_{t+1,j}^\top\e_i$ and $\ltwo{(\U_{t+1,j}\U_{t+1,j}^\top - \U_j^*\U_j^{*\top})\e_i}$.
Since $\U_{t+1,j}$ is the top $r_j$ left singular vectors of $(\bcalT_t - \eta\calP_{\TT_t}\bcalG_t)_{(j)}$.
And $\U_j$ is the left singular vectors of $(\bcalT_t)_{(j)} = \U_j\bcalC_{(j)}(\otimes_{l\neq j}\U_l)^{\top}$.
We can obtain a closed form for $\U_{t+1,j}\U_{t+1,j}^{\top} - \U_j\U_j^\top$ from Section \ref{sec:assymetric} as follows
$$\U_{t+1,j}\U_{t+1,j}^{\top} - \U_j\U_j^\top = \sum_{k \geq 1}\S_{k}.$$
Here $\S_k$ depends on $\U,\bSigma,\V,\Z$ defined as follows (see Section \ref{sec:assymetric} for more details):
suppose $\bcalC_{(j)}$ admits a compact SVD as $\bcalC_{(j)} = \Q_1\bSigma\Q_2^\top$ with $\Q_1\in\RR^{r_j\times r_j}$ and $\Q_2\in\RR^{r_j^-\times r_j}$, and $\U = \U_j\Q_1$, $\V = (\otimes_{l\neq j}\U_l)\Q_2$ and $\Z = -\eta(\calP_{\TT_t}\bcalG_t)_{(j)}$.
Here for each $\S_k$, it can be written as 
$$\S_k = \sum_{\s: s_1+\cdots + s_{k+1} = k}\S_k(\s), $$
where $\S_k(\s)$ takes the following form:
\begin{align}\label{representation}
	\A_1\bSigma^{-s_1}\B_1\bSigma^{-s_2}\B_2\bSigma^{-s_3}\cdots\B_k\bSigma^{-s_{k+1}}\A_2^\top,
\end{align}
where $\A_1,\A_2\in \{\U,\U_{\perp}\}$ and 
$\B_i \in \{\U^\top\Z\V, \U^\top_{\perp}\Z\V,\U^\top\Z\V_{\perp},\U_{\perp}^\top\Z\V_{\perp} \text{ or their transpose}\}.$

As a consequence of \eqref{lowerbound:signalstrength:Tl}, $\op{\bSigma^{-1}} = \sigma_{r_j}^{-1}(\bcalT_{(j)})\leq 2\lambda_{\submin}^{-1}$.
Also,
\begin{align}\label{ptx:junfolding}
	(\calP_{\TT_t}\bcalX_t)_{(j)} = \P_{\U_j}(\bcalX_t)_{(j)}(\otimes_{l\neq j}\P_{\U_l})^\top + \sum_{l\neq j}\U_j\bcalC_{(j)}(\W_l\otimes_{p\neq j,l}\U_p)^\top + \W_j\bcalC_{(j)}(\otimes_{l\neq j}\U_l)^{\top},
\end{align}
where $\W_l = \P_{\U_l}^\perp(\bcalX_t)_{(l)}(\otimes_{q\neq l}\U_q)\bcalC_{(l)}^\dagger,\forall l\in[m]$. 

Now we derive the operator norm bound for $\op{\U^\top\Z\V}, \op{\U^\top_{\perp}\Z\V},\op{\U^\top\Z\V_{\perp}},\op{\U_{\perp}^\top\Z\V_{\perp}}$. 
Since $\bcalG_t = (\inp{\bcalX_t}{\bcalT_t-\bcalT^*}-\epsilon_t)\bcalX_t$, we see $\U_{\perp}^\top\Z\V_{\perp} = 0$.
For $\op{\U^\top\Z\V}$, we have
$$\op{\U^\top\Z\V} = |\eta(\inp{\bcalX_t}{\bcalT_t-\bcalT^*}-\epsilon_t)|\op{\U_j^\top(\bcalX_t)_{(j)}(\otimes_{l\neq j}\U_l)}\leq |\eta (\inp{\bcalX_t}{\bcalT_t-\bcalT^*}-\epsilon_t)|\sqrt{\mu_0^mr^*},$$
where the last inequality holds from the incoherence of $\U_l$ and $(\bcalX_t)_{(j)}$ is of the form $\sqrt{d^*}\e_p\e_q^\top$.

For $\op{\U^\top_{\perp}\Z\V}$, we have $\U^\top_{\perp}(\calP_{\TT_t}\bcalX_t)_{(j)}\V = \U^\top_{\perp}(\bcalX_t)_{(j)}(\otimes_{l\neq j}\U_l)\bcalC_{(j)}^\dagger\bcalC_{(j)}(\otimes_{l\neq j}\U_l)^\top\Q_2$, and thus
$$\op{\U^\top_{\perp}\Z\V}\leq |\eta (\inp{\bcalX_t}{\bcalT_t-\bcalT^*}-\epsilon_t)|\sqrt{d_jr_j^-\mu_0^{m-1}},$$
where the last inequality holds since $(\bcalX_t)_{(j)}$ is of the form $\sqrt{d^*}\e_p\e_q^\top$ and $\bcalC_{(j)}^\dagger\bcalC_{(j)}$ is a projector.

For $\op{\U^\top\Z\V_{\perp}}$, we have 
\begin{align}\label{eq:001}
	\U^\top(\calP_{\TT_t}\bcalX_t)_{(j)}\V_{\perp} = \Q_1^\top\bcalC_{(j)}\sum_{l\neq j}(\W_l\otimes_{p\neq j,l}\U_p)^\top\V_{\perp}.
\end{align}
Notice $\V_{\perp}$ is a block matrix of block size $(2^{m-1}-1)\times 1$, where each block takes the form 
$\otimes_{p\neq j}\A_p,$
where $\A_p$ takes value either $\U_p$ or $\U_{p,\perp}$ and there exists at least one $p\in[m]\backslash j$, $\A_p = \U_{p,\perp}$. For each fixed $l\in[m]\backslash j$, the only term that is non-zero is $\U_{l,\perp}\otimes_{p\neq j,l}\U_p$. 
And each term in \eqref{eq:001} are mutually orthogonal since the non-zero blocks are at different slots in the block matrix.
With this observation, we see that 
\begin{align*}
	\op{\U^\top(\calP_{\TT}\bcalX)_{(j)}\V_{\perp}}^2&\leq \sum_{l\neq j}\fro{\bcalC_{(j)}\bigg(\big((\bcalC_{(l)}^\dagger)^\top(\otimes_{p\neq l}\U_p)^\top(\bcalX_t)_{(l)}^\top\U_{l,\perp}\big)\otimes_{s\neq j,l}\I_{r_s}\bigg)}^2\\
	&= \sum_{l\neq j}\fro{\bcalC\times_l\big(\U_{l,\perp}^{\top}(\bcalX_t)_{(l)}(\otimes_{p\neq l}\U_p)\bcalC_{(l)}^\dagger\big)}^2\\
	&= \sum_{l\neq j}\fro{\U_{l,\perp}^{\top}(\bcalX_t)_{(l)}(\otimes_{p\neq l}\U_p)\bcalC_{(l)}^\dagger\bcalC_{(l)}}^2\\
	&\leq \mu_0^{m-1}\sum_{l\neq j}d_lr_l^-.
\end{align*}
Therefore 
\begin{align}\label{UZV:003}
	\op{\U^\top\Z\V_{\perp}} \leq |\eta (\inp{\bcalX_t}{\bcalT_t-\bcalT^*}-\epsilon_t)|\sqrt{\mu_0^{m-1}\sum_{l\neq j}d_lr_l^-}.
\end{align}
Then we conclude
\begin{align}\label{allbound}
	\max\big\{\op{\U^\top\Z\V}, \op{\U^\top_{\perp}\Z\V},\op{\U^\top\Z\V_{\perp}}\big\}\leq  |\eta (\inp{\bcalX_t}{\bcalT_t-\bcalT^*}-\epsilon_t)|\bigg((m-1)\mu_0^{m-1}\dmax \frac{r^*}{\rmin}\bigg)^{1/2}.
\end{align}
We also bound $|\inp{\bcalX_t}{\bcalT_t-\bcalT^*}-\epsilon_t|$. From \eqref{T-T*infty:idpt} and under $\calE_t$, we see 
\begin{align}\label{inp-eps}
	|\inp{\bcalX_t}{\bcalT_t-\bcalT^*}-\epsilon_t|\lesssim \nu_0\rmin^{1/2}\lambda_{\submax} + \sigma\log^{1/2}\dmax.
\end{align}

For different values of $k$, we control $\e_i^\top\S_k\e_i$ and $\S_k\e_i$. 
Notice that for given $k$ and $\s$ satisfying $s_1+\cdots+s_{k+1} = k$, if there exists $1\leq j\leq k$, $s_{j} = s_{j+1} = 0$, then $\S_k(\s) = 0$. This holds since $\B_j = \U_{\perp}^\top\Z\V_{\perp}$ or $\V_{\perp}^\top\Z^\top\U_{\perp} = 0$.
And thus in the following, we only consider $\s$ such that $\S_k(\s)\neq 0$.

\hspace{1cm}

\noindent{\textit{Case 1: $k = 1$.}} When $k = 1$, the closed form is given in \eqref{S1}. 
Since $\EE_t\Z = -\eta \big(\calP_{\TT_t}(\bcalT_t - \bcalT^*)\big)_{(j)}$, the expectation of $\S_1$ is given by 
\begin{align}\label{expectation:S1}
	\EE_t\S_1 &= \EE_t\P_{\U}^\perp\Z\V\bSigma^{-1}\U^\top + \U\bSigma^{-1}\V^\top\Z^\top\P_{\U}^\perp
	= \eta\P_{\U_j}^\perp\bcalT_{(j)}^*(\bcalT_t)_{(j)}^\dagger + \eta(\bcalT_t)_{(j)}^{\dagger\top}(\bcalT_{(j)}^*)^\top\P_{\U_j}^\perp.
\end{align}
We compute the conditional expectation
$
\EE_t\e_i^\top\S_1\e_i. 
$
Due to symmetry, we have
\begin{align*}
	\EE_t\e_i^\top\S_1\e_i &= 2\eta\e_i^\top\P_{\U_j}^\perp\bcalT_{(j)}^*(\bcalT_t)_{(j)}^\dagger\e_i = 2\eta\e_i^\top(\I-\P_{\U_j})\bcalT_{(j)}^*(\bcalT_t)_{(j)}^\dagger\e_i \\
	&=  2\eta\e_i^\top\bcalT^*_{(j)}(\bcalT_t)_{(j)}^\dagger\e_i
	-2\eta\e_i^\top\P_{\U_j}(\bcalT^*_{(j)}-(\bcalT_t)_{(j)})(\bcalT_t)_{(j)}^\dagger\e_i
	-2\eta\e_i^\top\P_{\U_j}(\bcalT_t)_{(j)}(\bcalT_t)_{(j)}^\dagger\e_i.
\end{align*}
For the first term on the RHS, using the incoherence of $\U_j^*$ and $\U_j$,
\begin{align*}
	\e_i^\top\bcalT^*_{(j)}(\bcalT_t)_{(j)}^\dagger\e_i \leq 2\sqrt{\frac{\mu r_j}{d_j}}\sqrt{\frac{\mu_0r_j}{d_j}}\kappa_0.
\end{align*}
For the second term on the RHS, using $\fro{\bcalT_t-\bcalT^*}\leq \frac{1}{40}\lambda_{\submin} + \sqrt{\eta\dof}\sigma\leq\frac{1}{20}\lambda_{\submin}$ and the incoherence of $\U_j$,
\begin{align*}
	\e_i^\top\P_{\U_j}(\bcalT^*_{(j)}-(\bcalT_t)_{(j)})(\bcalT_t)_{(j)}^\dagger\e_i\leq \frac{1}{10}\frac{\mu_0r_j}{d_j}
\end{align*}
And $\e_i^\top\P_{\U_j}(\bcalT_t)_{(j)}(\bcalT_t)_{(j)}^\dagger\e_i = \e_i^\top\U_j\U_j^\top\e_i$. So we conclude that (given $\mu_0\gtrsim \mu\kappa_0^2$)
\begin{align}\label{k=1}
	\EE_t\e_i^\top\S_1\e_i \leq \frac{2\eta}{5}\frac{\mu_0r_j}{d_j}  - 2\eta\e_i^\top\U_j\U_j^\top\e_i.
\end{align}

\noindent\textit{Computing $\S_1\e_i$.} To give an entry-wise bound, we are interested in $\EE_t\S_1\e_i$. 
Notice
\begin{align*}
\eta\P_{\U_j}^\perp\bcalT_{(j)}^*(\bcalT_t)_{(j)}^\dagger 
	= \eta[\bcalT_{(j)}^*-(\bcalT_t)_{(j)}](\bcalT_t)_{(j)}^\dagger +  \eta\P_{\U_j}
	- \eta\P_{\U_j}[\bcalT_{(j)}^* - (\bcalT_t)_{(j)}](\bcalT_t)_{(j)}^\dagger - \eta\P_{\U_j},
\end{align*}
and 
\begin{align*}
	\eta(\bcalT_t)_{(j)}^{\dagger\top}(\bcalT_{(j)}^*)^\top\P_{\U_j}^\perp
	= \eta[(\bcalT_t)_{(j)}^{\dagger} - (\bcalT^*_{(j)})^{\dagger}]^\top(\bcalT_{(j)}^*)^\top + \eta\P_{\U_j^*}
	-\eta(\bcalT_t)_{(j)}^{\dagger\top}(\bcalT_{(j)}^*-(\bcalT_t)_{(j)})^\top\P_{\U_j} -\eta\P_{\U_j}.
\end{align*}
Notice the bound of $\op{(\bcalT_t)_{(j)}^{\dagger} - (\bcalT^*_{(j)})^{\dagger}}$ can be obtained using Lemma \ref{pseudoinversedistance},
\begin{align}\label{pseudoT}
	\op{(\bcalT_t)_{(j)}^{\dagger} - (\bcalT^*_{(j)})^{\dagger}}\leq 30\lambda_{\submin}^{-2}\kappa_0\fro{\bcalT_t-\bcalT^*}.
\end{align}
Plug in these representations, we have 
\begin{align*}
	\EE\S_1\e_i &= \big(\eta\P_{\U_j^*}-\eta\P_{\U_j}\big)\e_i + \eta[\bcalT_{(j)}^*-(\bcalT_t)_{(j)}](\bcalT_t)_{(j)}^\dagger \e_i
	- \eta\P_{\U_j}[\bcalT_{(j)}^* - (\bcalT_t)_{(j)}](\bcalT_t)_{(j)}^\dagger \e_i\\ &\quad+\eta[(\bcalT_t)_{(j)}^{\dagger} - (\bcalT^*_{(j)})^{\dagger}]^\top(\bcalT_{(j)}^*)^\top \e_i
	-\eta(\bcalT_t)_{(j)}^{\dagger\top}(\bcalT_{(j)}^*-(\bcalT_t)_{(j)})^\top\P_{\U_j} \e_i\\
	&=: \big(\eta\P_{\U_j^*}-\eta\P_{\U_j}\big)\e_i + \v.
\end{align*}
Using the incoherence of $\U_j,\U_j^*$ and \eqref{pseudoT}, we can bound $\ltwo{\v}$ as follows
\begin{align*}
	\ltwo{\v} \leq C\eta\lambda_{\submin}^{-1}\kappa_0^2\fro{\bcalT_t -\bcalT^*}\sqrt{\frac{\mu_0r_j}{d_j}}.
\end{align*}
And as a result, 
\begin{align}\label{k=1entrywise}
	\S_{1}\e_i = \S_{1}\e_i - \EE_t\S_{1}\e_i+ \big(\eta\P_{\U_j^*}-\eta\P_{\U_j}\big)\e_i+\v,
\end{align}
where $\v$ satisfying $\ltwo{\v} \leq C\eta\lambda_{\submin}^{-1}\kappa_0^2\fro{\bcalT_t -\bcalT^*}\sqrt{\frac{\mu_0r_j}{d_j}}$.

\hspace{1cm}

\noindent{\textit{Case 2: $k = 2$.}} We discuss according to different $\s$. If $\s = (0,0,2)\text{~or~} (2,0,0)$, then $\S_2(\s) = 0$. If $\s = (0,1,1)\text{~or~}(1,1,0)$, 
then $\S_2(\s) = \U_{\perp}\U_{\perp}^\top\Z\V^\top\bSigma^{-1}\U^\top\Z\V\bSigma^{-1}\U^\top$ or its transpose. Together with \eqref{inp-eps}, we see 
\begin{align*}
	|\e_i^\top\S_2(\s)\e_i| &\leq |\eta (\inp{\bcalX_t}{\bcalT_t-\bcalT^*}-\epsilon_t)|^2\sqrt{\mu_0r_j/d_j}
	\sqrt{\mu_0^mr^*}\sqrt{\mu_0^{m-1}d_jr_j^-}\sigma_{\min}^{-2}(\bcalT_t)\\
	&\lesssim_m \eta^2(\nu_0^2\rmin\lambda_{\submax}^2 + \sigma^2\log\dmax)\mu_0^mr^*\lambda_{\submin}^{-2}.
\end{align*}
More specifically, when $\s = (1,1,0)$, 
\begin{align*}
	\ltwo{\S_2(\s)\e_i} &= \ltwo{\U\bSigma^{-1}\V^\top\Z^\top\U\bSigma^{-1}\V^\top\Z^\top\U_{\perp}\U_{\perp}^\top\e_i}\\
	&\leq (2\lambda_{\submin}^{-1})^2\op{\V^\top\Z^\top\U}\op{\V^\top\Z^\top\U_{\perp}}\\
	&\leq |\eta (\inp{\bcalX_t}{\bcalT_t-\bcalT^*}-\epsilon_t)|^2(\mu_0^mr^*)^{1/2}(\mu_0^{m-1}d_jr_j^-)^{1/2}(2\lambda_{\submin}^{-1})^2\\
	&\lesssim \eta^2(d^*\linf{\bcalT_t-\bcalT^*}^2 + \sigma^2\log\dmax)\mu_0^{m-1/2}(d_jr_j^-r^*)^{1/2}\lambda_{\submin}^{-2}.
\end{align*}
When $\s = (0,1,1)$,
\begin{align*}
	\ltwo{\S_2(\s)\e_i} &= \ltwo{\U_{\perp}\U_{\perp}^\top\Z\V^\top\bSigma^{-1}\U^\top\Z\V\bSigma^{-1}\U^\top\e_i}\\
	&\leq (2\lambda_{\submin}^{-1})^2 \op{\V^\top\Z^\top\U}\op{\V^\top\Z^\top\U_{\perp}}\ltwoinf{\U}\\
	&\lesssim \eta^2(d^*\linf{\bcalT_t-\bcalT^*}^2 + \sigma^2\log\dmax)\mu_0^{m-1/2}(d_jr_j^-r^*)^{1/2}\lambda_{\submin}^{-2}\sqrt{\frac{\mu_0r_j}{d_j}}.
\end{align*}
If $\s = (1,0,1)$, then $\S_2(\s) = \U\bSigma^{-1}\V^\top\Z^\top\U_{\perp}\U_{\perp}^\top\Z\V\bSigma^{-1}\U^\top$, and 
\begin{align*}
	|\e_i^\top\S_2(\s)\e_i| &\leq |\eta (\inp{\bcalX_t}{\bcalT_t-\bcalT^*}-\epsilon_t)|^2\frac{\mu_0r_j}{d_j}
	\mu_0^{m-1}d_jr_j^-\sigma_{\min}^{-2}(\bcalT_t)\\
	&\lesssim_m \eta^2(\nu_0^2\rmin\lambda_{\submax}^2 + \sigma^2\log\dmax)\mu_0^mr^*\lambda_{\submin}^{-2},
\end{align*}
and
\begin{align*}
	\ltwo{\S_2(\s)\e_i} &= \ltwo{\U\bSigma^{-1}\V^\top\Z^\top\U_{\perp}\U_{\perp}^\top\Z\V\bSigma^{-1}\U^\top\e_i}\\
	&\leq (2\lambda_{\submin}^{-1})^2\op{\V^\top\Z^\top\U_{\perp}}^2\ltwoinf{\U}\\
	&\lesssim \eta^2(d^*\linf{\bcalT_t-\bcalT^*}^2 + \sigma^2\log\dmax)\mu_0^{m-1}d_jr_j^-\lambda_{\submin}^{-2}\sqrt{\frac{\mu_0r_j}{d_j}}.
\end{align*}
For the case $\s = (0,2,0)$, from the closed form given in \eqref{S21},
$\S_{2,1} := \S_2(\s) = \P_{\U}^\perp\Z\V\bSigma^{-2}\V^\top\Z^\top \P_{\U}^\perp$.From \eqref{ptx:junfolding} and recall $\V = (\otimes_{l\neq j}\U_l)\Q_2$, we see
\begin{align*}
	\EE_t \S_{2,1} &= \EE_t|\eta (\inp{\bcalX_t}{\bcalT_t-\bcalT^*}-\epsilon_t)|^2 \P_{\U}^{\perp}(\bcalX_t)_{(j)}(\bcalT_t)_{(j)}^\dagger(\bcalT_t)_{(j)}^{\dagger\top}(\bcalX_t)_{(j)}^\top\P_{\U}^{\perp}\\
	&\lesssim \eta^2(d^*\linf{\bcalT_t-\bcalT^*}^2 + \sigma^2) \sum_{p,q}\P_{\U}^{\perp}\e_p\e_q^\top(\bcalT_t)_{(j)}^\dagger(\bcalT_t)_{(j)}^{\dagger\top}\e_q\e_p^\top\P_{\U}^{\perp}\\
	&= \eta^2(d^*\linf{\bcalT_t-\bcalT^*}^2 + \sigma^2) \trace\bigg((\bcalT_t)_{(j)}^\dagger(\bcalT_t)_{(j)}^{\dagger\top}\bigg)\P_{\U}^{\perp}\\
	&\leq \eta^2(d^*\linf{\bcalT_t-\bcalT^*}^2 + \sigma^2) r_j\lambda_{\submin}^{-2}\P_{\U}^{\perp}
\end{align*}
where $\A\leq \B$ means $\B-\A$ is SPSD and $\A\lesssim \B$ means there exists absolute constant $C>0$, $C\B-\A$ is SPSD. 
And the last inequality holds since $\trace\bigg((\bcalT_t)_{(j)}^\dagger(\bcalT_t)_{(j)}^{\dagger\top}\bigg) \leq \fro{(\bcalT_t)_{(j)}^\dagger}^2 \lesssim r_j\lambda_{\submin}^{-2}$.
As a result, from the above computation and \eqref{T-T*infty:idpt}, we have 
\begin{align}\label{k=2}
	\EE_t \e_i^\top\S_{2,1}\e_i \lesssim \eta^2(\nu_0^2\rmin\lambda_{\submax}^2+ \sigma^2)r_j\lambda_{\submin}^{-2},
\end{align}
and
\begin{align*}
	\ltwo{\EE_t \S_{2,1}\e_i} \lesssim \eta^2(d^*\linf{\bcalT_t-\bcalT^*}^2 + \sigma^2) r_j\lambda_{\submin}^{-2}.
\end{align*}
Therefore we conclude
\begin{align}\label{k=2entrywise}
	\S_{2}\e_i = \S_{2,1}\e_i - \EE_t\S_{2,1}\e_i+ \u,
\end{align}
fro some $\u$ satisfying $\ltwo{\u}\lesssim \eta^2(d^*\linf{\bcalT_t - \bcalT^*}^2 + \sigma^2\log\dmax)\mu_0^{m-1/2}(d_jr_j^-r^*)^{1/2}\lambda_{\submin}^{-2}.$


\hspace{1cm}

\noindent{\textit{Case 3: $k \geq 3$ and $k$ is odd.}} For each $\s$ such that $\S_k(\s)\neq 0$, if at least one of $\A_1,\A_2$ takes $\U$, then using \eqref{lowerbound:signalstrength:Tl} and \eqref{allbound}, we see
$$|\e_i^\top\S_k(\s)\e_i| \leq |\eta (\inp{\bcalX_t}{\bcalT_t-\bcalT^*}-\epsilon_t)|^k(\mu_0r_j/d_j)^{1/2}\bigg((m-1)\mu_0^{m-1}\dmax r^*/\rmin\bigg)^{k/2}(2\lambda_{\submin}^{-1})^{k}.$$
If both $\A_1,\A_2$ take $\U_{\perp}$, there exists at least one $l\in[k]$, $\B_l = \U^\top\Z\V \text{~or~} \V^\top\Z^\top\U$, for otherwise there exists $l'\in[k]$, $s_l = s_{l+1} = 0$ and this term will vanish. Then 
$$|\e_i^\top\S_k(\s)\e_i| \leq |\eta (\inp{\bcalX_t}{\bcalT_t-\bcalT^*}-\epsilon_t)|^{k}(\mu_0^mr^*)^{1/2}\bigg((m-1)\mu_0^{m-1}\dmax r^*/\rmin\bigg)^{(k-1)/2}(2\lambda_{\submin}^{-1})^{k}.$$
Since there are at most $\binom{2k}{k}\leq 4^k$ possible $\s$, 
$$|\e_i^\top\S_k\e_i| \leq 4^k|\eta (\inp{\bcalX_t}{\bcalT_t-\bcalT^*}-\epsilon_t)|^k(\mu_0r_j/d_j)^{1/2}\bigg((m-1)\mu_0^{m-1}\dmax r^*/\rmin\bigg)^{k/2}(2\lambda_{\submin}^{-1})^{k}.$$
Notice that from \eqref{inp-eps}
\begin{align}\label{ratio}
	&\quad4^2|\eta (\inp{\bcalX_t}{\bcalT_t-\bcalT^*}-\epsilon_t)|^2(m-1)\mu_0^{m-1}\dmax r^*/\rmin(2\lambda_{\submin}^{-1})^2\notag\\
	&\lesssim_m  \eta^2 \kappa_0^2\nu_0^2\mu_0^{m-1}\dmax r^* + \eta^2\mu_0^{m-1} (\frac{\sigma}{\lambda_{\submin}})^2\dmax \frac{r^*}{\rmin}\log\dmax \leq 1/2,
\end{align}
where the last inequality holds given $\eta^2\nu_0^2\kappa_0^2\mu_0^{m-1}\dmax r^*\lesssim_m 1$ and $(\frac{\lambda_{\submin}^2}{\sigma^2})\gtrsim_m\eta^2 \mu_0^{m-1}\dmax \frac{r^*}{\rmin}\log\dmax$.
From \eqref{ratio}, the contribution for such $k$ is bounded by
\begin{align}\label{kgeq3odd}
	\sum_{k\geq 3, k \text{ is odd}}\e_i^\top\S_k\e_i \lesssim_m (\mu_0r_j/d_j)^{1/2}\eta^3\mu_0^{3(m-1)/2}\dmax^{3/2} (r^*)^{3/2}\bigg(\kappa_0^3\nu_0^3 + \sigma^3\lambda_{\submin}^{-3}\log^{3/2}\dmax/\rmin^{3/2}\bigg).
\end{align}

\hspace{1cm}

\noindent{\textit{Case 4: $k \geq 4$ and $k$ is even.}} We shall apply \eqref{allbound} to bound $\e_i^\top\S_k\e_i$:
\begin{align*}
	|\e_i^\top\S_k(\s)\e_i| \leq |\eta (\inp{\bcalX_t}{\bcalT_t-\bcalT^*}-\epsilon_t)|^k\bigg((m-1)\mu_0^{m-1}\dmax r^*/\rmin\bigg)^{k/2}(2\lambda_{\submin}^{-1})^k.
\end{align*}
Since there are $\binom{2k}{k}\leq 4^k$ legal $\s$, 
$$|\e_i^\top\S_k\e_i| \leq 4^k |\eta (\inp{\bcalX_t}{\bcalT_t-\bcalT^*}-\epsilon_t)|^k\bigg((m-1)\mu_0^{m-1}\dmax r^*/\rmin\bigg)^{k/2}(2\lambda_{\submin}^{-1})^k.$$
From \eqref{ratio}, the contribution for such $k$ is bounded by
\begin{align}\label{kgeq4even}
	\sum_{k\geq 4, k \text{ is even}}\e_i^\top\S_k\e_i \lesssim_m \eta^4 \kappa_0^4\nu_0^4\mu_0^{2m-2}\dmax^2 (r^*)^2 + \eta^4 \sigma^4\lambda_{\submin}^{-4}\mu_0^{2m-2}\dmax^2 (r^*)^2\log^2\dmax/\rmin^2.
\end{align}

\noindent\textit{Computing $\sum_{k\geq 3}\S_k\e_i$.} For the entry-wise bound, we can consider simultaneously the cases when $k\geq 3$. In fact, using \eqref{allbound},
\begin{align*}
	\ltwo{\S_k\e_i} \lesssim 8^k\eta^k\bigg(\sqrt{d^*}\linf{\bcalT_t - \bcalT^*} + \sigma\sqrt{\log\dmax}\bigg)^k\lambda_{\submin}^{-k} (\mu_0^{m-1}\dmax\frac{r^*}{\rmin})^{k/2},
\end{align*}
and thus from \eqref{ratio},
\begin{align}\label{kgeq3entrywise}
	\sum_{k\geq 3}\ltwo{\S_k\e_i} \lesssim \eta^3\bigg(\sqrt{d^*}\linf{\bcalT_t - \bcalT^*} + \sigma\sqrt{\log\dmax}\bigg)^3\lambda_{\submin}^{-3} (\mu_0^{m-1}\dmax\frac{r^*}{\rmin})^{3/2}.
\end{align}

\hspace{1cm}

\noindent \textit{Step 2.1: Bounding the Frobenius norm error $\fro{\bcalT_{t+1} - \bcalT^*}$.}
Now from \eqref{k=1}, \eqref{k=2}, \eqref{kgeq3odd} and \eqref{kgeq4even}, we see that 
\begin{align*}
	& \e_i^\top\U_{t+1,j}\U_{t+1,j}^\top\e_i \leq (1-2\eta)\e_i^\top\U_{t,j}\U_{t,j}^\top\e_i + (\e_i^\top\S_1\e_i - \EE_t\e_i^\top\S_1\e_i) + \frac{2\eta}{5}\frac{\mu_0r_j}{d_j}\\
	&\hspace{2cm}+ \underbrace{(\e_i^\top\S_{2,1}\e_i - \EE_t \e_i^\top\S_{2,1}\e_i) + C_m\eta^2(\nu_0^2\rmin\lambda_{\submax}^2 + \sigma^2\log\dmax)\mu_0^mr^*\lambda_{\submin}^{-2}}_{k = 2}\\
	& \hspace{2cm}+ \underbrace{C_m(\mu_0r_j/d_j)^{1/2}\eta^3\mu_0^{3(m-1)/2}\dmax^{3/2} (r^*)^{3/2}\bigg(\kappa_0^3\nu_0^3 + \sigma^3\lambda_{\submin}^{-3}\log^{3/2}\dmax/\rmin^{3/2}\bigg)}_{k\geq3,~k\text{ is odd}}\\
	&\hspace{2cm}+ \underbrace{C_m\eta^4 \kappa_0^4\nu_0^4\mu_0^{2m-2}\dmax^2 (r^*)^2 + \eta^4 \sigma^4\lambda_{\submin}^{-4}\mu_0^{2m-2}\dmax^2 (r^*)^2\log^2\dmax/\rmin^2}_{k\geq4,~k\text{ is even}}\\
	&\leq (1-2\eta)\e_i^\top\U_{t,j}\U_{t,j}^\top\e_i + (\e_i^\top\S_1\e_i - \EE_t\e_i^\top\S_1\e_i) + (\e_i^\top\S_{2,1}\e_i - \EE_t \e_i^\top\S_{2,1}\e_i) + \frac{\eta}{2}\frac{\mu_0r_j}{d_j},
\end{align*}
where the last inequality holds given $\eta\dmax r^*\mu_0^{m-1}\nu_0^2\kappa_0^2\lesssim_m 1$ and 
$$
\eta \mu_0^{m-1} (\sigma/\lambda_{\submin})^2\frac{\dmax r^*}{\rmin}\lesssim_m 1, \eta \mu_0^{\frac{3}{4}m-1} (\sigma/\lambda_{\submin})^{\frac{3}{2}}\frac{\dmax (r^*)^\frac{3}{4}}{\rmin}\lesssim_m 1, \eta \mu_0^{\frac{2}{3}m-1} (\sigma/\lambda_{\submin})^\frac{4}{3}\frac{\dmax (r^*)^{\frac{2}{3}}}{\rmin}\lesssim_m 1,
$$
Notice this boils down to $\eta \mu_0^{m-1} (\sigma/\lambda_{\submin})^2\frac{\dmax r^*}{\rmin}\lesssim_m 1$ if $\eta\mu_0^{-1}\frac{\dmax}{\rmin}\leq 1$.

 To emphasize the dependence on $t$, we add subscripts for $\S_1$ and $\S_{2,1}$, namely, $\S_{1,t}$ and $\S_{2,1,t}$. Now telescoping this inequality, and we get 
\begin{align}\label{incoh:main}
	&\e_i^\top\U_{t+1,j}\U_{t+1,j}^\top\e_i \leq (1-2\eta)^{t+1}\e_i^\top\U_{0,j}\U_{0,j}^\top\e_i +\frac{1}{4}\frac{\mu_0r_j}{d_j},\notag\\
	&\hspace{3cm}+ \sum_{l=0}^t \underbrace{(1-2\eta)^{t-l}\big[(\e_i^\top\S_{1,l}\e_i - \EE_t\e_i^\top\S_{1,l}\e_i) + (\e_i^\top\S_{2,1,l}\e_i - \EE_t \e_i^\top\S_{2,1,l}\e_i)\big]}_{=:D_l}
\end{align}
Now we use martingale concentration inequality to bound $\sum_{l=0}^t D_l$. We first consider the uniform bound. Notice 
\begin{align*}
	|\e_i^\top\S_{1,l}\e_i| &\leq 2\eta|\inp{\bcalX_l}{\bcalT_l-\bcalT^*} - \epsilon_l|\cdot|\e_i^\top\P_{\U}^\perp(\bcalP_{\TT_l}\bcalX_l)_{(j)}\V\bSigma^{-1}\U^\top\e_i|\\
	&\leq2\eta(2\nu_0\sqrt{\rmin}\lambda_{\submax} + C\sigma\log^{1/2} \dmax)\sqrt{d^*}\sqrt{\frac{\mu_0^mr^*}{d^*}}2\lambda_{\submin}^{-1}\\
	&\leq 8\eta\nu_0\kappa_0\mu_0^{m/2}(\rmin r^*)^{1/2} + C\eta(\mu_0^m r^*)^{1/2}(\sigma/\lambda_{\submin})\log^{1/2}\dmax.
\end{align*}
Using $|X-\EE X|\leq 2B$ when $|X|\leq B$, we get $$|\e_i^\top\S_{1,l}\e_i - \EE_l\e_i^\top\S_{1,l}\e_i|\leq 16\eta\nu_0\kappa_0\mu_0^{m/2}(\rmin r^*)^{1/2} + C\eta(\mu_0^m r^*)^{1/2}(\sigma/\lambda_{\submin})\log^{1/2}\dmax.$$
On the other hand, again from \eqref{inp-eps}, we have
\begin{align*}
	|\e_i^\top\S_{2,1,l}\e_i| &= |\e_i^\top \P_{\U}^\perp\Z\V\bSigma^{-2}\V^\top\Z^\top \P_{\U}^\perp\e_i|\\
	&\leq 2\eta^2(3\nu_0^2\rmin\lambda_{\submax}^2 + C\sigma^2\log \dmax)4\lambda_{\submin}^{-2}\mu_0^{m-1}r_j^-d_j\\
	&\leq 24\eta^2\nu_0^2\kappa_0^2\mu_0^{m-1}\rmin r_j^-d_j + C\eta^2 (\sigma/\lambda_{\submin})^2\mu_0^{m-1}r_j^-d_j\log\dmax.
\end{align*}
Therefore $$|\e_i^\top\S_{2,1,l}\e_i- \EE_l \e_i^\top\S_{2,1,l}\e_i| \leq 48\eta^2\nu_0^2\kappa_0^2\mu_0^{m-1}\rmin r_j^-d_j + C\eta^2 (\sigma/\lambda_{\submin})^2\mu_0^{m-1}r_j^-d_j\log\dmax.$$
As long as $\eta\kappa_0\nu_0\mu_0^{m/2-1}\dmax(r^*/\rmin)^{1/2}\lesssim 1$ and $\eta(\sigma/\lambda_{\submin})\mu_0^{m/2-1}\dmax\frac{(r^*)^{1/2}}{\rmin}
	\log^{1/2}\dmax \lesssim 1$, we have 
\begin{align}\label{azuma-bernstein:uniform}
	|D_l|\lesssim (1-2\eta)^{t-l}\bigg(\eta\nu_0\kappa_0\mu_0^{m/2}(\rmin r^*)^{1/2} + \eta(\mu_0^m r^*)^{1/2}(\sigma/\lambda_{\submin})\log^{1/2}\dmax\bigg).
\end{align}
We also need to bound the variance. In fact, 
\begin{align*}
	\EE_l D_l^2 \leq 2(1-2\eta)^{2t-2l}[\EE_l|\e_i^\top\S_{1,l}\e_i|^2+\EE_l|\e_i^\top\S_{2,1,l}\e_i|^2].
\end{align*}
And 
\begin{align*}
	&\EE_l|\e_i^\top\S_{1,l}\e_i|^2 = 4\eta^2\EE_l(\inp{\bcalX_l}{\bcalT_l-\bcalT^*}^2 + \epsilon_l^2)(\e_i^\top\P_{\U}^\perp(\bcalP_{\TT_l}\bcalX_l)_{(j)}\V\bSigma^{-1}\U^\top\e_i)^2\\
	&\leq 4\eta^2(3\nu_0^2\rmin\lambda_{\submax}^2 + \sigma^2)
	\sum_{p,q}(\e_i^\top\P_{\U}^\perp\e_p)^2(\e_q^\top(\otimes_{k\neq j}\U_k)\bcalC_{(j)}^\dagger\bcalC_{(j)}(\otimes_{k\neq j}\U_k)^\top\V\bSigma^{-1}\U^\top\e_i)^2\\
	&\lesssim \eta^2(\nu_0^2\rmin\lambda_{\submax}^2 + \sigma^2)\lambda_{\submin}^{-2}\frac{\mu_0r_j}{d_j},
\end{align*}
where in the last inequality we use $\sum_{p\in[d_j]}(\e_i^\top\P_{\U}^\perp\e_p)^2\leq 2$ and
\begin{align*}
	&\quad\sum_{q\in[d_j^-]}(\e_q^\top(\otimes_{k\neq j}\U_k)\bcalC_{(j)}^\dagger\bcalC_{(j)}(\otimes_{k\neq j}\U_k)^\top\V\bSigma^{-1}\U^\top\e_i)^2\\
	&=\ltwo{(\otimes_{k\neq j}\U_k)\bcalC_{(j)}^\dagger\bcalC_{(j)}(\otimes_{k\neq j}\U_k)^\top\V\bSigma^{-1}\U^\top\e_i}^2\\
	&\leq \ltwo{\bSigma^{-1}\U^\top\e_i}^2 \leq (2\lambda_{\submin}^{-1})^2\frac{\mu_0r_j}{d_j}.
\end{align*}
On the other hand, 
\begin{align*}
	&\EE_l|\e_i^\top\S_{2,1,l}\e_i|^2 = \EE_l |\e_i^\top\P_{\U}^\perp\Z\V\bSigma^{-2}\V^\top\Z^\top \P_{\U}^\perp\e_i|^2\\
	&= \EE_l\eta^4(\inp{\bcalX_l}{\bcalT_l-\bcalT^*} + \epsilon_l)^4(|\e_i^\top\P_{\U}^\perp(\bcalP_{\TT_l}\bcalX_l)_{(j)}\V\bSigma^{-2}\V^\top(\bcalP_{\TT_l}\bcalX_l)_{(j)}^\top \P_{\U}^\perp\e_i|^2)\\
	&\lesssim \eta^4(\nu_0^4\rmin^2\lambda_{\submax}^4 + \sigma^4)d^*\sum_p(\e_i^\top\P_{\U}^\perp\e_p)^4\fro{(\otimes_{k\neq j}\U_k)\bcalC_{(j)}^\dagger\bcalC_{(j)}(\otimes_{k\neq j}\U_k)^\top\V\bSigma^{-1}}^2\cdot\frac{\mu_0^{m-1}r_j^-}{d_j^-}\lambda_{\submin}^{-2}\\
	&\lesssim \eta^4(\nu_0^4\rmin^2\lambda_{\submax}^4 + \sigma^4)d^* r_j\lambda_{\submin}^{-4}\frac{\mu_0^{m-1}r_j^-}{d_j^-}.
\end{align*}
Therefore as long as $\eta\nu_0\kappa_0\mu_0^{\frac{m}{2}-1}\dmax (r^*)^{1/2}\lesssim 1$ and $\eta(\sigma/\lambda_{\submin})\mu_0^{\frac{m}{2}-1}\dmax(\frac{r^*}{\rmin})^{1/2}\lesssim 1$,
\begin{align*}
	\EE_l D_l^2 \lesssim (1-2\eta)^{2t-2l} \eta^2(\nu_0^2\rmin\lambda_{\submax}^2 + \sigma^2)\lambda_{\submin}^{-2}\frac{\mu_0r_j}{d_j}.
\end{align*}
And the summation has the following upper bound, 
\begin{align}\label{azuma-bernstein:variance}
	\sum_{l=0}^t \EE_l D_l^2 \lesssim \eta(\nu_0^2\rmin\lambda_{\submax}^2 + \sigma^2)\lambda_{\submin}^{-2}\frac{\mu_0r_j}{d_j}.
\end{align}
From \eqref{azuma-bernstein:uniform} and \eqref{azuma-bernstein:variance}, and as a result of Azuma-Bernstein inequality, we see that with probability exceeding $\dmax^{-12}$, as long as $\eta\log^2\dmax\lesssim \frac{1}{d_j}$, 
\begin{align}
	\sum_{l=0}^tD_l &\lesssim \sqrt{\eta(\nu_0^2\rmin\lambda_{\submax}^2 + \sigma^2)\lambda_{\submin}^{-2}\frac{\mu_0r_j}{d_j}\log\dmax}+ \eta\nu_0\kappa_0\mu_0^{m/2}(\rmin r^*)^{1/2}\log\dmax  \notag\\
	&\hspace{6cm}+ \eta(\mu_0^m r^*)^{1/2}(\sigma/\lambda_{\submin})\log^{3/2}\dmax\notag\\
	&\leq \frac{1}{4}\frac{\mu_0r_j}{d_j},
\end{align}
where the last inequality holds as long as $\eta\nu_0^2\kappa_0^2\mu_0^{-1}\dmax\log\dmax\lesssim 1$ and $\eta(\sigma^2/\lambda_{\submin}^2)\dmax\rmin^{-1}\mu_0^{-1}\log\dmax\lesssim 1$
and $\eta\mu_0^{m-1}\dmax\frac{r^*}{\rmin}\log^2\dmax\lesssim 1$.
Now we go back to \eqref{incoh:main}, and we see as long as $\e_i^\top\U_{0,j}\U_{0,j}^\top\e_i\leq \frac{1}{2} \frac{\mu_0r_j}{d_j}$, we have 
\begin{align*}
	\e_i^\top\U_{t+1,j}\U_{t+1,j}^\top\e_i \leq  \frac{\mu_0r_j}{d_j}.
\end{align*}
Taking a union bound over all $i,j$ and we see that with probability exceeding $1-m\dmax^{-11}$, 
$$\ltwoinf{\U_{t+1,j}}^2\leq \frac{\mu_0r_j}{d_j}, \forall j\in[m].$$
\hspace{1cm}

\noindent\textit{Step 2.2: Bounding the entry-wise error $\linf{\bcalT_{t+1} - \bcalT^*}$. } 
To remind the readers that $\S_1,\S_{2,1},\w$ depend on $t$, we add subscripts. 
From \eqref{k=1entrywise}, \eqref{k=2entrywise} and \eqref{kgeq3entrywise}, we see that 
\begin{align}\label{twoinf:recursive}
	\P_{\U_{t+1,j}}\e_i - \P_{\U^*_j}\e_i = (1-\eta)\big(\P_{\U_{t,j}}\e_i - \P_{\U^*_j}\e_i\big) + \S_{1,t}\e_i - \EE_t\S_{1,t}\e_i + \S_{2,1,t}\e_i - \EE_t\S_{2,1,t}\e_i + \w_t,
\end{align}
where $\w_t$ satisfies
\begin{align*}
	\ltwo{\w_t} &\lesssim \eta\lambda_{\submin}^{-1}\kappa_0^2\fro{\bcalT_t -\bcalT^*}\sqrt{\frac{\mu_0r_j}{d_j}} + \eta^2(d^*\linf{\bcalT_t - \bcalT^*}^2 + \sigma^2\log\dmax)\mu_0^{m-1/2}(d_jr_j^-r^*)^{1/2}\lambda_{\submin}^{-2}\\
	&\quad + \eta^3\bigg(\sqrt{d^*}\linf{\bcalT_t - \bcalT^*} + \sigma\sqrt{\log\dmax}\bigg)^3\lambda_{\submin}^{-3} (\mu_0^{m-1}\dmax\frac{r^*}{\rmin})^{3/2}.
\end{align*}
Now we plug in the bound for $\fro{\bcalT_t - \bcalT^*}$ and $\linf{\bcalT_t - \bcalT^*}$, and as long as $\eta\kappa_0^4\mu_0^{2m-1}\dmax\frac{(r^*)^2}{\rmin}\lesssim_m 1$, $\lambda_{\submin}/\sigma \gtrsim_m \eta^{1/2}\mu_0^{m-1}\kappa_0^{-2}\frac{\dmax r^*}{(\dof)^{1/2}\rmin}\log\dmax$ and $\fro{\bcalT_0 - \bcalT^*}\leq \lambda_{\submin}$, we have 
\begin{align}\label{bound:wl}
	\ltwo{\w_t} \lesssim \eta(1-\frac{\eta}{4})^{t/2}\kappa_0^2(\frac{\mu_0r_j}{d_j})^{1/2} + C\eta\lambda_{\submin}^{-1}\kappa_0^2(\frac{\mu_0r_j}{d_j})^{1/2} (\eta\dof)^{1/2}\sigma.
\end{align}
Now telescoping \eqref{twoinf:recursive}, we see that 
\begin{align*}
	\P_{\U_{t+1,j}}\e_i - \P_{\U^*_j}\e_i &= (1-\eta)^{t+1}\big(\P_{\U_{0,j}}\e_i - \P_{\U^*_j}\e_i\big) +\sum_{l=0}^t(1-\eta)^{t-l}\w_l\\
	&\quad +  \sum_{l=0}^t\underbrace{(1-\eta)^{t-l}\bigg(\S_{1,l}\e_i - \EE_t\S_{1,l}\e_i + \S_{2,1,l}\e_i - \EE_t\S_{2,1,l}\e_i\bigg)}_{\f_l}.
\end{align*}
We shall use martingale concentration inequality to bound $\ltwo{\sum_{l=0}^t \f_l}$. Recall $\S_{1,l}\e_i = \P_{\U}^\perp \Z\V\bSigma^{-1}\U^\top\e_i + \U\bSigma^{-1}\V^\top\Z^\top\P_{\U}^\perp\e_i$. And therefore
\begin{align*}
	\ltwo{\S_{1,l}\e_i}\lesssim \eta(d^*)^{1/2}(\frac{\mu_0^{m-1}r_j^-}{d_j^-})^{1/2}\bigg((d^*)^{1/2}\linf{\bcalT_l-\bcalT^*} + \sigma\log^{1/2}\dmax\bigg)\lambda_{\submin}^{-1}.
\end{align*}
Using the bound for $\linf{\bcalT_t-\bcalT^*}$ and we see (given $\eta\dof\kappa_0^6\mu_0^mr^*\lesssim \log\dmax$)
\begin{align*}
	\ltwo{\S_{1,l}\e_i}\lesssim \eta(\mu_0^{m-1}r_j^-d_j)^{1/2}\bigg((1-\frac{\eta}{4})^{l/2}\kappa_0^2(\mu_0^mr^*)^{1/2}\lambda_{\submax} + \sigma\log^{1/2}\dmax\bigg)\lambda_{\submin}^{-1}.
\end{align*}
 On the other hand, recall $\S_{2,1,l}\e_i = \P_{\U}^\perp\Z\V\bSigma^{-2}\V^\top\Z^\top\P_{\U}^\perp \e_i$, and similarly
 \begin{align*}
 	\ltwo{\S_{2,1,l}\e_i} \leq \eta^2\mu_0^{m-1}r_j^-d_j\bigg((1-\frac{\eta}{4})^{l}\kappa_0^4(\mu_0^mr^*)\lambda_{\submax}^2 + \sigma^2\log\dmax\bigg)\lambda_{\submin}^{-2}.
 \end{align*}
So as long as $\eta^2\mu_0^{m-1}r_j^-d_j\bigg((1-\frac{\eta}{4})^{l}\kappa_0^4(\mu_0^mr^*)\lambda_{\submax}^2 + \sigma^2\log\dmax\bigg)\lambda_{\submin}^{-2}\lesssim 1$, we see 
\begin{align}\label{uniform:1}
	\ltwo{\f_l} \lesssim \eta(\mu_0^{m-1}r_j^-d_j)^{1/2}\bigg((1-\frac{\eta}{4})^{t/2}\kappa_0^2(\mu_0^mr^*)^{1/2}\lambda_{\submax} + \sigma\log^{1/2}\dmax\bigg)\lambda_{\submin}^{-1}
\end{align}
for all $0\leq l\leq t$. Now we consider the variance bound,
\begin{align*}
	\EE\e_i^\top\S_{1,l}^\top\S_{1,l}\e_i &\lesssim \eta^2(d^*\linf{\bcalT_l-\bcalT^*}^2 + \sigma^2\log\dmax)\sum_{p,q} \e_i^\top\P_{\U}^\perp\e_p\e_q^\top \V\bSigma^{-2}\V^\top\e_q\e_p^\top\P_{\U}^\perp\e_i\\
	&\quad + \eta^2(d^*\linf{\bcalT_l-\bcalT^*}^2 + \sigma^2\log\dmax)\sum_{p,q} \e_i^\top\U\bSigma^{-1}\V^\top\e_q\e_p^\top\P_{\U}^\perp\e_p\e_q^\top\V\bSigma^{-1}\U^\top\e_i\\
	&\lesssim \eta^2(d^*\linf{\bcalT_l-\bcalT^*}^2 + \sigma^2\log\dmax)\mu_0r_j\lambda_{\submin}^{-2}\\
	&\lesssim \eta^2\bigg((1-\frac{\eta}{4})^t\kappa_0^4\mu_0^mr^*\lambda_{\submax}^2 + \sigma^2\log\dmax\bigg)\mu_0r_j\lambda_{\submin}^{-2}.
\end{align*}
On the other hand, 
\begin{align*}
	\S_{1,l}\e_i\e_i^\top\S_{1,l}^\top = (\P_{\U}^\perp\Z\V\bSigma^{-1}\U^\top + \U\bSigma^{-1}\V^\top\Z^\top\P_{\U}^\perp)\e_i\e_i^\top(\U\bSigma^{-1}\V^\top\Z^\top\P_{\U}^\perp + \P_{\U}^\perp \Z\V\bSigma^{-1}\U^\top).
\end{align*}
And similar computation as above gives
\begin{align*}
	\op{\EE \S_{1,l}\e_i\e_i^\top\S_{1,l}^\top} \lesssim \eta^2\bigg((1-\frac{\eta}{4})^t\kappa_0^4\mu_0^mr^*\lambda_{\submax}^2 + \sigma^2\log\dmax\bigg)\lambda_{\submin}^{-2}.
\end{align*}
Meanwhile, $\EE_l\e_i^\top\S_{2,1,l}^\top\S_{2,1,l}\e_i$ and $\op{\EE_l\S_{2,1,l}\e_i\e_i^\top\S_{2,1,l}^\top}$ can be bounded as follows
\begin{align*}
	&\quad\max\{\EE_l\e_i^\top\S_{2,1,l}^\top\S_{2,1,l}\e_i, \op{\EE_l\S_{2,1,l}\e_i\e_i^\top\S_{2,1,l}^\top}\}\\
	&\leq \eta^4\bigg((1-\frac{\eta}{4})^{2t}\kappa_0^8\mu_0^{2m}(r^*)^2\lambda_{\submax}^4 + \sigma^4\log^2\dmax\bigg)\mu_0^{m-1}r^*d_j\lambda_{\submin}^{-4}.
\end{align*}
Therefore we conclude as long as $\eta^2(\kappa_0^4\mu_0^mr^*\lambda_{\submax}^2 + \sigma^2\log\dmax)\mu_0^{m-2}d_jr_j^-\lambda_{\submin}^{-2}\lesssim 1$, 
\begin{align*}
	\max\{\op{\EE_l\f_l\f_l^\top}, \EE_l\f_l^\top\f_l\}\lesssim (1-\eta)^{2t-2l}\eta^2\bigg((1-\frac{\eta}{4})^t\kappa_0^4\mu_0^mr^*\lambda_{\submax}^2 + \sigma^2\log\dmax\bigg)\lambda_{\submin}^{-2}.
\end{align*}
And 
\begin{align}\label{variance:1}
	\max\{\op{\sum_{l=0}^t\EE_l\f_l\f_l^\top}, \sum_{l=0}^t\EE_l\f_l^\top\f_l\}\lesssim \eta\bigg((1-\frac{\eta}{4})^{t+1}\kappa_0^4\mu_0^mr^*\lambda_{\submax}^2 + \sigma^2\log\dmax\bigg)\lambda_{\submin}^{-2}.
\end{align}
Then from \eqref{uniform:1} and \eqref{variance:1}, we see with probability exceeding $1-\dmax^{-11}$, 
\begin{align*}
	\ltwo{\sum_{l=0}^t\f_l} &\lesssim \eta(\mu_0^{m-1}r_j^-d_j)^{1/2}\bigg((1-\frac{\eta}{4})^{t/2}\kappa_0^2(\mu_0^mr^*)^{1/2}\lambda_{\submax} + \sigma\log^{1/2}\dmax\bigg)\lambda_{\submin}^{-1}\log\dmax\\
	&\quad+ \bigg[\eta\bigg((1-\frac{\eta}{4})^{t+1}\kappa_0^4\mu_0^mr^*\lambda_{\submax}^2 + \sigma^2\log\dmax\bigg)\lambda_{\submin}^{-2}\log\dmax\bigg]^{1/2}\\
	&\lesssim \bigg[\eta\bigg((1-\frac{\eta}{4})^{t+1}\kappa_0^4\mu_0^mr^*\lambda_{\submax}^2 + \sigma^2\log\dmax\bigg)\lambda_{\submin}^{-2}\log\dmax\bigg]^{1/2},
\end{align*}
where the last inequality holds as long as $\eta\dmax\frac{r^*}{\rmin}\mu_0^{m-1}\log\dmax\lesssim 1$.
From this and \eqref{bound:wl}, we see 
\begin{align*}
	\ltwo{\P_{\U_{t+1,j}}\e_i - \P_{\U^*_j}\e_i} &\leq (1-\eta)^{t+1}\ltwo{\P_{\U_{0,j}}\e_i - \P_{\U^*_j}\e_i} + C(1-\frac{\eta}{4})^{(t+1)/2}\kappa_0^2(\frac{\mu_0r_j}{d_j})^{1/2} \\
	&\quad+ C\lambda_{\submin}^{-1}\kappa_0^2(\frac{\mu_0r_j}{d_j})^{1/2} (\eta\dof)^{1/2}\sigma\\
	&\quad + C\bigg[\eta\bigg((1-\frac{\eta}{4})^{t+1}\kappa_0^4\mu_0^mr^*\lambda_{\submax}^2 + \sigma^2\log\dmax\bigg)\lambda_{\submin}^{-2}\log\dmax\bigg]^{1/2}\\
	&\leq (1-\eta)^{t+1}\ltwo{\P_{\U_{0,j}}\e_i - \P_{\U^*_j}\e_i} + C(1-\frac{\eta}{4})^{(t+1)/2}\kappa_0^2(\frac{\mu_0r_j}{d_j})^{1/2} \\
	&\quad+ C\lambda_{\submin}^{-1}\kappa_0^2(\frac{\mu_0r_j}{d_j})^{1/2} (\eta\dof)^{1/2}\sigma,
\end{align*}
where the last inequality holds as long as $\eta\mu_0^{m-1}\kappa_0^2\dmax\frac{r^*}{\rmin}\log\dmax\lesssim 1$.
Now we see as long as $\U_{0,j}$ is $\mu_0$-incoherent, we have 
\begin{align}\label{twoinfUj}
	\ltwo{\P_{\U_{t+1,j}}\e_i - \P_{\U^*_j}\e_i} 
	&\leq  C(1-\frac{\eta}{4})^{(t+1)/2}\kappa_0^2 (\frac{\mu_0 r_j }{d_j})^{1/2} + 
	C\sigma\lambda_{\submin}^{-1}\kappa_0^2 (\frac{\mu_0 r_j }{d_j})^{1/2}(\eta\dof)^{1/2}.
\end{align}
Taking a union bound and with probability exceeding $1-m\dmax^{-11}$, \eqref{twoinfUj} holds for all $j\in[m]$. Now we are ready to bound $\linf{\bcalT_{t+1} - \bcalT^*}$. Since
\begin{align*}
	\bcalT_{t+1} - \bcalT^* &= \bcalT_{t+1}\times_{j=1}^m\P_{\U_{t+1,j}} - \bcalT^*\times_{j=1}^m\P_{\U_{j}^*} \\
	&=(\bcalT_{t+1} - \bcalT^*)\times_j\P_{\U_{t+1},j}+ \sum_{j=1}^m \bcalT^*\times_{l<j}\P_{\U_l^*}\times_j(\P_{U_{t+1,j}} - \P_{\U_j^*})\times_{l>j}\P_{\U_{t+1,j}},
\end{align*}
Using triangle inequality and \eqref{twoinfUj}, we have 
\begin{align*}
	\linf{\bcalT_{t+1} - \bcalT^*} &\leq \fro{\bcalT_{t+1}-\bcalT^*}(\frac{\mu_0^mr^*}{d^*})^{1/2} + \sum_{j=1}^m\ltwoinf{\P_{U_{t+1,j}} - \P_{\U_j^*}}(\frac{\mu_0^{m-1}r_j^-}{d_j^-})^{1/2}\lambda_{\submax}\\
	&\leq \bigg(\sqrt{2}(1-\frac{\eta}{4})^{(t+1)/2}\fro{\bcalT_0 -\bcalT^*} + C (\eta\dof)^{1/2}\sigma\bigg)(\frac{\mu_0^mr^*}{d^*})^{1/2}\\
	&\quad + \sum_{j=1}^m[C(1-\frac{\eta}{4})^{(t+1)/2}\kappa_0^2 (\frac{\mu_0 r_j }{d_j})^{1/2} + 
	C\sigma\lambda_{\submin}^{-1}\kappa_0^2 (\frac{\mu_0 r_j }{d_j})^{1/2}(\eta\dof)^{1/2}](\frac{\mu_0^{m-1}r_j^-}{d_j^-})^{1/2}\lambda_{\submax}\\
	&\leq Cm(1-\frac{\eta}{4})^{\frac{t+1}{2}}\kappa_0^2(\frac{\mu_0^mr^*}{d^*})^{1/2}\lambda_{\submax} + Cm\kappa_0^3(\frac{\mu_0^mr^*}{d^*})^{1/2}(\eta\dof)^{1/2}\sigma.
\end{align*}

\noindent\textit{Step 3: Controlling the probability.} From Step 1 and Step 2, we have proved under the event $\calE_t$, $\calE_{t+1}$ holds with probability exceeding $1-(1+2m\dmax^{-1})\dmax^{-10} \geq 1-3\dmax^{-10}$. Then a similar argument as in Step 3 in the proof of Theorem \ref{thm:gen} shows  
$$\PP(\calE_T^c) = \sum_{t= 1}^T\PP(\calE_{t-1}\cap\calE_t^c)\leq 3T\dmax^{-10}.$$

\subsection{Proof of Theorem \ref{thm:init:completion}}
Notice for each $j\in[m]$, $\tilde\U_j$ is actually the top $r_j$ left singular vectors of the following matrix:
\begin{align*}
	\tilde\N_j = \frac{1}{T_1(T_1-1)}\sum_{1\leq i<i'\leq T_1} Y_iY_{i'}(\calM_j(\bcalX_{i})\calM_j(\bcalX_{i'})^\top+\calM_j(\bcalX_{i'})\calM_j(\bcalX_{i})^\top).
\end{align*}
We denote $\N_j = \calM_j(\bcalT^*)\calM_j(\bcalT^*)^\top$. 
Using Wedin's sin$\Theta$ theorem, we have
\begin{align*}
	\op{\hat\U_j\hat\U_j^\top- \U_j^*\U_j^{*\top}} \leq \frac{\sqrt{2}\op{\hat\N_j - \N_j}}{\lambda_{\submin}^2}. 
\end{align*}
Now $\hat\N_j  -\N_j$ is a U-statistics of order 2, using standard decoupling techniques for U-statistics (see e.g. Theorem 3.4.1 in \cite{de2012decoupling}), we have 
\begin{align*}
	\PP(\op{\tilde\N_j - \N_j}\geq t )\leq 15\PP(\op{\bar\N_j - \N_j}\geq t), 
\end{align*}
where 
\begin{align*}
	\bar\N_j =  \frac{1}{T_1(T_1-1)}\sum_{1\leq i<i'\leq T_1} Y_i\tilde Y_{i'}(\calM_j(\bcalX_{i})\calM_j(\tilde\bcalX_{i'})^\top+\calM_j(\tilde\bcalX_{i'})\calM_j(\bcalX_{i})^\top),
\end{align*}
where $\tilde\bcalX_i = \sqrt{d^*}\bcalE_{\tilde\omega_{i}}$, and $\tilde\omega_{i}, \tilde Y_i$ i.i.d. copy of $\omega_{i},Y_i$ such that 
\begin{align*}
	\tilde Y_{i} = \inp{\tilde\bcalX_{i}}{\bcalT^*} +\tilde\epsilon_i. 
\end{align*}
For notation simplicity, we drop the subscript $j$, and we denote $m_1 = d_j$, $m_2=  d_j^-$, $\M = \calM_j(\bcalT^*)\in\RR^{m_1\times m_2}$, and $\X_i = \calM_j(\bcalX_{i})$, $\tilde\X_i = \calM_j(\tilde\bcalX_{i})$. 
We define 
\begin{align*}
	\S_1 = \bDel_1 + \Z_1, \quad \S_2 = \bDel_2 + \Z_2,
\end{align*}
where 
\begin{align*}
	&\bDel_1 = \left(\frac{1}{T_1}\sum_{i=1}^{T_1}\inp{\X_i}{\M}\X_i - \M\right),  &\bDel_2 = \left(\frac{1}{T_1}\sum_{i=1}^{T_1}\inp{\tilde\X_i}{\M}\tilde\X_i - \M\right),\\
	&\Z_1 = \frac{1}{T_1}\sum_{i=1}^{T_1}\epsilon_i\X_i, &\Z_2 = \frac{1}{T_1}\sum_{i=1}^{T_1}\tilde\epsilon_i\tilde\X_i.
\end{align*}
Recall we write $\N_j = \M\M^\top$ and thus 
\begin{align*}
	\tilde\N_j - \N_j &= \frac{T_1}{2(T_1 - 1)}(\S_1\S_2^\top +\S_2\S_1^\top)+ \frac{T_1}{2(T_1-1)}(\S_1+\S_2)\M^\top + \frac{T_1}{2(T_1-1)}\M(\S_1+\S_2)^\top \\
	&\quad+ \frac{1}{T_1-1}\left(\frac{1}{2T_1}\sum_{i=1}^{T_1}Y_i\tilde Y_i(\X_i\tilde\X_i^\top + \tilde\X_i\X_i^\top) - \M\M^\top\right).
\end{align*}
We denote $M = \max\{m_1,m_2\}$. 
Using matrix Bernstein inequality (see e.g. \cite{koltchinskii2011nuclear}) and Lemma 2.1 in \cite{koltchinskii2016perturbation}, we have the following event 
\begin{align*}
	\calE_1 = \bigg\{\max\{\op{\Z_1},\op{\Z_2}\} \leq C\sigma\frac{\sqrt{M\log M}}{\sqrt{T_1}}\bigg\}
\end{align*}
holds with probability exceeding $1-M^{-10}$ as long as $T_1\gtrsim \min\{m_1,m_2\}\log^2 M$. And from matrix Bernstein inequality, 
\begin{align*}
	\calE_2 = \bigg\{\max\{\op{\bDel_1}, \op{\bDel_2}\}\leq \linf{\M}\frac{\sqrt{Md^*\log M}}{\sqrt{T_1}}\bigg\}
\end{align*}
holds with probability exceeding $1-M^{-10}$ as long as $T_1\gtrsim \min\{m_1,m_2\}\log M$. 
Moreover, we consider the event 
\begin{align*}
	\calE_3 = \bigg\{\oneinf{\frac{1}{T_1}\sum_{t=1}Y_t\X_t^\top},& \oneinf{\frac{1}{T_1}\sum_{t=1}\tilde Y_t\tilde\X_t^\top}\\
	&\lesssim \sqrt{d^*}\big(\sqrt{d^*}\linf{\M} +\sigma\log^{1/2}M\big)(\frac{1}{m_2}+\frac{\log M}{T_1})\bigg\},
\end{align*}
where $\oneinf{\S^\top} = \max_{i\in[m_2]}\ltwo{\S\e_i}$ for $\S\in\RR^{m_1\times m_2}$. From Chernoff bound (see e.g. proof of Theorem 2 in \cite{yuan2017incoherent}), we see $\PP(\calE_3)\geq 1 - T_1M^{-10}$. 
We now proceed our proof conditioning on $\calE_1\cap \calE_2 \cap \calE_3$. 

\hspace{1cm}

\noindent\textit{Upper bound for $\op{\S_1\S_2^\top}, \op{\S_2\S_1^\top}$. }
We only consider the upper bound for $\op{\S_2\S_1^\top}$. Notice $\S_2$ is independent of $Y_i\X_i$. We shall proceed conditioning on $\S_2$. In fact, 
\begin{align*}
	\S_2\S_1^\top = \frac{1}{T_1}\sum_{i=1}^{T_1}(Y_i\S_2\X_i^\top - \S_2\M^\top), 
\end{align*}
and that 
\begin{align*}
	\EE \S_2(Y_i\X_i^\top -\M^\top)(Y_i\X_i -\M)\S_2^\top \preccurlyeq (d^*\linf{\M}^2 + \sigma^2)m_1\S_2\S_2^\top,
\end{align*}
and 
\begin{align*}
	\EE (Y_i\X_i -\M)\S_2^\top\S_2(Y_i\X_i^\top -\M^\top) \preccurlyeq (d^*\linf{\M}^2 + \sigma^2)\tr(\S_2^\top\S_2)\I_{m_1}. 
\end{align*}
Moreover, we have 
\begin{align*}
	&\quad \bigg\|\op{\S_2(Y_i\X_i^\top - \M^\top)}\bigg\|_{\psi_2} \lesssim  \sqrt{m_1}\oneinf{\S_2^\top}(\sqrt{d^*}\linf{\M} + \sigma) + \op{\M}\op{\S_2}. 
\end{align*}
Now using the matrix Bernstein inequality (see e.g. Proposition 2 in \cite{koltchinskii2016perturbation}), we have with probability exceeding $1-M^{-10}$, the following event holds
\begin{align*}
	\calE_4 = \bigg\{\op{\S_2\S_1^\top}&\lesssim \frac{\sqrt{\log^3 M}}{T_1}\bigg(\sqrt{m_1}\oneinf{\S_2^\top}(\sqrt{d^*}\linf{\M} + \sigma) + \op{\M}\op{\S_2}\bigg)\\
	&\quad+(\sqrt{d^*}\linf{\M} + \sigma)\op{\S_2}\sqrt{\frac{m_1\log M}{T_1}}\bigg\}.
\end{align*}
And thus under $\calE_1\cap \calE_2\cap\calE_3\cap \calE_4$, as long as $T_1\gtrsim \sqrt{\min\{m_1,m_2\}}\log^2M$,
\begin{align*}
	\op{\S_2\S_1^\top}\lesssim (\sqrt{d^*}\linf{\M} + \sigma)^2\frac{\sqrt{m_1M}\log M}{T_1}.
\end{align*}

\hspace{1cm}

\noindent\textit{Upper bound for $\op{\S_1\M^\top}, \op{\S_2\M^\top}$. }
Notice 
\begin{align*}
	\S_1\M^\top  = \frac{1}{T_1}\sum_{i=1}^{T_1}(Y_i\X_i - \M)\M^\top.
\end{align*}
And it is easy to verify that 
\begin{align*}
	&\quad\max\bigg\{\big\|\EE(Y_i\X_i - \M)\M^\top\M(Y_i\X_i - \M)\big\|, \big\|\EE\M(Y_i\X_i - \M)(Y_i\X_i - \M)\M^\top\big\|\bigg\}\\
	&\lesssim m_1(\sigma^2 + d^*\linf{\M}^2)\op{\M}^2, 
\end{align*}
and 
\begin{align*}
	&\quad\bigg\|\op{(Y_i\X_i - \M)\M^\top}\bigg\|_{\psi_2} \leq \sqrt{m_1d^*}(\sqrt{d^*}\linf{\M} + \sigma)\linf{\M}.
\end{align*}
Using matrix Bernstein inequality again, and we see with probability exceeding $1-M^{-10}$, the following event holds,
\begin{align*}
	\calE_5 = \bigg\{\op{\S_1\M^\top},\op{\S_2\M^\top} \lesssim \frac{\sqrt{m_1d^*}(\sqrt{d^*}\linf{\M} + \sigma)\linf{\M}\sqrt{\log M}}{\sqrt{T_1}}\bigg\}. 
\end{align*}

\hspace{1cm}

\noindent\textit{Upper bound for $\frac{1}{T_1-1}\left(\frac{1}{2T_1}\sum_{i=1}^{T_1}Y_i\tilde Y_i(\X_i\tilde\X_i^\top + \tilde\X_i\X_i^\top) - \M\M^\top\right)$. }
In fact, we have
\begin{align*}
	\frac{1}{T_1}\sum_{i=1}^{T_1}Y_i\tilde Y_i\X_i\tilde\X_i^\top- \M\M^\top &= \frac{1}{T_1}\sum_{i=1}^{T_1}Y_i\X_i(\tilde Y_i \tilde \X_i^\top - \M^\top) + \frac{1}{T_1}\sum_{i=1}^{T_1}(Y_i\X_i - \M)\M^\top.
\end{align*}
Here the second term is $\S_1\M^\top$, which is just bounded above. Now we conditioned on $Y_i\X_i, i = 1,\cdots, T_1$. One can similarly show 
\begin{align*}
	\bigg\|\op{Y_i\X_i(\tilde Y_i \tilde \X_i^\top - \M^\top)}\bigg\|_{\psi_2} \lesssim \sqrt{d^*}(\sigma + \sqrt{d^*}\linf{\M})\op{Y_i\X_i}. 
\end{align*}
And 
\begin{align*}
	&\quad\max\bigg\{\big\|\EE \sum_{i=1}^{T_1}Y_i^2\X_i(\tilde Y_i\tilde\X_i - \M)^\top(\tilde Y_i\tilde\X_i - \M)\X_i^\top\big\|, \big\|\EE\sum_{i=1}^{T_1}Y_i^2(\tilde Y_i\tilde\X_i - \M)\X_i^\top\X_i(\tilde Y_i\tilde\X_i - \M)^\top\big\|\bigg\}\\
	&\lesssim m_1(\sigma^2 + d^*\linf{\M}^2)\max\bigg\{\left\|\sum_{i=1}^{T_1}Y_i^2\X_i\X_i^\top\right\|, \left\|\sum_{i=1}^{T_1}Y_i^2\X_i^\top\X_i\right\|\bigg\}.
\end{align*}
And we have from matrix Bernstein inequality again, and we see with probability exceeding $1-M^{-10}$, the following event holds,
\begin{align*}
	\calE_6 = \bigg\{&\op{\frac{1}{T_1}\sum_{i=1}^{T_1}Y_i\X_i(\tilde Y_i \tilde \X_i^\top - \M^\top)} \lesssim \sqrt{d^*}(\sigma + \sqrt{d^*}\linf{\M})\frac{\log M}{T_1}\max_{i}\op{Y_i\X_i}\\
	&\quad + \sqrt{m_1}(\sigma + \sqrt{d^*}\linf{\M})\frac{\sqrt{\log M}}{T_1}\sqrt{\max\bigg\{\left\|\sum_{i=1}^{T_1}Y_i^2\X_i\X_i^\top\right\|, \left\|\sum_{i=1}^{T_1}Y_i^2\X_i^\top\X_i\right\|\bigg\}}
	\bigg\}.
\end{align*}
Now we consider the following event 
\begin{align*}
	\calE_7 = \bigg\{\max_i \op{Y_i\X_i}\lesssim (\sigma\sqrt{\log M} + \sqrt{d^*}\linf{\M})\bigg\},
\end{align*}
which holds with probability exceeding $1-T_1\dmax^{-10}$. 
So we conclude on $\calE_6\cap\calE_7$, we have 
\begin{align*}
	\op{\frac{1}{T_1}\sum_{i=1}^{T_1}Y_i\X_i(\tilde Y_i \tilde \X_i^\top - \M^\top)} \lesssim(\sigma + \sqrt{d^*}\linf{\M})^2 \bigg(\frac{\sqrt{d^*\log^3 M}}{T_1} + \frac{\sqrt{m_1}\log M}{\sqrt{T_1}}\bigg).
\end{align*}

\hspace{1cm}

\noindent\textit{Finalize the proof for subspace. }
Under $\bigcap_{k=1}^7\calE_k$, we have 
\begin{align*}
	\op{\bar\N_j - \N_j} &\lesssim (\sqrt{d^*}\linf{\M} + \sigma)^2\frac{\sqrt{m_1M}\log M}{T_1} + \frac{\sqrt{m_1d^*}(\sqrt{d^*}\linf{\M} + \sigma)\linf{\M}\sqrt{\log M}}{\sqrt{T_1}}\\
	&\quad + \frac{1}{T_1}(\sigma + \sqrt{d^*}\linf{\M})^2 \bigg(\frac{\sqrt{d^*\log^3 M}}{T_1} + \frac{\sqrt{m_1}\log M}{\sqrt{T_1}}\bigg)\\
	&\lesssim (\sqrt{d^*}\linf{\M} + \sigma)^2\frac{\sqrt{m_1M}\log M}{T_1} + \frac{\sqrt{m_1d^*}(\sqrt{d^*}\linf{\M} + \sigma)\linf{\M}\sqrt{\log M}}{\sqrt{T_1}}.
\end{align*}
We now plug in the $m_1, m_2, M$ and we obtain 
\begin{align*}
	\op{\bar\N_j - \N_j} &\lesssim_m(\sqrt{d^*}\linf{\M} + \sigma)^2\frac{\sqrt{d_j^2 + d^*}\log\dmax}{T_1} + \frac{\sqrt{d_jd^*}(\sqrt{d^*}\linf{\M} + \sigma)\linf{\M}\sqrt{\log\dmax}}{\sqrt{T_1}},
\end{align*}
where $\lesssim_m$ hides constant depending only on the dimension $m$ of the tensor. Finally, notice $\sqrt{d^*}\linf{\M}\leq \mu^{m/2}\sqrt{r^*}\lambda_{\submax}$, we conclude with probability exceeding $1-4T_1\dmax^{-10}$, 
\begin{align*}
	\op{\tilde\U_j\tilde\U_j^\top - \U_j^*\U_j^{*\top}} &\lesssim_m (\mu^mr^*\kappa_0^2+ \sigma^2/\lambda_{\submin}^2)\frac{(d_j + \sqrt{d^*})\log\dmax}{T_1} \\
	&\quad + \frac{\sqrt{d_j}(\mu^{m/2}\sqrt{r^*}\lambda_{\submax}+ \sigma)\mu^{m/2}\sqrt{r^*}\lambda_{\submax}\sqrt{\log\dmax}}{\lambda_{\submin}^2\sqrt{T_1}}.
\end{align*}
Finally from Remark 6.2 in \cite{keshavan2010matrix}, we conclude 
\begin{align*}
	\op{\hat\U_j\hat\U_j^\top - \U_j^*\U_j^{*\top}} &\lesssim_m (\mu^mr^*\kappa_0^2+ \sigma^2/\lambda_{\submin}^2)\frac{(d_j + \sqrt{d^*})\log\dmax}{T_1} \\
	&\quad + \frac{\sqrt{d_j}(\mu^{m/2}\sqrt{r^*}\lambda_{\submax}+ \sigma)\mu^{m/2}\sqrt{r^*}\lambda_{\submax}\sqrt{\log\dmax}}{\lambda_{\submin}^2\sqrt{T_1}}.
\end{align*}
and $\incoh(\hat\U_j)\leq 3\mu$.

\hspace{1cm}

\noindent\textit{Core tensor estimation. }
Now we consider the accuracy for the core tensor estimation. 
For notation simplicity, we shall use $\bcalX_t$ instead of $\bcalX_{t-T_1}$ and then $\hat\U_{j}, j\in[m]$ is independent of $\{\bcalX_t\}_{t=1}^{T_2}$. We denote the loss function 
\begin{align*}
	L(\bcalC) = \frac{1}{2T_2} \sum_{i=1}^{T_2}\big(\inp{\bcalX_t}{\bcalC\times_1\hat\U_1\cdots\times_m\hat\U_m} - Y_t\big)^2. 
\end{align*}
And simple computation shows $\nabla L(\bcalC) =\frac{1}{T_2} \sum_{i=1}^{T_2} \big(\inp{\bcalX_t}{\bcalC\times_{j=1}^m\hat\U_j} - Y_t\big)\bcalX_t\times_{j=1}^m\hat\U_j^\top$, and 
\begin{align*}
	\EE\nabla L(\bcalC) = \bcalC - \bcalC^*\times_{j=1}^m(\hat\U_j^\top\U_j^{*}). 
\end{align*}
It would also be helpful to notice $\nabla L(\hat\bcalC) = 0$ since $\hat\bcalC$ is the least square estimator. 
We now decompose 
\begin{align*}
	\fro{\hat \bcalC - \bcalC^*}^2 &= \inp{\hat \bcalC - \bcalC^*}{\hat \bcalC - \bcalC^*} = \inp{\hat \bcalC - \bcalC^*}{\EE\nabla L(\hat\bcalC) - \EE\nabla L(\bcalC^*)}\\
	&= \underbrace{\inp{\hat \bcalC - \bcalC^*}{\EE\nabla L(\hat\bcalC) -\nabla L(\hat\bcalC) }}_{=:\beta_1}- \underbrace{\inp{\hat \bcalC - \bcalC^*}{\EE\nabla L(\bcalC^*)}}_{\beta_2}. 
\end{align*}
Using the expression for $\EE\nabla L(\hat\C)$ and $\nabla L(\hat\bcalC)$ above, we can further decompose $\beta_1$ as 
\begin{align*}
	\beta_1 &= - \frac{1}{T_2}\sum_{t=1}^{T_2} \inp{\bcalX_t\times_j\hat\U_j^\top}{\hat\bcalC - \bcalC^*\times_{j=1}^m(\hat\U_j^\top\U_j^{*})}\inp{\bcalX_t\times_j\hat\U_j^\top}{\hat\bcalC - \bcalC^*}\\
	&\quad +\inp{\hat \bcalC - \bcalC^*}{\hat\bcalC - \bcalC^*\times_{j=1}^m(\hat\U_j^\top\U_j^{*})} + \frac{1}{T_2}\sum_{t=1}^{T_2}\epsilon_t\inp{\bcalX_t\times_j\hat\U_j^\top}{\hat\bcalC - \bcalC^*}. 
\end{align*}
And we have 
\begin{align*}
	\beta_{1,1}&:= \bigg|- \frac{1}{T_2}\sum_{t=1}^{T_2} \inp{\bcalX_t\times_j\hat\U_j^\top}{\hat\bcalC - \bcalC^*\times_{j=1}^m(\hat\U_j^\top\U_j^{*})}\inp{\bcalX_t\times_j\hat\U_j^\top}{\hat\bcalC - \bcalC^*} \\
	&\hspace{6cm}+\inp{\hat \bcalC - \bcalC^*}{\hat\bcalC - \bcalC^*\times_{j=1}^m(\hat\U_j^\top\U_j^{*})}\bigg|\\
	&=|\inp{\frac{1}{T_2}\sum_{t=1}^{T_2}\g_i\g_i^\top - \I}{\a\b^\top}|, 
\end{align*}
where $\g_i = \vec(\bcalX_t\times_j\hat\U_j^\top)\in\RR^{r^*}$ satisfies $\psitwo{\inp{\g_i}{\m}}\leq C_0\ltwo{\m}$ (notice we here implicitly use the fact $\hat\U_j$ and $\bcalX_i$ are independent), and $$\a = \vec(\hat \bcalC - \bcalC^*), \b = \vec\big(\hat\bcalC - \bcalC^*\times_{j=1}^m(\hat\U_j^\top\U_j^{*})\big). $$ 
Using matrix Bernstein inequality, and we see with probability exceeding $1-e^{-\delta_1}$ (for some $\delta_1\leq T_2$ to be specified),
\begin{align*}
	\op{\frac{1}{T_2}\sum_{t=1}^{T_2}\g_i\g_i^\top - \I} \lesssim \sqrt{\frac{\mu^mr^*(\log r^* + \delta_1)}{T_2}}. 
\end{align*} 
Therefore, we have 
\begin{align*}
	|\inp{\frac{1}{T_2}\sum_{t=1}^{T_2}\g_i\g_i^\top - \I}{\a\b^\top}| &\leq \sqrt{\frac{\mu^mr^*(\log r^* + \delta_1)}{T_2}}\nuc{\a\b^\top}\\
	&=\sqrt{\frac{\mu^mr^*(\log r^* + \delta_1)}{T_2}}\fro{\hat \bcalC - \bcalC^*} \fro{\hat\bcalC - \bcalC^*\times_{j=1}^m(\hat\U_j^\top\U_j^{*})}. 
\end{align*}
We have 
\begin{align}\label{decomposition:tc}
	&\quad \hat\bcalC - \bcalC^*\times_{j=1}^m(\hat\U_j^\top\U_j^{*}) = \hat\bcalC - \bcalC^* + \bcalC^* -  \bcalC^*\times_{j=1}^m(\hat\U_j^\top\U_j^{*})\notag\\
	&= \hat\bcalC - \bcalC^* +\sum_{j=1}^m \bcalC^*\times_1\I_{r_1}\cdots \times_j(\I_{r_j} - \hat\U_j^\top\U_j^{*})\times_{j+1}(\hat\U_{j+1}^\top\U_{j+1}^{*}) \cdots \times_m(\hat\U_{m}^\top\U_{m}^{*}).
\end{align}
Therefore 
\begin{align*}
	\fro{\hat\bcalC - \bcalC^*\times_{j=1}^m(\hat\U_j^\top\U_j^{*})} &\leq \fro{\hat\bcalC - \bcalC^*} + \lambda_{\submax}\sum_{j=1}^m\fro{\I_{r_j} - \hat\U_j^\top\U_j^{*}}\\
	&\leq  \fro{\hat\bcalC - \bcalC^*} + \lambda_{\submax}\sum_{j=1}^m\fro{\hat\U_j\hat\U_j^\top - \U_j^*\U_j^{*\top}}.
\end{align*}
So we conclude 
\begin{align*}
	\beta_{1,1} \leq \sqrt{\frac{\mu^mr^*(\log r^* + \delta_1)}{T_2}}\fro{\hat \bcalC - \bcalC^*}\bigg(\fro{\hat \bcalC - \bcalC^*} + \lambda_{\submax}\sum_{j=1}^m\fro{\hat\U_j\hat\U_j^\top - \U_j^*\U_j^{*\top}}\bigg).
\end{align*}
On the other hand, using matrix Bernstein inequality, we conclude with probability exceeding $1-e^{-\delta_1}$, 
\begin{align*}
	\beta_{1,2} &:= \bigg|\frac{1}{T_2}\sum_{t=1}^{T_2}\epsilon_t\inp{\bcalX_t\times_j\hat\U_j^\top}{\hat\bcalC - \bcalC^*}\bigg| \leq \sqrt{\frac{\mu^mr^*(\log r^* + \delta_1)}{T_2}}\sigma\fro{\hat\bcalC - \bcalC^*}. 
\end{align*}
For $\beta_2$, using once again the decomposition in \eqref{decomposition:tc}, 
\begin{align*}
	|\beta_2| &= |\inp{\hat \bcalC - \bcalC^*}{\EE\nabla L(\bcalC^*)} = \inp{\hat \bcalC - \bcalC^*}{\bcalC^* - \bcalC^*\times_{j=1}^m(\hat\U_j^\top\U_j^{*})}|\\
	&=\big|\sum_{j=1}^m\inp{\hat \bcalC - \bcalC^*}{ \bcalC^*\times_1\I_{r_1}\cdots \times_j(\I_{r_j} - \hat\U_j^\top\U_j^{*})\times_{j+1}(\hat\U_{j+1}^\top\U_{j+1}^{*}) \cdots \times_m(\hat\U_{m}^\top\U_{m}^{*})}\big|\\
	&\leq \sum_{j=1}^m \fro{\hat\bcalC - \bcalC^*}\cdot \lambda_{\submax}\fro{\hat\U_j\hat\U_j^\top - \U_j^*\U_j^{*\top}}. 
\end{align*}
Putting everything together and we have 
\begin{align*}
	\fro{\hat\bcalC - \bcalC^*}^2 &\leq \sqrt{\frac{\mu^mr^*(\log r^* + \delta_1)}{T_2}}\fro{\hat \bcalC - \bcalC^*}\bigg(\fro{\hat \bcalC - \bcalC^*} + \lambda_{\submax}\sum_{j=1}^m\fro{\hat\U_j\hat\U_j^\top - \U_j^*\U_j^{*\top}} + \sigma\bigg) \\
	&\quad + \sum_{j=1}^m \fro{\hat\bcalC - \bcalC^*}\cdot \lambda_{\submax}\fro{\hat\U_j\hat\U_j^\top - \U_j^*\U_j^{*\top}}.
\end{align*}
As a result, as long as $\sqrt{\frac{\mu^mr^*(\log r^* + \delta_1)}{T_2}}\leq \frac{1}{2}$ we have 
\begin{align*}
	\fro{\hat\bcalC - \bcalC^*}\lesssim \lambda_{\submax}\sum_{j=1}^m\fro{\hat\U_j\hat\U_j^\top - \U_j^*\U_j^{*\top}}  +\sqrt{\frac{\mu^mr^*(\log r^* + \delta_1)}{T_2}}\sigma
\end{align*}
Finally, we set $\delta_1 = \log\dmax$, and we have
\begin{align*}
	\fro{\hat\bcalC\times_{j=1}^m\hat\U_j - \bcalC^*\times_{j=1}^m\U_j^*} \leq \fro{\hat\bcalC - \bcalC^*} + \lambda_{\submax}\sum_{j=1}^m\fro{\hat\U_j\hat\U_j^\top - \U_j^*\U_j^{*\top}} \leq c_m\lambda_{\submin} 
\end{align*}
under the given sample size condition and SNR condition.

	\subsection{Proof of Theorem \ref{thm:binary}}
The proof of this theorem is similar with the proof of Theorem \ref{thm:localrefinement:completion}. We shall frequently use the results therein.
Notice from Assumption \ref{assump:binary}, we see that $\sqrt{d^*}\linf{\bcalT^*}\leq \alpha/2$. 
At time $t$, we shall condition on the following event,
\begin{align*}
	\calE_t = \bigg\{\forall 0\leq l\leq t, \fro{\bcalT_l - \bcalT^*}^2 \leq 2(1-\frac{1}{4}\eta\gamma_{\alpha})^{l}\fro{\bcalT_{0} - \bcalT^*}^2 + C\frac{\eta L_{\alpha}^2\dof}{\gamma_{\alpha}},\\
	\forall j\in[m], \ltwoinf{\U_{l,j}}^2 \leq \frac{\mu_0r_j}{d_j}\bigg\}.
\end{align*}
for some absolute constant $C>0$ and $\mu_0 = 20\kappa_0^2\mu$. From Lemma \ref{lemma:incohspiki}, we see that $\spiki(\bcalT_t)\leq (\frac{r^*}{\rmax})^{1/2}\mu_0^{m/2}\kappa_0=:\nu_0$.
Then from Assumption \ref{assump:binary} and under the event $\calE_t$, we see that 
$$\sqrt{d^*}\linf{\bcalT_t} \leq \sqrt{d^*}\frac{\nu_0\fro{\bcalT_t}}{\sqrt{d^*}}\leq \nu_0(\fro{\bcalT_t-\bcalT^*} + \fro{\bcalT^*}) \leq \nu_0\bigg(2\fro{\bcalT_0 - \bcalT^*} + C\sqrt{\frac{\eta L_{\alpha}^2\dof}{\gamma_{\alpha}}}\bigg) + \frac{\alpha}{2}\leq \alpha,$$
as long as $\fro{\bcalT_0 - \bcalT^*}\lesssim\frac{\alpha}{\nu_0}$ and $\sqrt{\frac{\eta L_{\alpha}^2\dof}{\gamma_{\alpha}}} \lesssim \alpha$.

Recall the loss function is $\ell(\bcalT_t, \mathfrak{D}_t)=h(\langle \bcalT, \bcalX_t\rangle, Y_t)$ and thus the gradient $\bcalG_t$ is given by $\bcalG_t = h_{\theta}(\inp{\bcalX_t}{\bcalT_t},Y_t)\bcalX_t$. 

\hspace{1cm} 

\noindent\textit{Step 1: Relation between $\fro{\bcalT_{t+1}-\bcalT^*}^2$ and $\fro{\bcalT_{t}-\bcalT^*}^2$.} 
We first bound $\fro{\calP_{\TT_t}\bcalG_t}$. Recall $\bcalG_t = h_{\theta}(\inp{\bcalX_t}{\bcalT_t},Y_t)\bcalX_t$ and $|h_{\theta}(\inp{\bcalX_t}{\bcalT_t},Y_t)|\leq L_{\alpha}$ since $|\inp{\bcalX_t}{\bcalT_t}|\leq \alpha$ and for $y\in\{0,1\}$, $\sup_{|\theta|\leq \alpha}|h_{\theta}(\theta,y)|\leq L_{\alpha}$ from the definition of $L_{\alpha}$.
Moreover, $\bcalX_t$ has the same form as in the tensor completion case, so from \eqref{uniform:ptx}, 
\begin{align*}
	\fro{\calP_{\TT_t}\bcalG_t} \leq L_{\alpha}(m+1)^{1/2}(\mu_0^{m-1}r^*\frac{\dmax}{\rmin})^{1/2}.
\end{align*}
So as long as $\eta L_{\alpha}(\mu_0^{m-1}r^*\frac{\dmax}{\rmin})^{1/2}\lesssim_m \lambda_{\submin}$, $\eta\fro{\calP_{\TT_t}\bcalG_t} \leq \frac{\lambda_{\submin}}{8}$. Since we have verified the condition of Lemma \ref{lemma:perturbation}, we see
\begin{align}\label{binary:t+1andt}
	\fro{\bcalT_{t+1} - \bcalT^*}^2 &\leq (1+\frac{2}{\eta\gamma_{\alpha}})\fro{\calR(\bcalT_t^+ )- \bcalT_t^+ }^2 + (1+\frac{\eta\gamma_{\alpha}}{2})\fro{\bcalT_t^+  - \bcalT^*}^2\notag\\
	&\leq C_m\frac{\eta^3\fro{\calP_{\TT_t}\bcalG_t}^4}{\gamma_{\alpha}\lambda_{\submin}^2}+ (1+\frac{\eta\gamma_{\alpha}}{2})\fro{\bcalT_t^+  - \bcalT^*}^2\notag\\
	&\leq \eta^2L_{\alpha}^2\dof + (1+\frac{\eta\gamma_{\alpha}}{2})\fro{\bcalT_t^+  - \bcalT^*}^2,
\end{align}
as long as $\eta L_{\alpha}^2(\mu_0^{m-1}r^*\frac{\dmax}{\rmin})^2\frac{1}{\gamma_{\alpha}\lambda_{\submax}^2\dof}\lesssim_m 1$. 

Denote $\bcalG_t^* = \nabla_{\bcalT} \ell(\bcalT^*,\mathfrak{D}_t) = h_{\theta}(\inp{\bcalX_t}{\bcalT^*},Y_t)\bcalX_t$. Then $\EE_t\bcalG_t^* = \EE_{\bcalX_t}\EE_{Y_t}\bcalG_t^*\cdot\mathds{1}(\calE_t) = 0$, and thus
\begin{align*}
	\EE_t\inp{\bcalT_t - \bcalT^*}{\calP_{\TT_t}\bcalG_t} &= \EE_t\inp{\bcalT_t - \bcalT^*}{\calP_{\TT_t}(\bcalG_t-\bcalG_t^*)}  \\
	&= \EE_t\inp{\bcalT_t - \bcalT^*}{\bcalG_t-\bcalG_t^*} - 	\EE_t\inp{\bcalT_t - \bcalT^*}{\calP_{\TT_t}^{\perp}(\bcalG_t-\bcalG_t^*)}  
\end{align*}
Recall from the definition of $\gamma_{\alpha},\mu_{\alpha}$, $h(\theta,Y_t)$ is $\gamma_{\alpha}$-strongly convex and $h_{\theta}(\theta,y)$ is $\mu_{\alpha}$-Lipschitz w.r.t. $|\theta|\leq\alpha$. Therefore,
\begin{align*}
	\EE_t\inp{\bcalT_t - \bcalT^*}{\bcalG_t-\bcalG_t^*} 
	&= \EE_t\inp{\bcalT_t-\bcalT^*}{\bcalX_t} \cdot \left(h_{\theta}(\inp{\bcalX_t}{\bcalT_t},Y_t) - h_{\theta}(\inp{\bcalX_t}{\bcalT^*}),Y_t\right)\\
	&\geq \gamma_{\alpha}\EE_t\inp{\bcalT_t-\bcalT^*}{\bcalX_t}^2= \gamma_{\alpha}\fro{\bcalT_t-\bcalT^*}^2,
\end{align*}
and
\begin{align*}
	\EE_t\inp{\bcalT_t - \bcalT^*}{\calP_{\TT_t}^{\perp}(\bcalG_t-\bcalG_t^*)} 
	&\leq \EE_t|\inp{\bcalT_t - \bcalT^*}{\calP_{\TT_t}^{\perp}\bcalX_t}|\cdot|h_{\theta}(\inp{\bcalX_t}{\bcalT_t},Y_t) - h_{\theta}(\inp{\bcalX_t}{\bcalT^*},Y_t)|\\
	&\leq \mu_{\alpha}\EE_t|\inp{\calP_{\TT_t}^{\perp}(\bcalT_t - \bcalT^*)}{\bcalX_t}|\cdot|\inp{\bcalT_t-\bcalT^*}{\bcalX_t}|\\
	&\leq C\mu_{\alpha}\fro{\calP_{\TT_t}^{\perp}\bcalT^*}\fro{\bcalT_t-\bcalT^*}
	\leq C_m \mu_{\alpha}\frac{\fro{\bcalT_t-\bcalT^*}^3}{\lambda_{\submin}},
\end{align*}
where the last inequality is from Lemma \ref{lemma:ptperp}. So as long as $\fro{\bcalT_0 - \bcalT^*}\lesssim_m \frac{\gamma_{\alpha}}{\mu_{\alpha}}\lambda_{\submin}$ and $\sqrt{\eta\frac{L_{\alpha}^2}{\gamma_{\alpha}}\dof}\lesssim_m \frac{\gamma_{\alpha}}{\mu_{\alpha}}\lambda_{\submin}$, we see 
$$\EE_t\inp{\bcalT_t - \bcalT^*}{\calP_{\TT_t}\bcalG_t}\geq 0.5\gamma_{\alpha}\fro{\bcalT_t-\bcalT^*}^2.$$
Also from \eqref{expectation:ptx}, we see 
\begin{align*}
	\EE_t\fro{\calP_{\TT_t}\bcalG_t}^2 = \EE_t|h_{\theta}(\inp{\bcalT_t}{\bcalX_t},Y_t)|^2\fro{\calP_{\TT_t}\bcalX_t}^2\leq L_{\alpha}^2\EE_t\fro{\calP_{\TT_t}\bcalX_t}^2 \leq L_{\alpha}^2\dof.
\end{align*}
So we have 
\begin{align*}
	\fro{\bcalT_t^+  - \bcalT^*}^2 &= \fro{\bcalT_t - \bcalT^*}^2 - 2\eta\EE_t\inp{\bcalT_t - \bcalT^*}{\calP_{\TT_t}\bcalG_t}+ \eta^2\EE_t\fro{\calP_{\TT_t}\bcalG_t}^2\\ 
	&\quad+\bigg(2\eta\EE_t\inp{\bcalT_t - \bcalT^*}{\calP_{\TT_t}\bcalG_t} - 2\eta\inp{\bcalT_t - \bcalT^*}{\calP_{\TT_t}\bcalG_t}\bigg) \\
	&\quad + \eta^2(\fro{\calP_{\TT_t}\bcalG_t}^2 - \EE_t \fro{\calP_{\TT_t}\bcalG_t}^2)\\
	&\leq (1-\eta\gamma_{\alpha})\fro{\bcalT_t - \bcalT^*}^2 + \eta^2L_{\alpha}^2\dof\\&\quad+\bigg(2\eta\EE_t\inp{\bcalT_t - \bcalT^*}{\calP_{\TT_t}\bcalG_t} - 2\eta\inp{\bcalT_t - \bcalT^*}{\calP_{\TT_t}\bcalG_t}\bigg) \\
	&\quad + \eta^2(\fro{\calP_{\TT_t}\bcalG_t}^2 - \EE_t \fro{\calP_{\TT_t}\bcalG_t}^2).
\end{align*}
Together with \eqref{binary:t+1andt}, we obtain 
\begin{align*}
	\fro{\bcalT_{t+1} - \bcalT^*}^2 &\leq 
	 (1-\frac{\eta\gamma_{\alpha}}{2})\fro{\bcalT_{t} - \bcalT^*}^2 + 2\eta^2L_{\alpha}^2\dof\\
	&\quad +2\eta(1+\frac{\eta\gamma_{\alpha}}{2})\bigg(\EE_t\inp{\bcalT_t - \bcalT^*}{\calP_{\TT_t}\bcalG_t} - \inp{\bcalT_t - \bcalT^*}{\calP_{\TT_t}\bcalG_t}\bigg)\\
	&\quad + \eta^2(1+\frac{\eta\gamma_{\alpha}}{2})(\fro{\calP_{\TT_t}\bcalG_t}^2 - \EE_t \fro{\calP_{\TT_t}\bcalG_t}^2).
\end{align*}
Telescoping this inequality and we get
\begin{align*}
	\fro{\bcalT_{t+1} - \bcalT^*}^2 &\leq (1-\frac{1}{2}\eta\gamma_{\alpha})^{t+1}\fro{\bcalT_{0} - \bcalT^*}^2 + C\frac{\eta L_{\alpha}^2\dof}{\gamma_{\alpha}}\\
	&\quad + \sum_{l=0}^t \underbrace{(1-\frac{1}{2}\eta\gamma_{\alpha})^{t-l}2\eta(1+\frac{1}{2}\eta\gamma_{\alpha})\bigg(\EE_l\inp{\bcalT_l - \bcalT^*}{\calP_{\TT_l}\bcalG_l} - \inp{\bcalT_l - \bcalT^*}{\calP_{\TT_l}\bcalG_l}\bigg)}_{=:D_l}\\
	&\quad + \sum_{l=0}^t\underbrace{(1-\frac{1}{2}\eta\gamma_{\alpha})^{t-l}\eta^2(1+\frac{1}{2}\eta\gamma_{\alpha})(\fro{\calP_{\TT_l}\bcalG_l}^2 - \EE_t \fro{\calP_{\TT_l}\bcalG_l}^2)}_{=:F_l}.
\end{align*}
Now we first use martingale concentration inequality to bound $\sum_{l=0}^tD_l$. In fact, from \eqref{uniform:ptx}
\begin{align*}
	|\inp{\bcalT_l - \bcalT^*}{\calP_{\TT_l}\bcalG_l}| &\leq \fro{\bcalT_l - \bcalT^*}\fro{\calP_{\TT_l}\bcalG_l} \leq C_m\fro{\bcalT_l - \bcalT^*}L_{\alpha}\sqrt{\mu_0^{m-1}r^*\frac{\dmax}{\rmin}}\\
	&\leq C_m\bigg[(1-\frac{1}{4}\eta\gamma_{\alpha})^{\frac{l}{2}}\fro{\bcalT_0-\bcalT^*} + (\eta\frac{L_{\alpha}^2}{\gamma_{\alpha}}\dof)^{1/2}\bigg]L_{\alpha}\sqrt{\mu_0^{m-1}r^*\frac{\dmax}{\rmin}}.
\end{align*}
And thus 
\begin{align*}
	|D_l| &\leq C_m\eta\bigg[(1-\frac{1}{4}\eta\gamma_{\alpha})^{\frac{t}{2}}\fro{\bcalT_0-\bcalT^*} + (\eta\frac{L_{\alpha}^2}{\gamma_{\alpha}}\dof)^{1/2}\bigg]L_{\alpha}\sqrt{\mu_0^{m-1}r^*\frac{\dmax}{\rmin}}\\
	&\leq \frac{1}{2}(1-\frac{1}{4}\eta\gamma_{\alpha})^{t+1}\fro{\bcalT_0-\bcalT^*}^2(\log\dmax)^{-1} + C_m\eta^2L_{\alpha}^2\mu_0^{m-1}r^*\frac{\dmax}{\rmin}\log\dmax\\
	&\hspace{4cm}+ C_m\eta^{3/2}(\frac{L_{\alpha}^2}{\gamma_{\alpha}}\dof)^{1/2} L_{\alpha}\sqrt{\mu_0^{m-1}r^*\frac{\dmax}{\rmin}}\\
	&\leq \frac{1}{2}(1-\frac{1}{4}\eta\gamma_{\alpha})^{t+1}\fro{\bcalT_0-\bcalT^*}^2(\log\dmax)^{-1} + C_m\eta^{3/2}(\frac{L_{\alpha}^2}{\gamma_{\alpha}}\dof)^{1/2} L_{\alpha}\sqrt{\mu_0^{m-1}r^*\frac{\dmax}{\rmin}},
\end{align*}
as long as $\eta\gamma_{\alpha}\mu_0^{m-1}r^*\frac{\dmax}{\rmin \dof}\log^2\dmax\lesssim_m 1$.
On the other hand, 
\begin{align*}
	\EE_l\inp{\bcalT_l - \bcalT^*}{\calP_{\TT_l}\bcalG_l}^2 \leq L_{\alpha}^2 \EE_l\inp{\bcalT_l - \bcalT^*}{\calP_{\TT_l}\bcalX_l}^2\leq L_{\alpha}^2\fro{\bcalT_l - \bcalT^*}^2.
\end{align*}
Therefore 
\begin{align*}
	\sum_{l=0}^t\text{Var}_l D_l &\leq C\sum_{l = 0}^t(1-\frac{1}{2}\eta\gamma_{\alpha})^{2t-2l}\eta^2L_{\alpha}^2\fro{\bcalT_l - \bcalT^*}^2\\
	&\leq C\sum_{l = 0}^t(1-\frac{1}{2}\eta\gamma_{\alpha})^{2t-2l}\eta^2L_{\alpha}^2\bigg(2(1-\frac{1}{4}\eta\gamma_{\alpha})^l\fro{\bcalT_0 - \bcalT^*}^2 + \eta\frac{L_{\alpha}^2}{\gamma_{\alpha}}\dof\bigg)\\
	&\leq C\eta\frac{L_{\alpha}^2}{\gamma_{\alpha}}(1-\frac{1}{4}\eta\gamma_{\alpha})^{t+1}\fro{\bcalT_0 - \bcalT^*}^2 + C\eta^2\frac{L_{\alpha}^4}{\gamma_{\alpha}^2}\dof\\
	&\leq \frac{1}{4}(1-\frac{1}{4}\eta\gamma_{\alpha})^{2t+2}\fro{\bcalT_0 - \bcalT^*}^4(\log\dmax)^{-1} +  C\eta^2\frac{L_{\alpha}^4}{\gamma_{\alpha}^2}\dof.
\end{align*}
As a result, with probability exceeding $1-2\dmax^{-10}$, as long as $\eta\gamma_{\alpha}\mu_0^{m-1}r^*\frac{\dmax}{\rmin}\log\dmax\lesssim 1$, 
\begin{align*}
	|\sum_{l=0}^t D_l| &\leq (1-\frac{1}{4}\eta\gamma_{\alpha})^{t+1}\fro{\bcalT_0 - \bcalT^*}^2 + C\eta\frac{L_{\alpha}^2}{\gamma_{\alpha}}(\dof\log\dmax)^{1/2}.
\end{align*}
Now we consider $\sum_{l=0}^t F_l$. From \eqref{uniform:ptx},
$
	\fro{\calP_{\TT_t}\bcalG_t}^2 \leq (m+1)L_{\alpha}^2\mu_0^{m-1}r^*\frac{\dmax}{\rmin}.
$
And thus 
\begin{align*}
	|F_l|\lesssim_m \eta^2L_{\alpha}^2\mu_0^{m-1}r^*\frac{\dmax}{\rmin}.
\end{align*}
On the other hand, 
$
	\EE_l\fro{\calP_{\TT_l}\bcalG_l}^4 \lesssim_m L_{\alpha}^4\mu_0^{m-1}r^*\frac{\dmax}{\rmin}\dof.
$
And 
\begin{align*}
	\sum_{l=0}^t \text{Var}_l F_l^2 &\lesssim_m \sum_{l=0}^t(1-\frac{1}{2}\eta\gamma_{\alpha})^{2t-2l}\eta^4L_{\alpha}^4\dof\mu_0^{m-1}r^*\frac{\dmax}{\rmin}\\
	&\lesssim_m \eta^3L_{\alpha}^4(\gamma_{\alpha})^{-1}\dof\mu_0^{m-1}r^*\frac{\dmax}{\rmin}.
\end{align*}
So as long as $\eta\gamma_{\alpha}\mu_0^{m-1}r^*\frac{\dmax}{\rmin \dof}\log\dmax \lesssim 1$, with probability exceeding $1-2\dmax^{-10}$, 
\begin{align*}
	|\sum_{l=0}^t F_l| \lesssim_m \eta^{3/2}L_{\alpha}^2\gamma_{\alpha}^{-1/2}(\dof\mu_0^{m-1}r^*\frac{\dmax}{\rmin})^{1/2}\log^{1/2}\dmax.
\end{align*}
These conclude under event $\calE_t$, with probability exceeding $1-4\dmax^{-10}$,
\begin{align*}
	\fro{\bcalT_{t+1} - \bcalT^*}^2 &\leq 2(1-\frac{1}{4}\eta\gamma_{\alpha})^{t+1}\fro{\bcalT_{0} - \bcalT^*}^2+ C\eta\frac{L_{\alpha}^2}{\gamma_{\alpha}}(\dof\log\dmax)^{1/2}\\
	&\quad+ C_m \eta^{3/2}L_{\alpha}^2\gamma_{\alpha}^{-1/2}(\dof\mu_0^{m-1}r^*\frac{\dmax}{\rmin})^{1/2}\log^{1/2}\dmax + C\eta\gamma_{\alpha}^{-1}L_{\alpha}^2\dof\\
	&\leq 2(1-\frac{1}{4}\eta\gamma_{\alpha})^{t+1}\fro{\bcalT_{0} - \bcalT^*}^2+ C\eta\gamma_{\alpha}^{-1}L_{\alpha}^2\dof,
\end{align*}
as long as $\eta\gamma_{\alpha}\mu_0^{m-1}r^*\frac{\dmax}{\rmin \dof}\log\dmax \lesssim 1$.
                                                                                                
\hspace{1cm}                                                                                    

\noindent\textit{Step 2: Incoherence of $\bcalT_{t+1}$.} The proof of this part is mostly similar with the proof in Step 2 of Theorem \ref{thm:localrefinement:completion}. We will use the same notation and point out the differences. Recall we write $\bcalT_t  = \bcalC\cdot(\U_1,\cdots,\U_m)$.

Now for each $j\in[m], i\in[d_j]$, we consider $\e_i^\top\U_{t+1,j}\U_{t+1,j}^{\top}\e_i$. Since $\U_{t+1,j}$ is the top $r_j$ left singular vectors of $(\bcalT_t - \eta\calP_{\TT_t}\bcalG_t)_{(j)}$.
And $\U_j$ is the left singular vectors of $(\bcalT_t)_{(j)} = \U_j\bcalC_{(j)}(\otimes_{l\neq j}\U_l)^{\top}$.
We can obtain a closed form for $\U_{t+1,j}\U_{t+1,j}^{\top} - \U_j\U_j^\top$ from Section \ref{sec:assymetric} as follows
$$\U_{t+1,j}\U_{t+1,j}^{\top} - \U_j\U_j^\top = \sum_{k \geq 1}\S_{k}.$$
Here $\S_k$ depends on $\U,\bSigma,\V,\Z$ defined as follows (see Section \ref{sec:assymetric} for more details):
suppose $\bcalC_{(j)}$ admits a compact SVD as $\bcalC_{(j)} = \Q_1\bSigma\Q_2^\top$ with $\Q_1\in\RR^{r_j\times r_j}$ and $\Q_2\in\RR^{r_j^-\times r_j}$, and $\U = \U_j\Q_1$, $\V = (\otimes_{l\neq j}\U_l)\Q_2$ and $\Z = -\eta(\calP_{\TT_t}\bcalG_t)_{(j)}$.
Here for each $\S_k$, it can be written as 
$$\S_k = \sum_{\s: s_1+\cdots + s_{k+1} = k}\S_k(\s), $$
where $\S_k(\s)$ takes the following form:
\begin{align}\label{binary:representation}
	\A_1\bSigma^{-s_1}\B_1\bSigma^{-s_2}\B_2\bSigma^{-s_3}\cdots\B_k\bSigma^{-s_{k+1}}\A_2^\top,
\end{align}
where $\A_1,\A_2\in \{\U,\U_{\perp}\}$ and 
$\B_i \in \{\U^\top\Z\V, \U^\top_{\perp}\Z\V,\U^\top\Z\V_{\perp},\U_{\perp}^\top\Z\V_{\perp} \text{ or their transpose}\}.$
Under the event $\calE_t$, as long as $\eta\frac{L_{\alpha}^2}{\gamma_{\alpha}}\dof\lesssim \lambda_{\submin}^2$, we have $\op{\bSigma^{-1}}\geq 2\lambda_{\submin}^{-1}$. 
Since now $\bcalG_t = h_{\theta}(\inp{\bcalX_t}{\bcalT_t},Y_t)\bcalX_t$, and $|h_{\theta}(\inp{\bcalX_t}{\bcalT_t},Y_t)|\leq L_{\alpha}$, we have 
\begin{align*}
	\op{\U^\top\Z\V} = \eta| h_{\theta}(\inp{\bcalX_t}{\bcalT_t},Y_t)|\cdot\op{\U_j^\top\bcalX_{(j)}(\otimes_{l\neq j}\U_l)}\leq \eta L_{\alpha} \sqrt{\mu_0^mr^*},\\
	\op{\U^\top_{\perp}\Z\V} \leq \eta L_{\alpha}\sqrt{\mu_0^{m-1}d_jr_j^-},\quad\op{\U^\top\Z\V_{\perp}} \leq \eta L_{\alpha}\sqrt{\mu_0^{m-1}\sum_{l\neq j}d_lr_l^-}.
\end{align*}
Now we bound $\e_i^\top\S_k\e_i$.

\hspace{1cm}

\noindent{\textit{Case 1: $k = 1$.}} Recall $\Z = -\eta (\calP_{\TT_t}\bcalG_t)_{(j)} = -\eta h_{\theta}(\inp{\bcalX_t}{\bcalT_t},Y_t)(\calP_{\TT_t}\bcalX_t)_{(j)}$. Unfortunately, we are unable to write the expectation $\EE_t\Z$ as before. As a remedy, we notice for the logistic link \blue{$f(\theta) = (1+e^{-\theta})^{-1}$}, 
\begin{align*}
	h_{\theta}(\theta,Y_t) = -Y_t\frac{f'(\theta)}{f(\theta)} + (1-Y_t)\frac{f'(\theta)}{1- f(\theta)} = (f(\theta)-Y_t)\frac{f'(\theta)}{f(\theta)(1-f(\theta))} = f(\theta)-Y_t.
\end{align*}
Also notice that $\EE_t\bcalG_t^* = \EE_{\bcalX_t}\EE_{Y_t}\bcalG_t^* = \EE_{\bcalX_t}\EE_{Y_t}(f(\inp{\bcalX_t}{\bcalT^*}) - Y_t)\bcalX_t = 0$ since $Y_t\sim\text{Ber}(f(\inp{\bcalX_t}{\bcalT^*}))$. Therefore,
\begin{align*}
	\EE_t\Z &= -\eta\EE_t\calP_{\TT_t}(\bcalG_t-\bcalG_t^*)_{(j)}  = -\eta\EE_t\bigg(f(\inp{\bcalX_t}{\bcalT_t}) - f(\inp{\bcalX_t}{\bcalT^*})\bigg)(\calP_{\TT_t}\bcalX_t)_{(j)} \\
	&= -\eta(\frac{1}{4}\inp{\bcalX_t}{\bcalT_t} - \frac{1}{4}\inp{\bcalX_t}{\bcalT^*} + \zeta)(\calP_{\TT_t}\bcalX_t)_{(j)} ,
\end{align*}
where $\zeta$ is some number satisfying $|\zeta|\leq \frac{1}{12\sqrt{3}}d^*(\linf{\bcalT_t}^2 + \linf{\bcalT^*}^2) \leq \frac{1}{2\sqrt{3}}\nu_0^2\rmin\lambda_{\submax}^2$ as long as $\eta\frac{L_{\alpha}^2}{\gamma_{\alpha}}\dof\leq \fro{\bcalT^*}^2$.
Now we are able to consider the following expectation, 
\begin{align*}
	\EE_t\e_i^\top\P_{\U}^\perp\Z\V\bSigma^{-1}\U^\top\e_i = -\eta\e_i^\top\P_{\U}^\perp\bigg(\calP_{\TT_t}(\bcalT_t-\bcalT^*)_{(j)}  + \EE_t\zeta(\calP_{\TT_t}\bcalX_t)_{(j)}\bigg)\V\bSigma^{-1}\U^\top\e_i.
\end{align*}
We consider $-\eta\e_i^\top\P_{\U}^\perp \EE_t\zeta(\calP_{\TT_t}\bcalX_t)_{(j)}\V\bSigma^{-1}\U^\top\e_i$, using \eqref{ptx:junfolding},
\begin{align*}
	|\eta\e_i^\top\P_{\U}^\perp \EE_t\zeta(\calP_{\TT_t}\bcalX_t)_{(j)}\V\bSigma^{-1}\U^\top\e_i| &\leq \frac{1}{2\sqrt{3}}\eta\nu_0^2\rmin\lambda_{\submax}^2\sqrt{d^*}\sqrt{\frac{\mu_0^{m-1}r_j^-}{d_j^-}}2\lambda_{\submin}^{-1}\sqrt{\frac{\mu_0r_j}{d_j}}\\
	&= \frac{1}{\sqrt{3}}\eta\nu_0^2\kappa_0\mu_0^{m/2} \rmin \sqrt{r^*}\lambda_{\submax}.
\end{align*}
Now as long as $\nu_0^2\kappa_0\mu_0^{m/2} \rmin \sqrt{r^*}\lambda_{\submax} \leq \frac{\mu_0r_j}{d_j}$, we have similarly with \eqref{k=1},
\begin{align}\label{binary:k=1}
		\EE_t\e_i^\top\S_1\e_i \leq \frac{\eta}{5}\frac{\mu_0r_j}{d_j}  - \frac{\eta}{2}\e_i^\top\U_j\U_j^\top\e_i.
\end{align}

\hspace{1cm}

\noindent{\textit{Case 2: $k = 2$.}} We discuss according to different $\s$. If $\s = (0,0,2)\text{~or~} (2,0,0)$, then $\S_2(\s) = 0$. If $\s = (1,1,0)\text{~or~}(0,1,1)$, then $\B_1$ or $\B_2 = \U^\top\Z\V$ or its transpose, as a result
\begin{align*}
	|\e_i^\top\S_2(\s)\e_i| &\leq \sqrt{\mu_0r_j/d_j}\eta^2L_{\alpha}^2\sqrt{\mu_0^mr^*}\sqrt{\mu_0^{m-1}\sum_{l\neq j}d_lr_l^-}\sigma_{\submin}^{-2}(\bcalT_t)\\
	&\lesssim_m \eta^2L_{\alpha}^2\mu_0^mr^*\sqrt{\frac{r_j\dmax}{\rmin d_j}}\lambda_{\submin}^{-2}.
\end{align*}
If $\s = (1,0,1)$, then 
\begin{align*}
	|\e_i^\top\S_2(\s)\e_i| &\leq \frac{\mu_0r_j}{d_j}\eta^2L_{\alpha}^2\mu_0^{m-1}\sum_{l\neq j}d_lr_l^-\sigma_{\submin}^{-2}(\bcalT_t)
	\lesssim_m \eta^2L_{\alpha}^2\mu_0^m r^*\frac{r_j\dmax}{\rmin d_j}\lambda_{\submin}^{-2}.
\end{align*}
For the case $\s = (0,2,0)$, the closed form is given in \eqref{S21},
we keep it and compute its conditional expectation 
\begin{align}\label{binary:k=2}
	&\quad\EE_t \e_i^\top\S_{2,1}\e_i = \EE_t\e_i^\top\P_{\U}^\perp\Z\V\bSigma^{-2}\V^\top\Z^\top \P_{\U}^\perp\e_i\notag\\
	&\leq\eta^2L_{\alpha}^2 \EE_t\e_i^\top \P_{\U}^{\perp}\bcalX_{(j)}(\otimes_{l\neq j}\U_l)\bcalC_{(j)}^\dagger\bcalC_{(j)}(\otimes_{l\neq j}\U_l)^\top\V\bSigma^{-2}\V^\top\cdot\notag\\
	&\hspace{5cm}(\otimes_{l\neq j}\U_l)\bcalC_{(j)}^\top(\bcalC_{(j)}^\dagger)^\top(\otimes_{l\neq j}\U_l)^\top\bcalX_{(j)}^\top\P_{\U}^{\perp}\e_i\notag\\
	&\leq \eta^2L_{\alpha}^2\cdot\sum_{p\in[d_j],q\in[d_j^-]}|\e_i^\top\P_{\U}^{\perp}\e_p|^2 \ltwo{\e_q^\top(\otimes_{l\neq j}\U_l)\bcalC_{(j)}^\dagger\bcalC_{(j)}(\otimes_{l\neq j}\U_l)^\top}^2\sigma_{\submin}^{-2}(\bcalT_t)\notag\\
	&\leq 4\eta^2L_{\alpha}^2r_j\lambda_{\submin}^{-2}.
\end{align}

\hspace{1cm}

\noindent{\textit{Case 3: $k \geq 3$ and $k$ is odd.}} For each legal $\s$, if at least one of $\A_1,\A_2$ takes $\U$, then 
$$|\e_i^\top\S_k(\s)\e_i| \leq \sqrt{\mu_0r_j/d_j}(\eta L_{\alpha})^k(\mu_0^{m-1}\sum_{l\neq j}d_lr_l^-)^{k/2}\sigma_{\submin}^{-k}(\bcalT_t).$$
If both $\A_1,\A_2$ take $\U_{\perp}$, there exists at least one $l\in[k]$, $\B_l = \U^\top\Z\V \text{~or~} \V^\top\Z^\top\U$. Then 
$$|\e_i^\top\S_k(\s)\e_i| \leq (\eta L_{\alpha})^k(\mu_0^mr^*)^{1/2}(\mu_0^{m-1}\sum_{l\neq j}d_lr_l^-)^{(k-1)/2}\sigma_{\submin}^{-k}(\bcalT_t).$$
Since there are at most $\binom{2k}{k}\leq 4^k$ legal $\s$, 
$$|\e_i^\top\S_k\e_i| \leq 4^k\sqrt{\mu_0r_j/d_j}(\eta L_{\alpha})^k(\mu_0^{m-1}\sum_{l\neq j}d_lr_l^-)^{k/2}\sigma_{\submin}^{-k}(\bcalT_t).$$
Now as long as $\eta^2L_{\alpha}^2\mu_0^{m-1}\dmax r^*\rmin^{-1}\lambda_{\submin}^{-2}\lesssim_m 1$, we have 
$4^2(\eta L_{\alpha})^2(\mu_0^{m-1}\sum_{l\neq j}d_lr_l^-)\sigma_{\submin}^{-2}(\bcalT_t)\leq 1/2$.
Therefore the contribution for such $k$ is bounded by
\begin{align}\label{binary:kgeq3odd}
	\sum_{k\geq 3, k \text{ is odd}}\e_i^\top\S_k\e_i &\lesssim_m (\eta L_{\alpha}\lambda_{\submin}^{-1})^3(\mu_0^m r^*)^{1/2}\mu_0^{m-1}\dmax r^*\rmin^{-1}.
\end{align}

\hspace{1cm}

\noindent{\textit{Case 4: $k \geq 4$ and $k$ is even.}} We shall apply trivial bound to $\e_i^\top\S_k\e_i$.
\begin{align*}
	|\e_i^\top\S_k(\s)\e_i| \leq (\eta L_{\alpha})^k(\mu_0^{m-1}\sum_{l\neq j}d_lr_l^-)^{k/2}\sigma_{\submin}^{-k}(\bcalT_t).
\end{align*}
Since there are $\binom{2k}{k}\leq 4^k$ legal $\s$, 
$$|\e_i^\top\S_k\e_i| \leq 4^k(\eta L_{\alpha})^k(\mu_0^{m-1}\sum_{l\neq j}d_lr_l^-)^{k/2}\sigma_{\submin}^{-k}(\bcalT_t).$$
And the contribution for such $k$ is bounded by
\begin{align}\label{binary:kgeq4even}
	\sum_{k\geq 4, k \text{ is even}}\e_i^\top\S_k\e_i \lesssim_m  (\eta L_{\alpha}\lambda_{\submin}^{-1})^4 (\mu_0^{m-1}\dmax r^*\rmin^{-1})^2.
\end{align}
Now from \eqref{binary:k=1} - \eqref{binary:kgeq4even}, we see if $\dmax/\dmin\leq C,\rmax/\rmin\leq C$, 
\begin{align*}
	&\e_i^\top\U_{t+1,j}\U_{t+1,j}^\top\e_i \leq (1-\frac{1}{2}\eta)\e_i^\top\U_{t,j}\U_{t,j}^\top\e_i + (\e_i^\top\S_1\e_i - \EE_t\e_i^\top\S_1\e_i) + \frac{\eta}{5}\frac{\mu_0r_j}{d_j}\\
	&+ \underbrace{C_m\eta^2L_{\alpha}^2\mu_0^m\sqrt{\frac{r_j}{d_j}r^*\frac{\dmax r^*}{\rmin}}\lambda_{\submin}^{-2} +C_m\eta^2L_{\alpha}^2\mu_0^m\frac{r_j}{d_j}\frac{\dmax r^*}{\rmin}\lambda_{\submin}^{-2} +(\e_i^\top\S_{2,1}\e_i - \EE_t\e_i^\top\S_{2,1}\e_i)}_{k = 2}\\
	&+\underbrace{C_m(\eta L_{\alpha}\lambda_{\submin}^{-1})^3(\mu_0^m r^*)^{1/2}\mu_0^{m-1}\dmax r^*\rmin^{-1}}_{k\geq 3 \text{ and } k \text{ is odd}} + \underbrace{C_m(\eta L_{\alpha}\lambda_{\submin}^{-1})^4 (\mu_0^{m-1}\dmax r^*\rmin^{-1})^2}_{k\geq 4 \text{ and } k \text{ is even}}\\
	&\leq (1-\frac{1}{2}\eta)\e_i^\top\U_{t,j}\U_{t,j}^\top\e_i + \frac{2\eta}{5}\frac{\mu_0r_j}{d_j} + (\e_i^\top\S_1\e_i - \EE_t\e_i^\top\S_1\e_i) +(\e_i^\top\S_{2,1}\e_i - \EE_t\e_i^\top\S_{2,1}\e_i),
\end{align*}
as long as $\eta\dmax r_j^- (\frac{L_{\alpha}}{\lambda_{\submin}})^2\mu_0^{m-1}\lesssim_m 1$, $\eta^2\dmax^2 (r^*)^{3/2}\rmin^{-2} (\frac{L_{\alpha}}{\lambda_{\submin}})^3\mu_0^{3m/2-2}\lesssim_m 1$, $\eta^3\dmax^3 (r^*)^{2}\rmin^{-1} (\frac{L_{\alpha}}{\lambda_{\submin}})^4\mu_0^{2m-3}\lesssim_m 1$.
 
Now telescoping this inequality and we get 
\begin{align}\label{binary:incoherence}
	\e_i^\top\U_{t+1,j}\U_{t+1,j}^\top\e_i &\leq (1-\frac{1}{2}\eta)^{t+1}\e_i^\top\U_{0,j}\U_{0,j}^\top\e_i + \frac{4}{5}\frac{\mu_0r_j}{d_j}
	\notag\\
	&\quad + \sum_{l=0}^t\underbrace{(1-\frac{1}{2}\eta)^{t-l}\big[(\e_i^\top\S_{1,l}\e_i - \EE_t\e_i^\top\S_{1,l}\e_i) +(\e_i^\top\S_{2,1,l}\e_i - \EE_t\e_i^\top\S_{2,1,l}\e_i)\big]}_{=:D_l}
\end{align}
Now we use martingale concentration inequality to bound $|\sum_{l=0}^t D_l|$. We first consider the uniform bound. Notice 
\begin{align*}
	|\e_i^\top\S_{1,l}\e_i| &\leq 2\eta L_{\alpha}\cdot|\e_i^\top\P_{\U}^\perp(\bcalP_{\TT_l}\bcalX_l)_{(j)}\V\bSigma^{-1}\U^\top\e_i|\\
	&\leq 2\eta L_{\alpha}\sqrt{d^*}\sqrt{\frac{\mu_0^mr^*}{d^*}}\sigma_{\submin}^{-1}(\bcalT_l)
	\leq 4\eta L_{\alpha}\sqrt{\mu_0^mr^*}\lambda_{\submin}^{-1}.
\end{align*}
Using $|X-\EE X|\leq 2B$ when $|X|\leq B$, we get $$|\e_i^\top\S_{1,l}\e_i - \EE_l\e_i^\top\S_{1,l}\e_i|\leq 8\eta L_{\alpha}\sqrt{\mu_0^mr^*}\lambda_{\submin}^{-1}.$$
On the other hand, 
\begin{align*}
	|\e_i^\top\S_{2,1,l}\e_i| &= |\e_i^\top \P_{\U}^\perp\Z\V\bSigma^{-2}\V^\top\Z^\top \P_{\U}^\perp\e_i|\leq \eta^2L_{\alpha}^2 \sigma_{\submin}^{-2}(\bcalT_l)\mu_0^{m-1}r_j^-d_j\\
	&\leq 4\eta^2L_{\alpha}^2 \lambda_{\submin}^{-2}\mu_0^{m-1}r_j^-d_j.
\end{align*}
Therefore $|\e_i^\top\S_{2,1,l}\e_i- \EE_l \e_i^\top\S_{2,1,l}\e_i| \leq 8\eta^2L_{\alpha}^2 \lambda_{\submin}^{-2}\mu_0^{m-1}r_j^-d_j.$
So we have the uniform bound for $D_l$ as follows,
\begin{align}\label{binary:uniform:D}
|D_l|\leq 8\eta L_{\alpha}\sqrt{\mu_0^mr^*}\lambda_{\submin}^{-1} + 8\eta^2L_{\alpha}^2 \lambda_{\submin}^{-2}\mu_0^{m-1}r_j^-d_j.
\end{align}
We also need to bound the variance. In fact, 
\begin{align*}
	\EE_l D_l^2 \leq 2(1-\frac{1}{2}\eta)^{2t-2l}[\EE_l|\e_i^\top\S_{1,l}\e_i|^2+\EE_l|\e_i^\top\S_{2,1,l}\e_i|^2].
\end{align*}
And 
\begin{align*}
	\EE_l|\e_i^\top\S_{1,l}\e_i|^2 &\leq 4\eta^2L_{\alpha}^2\EE_l(\e_i^\top\P_{\U}^\perp(\bcalP_{\TT_l}\bcalX_l)_{(j)}\V\bSigma^{-1}\U^\top\e_i)^2\\
	&= 4\eta^2L_{\alpha}^2
	\sum_{p,q}(\e_i^\top\P_{\U}^\perp\e_p)^2(\e_q^\top(\otimes_{k\neq j}\U_k)\bcalC_{(j)}^\dagger\bcalC_{(j)}(\otimes_{k\neq j}\U_k)^\top\V\bSigma^{-1}\U^\top\e_i)^2\\
	&\lesssim \eta^2L_{\alpha}^2\lambda_{\submin}^{-2}\frac{\mu_0^mr^*}{d_j}.
\end{align*}
Also,
\begin{align*}
	\EE_l|\e_i^\top\S_{2,1,l}\e_i|^2 &= \EE_l |\e_i^\top\P_{\U}^\perp\Z\V\bSigma^{-2}\V^\top\Z^\top \P_{\U}^\perp\e_i|^2\\
	&\leq \EE_l\eta^4L_{\alpha}^4(|\e_i^\top\P_{\U}^\perp(\bcalP_{\TT_l}\bcalX_l)_{(j)}\V\bSigma^{-2}\V^\top(\bcalP_{\TT_l}\bcalX_l)_{(j)}^\top \P_{\U}^\perp\e_i|^2)\\
	&\lesssim \eta^4L_{\alpha}^4d^*\sum_p(\e_i^\top\P_{\U}^\perp\e_p)^4\fro{(\otimes_{k\neq j}\U_k)\bcalC_{(j)}^\dagger\bcalC_{(j)}(\otimes_{k\neq j}\U_k)^\top\V\bSigma^{-1}}^2\cdot\frac{\mu_0^{m-1}r_j^-}{d_j^-}\lambda_{\submin}^{-2}\\
	&\lesssim \eta^4L_{\alpha}^4d^* r_j\lambda_{\submin}^{-4}\frac{\mu_0^{m-1}r_j^-}{d_j^-}.
\end{align*}
Therefore as long as $\eta^2\dmax^2 L_{\alpha}^2\lambda_{\submin}^{-2}\mu_0^{-1}\lesssim 1$,
\begin{align*}
	\EE_l D_l^2 \lesssim (1-\frac{1}{2}\eta)^{2t-2l} \eta^2L_{\alpha}^2\lambda_{\submin}^{-2}\frac{\mu_0^mr^*}{d_j}.
\end{align*}
And 
\begin{align}\label{binary:variance:D}
	\sum_{l=0}^t \EE_l D_l^2 \lesssim \eta L_{\alpha}^2\lambda_{\submin}^{-2}\frac{\mu_0^mr^*}{d_j}.
\end{align}
From \eqref{binary:uniform:D} and \eqref{binary:variance:D}, and as a result of Azuma-Bernstein inequality, we see that with probability exceeding $\dmax^{-12}$, as long as $\eta d_j\log\dmax\lesssim 1$, $\eta^2(\frac{L_{\alpha}}{\lambda_{\submin}})^2\mu_0^{m-2}d_jr_j^-\log\dmax\lesssim \frac{\mu_0r_j}{d_j}$,
\begin{align}
	|\sum_{l=0}^tD_l| \lesssim \sqrt{\eta L_{\alpha}^2\lambda_{\submin}^{-2}\frac{\mu_0^mr^*}{d_j}\log d}\lesssim \frac{\mu_0 r_j}{d_j},
\end{align}
where the last inequality holds as long as $\eta\dmax r^*\rmin^{-2} L_{\alpha}^2\lambda_{\submin}^{-2}\mu_0^{m-2}\log\dmax \lesssim 1$. Now we go back to \eqref{binary:incoherence}, and we see as long as $\e_i^\top\U_{0,j}\U_{0,j}^\top\e_i\leq \frac{1}{10} \frac{\mu_0r_j}{d_j}$, we have 
\begin{align*}
	\e_i^\top\U_{t+1,j}\U_{t+1,j}^\top\e_i \leq  \frac{\mu_0r_j}{d_j}.
\end{align*}
Taking a union bound over all $i,j$ and we see that with probability exceeding $1-m\dmax^{-11}$, 
$$\ltwoinf{\U_{t+1,j}}^2\leq \frac{\mu_0r_j}{d_j}, \forall j\in[m].$$

\hspace{1cm}

\noindent\textit{Step 3: Controlling the probability.} From Step 1 and Step 2, we have proved under the event $\calE_t$, $\calE_{t+1}$ holds with probability exceeding $1-(4+m\dmax^{-1})\dmax^{-10} \geq 1-5\dmax^{-10}$. Then a similar argument as in Step 3 in the proof of Theorem \ref{thm:gen} shows  
$$\PP(\calE_T^c) = \sum_{t= 1}^T\PP(\calE_{t-1}\cap\calE_t^c)\leq 5Td^{-10}.$$

\subsection{Proof of Theorem~\ref{thm:init-binary}}
	Using Theorem 1 in \cite{davenport20141}, we have
	\begin{align*}
		\frac{1}{d^*}\fro{\hat\M - \M^*}^2 \leq C_{1,\alpha}\frac{1}{d^*}\sqrt{\frac{R(D_1+D_2)}{T}}\sqrt{1+ \frac{(D_1+D_2)\log(D_1D_2)}{T}},
	\end{align*}
	where $C_{1,\alpha} = C\alpha L_{\alpha}\gamma_{\alpha}^{-1}$. And from Lemma \ref{lemma:perturbation}, we have
	\begin{align*}
		\fro{\hat\bcalT - \bcalT^*} \leq C_{1,\alpha}^{1/2}\left(\frac{R(D_1+D_2)}{T}\right)^{1/4}\left(1+ \frac{(D_1+D_2)\log(D_1D_2)}{T}\right)^{1/4}\leq c_m\gamma_{\alpha}\lambda_{\submin}
	\end{align*}
	under the given condition.

\subsection{Proof of Theorem \ref{thm:regret}}
\begin{proof}
	The proof is mostly the same as the ones in Theorem \ref{thm:gen} and Theorem \ref{thm:localrefinement:completion}. 
	We shall consider online tensor linear regression. The proof for online tensor completion is similar and is thus omitted.
	
	\noindent\textit{Regret bound for online tensor linear regression. } 
	At time $t$, we denote the event 
	\begin{align*}
		&\tilde\calE_t = \bigg\{\forall 0\leq l\leq t, \fro{\bcalT_l - \bcalT^*}^2\leq 2\fro{\bcalT_0 - \bcalT^*}^2 + C\eta\dof\sigma^2,\\
		&\hspace{1.5cm}\fro{\calP_{\TT_l}(\bcalX_l)}\leq C(\dof\cdot \log T)^{1/2},
		|\inp{\bcalX_l}{\calP_{\TT_l}(\bcalT_l-\bcalT^*)}|\leq C\fro{\calP_{\TT_l}(\bcalT_l-\bcalT^*)}\log^{1/2}T,\\ 
		&\hspace{1.5cm}|\inp{\bcalT^*}{\bcalX_l}| \leq C\fro{\bcalT^*}\log^{1/2}T,
		\quad|\inp{\bcalT_l}{\bcalX_l}| \leq C\fro{\bcalT_l}\log^{1/2}T,\\
		&\hspace{8cm}|\inp{\bcalX_l}{\bcalT_l-\bcalT^*}|\leq C\fro{\bcalT_l-\bcalT^*}\log^{1/2} T\bigg\},
	\end{align*}
	and
	\begin{align*}
		\tilde\calY_t &=\bigg\{\fro{\bcalT_t - \bcalT^*}^2\leq 2\fro{\bcalT_0 - \bcalT^*}^2 + C\eta\dof\sigma^2,\notag\\
		&\hspace{1.5cm}\fro{\calP_{\TT_t}(\bcalX_t)}\leq C(\dof\cdot \log T)^{1/2},
		|\inp{\bcalX_t}{\calP_{\TT_t}(\bcalT_t-\bcalT^*)}|\leq C\fro{\calP_{\TT_t}(\bcalT_t-\bcalT^*)}\log^{1/2}T,\notag\\ 
		&\hspace{1.5cm}|\inp{\bcalT^*}{\bcalX_t}| \leq C\fro{\bcalT^*}\log^{1/2}T,
		\quad|\inp{\bcalT_t}{\bcalX_t}| \leq C\fro{\bcalT_t}\log^{1/2}T,\notag\\
		&\hspace{8cm}|\inp{\bcalX_t}{\bcalT_t-\bcalT^*}|\leq C\fro{\bcalT_t-\bcalT^*}\log^{1/2} T\bigg\}.
	\end{align*}
	Notice these events are slightly different with the one defined in the proof of Theorem \ref{thm:gen} in that we replace $\log \dmax$ with $\log T$. Notice under the assumption $T\leq \dmax^{C_0m}$ for some large but absolute constant $C_0>0$, the proof of Theorem \ref{thm:gen} is still valid. 
	Therefore, $\PP(\tilde\calE_t)\geq 1- 14tT^{-10}\geq 1-14T^{-9}$ from the previous proof. 
	Notice $\tilde\calE_t\subset \tilde\calY_t$.
	Moreover, using law of total probability, 
	\begin{align*}
		\PP(\tilde\calY_t^c) &= \PP(\tilde\calY_t^c | \tilde\calE_{t-1}) \PP(\tilde\calE_{t-1}) +  \PP(\tilde\calY_t^c | \tilde\calE_{t-1}^c) \PP(\tilde\calE_{t-1}^c)\\
		&\leq \PP(\tilde\calY_t^c \cap \tilde\calE_{t-1})+   \PP(\tilde\calE_{t-1}^c)\\
		&\leq \PP(\tilde\calE_t^c \cap \tilde\calE_{t-1})+   \PP(\tilde\calE_{t-1}^c)\\
		&\leq  14T^{-10}+ 14T^{-9} \leq 15T^{-9}. 
	\end{align*}
	where the last inequality holds from Step 3 in the proof of Theorem \ref{thm:gen}.

	We split $\EE_{\leq t} \fro{\bcalT_{t+1} - \bcalT^*}^2$ into two terms,
	\begin{align}\label{regret:split}
		\EE_{\leq t} \fro{\bcalT_{t+1} - \bcalT^*}^2 = \underbrace{\EE_{\leq t}\fro{\bcalT_{t+1} - \bcalT^*}^2\cdot\mathds{1}(\tilde\calE_t)}_{\textsf{A}_t} + \underbrace{\EE_{\leq t} \fro{\bcalT_{t+1} - \bcalT^*}^2\cdot \mathds{1}(\tilde\calE_t^c)}_{\textsf{B}_t}.
	\end{align}
	
	\noindent\textit{Controlling $\sfA_t$. } 
	Notice $\EE_{\leq t}\fro{\bcalT_{t+1} - \bcalT^*}^2\cdot\mathds{1}(\tilde\calE_t) = \EE_{\leq t-1}\mathds{1}(\tilde\calE_{t-1})\EE_t\fro{\bcalT_{t+1} - \bcalT^*}^2\cdot\mathds{1}(\tilde\calY_t)$. 
	We first control $\EE_t \fro{\bcalT_{t+1} - \bcalT^*}^2\cdot\mathds{1}(\tilde\calY_{t})$. Using Cauchy-Schwarz inequality, we obtain,
	\begin{align*}
		\fro{\bcalT_{t+1} - \bcalT^*}^2 &\leq (1+\eta^{-1})\fro{\hosvd(\bcalT_t^+) - \bcalT_t^+}^2 + (1+\eta) \fro{\bcalT_t^+ - \bcalT^*}^2.
	\end{align*}
	And 
	\begin{align}\label{eq:eptg}
		\EE_t \fro{\calP_{\TT_{t}}\bcalG_t}^4 &= \EE_t(\inp{\bcalT_t - \bcalT^*}{\bcalX_t} - \epsilon_t)^4 \fro{\calP_{\TT_{t}}\bcalX_t}^4\notag\\
		&\leq \big(\EE_t(\inp{\bcalT_t - \bcalT^*}{\bcalX_t} - \epsilon_t)^8\EE_t \fro{\calP_{\TT_{t}}\bcalX_t}^8\big)^{1/2}\notag\\
		&\leq C(\fro{\bcalT_t -\bcalT^*}^4 +\sigma^4)\dof^2.
	\end{align}
	Under $\tilde\calY_t$, we can apply Lemma \ref{lemma:perturbation}, and we obtain 
	\begin{align*}
		\EE_t\fro{\hosvd(\bcalT_t^+) - \bcalT_t^+}^2\cdot\mathds{1}(\tilde\calY_{t}) &\leq C_m\EE_t\frac{\eta^4\fro{\calP_{\TT_{t}}\bcalG_t}^4\cdot\mathds{1}(\tilde\calY_{t})}{\lambda_{\submin}^2}\\
		&\leq C_m\eta^4(\fro{\bcalT_t -\bcalT^*}^4 +\sigma^4)\dof^2\lambda_{\submin}^{-2}\cdot\mathds{1}(\tilde\calY_{t})\\
		&\leq C_m\eta^4\fro{\bcalT_t -\bcalT^*}^2\dof^2 + C_m\eta^4\sigma^4\dof^2\lambda_{\submin}^{-2}.
	\end{align*}
	From \eqref{lowerbound:inp}, we have 
	\begin{align*}
		\EE_t\inp{\calP_{\TT_t}\bcalG_t}{\bcalT_t - \bcalT^*}\cdot\mathds{1}(\tilde\calY_{t}) \geq \frac{3}{4}\fro{\bcalT_t - \bcalT^*}^2. 
	\end{align*}
	And from Lemma \ref{lemma:psioneofptx}, we obtain
	\begin{align*}
		\EE_t\fro{\calP_{\TT_t}\bcalG_t}^2\cdot\mathds{1}(\tilde\calY_{t}) \leq (3\fro{\bcalT_t - \bcalT^*}^2  + \sigma^2)\dof.
	\end{align*}
	These together give, 
	\begin{align*}
		\EE_t  \fro{\bcalT_t^+ - \bcalT^*}^2\cdot\mathds{1}(\tilde\calY_{t})&= \fro{\bcalT_t - \bcalT^*}^2 -2\eta\EE_t\inp{\calP_{\TT_t}\bcalG_t}{\bcalT_t - \bcalT^*}\cdot\mathds{1}(\tilde\calY_{t}) + \eta^2\EE_t\fro{\calP_{\TT_t}\bcalG_t}^2\cdot\mathds{1}(\tilde\calY_{t})\\
		&\leq (1-\frac{3}{2}\eta+3\eta^2)\fro{\bcalT_t - \bcalT^*}^2+ \eta^2\sigma^2 \dof. 
	\end{align*}
	Therefore for $\eta$ that satisfies the conditions in Theorem \ref{thm:regression},
	\begin{align}\label{exp:bound:T-T}
		\EE_t\fro{\bcalT_{t+1} - \bcalT^*}^2 \cdot\mathds{1}(\tilde\calY_{t}) &\leq C_m\eta^3\fro{\bcalT_t -\bcalT^*}^2\dof^2 + C_m\eta^3\sigma^4\dof^2\lambda_{\submin}^{-2}\dof^2\lambda_{\submin}^{-2} \notag\\
		&\quad + (1-\frac{1}{2}\eta)\fro{\bcalT_t - \bcalT^*}^2+ 2\eta^2\sigma^2 \dof\notag\\
		&\leq (1-\frac{\eta}{4})\fro{\bcalT_t - \bcalT^*}^2 + 3\eta^2\sigma^2\dof. 
	\end{align}
	Notice this is valid for $t\geq 0$. Therefore, 
	\begin{align}
		\sfA_t &\leq (1-\frac{\eta}{4})\sfA_{t-1} + 3\eta^2\sigma^2\dof \leq (1-\frac{\eta}{4})^{t}\fro{\bcalT^* - \bcalT_0}^2 + 12\eta\dof\sigma^2.\label{bound:sfA1}
	\end{align}
	Therefore 
	\begin{align}\label{regret:A}
		\sum_{t=0}^{T-1}\sfA_t \leq C(\eta^{-1}\fro{\bcalT_0 -\bcalT^*}^2 + \eta T\dof \sigma^2). 
	\end{align}
	
	\noindent\textit{Controlling $\sfB_t$. } 
	Since $\tilde\calE_{t} = \tilde\calE_{t-1}\cap \tilde\calY_{t}$, we have $\tilde\calE_{t}^c = \tilde\calE_{t-1}^c\cup \tilde\calY_{t}^c$. And therefore, 
	\begin{align*}
		\mathds{1}(\tilde\calE_{t}^c) = \mathds{1}(\tilde\calE_{t-1}^c\cap \tilde\calY_{t}) + \mathds{1}(\tilde\calE_{t-1}\cap \tilde\calY_{t}^c) + \mathds{1}(\tilde\calE_{t-1}^c\cap \tilde\calY_{t}^c).
	\end{align*}
	So we can further split $\textsf{B}_t$ into
	\begin{align*}
		\textsf{B}_t= \underbrace{\EE_{\leq t} \fro{\bcalT_{t+1} - \bcalT^*}^2\cdot\mathds{1}(\tilde\calE_{t-1}^c\cap \tilde\calY_{t})}_{\textsf{B}_{t,1}}+ \underbrace{\EE_{\leq t} \fro{\bcalT_{t+1} - \bcalT^*}^2\cdot\mathds{1}(\tilde\calE_{t-1}\cap \tilde\calY_{t}^c)}_{\textsf{B}_{t,2}} + 
		\underbrace{\EE_{\leq t} \fro{\bcalT_{t+1} - \bcalT^*}^2\cdot\mathds{1}(\tilde\calE_{t-1}^c\cap \tilde\calY_{t}^c)}_{\textsf{B}_{t,3}}.
	\end{align*}
	From the tower property and $\mathds{1}(\calF_1\cap\calF_2) = \mathds{1}(\calF_1)\mathds{1}(\calF_2)$ for any events $\calF_1,\calF_2$, we have 
	\begin{align*}
		\sfB_{t,1} = \EE_{\leq t-1}\mathds{1}(\tilde\calE_{t-1}^c) \EE_t\fro{\bcalT_{t+1} - \bcalT^*}^2\cdot\mathds{1}(\tilde\calY_{t}).
	\end{align*}
	From \eqref{exp:bound:T-T}, we have 
	\begin{align}\label{bound:sfB1}
		\sfB_{t,1} \leq (1-\frac{\eta}{4})\sfB_{t-1} + 3\eta^2\sigma^2\dof.
	\end{align}
	For $\sfB_{t,2}, \sfB_{t,3}$, we have 
	\begin{align*}
		\sfB_{t,2} &\leq \EE_{\leq t-1}(\EE_t \fro{\bcalT_{t+1} - \bcalT^*}^4)^{1/2}\cdot\PP^{1/2}(\tilde\calY_{t}^c) \cdot \mathds{1}(\tilde\calE_{t-1}),\\
		\sfB_{t,3} &\leq \EE_{\leq t-1}(\EE_t \fro{\bcalT_{t+1} - \bcalT^*}^4)^{1/2}\cdot\PP^{1/2}(\tilde\calY_{t}^c) \cdot \mathds{1}(\tilde\calE_{t-1}^c).
	\end{align*}
	This motivates us to consider the bound for $\EE_t \fro{\bcalT_{t+1} - \bcalT^*}^4$. 
	We shall use the bound for $\hosvd$ in \cite{de2000multilinear}, which states $\fro{\bcalT - \hosvd(\bcalT)}\leq \sqrt{m} \fro{\bcalT - \calP_{\MM_{\r}}(\bcalT)}$, where $\calP_{\MM_{\r}}(\bcalT)$ is the best rank $\r$ approximation (under Frobenius norm) of $\bcalT$. 
	Using this inequality, we get, 
	\begin{align*}
		\fro{\bcalT_{t+1} - \bcalT^*} &= \fro{\hosvd(\bcalT_t^+) - \bcalT^*} \leq \fro{\hosvd(\bcalT_t^+) - \bcalT_t^+} +\fro{\bcalT_t^+ - \bcalT^*}\\
		&\leq \sqrt{m}\fro{\bcalT - \calP_{\MM_{\r}}(\bcalT)} +\fro{\bcalT_t^+ - \bcalT^*}\\
		&\leq (\sqrt{m} +1)\fro{\bcalT_t^+ - \bcalT^*}\\
		&\leq C_m\big(\fro{\bcalT_t - \bcalT^*}+\eta\fro{\calP_{\TT_{t}}\bcalG_t}\big).
	\end{align*}
	Using $(a+b)^4\leq 8(a^4 + b^4)$, we obtain from Lemma \ref{lemma:psioneofptx},
	\begin{align*}
		\EE_t\fro{\bcalT_{t+1} - \bcalT^*}^4 
		&\leq C_m(\fro{\bcalT_{t}-\bcalT^*}^4 + \eta^4\EE_t\fro{\calP_{\TT_{t}}\bcalG_t}^4). 
	\end{align*}
	Using $C_1\eta\dof\leq 1$ and \eqref{eq:eptg}, we have
	\begin{align}\label{eq:4th}
		(\EE_t\fro{\bcalT_{t+1} - \bcalT^*}^4)^{1/2} \leq C_m(\fro{\bcalT_t -\bcalT^*}^2 + \eta^2\dof\sigma^2).
	\end{align}
	So we conclude 
	\begin{align}
		\sfB_{t,2} &\leq \frac{C_m}{T^4}(\sfA_{t-1} + \eta^2\dof\sigma^2)\label{bound:sfB2}\\
		\sfB_{t,3} &\leq \frac{C_m}{T^4}(\sfB_{t-1} + \eta^2\dof\sigma^2)\label{bound:sfB3}.
	\end{align}
	From \eqref{bound:sfB1}, \eqref{bound:sfB2}, and \eqref{bound:sfB3}, as long as $T^4\geq C_m$,  we obtain
	\begin{align*}
		\sfB_t \leq (1-\frac{\eta}{4}+\frac{C_m}{T^4})\sfB_{t-1} + \frac{C_m}{T^4}\sfA_{t-1}  + C\eta^2\dof\sigma^2.
	\end{align*}
	Notice $\sfB_0 = \EE_{0} \fro{\bcalT_{1} - \bcalT^*}^2\cdot \mathds{1}(\tilde\calE_0^c)$, and we have from \eqref{eq:4th},
	\begin{align*}
		\sfB_0 &\leq \frac{1}{T^4}(\EE_0\fro{\bcalT_{1} - \bcalT^*}^4)^{1/2}\leq \frac{C_m}{T^4}(\fro{\bcalT_0 -\bcalT^*}^2 + \eta^2\dof\sigma^2).
	\end{align*}
	Telescoping the inequality about $\sfB_t,\sfB_{t-1}$ and from \eqref{bound:sfA1}, we get,
	\begin{align*}
		\sfB_t&\leq (1-\frac{\eta}{4}+\frac{C_m}{T^4})^t\sfB_0 +\frac{C_m}{T^4}\sum_{\tau = 0}^{t-1}(1-\frac{\eta}{4}+\frac{C_m}{T^4})^{t-1-\tau}\sfA_{\tau}+C\sum_{\tau = 0}^{t-1}(1-\frac{\eta}{4}+\frac{C_m}{T^4})^{t-1-\tau}\eta^2\dof\sigma^2.
	\end{align*}
	On the one hand, if $\frac{\eta}{8}\geq \frac{C_m}{T^4}$, 
	\begin{align*}
		\sfB_t&\leq (1-\frac{\eta}{8})^t\sfB_0 +\frac{C_m}{T^4}\sum_{\tau = 0}^{t-1}(1-\frac{\eta}{8})^{t-1-\tau}\sfA_{\tau}+C\sum_{\tau = 0}^{t-1}(1-\frac{\eta}{8})^{t-1-\tau}\eta^2\dof\sigma^2\\
		&\leq (1-\frac{\eta}{8})^t\frac{C_m}{T^4}(\fro{\bcalT_0 -\bcalT^*}^2 + \eta^2\dof\sigma^2) + C\eta\dof\sigma^2\\
		&\quad + \frac{C_m}{T^4}\sum_{\tau = 0}^{t-1}(1-\frac{\eta}{8})^{t-1-\tau}\big((1-\frac{\eta}{4})^{\tau+1}\fro{\bcalT_0 - \bcalT^*}^2 + \eta\dof\sigma^2\big)\\
		&\leq 2(1-\frac{\eta}{8})^{t+1}\fro{\bcalT_0 -\bcalT^*}^2 + C\eta\dof\sigma^2,
	\end{align*}
	where the last inequality uses $\frac{\eta}{8}\geq \frac{C_m}{T^4}$. 
	And therefore 
	\begin{align*}
		\sum_{t=0}^{T-1} \sfB_t \leq C\eta^{-1}\fro{\bcalT_0 - \bcalT^*}^2 + C\eta T\dof\sigma^2. 
	\end{align*} 
	On the other hand, if $\frac{\eta}{8}\leq \frac{C_m}{T^4}$, 
	\begin{align*}
		\sfB_t&\leq (1+\frac{C_m}{T^4})^t\sfB_0 +\frac{C_m}{T^4}\sum_{\tau = 0}^{t-1}(1+\frac{C_m}{T^4})^{t-1-\tau}\sfA_{\tau}+C\sum_{\tau = 0}^{t-1}(1+\frac{C_m}{T^4})^{t-1-\tau}\eta^2\dof\sigma^2\\
		&\leq (1+\frac{C_m}{T^4})^t\frac{C_m}{T^4}(\fro{\bcalT_0 -\bcalT^*}^2 + \eta^2\dof\sigma^2) + C\frac{T^4}{C_m}\eta^2\dof\sigma^2\\
		&\quad + \frac{C_m}{T^4}\sum_{\tau = 0}^{t-1}(1+\frac{C_m}{T^4})^{t-1-\tau}\big((1-\frac{\eta}{4})^{\tau+1}\fro{\bcalT^* - \bcalT_0}^2 + 12\eta\dof\sigma^2\big)\\
		&\leq (1+\frac{C_m}{T^4})^t\frac{C_m}{T^4}(\fro{\bcalT_0 -\bcalT^*}^2 + \eta^2\dof\sigma^2) + C\eta\dof\sigma^2+ \frac{C_m}{T^4}(1+\frac{C_m}{T^4})^t\eta^{-1}\fro{\bcalT_0 -\bcalT^*}^2,
	\end{align*}
	where in the last line we use $\frac{\eta}{8}\leq \frac{C_m}{T^4}$.
	And thus 
	\begin{align}
		\sum_{t=0}^{T-1} \sfB_t &\leq \exp(\frac{C_m}{T^3})((1+\eta^{-1})\fro{\bcalT_0 -\bcalT^*}^2 + \eta^2\dof\sigma^2)+ C\eta T\dof\sigma^2\notag\\
		&\leq C(\eta^{-1}\fro{\bcalT_0 -\bcalT^*}^2 + \eta T\dof\sigma^2),
	\end{align}
	where in the last line we use $C_m\leq T^3$. Therefore for either choice of $\eta$, we have 
	\begin{align} \label{regret:B}
		\sum_{t=0}^{T-1} \sfB_t  \leq C(\eta^{-1}\fro{\bcalT_0 -\bcalT^*}^2 + \eta T\dof\sigma^2). 
	\end{align}
	From \eqref{regret:A} and \eqref{regret:B}, we conclude 
	\begin{align*}
		R_T \leq C(\eta^{-1}\fro{\bcalT_0 -\bcalT^*}^2 + \eta T\dof\sigma^2). 
	\end{align*}

\end{proof}

\subsection{Proof of Theorem \ref{thm:adaptive}}
We shall consider online tensor linear regression. The proof for online tensor completion is similar and is thus omitted.
For simplicity, suppose the true but unknown horizon is $T = (2^K - 1)t_0$ for some positive integers $t_0$. 
Denote square of the error after phase $k$ by $E_k$, namely $E_k:= \fro{\bcalT_{(2^{k}-1)t_0}-\bcalT^*}^2$. Then from Theorem \ref{thm:regression}, we have 
\begin{align}
	E_{k+1} &\leq 2(1-\frac{\eta_{k+1}}{8})^{2^kt_0}E_k + C_0\eta_{k+1}\dof\sigma^2\notag\\
	&\leq 2\exp(-\frac{\eta_{k+1}}{8}2^kt_0)E_k + C_0\eta_{k+1}\dof\sigma^2\label{Ekrecursion}\\
	&= 2\exp(-2^{-4}t_0\eta_0) E_k+ C_02^{-k-1}\eta_0\dof\sigma^2. \notag
\end{align}
Suppose $2\exp(-2^{-4}t_0\eta_0)\leq \frac{1}{2}$.
Telescoping this inequality and we get 
\begin{align*}
	E_K &\leq 2^{K}\exp(-2^{-4}\eta_0t_0 K)\fro{\bcalT_0-\bcalT^*}^2 + C_1\eta_0\dof\sigma^2.
\end{align*}

Now we consider the regret under the adaptive setting. We denote the regret occurred at phase $k$ by $R_{(k)}$, and the start time of each phase by $s_k$, namely $s_k = (2^{k-1}-1)t_0$. 
Using the notation as in the proof of Theorem \ref{thm:regret} (see \eqref{regret:split}), we have,
$$R_{(k)} = \sum_{t = s_k}^{s_{k+1}-1}\EE_{\leq t}\fro{\bcalT_t - \bcalT^*}^2 = \sum_{t = s_k}^{s_{k+1}-1} (\textsf{A}_t  + \textsf{B}_t). $$
To this end, we define $\textsf{E}_{k} := \EE_{\leq s_{k} - 1}\fro{\bcalT_{s_{k}}-\bcalT^*}^2$, which is the expected error at the beginning of phase $k$. 
In fact, from the proof of Theorem \ref{thm:regret}, we have 
\begin{align*}
	\textsf{E}_k &\leq 2\exp(-\frac{\eta_{k}}{8}2^{k-1}t_0)\textsf{E}_{k-1}+ C\eta_{k}\dof\sigma^2.
\end{align*}
We set $\eta_k = O(2^{-k}(t_0\dof)^{-1/2}\frac{\lambda_{\submin}}{\sigma})$. This gives us 
\begin{align*}
	\textsf{E}_k &\leq 4\exp(-c\frac{\lambda_{\submin}\sqrt{t_0}}{\sqrt{\dof}\sigma})\textsf{E}_{k-1} + C2^{-k}(t_0)^{-1/2}\dof^{1/2}\lambda_{\submin}\sigma.
\end{align*}
Since $\frac{\lambda_{\submin}\sqrt{t_0}}{\sqrt{\dof}\sigma}\geq c_1$, we obtain 
\begin{align*}
	\textsf{E}_k \leq 2\cdot 4^{-k}\lambda_{\submin}^{2}.
\end{align*}
%
Within each phase, similar as the proof in Theorem \ref{thm:regret}, we can show
\begin{align*}
	\EE_{\leq t}\fro{\bcalT_t - \bcalT^*}^2 \leq (1-\frac{\eta_k}{8})^{t-s_k}\textsf{E}_{k-1} + C\eta_k\dof\sigma^2. 
\end{align*}
And thus 
\begin{align*}
	R_{(k)} \leq C\frac{1}{\eta_k}\textsf{E}_{k-1} + 2^{k}t_0\cdot C\eta_k\dof\sigma^2\leq C\frac{1}{\eta_k}4^{-k}\lambda_{\submin}^2 + C\eta_k2^{k}t_0\dof\sigma^2
\end{align*}
With the previously chosen $\eta_k$, we obtain
\begin{align}\label{bound:Ratk}
	R_{(k)} \leq C\sqrt{t_0\dof}\sigma\lambda_{\submin}. 
\end{align}
%
%
Together with \eqref{bound:Ratk}, we obtain
\begin{align*}
	R_T = \sum_{k=1}^K R_{(k)} \leq \sum_{k=1}^K O(\sqrt{t_0\dof}\sigma\lambda_{\submin})\leq O\big(\sqrt{t_0\dof}\sigma\lambda_{\submin}\log(\frac{T}{t_0})\big). 
\end{align*}

\end{document}